\documentclass{article} 
\usepackage{iclr2026_conference,times}

\usepackage{amsmath,amsfonts,bm}

\def\eqref#1{equation~\ref{#1}}

\def\1{\bm{1}}

\def\rd{{\textnormal{d}}}

\def\vzero{{\bm{0}}}
\def\vone{{\bm{1}}}
\def\vmu{{\bm{\mu}}}
\def\vepsilon{{\bm{\varepsilon}}}
\def\vtheta{{\bm{\theta}}}
\def\valpha{{\bm{\alpha}}}
\def\vbeta{{\bm{\beta}}}
\def\va{{\bm{a}}}
\def\vb{{\bm{b}}}

\def\ve{{\bm{e}}}

\def\vp{{\bm{p}}}

\def\vr{{\bm{r}}}

\def\vu{{\bm{u}}}
\def\vv{{\bm{v}}}
\def\vw{{\bm{w}}}
\def\vx{{\bm{x}}}

\def\vz{{\bm{z}}}

\def\mI{{\bm{I}}}

\DeclareMathAlphabet{\mathsfit}{\encodingdefault}{\sfdefault}{m}{sl}
\SetMathAlphabet{\mathsfit}{bold}{\encodingdefault}{\sfdefault}{bx}{n}

\def\gD{{\mathcal{D}}}

\def\gL{{\mathcal{L}}}

\def\sR{{\mathbb{R}}}

\newcommand{\E}{\mathbb{E}}

\newcommand{\R}{\mathbb{R}}

\DeclareMathOperator*{\argmax}{arg\,max}
\DeclareMathOperator*{\argmin}{arg\,min}

\DeclareMathOperator{\sign}{sign}

\newcommand{\lps}{\left(}
\newcommand{\rps}{\right)}
\usepackage[utf8]{inputenc} 
\usepackage[T1]{fontenc}    
\usepackage{hyperref}       
\usepackage{url}            
\usepackage{booktabs}       
\usepackage{amsfonts}       
\usepackage{nicefrac}       
\usepackage{enumitem}
\usepackage{float}
\usepackage{blkarray}
\usepackage{caption}
\usepackage{multirow}
\usepackage{multicol}
\usepackage{threeparttable}
\usepackage{array}
\usepackage{listings}
\lstdefinestyle{mystyle}{
    backgroundcolor=\color{backcolour},   
    commentstyle=\color{codegreen},
    keywordstyle=\color{magenta},
    numberstyle=\tiny\color{codegray},
    stringstyle=\color{codepurple},
    basicstyle=\ttfamily\footnotesize,
    breakatwhitespace=false,         
    breaklines=true,                 
    captionpos=b,                    
    keepspaces=true,                 
    numbers=left,                    
    numbersep=5pt,                  
    showspaces=false,                
    showstringspaces=false,
    showtabs=false,                  
    tabsize=2
}
\usepackage{wrapfig}

\usepackage{natbib}
\usepackage{subfiles}
\usepackage{dsfont}
\usepackage{color,colortbl}

\usepackage{tikz}
\usepackage{tikz-3dplot}
\usepackage{pgfplots}
\usetikzlibrary{calc}
\usepackage{tikz-cd}     
\usepackage{amssymb}     
\usetikzlibrary{calc}  
\usetikzlibrary{decorations.pathmorphing} 
\pgfplotsset{compat=1.18} 
\usetikzlibrary{intersections} 
\usepgfplotslibrary{fillbetween}
\usetikzlibrary{patterns,patterns.meta}
\usepgfplotslibrary{external}
\usepackage{adjustbox}

\definecolor{darkblue}{rgb}{0.0,0.0,0.65}
\definecolor{darkred}{rgb}{0.65,0.0,0.0}
\definecolor{darkgreen}{rgb}{0.0,0.5,0.0}
\definecolor{tab:blue}{RGB}{31,119,180}  
\definecolor{tab:red}{RGB}{214,39,40}  
\definecolor{tab:green}{RGB}{44,160,44}  
\definecolor{tab:orange}{RGB}{255,127,14}  
\hypersetup{
	colorlinks = true,
	citecolor  = darkblue,
	linkcolor  = darkred,
	filecolor  = darkblue,
	urlcolor   = darkgreen,
}
\definecolor{codegreen}{rgb}{0,0.6,0}
\definecolor{codegray}{rgb}{0.5,0.5,0.5}
\definecolor{codepurple}{rgb}{0.58,0,0.82}
\definecolor{backcolour}{rgb}{0.95,0.95,0.92}

\usepackage{amsmath}
\usepackage{amssymb}
\usepackage{mathtools}
\usepackage{amsthm}
\usepackage{thmtools, thm-restate}
\usepackage[capitalize,noabbrev]{cleveref}
\usepackage{nicefrac}

\usepackage{algorithm}
\usepackage{algorithmic}  

\usepackage{graphicx}
\usepackage{subcaption}
\usepackage{thm-restate}

\theoremstyle{plain}
\newtheorem{theorem}{Theorem}[section]
\newtheorem{proposition}[theorem]{Proposition}
\newtheorem{lemma}[theorem]{Lemma}

\theoremstyle{definition}

\newtheorem{assumption}[theorem]{Assumption}
\theoremstyle{remark}
\newtheorem{remark}[theorem]{Remark}
\crefname{assumption}{Assumption}{Assumptions}

\usepackage{anyfontsize}

\usepackage{needspace}

\allowdisplaybreaks

\title{Minor First, Major Last: A Depth-Induced Implicit Bias of Sharpness-Aware Minimization}

\author{Chaewon Moon \\
Kim Jaechul Graduate School of AI, KAIST\\
\texttt{chaewon.moon@kaist.ac.kr} \\
\And
Dongkuk Si \\
Mobilint, Inc.\\
\texttt{dongkuk@mobilint.com} \\
\And
Chulhee Yun \\
Kim Jaechul Graduate School of AI, KAIST\\
\texttt{chulhee.yun@kaist.ac.kr}
}

\iclrfinalcopy 
\begin{document}

\maketitle

\begin{abstract}

We study the implicit bias of Sharpness-Aware Minimization (SAM) when training $L$-layer linear diagonal networks on linearly separable binary classification. For linear models ($L=1$), both $\ell_\infty$- and $\ell_2$-SAM recover the $\ell_2$ max-margin classifier, matching gradient descent (GD). However, for depth $L = 2$, the behavior changes drastically---even on a single-example dataset.
For $\ell_\infty$-SAM, the limit direction depends critically on initialization and can converge to $\vzero$ or to any standard basis vector, in stark contrast to GD, whose limit aligns with the basis vector of the dominant data coordinate. For $\ell_2$-SAM, we show that although its limit direction matches the $\ell_1$ max-margin solution as in the case of GD, its finite-time dynamics exhibit a phenomenon we call \emph{sequential feature amplification}, in which the predictor initially relies on minor coordinates and gradually shifts to larger ones as training proceeds or initialization increases. Our theoretical analysis attributes this phenomenon to $\ell_2$-SAM’s gradient normalization factor applied in its perturbation, which amplifies minor coordinates early and allows major ones to dominate later, giving a concrete example where infinite-time implicit-bias analyses are insufficient. Synthetic and real-data experiments corroborate our findings.

\end{abstract}

\vspace{-5pt}
\section{Introduction} \label{sec:intro}
\vspace{-5pt}
Modern deep networks often generalize well despite extreme over-parameterization. One explanation emphasizes the geometry of the objective: models perform better when optimization settles in flatter regions of the landscape \citep{hochreiter1994simplifying,keskar2016large,neyshabur2017exploring,jiang2019fantastic}. Motivated by this view, \citet{foret2020sharpness} introduce Sharpness-Aware Minimization (SAM), which seeks parameters that minimize the worst-case loss within a small neighborhood. 
Following its empirical success~\citep{chen2021vision,bahri2021sharpness,kaddourflat}, various theoretical works have analyzed SAM's implicit bias to understand its effectiveness \citep{andriushchenko2022towards,behdin2023sharpness,zhou2025sharpnessawareminimizationefficientlyselects}. However, these analyses primarily apply to scenarios with attainable finite minimizers (e.g., squared loss), leaving open the case of losses whose infimum lies at infinity (e.g., logistic loss). 

We consider the implicit bias of SAM when training $L$-layer linear diagonal networks on linearly separable classification datasets with logistic loss. We study two variants of SAM, $\ell_\infty$-SAM and $\ell_2$-SAM, named after the norm defining their local perturbation (See \cref{sec:preliminaries}).
For $L=1$ (linear models), gradient descent (GD) is known to converge in direction to the $\ell_2$ max-margin classifier~\citep{soudry2018implicit}.
For both $\ell_\infty$-SAM and $\ell_2$-SAM, we show that they also align with the same limit direction. Thus, SAM does not change the implicit bias here, as shown in \cref{fig:intro-depth1-discrete}.

However, for 2-layer diagonal linear networks, we find that the trajectory of the linear coefficient vector $\vbeta(t)$ under both $\ell_\infty$- and $\ell_2$-SAM can differ substantially from the maximum $\ell_1$-margin implicit bias of GD~\citep{gunasekar2018implicit}. 
In \cref{fig:intro-depth2-discrete}, we consider a toy separable dataset $\{ (\vmu, +1) \}$ with $\vmu = (1,2)$. In this case, the $\ell_1$ max-margin direction is $\ve_2 = (0,1)$, the standard basis vector for the major component of $\vmu$. As predicted, all GD trajectories and some SAM trajectories show increasing alignment of $\vbeta(t)$ with $\ve_2$. However, for some initializations, we observe that some trajectories of $\vbeta(t)$ under $\ell_\infty$-SAM and $\ell_2$-SAM instead converge to zero, or even align with $\ve_1 = (1,0)$---a seemingly paradoxical implicit bias favoring the \emph{minor} feature rather than the major one.
It is interesting that the addition of a single layer---from $L=1$ to $L=2$---introduces this peculiar behavior of SAM different from GD, even for the simple setting of linear diagonal networks trained with a single example.

\begin{figure}[t]
    \centering
    \begin{subfigure}[t]{0.35\textwidth}
        \centering
        \includegraphics[width=\textwidth]{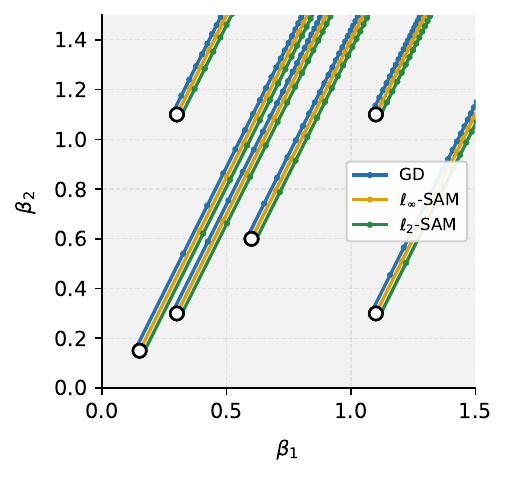}
        \caption{Depth 1 (linear network)}
        \label{fig:intro-depth1-discrete}
    \end{subfigure}
    \hspace{30pt}
    \begin{subfigure}[t]{0.35\textwidth}
        \centering
        \includegraphics[width=\textwidth]{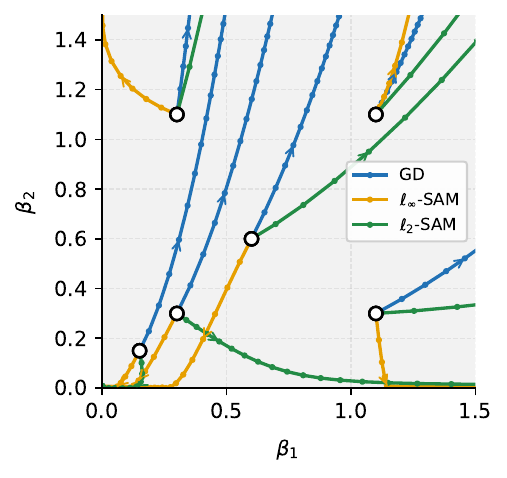}
        \caption{Depth 2 (linear diagonal network)}
        \label{fig:intro-depth2-discrete}
    \end{subfigure}
    \caption{Trajectories of the predictor $\vbeta(t) \in\R^2$ from identical initial conditions under discrete GD, $\ell_\infty$-SAM and $\ell_2$-SAM on $\{ (\vmu, +1)\}$ with $\vmu = (1,2)$. We used $\eta=0.3$ and $\rho=1$ for SAM.}
    \label{fig:intro-discrete}
    \vspace{-10pt}
\end{figure}

\vspace{-5pt}
\subsection{Summary of Our Contributions}
\vspace{-5pt}
We analyze the optimization trajectory and implicit bias of $\ell_\infty$-SAM and $\ell_2$-SAM in $L$-layer linear diagonal networks trained on linearly separable data with logistic loss. For theoretical analysis, we analyze the evolution of the linear coefficient $\vbeta(t)$ of the linear diagonal network under \emph{continuous-time} versions of SAM, \textbf{$\ell_\infty$-SAM flow} and \textbf{$\ell_2$-SAM flow}. We characterize their limit directions, obtained when training on general linearly separable data, and their pre-asymptotic behavior before aligning with the limit directions, analyzed on a single-example dataset $\{ (\vmu, +1) \}$.
\vspace{-5pt}
\begin{itemize}[leftmargin=*,itemsep=-0.5pt]
    \item \textbf{Depth $1$ (linear).} For linear models ($L=1$), both $\ell_\infty$-SAM flow and $\ell_2$-SAM flow have the same $\ell_2$ max-margin implicit bias as GD on linearly separable data; in the single-example setting, we further show that the $\ell_\infty$-SAM coincides exactly with the GD trajectory.
    \item \textbf{Depth $L$, $\ell_\infty$-SAM.} For $L \geq 2$ and $\ell_\infty$-SAM flow, we characterize the coordinate-wise trajectory of $\vbeta(t)$ determined by the relative scale of each coordinate at initialization and the perturbation radius of $\ell_\infty$-SAM (\cref{thm:l-infty}). For almost all initializations, $\vbeta(t)$ diverges and its limit direction is one of the standard basis vectors $\ve_1, \dots, \ve_d$ or it converges to a finite point (\cref{cor:l-infty}). Compared to GD, the limit direction of $\ell_\infty$-SAM becomes more sensitive to initialization.
    \item \textbf{Depth $2$, $\ell_2$-SAM.} For $L=2$ and $\ell_2$-SAM flow, we first prove that the limit direction (if convergent to zero loss) is the $\ell_1$ max-margin solution (\cref{thm:depth2-limit-direction-anydata}); however, this infinite-time characterization does not explain our observation from \cref{fig:intro-depth2-discrete}. We empirically investigate the finite-time trajectory of $\vbeta(t)$ and identify the \textbf{sequential feature amplification} phenomenon, in which $\vbeta(t)$ initially relies on minor coordinates and gradually shifts to larger ones as $t$ increases or initialization scale grows. 
    We provide a theoretical explanation of both time-wise (\cref{thm:regime_classification}) and initialization-wise (\cref{thm:l2}) aspects of the phenomenon. This example shows that focusing only on the $t\to\infty$ limit can overlook aspects of the training dynamics.
    SAM provides a clear instance where a \emph{finite-time} view is essential to understanding how its implicit bias emerges.
    This finite-time bias offers a possible theoretical perspective on recent empirical observations that SAM captures atypical subpatterns more effectively than GD~\citep{kim2026membershipprivacyriskssharpness}.

    \item We present synthetic and real-data experiments to corroborate our findings.
\end{itemize}

\vspace{-5pt}
\subsection{Related Work}
\vspace{-5pt}
\label{sec:related}
\noindent\textbf{Implicit Bias of GD on Linear Diagonal Networks.}~
\citet{soudry2018implicit} show that, for linearly separable logistic regression, the weight of a linear model diverges while the direction converges to the $\ell_2$ max-margin classifier. For linear diagonal networks, GD biases toward sparse predictors \citep{gunasekar2018implicit}, with 2-layer models converging to the $\ell_1$ max-margin direction under directional convergence \citep{NEURIPS2020_c76e4b2f}. Subsequent papers have studied this implicit bias in sparse regression, in which initialization scale governs the bias: large initialization favors $\ell_2$-type bias, while small initialization favors $\ell_1$-type sparsity \citep{woodworth2020kernel,yun2020unifying,moroshko2020implicit}. 
Other works study stochastic effects, kernel-regime behavior, and implicit regularization via mirror flow or related dynamics \citep{pesme2021implicit, even2023s,nacson2022implicit,jacobs2025mirror, wang2024mirror, papazov2024leveraging, jacobs2024mask}; we provide a brief overview in Appendix~\ref{app:diagrecent}. Prior work on small-initialization GD under squared loss in the same diagonal network setting shows incremental \emph{saddle-to-saddle} learning dynamics, where coordinates become active in discrete stages as the predictor moves between saddles~\citep{berthier2023incremental, pesme2023saddle}. We provide a detailed comparison between our setting and these saddle-to-saddle dynamics in Appendix~\ref{app:saddletosaddlecompare}.

\noindent\textbf{Sharpness-Aware Minimization.}~
Motivated by the relationship between sharpness and generalization~\citep{hochreiter1994simplifying, keskar2016large, jiang2019fantastic, neyshabur2017exploring}, \citet{foret2020sharpness} propose SAM. 
Extensive empirical work has shown the superior performance of SAM and its variants across tasks and architectures~~\citep{sun2024adasam,kwon2021asam,li2024friendly,liu2022random,yun2023riemannian,bahri2021sharpness,zhuang2022surrogate,kaddour2022flat}. Complementing these empirical findings, theoretical work has analyzed SAM's optimization dynamics, generalization, and implicit bias~\citep{li2024implicit,behdin2023statistical,zhang2024duality,agarwala2023sam,wen2023sharpness,long2024sharpness,zhou2024sharpness,springer2024sharpness,baek2024sam, chen2023does, chang2026unified}, including results in simplified settings such as diagonal linear networks on MSE loss~\citep{andriushchenko2022towards,clara2025training}. 
Recent work highlights convergence instabilities of SAM near local minima~\citep{si2023practicalsharpnessawareminimizationconverge,kim2023exploring}. See Appendix~\ref{app:samdiag} for details.

\vspace{-5pt}
\section{Preliminaries} \label{sec:preliminaries}
\vspace{-5pt}
\noindent\textbf{Notation.}~
We write the $i$-th standard basis vector as $\ve_i$.
For $n \in \mathbb{N}$, let $[n]=\{ 1, \cdots, n\}$.
For a vector $\vv \in \R^d$, we denote its coordinates by $\vv= (v_1, \cdots, v_d)$.
For any block vector $\bm{Z}=~(\vz^{(1)},\dots,\vz^{(L)})\in(\mathbb{R}^d)^L$, we denote its $\ell$-th block by $\bm{Z}^{(\ell)}:=\vz^{(\ell)}\in\mathbb{R}^d$.
For $\va,\vb \in \R^d$, $\va \odot \vb$ denotes the element-wise product; for a collection $\{\va^{(\ell) }\}_{\ell=1}^L$, we write $\bigodot_{\ell=1}^L \va_l:=\va^{(1)} \odot \cdots \odot \va^{(L)}$.

\noindent\textbf{Model.}~
We consider $L$-layer linear diagonal networks, a simple family of homogeneous networks widely used for the study of implicit bias (See \cref{sec:related}).
Let $\vtheta=(\vw^{(1)},\dots,\vw^{(L)})\in(\mathbb{R}^d)^L$ be the parameter vector. 
For $\vx \in \R^d$, let the linear coefficient vector $\vbeta(\vtheta)$ and output $f(\vx)$ be
$$
\vbeta (\vtheta):=\bigodot\nolimits_{\ell=1}^L \vw^{(\ell)} \in \R^d,
\quad
f(\vx) := \langle \vbeta (\vtheta), \vx \rangle.
$$

\noindent\textbf{Data and Loss.}~
We consider the standard supervised learning setting where a binary classification dataset $\{(\vx_i, y_i)\}_{i=1}^N$ is given.
Let the logistic loss be $\ell(u) = \log \left( 1 + \exp(-u) \right)$. Then the training loss function is defined as
$\mathcal{L}(\vtheta) := \frac{1}{N}\sum\nolimits_{i=1}^N \ell(y_i\langle \vbeta (\vtheta), \vx_i \rangle)$.
We write the gradient of $\mathcal{L}$ with respect to $\vtheta$ in a block form, as $\nabla\mathcal L(\vtheta)=(\nabla_{\vw^{(1)}}\mathcal L(\vtheta),\dots,\nabla_{\vw^{(L)}}\mathcal L(\vtheta))$.

\noindent\textbf{Optimization Algorithms.}~
In this paper, we mainly consider the implicit bias of \textbf{Sharpness-Aware Minimization} (\textbf{SAM}, \citet{foret2020sharpness}) and how depth causes it to deviate from the baseline algorithm, \textbf{gradient descent} (\textbf{GD}).
At iteration $t$, a GD update reads $\vtheta (t+1) := \vtheta (t) -\eta \nabla \mathcal{L}(\vtheta(t))$, where $\eta > 0$ is called the step size or learning rate.

On the other hand, SAM updates parameters by evaluating the gradient at a perturbed one:
\begin{align*}
    \hat{\vtheta}(t) := \vtheta (t) + \vepsilon_p(\vtheta (t)),\quad
    \vtheta (t+1) := \vtheta (t) -\eta \nabla \mathcal{L}(\hat{\vtheta}(t)),
\end{align*}
where the perturbation $\vepsilon_p(\vtheta (t))$ is the approximate worst-case direction inside the $\ell_p$-ball of perturbation radius $\rho > 0$:
$
\vepsilon_p(\vtheta):=
\argmax\nolimits_{\| \vepsilon \|_p \leq \rho}
\vepsilon ^ \top \nabla \mathcal{L}(\vtheta).
$
We refer to $\hat{\vtheta}$ as the ascent point. Since $\vtheta = (\vw^{(1)},\dots,\vw^{(L)})$ has a block structure, we also write $\hat{\vtheta} = \big(\hat{\vw}^{(1)},\dots,\hat{\vw}^{(L)}\big)$ and $\vepsilon_p(\vtheta) = (\vepsilon_p^{(1)}(\vtheta), \dots, \vepsilon_p^{(L)}(\vtheta))$ so that we can say $\hat{\vw}^{(i)}= \vw^{(i)} + \vepsilon_p^{(i)}(\vtheta)$.
For $p=2$ and $\infty$, the perturbation $\vepsilon_p(\vtheta)$ has clean closed-form solutions:
$$
\vepsilon_2(\vtheta) := \rho\, \tfrac{\nabla \mathcal{L}(\vtheta)}{\|\nabla \mathcal{L}(\vtheta)\|_2}, \quad \vepsilon_\infty(\vtheta) := \rho\,\mathrm{sign}(\nabla \mathcal{L}(\vtheta)),
$$
and we consider the two variants, referred to as \textbf{$\ell_2$-SAM} when $p=2$ and \textbf{$\ell_\infty$-SAM} when $p=\infty$.
For $p=\infty$, the maximizer is not unique when a coordinate of the gradient is zero.
To make sure that the update is uniquely determined, we adopt the convention $\mathrm{sign}(0):=0$, applied coordinate-wise. 

\noindent\textbf{Continuous-time Flows.}~
In the study of optimization algorithms, it is often useful to reduce the original discrete-time updates of an optimizer to a corresponding continuous-time flow. Unless the step size is too large, continuous-time flows offer a good approximation of the discrete-time optimizers, while allowing for clean and simplified analyses.

For GD, a common continuous-time counterpart is \textbf{gradient flow} (\textbf{GF}): $\dot \vtheta (\tau) = - \nabla \gL(\vtheta(\tau))$. With gradient flow, the analysis of GD trajectory boils down to solving an ordinary differential equation (ODE).
Likewise, we define and study the flow counterparts of SAM, governed by the ODE
\begin{align}
    \dot{\vtheta}(\tau) = -\nabla \mathcal{L}(\hat{\vtheta}(\tau)) \label{eq:gradient_flow}.
\end{align}
Depending on the choice of norm, we will use the terms \textbf{$\ell_\infty$-SAM flow} and \textbf{$\ell_2$-SAM flow} to refer to the continuous-time versions of SAM. \cref{fig:intro-flow} in \cref{app:flow} plots the trajectory of $\ell_\infty$-SAM flow and $\ell_2$-SAM flow under the same setup of \cref{fig:intro-discrete}. We observe that the trajectories stay almost the same and the surprising implicit bias of SAM carries over to SAM flows. Hence, we aim to understand this unusual behavior of SAM by studying the corresponding SAM flows.

\noindent\textbf{Rescaled Flows.}~
As shown in \cref{app:spatial_l_infty}, for the special case of single-example dataset $\{(\vmu,+1)\}$, the $\ell_p$-SAM flow ($p = 2, \infty$) of the $i$-th layer weight follows the \emph{same spatial trajectory} as the following \textbf{rescaled $\ell_p$-SAM flow}:
\begin{align}
\dot{\vw}^{(i)}(t) = \vmu \odot \Big( \bigodot\nolimits_{\ell \neq i} \big(\vw^{(\ell)}(t) + \vepsilon_p^{(\ell)}(\vtheta(t))\big) \Big),\label{eq:gradient_flow_spatial_trajectory}
\end{align}
obtained by taking out the loss derivative $-\ell'(\langle \vbeta(\hat{\vtheta}(t)), \vmu \rangle) > 0$ from the original $\ell_p$-SAM flow.
{Note that the original $\ell_p$-SAM flow (\ref{eq:gradient_flow}) and the rescaled flow in (\ref{eq:gradient_flow_spatial_trajectory}) differ only by a \emph{reparameterization of time}.
Let $\vtheta_{\mathrm{orig}}(t_\mathrm{orig})$ denote the original SAM flow and $\vtheta(t)$ the rescaled flow.
Then there exists a strictly increasing map $t_\mathrm{orig}=\tau(t)$ such that $\vtheta_{\mathrm{orig}}(\tau(t)) = \vtheta(t)$. Applying the chain rule yields the relation
$$
\frac{d \vtheta}{dt}
= \frac{d \vtheta_{\mathrm{orig}}}{d\tau}\frac{d\tau}{dt}
= -\frac{\nabla\mathcal{L}(\hat \vtheta_{\mathrm{orig}}(\tau(t)))}{\ell'(\vbeta(\hat \vtheta(t))^{\top}\vmu)},
\qquad
\frac{d\tau}{dt}
= -\frac{1}{\ell'(\vbeta(\hat \vtheta(t))^{\top}\vmu)}.
$$
Since $\ell'(u)\uparrow 0$ as $u\to\infty$, the rescaled flow accelerates time in the large-margin regime.
Formally,
$$
\tau(t)
= \int_{0}^{t} -\frac{1}{\ell'(\vbeta(\hat \vtheta(s))^{\top}\vmu)}ds.
$$
The rescaled flow makes the analysis easier due to the omitted term.
Since our goal is to gain a better understanding of the spatial trajectory, we study the rescaled SAM flows in our analysis.}

\noindent\textbf{Directional Convergence.}~
Let $\vbeta:[0,T_{\max}) \to \R^d$ be a trajectory with maximal existence time $T_{\max} \in (0,\infty]$.
We say that $\vbeta(t)$ \textbf{converges in direction} or \textbf{directionally converges} if the limit $\bar{\vbeta}^\infty=\lim_{t \to T_{\max}} \frac{\vbeta(t)}{\| \vbeta(t) \|}$ exists. In this case, $\bar{\vbeta}^\infty$ is called the \textbf{limit direction} of $\vbeta$.

\vspace{-5pt}
\section{SAM with \texorpdfstring{$\ell_\infty$}{l infinity}-Perturbations}
\vspace{-5pt}
\label{sec:l-infty}
We begin with $\ell_\infty$-SAM. For single-example data, its counterpart---rescaled $\ell_\infty$-SAM flow---has the nice property that each coordinate evolves independently, enabling an exact characterization of the trajectory for any depth $L$.

\vspace{-5pt}
\subsection{Depth-1 Networks}
\vspace{-5pt}
We start with the depth-1 case, in which the implicit bias of $\ell_\infty$-SAM coincides with that of GD.

\begin{restatable}{theorem}{depthoneinftyanydata}
For almost every dataset which is linearly separable, any perturbation radius $\rho$ and any initialization, consider the linear model $f(\vx)=\langle \vw, \vx \rangle$ trained with logistic loss. Then, \mbox{$\ell_\infty$-SAM} flow directionally converges in the $\ell_2$ max-margin direction.
\label{thm:depth1-linfty-anydata}
\end{restatable}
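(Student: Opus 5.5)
For $L=1$ the model is linear, so $\nabla\mathcal L(\vw)=\frac1N\sum_i \ell'(y_i\langle\vw,\vx_i\rangle)\,y_i\vx_i$ and the ascent point is $\hat\vw=\vw+\rho\,\sign(\nabla\mathcal L(\vw))$. The flow is therefore
\begin{align*}
\dot\vw=-\nabla\mathcal L(\hat\vw)=\frac1N\sum_i |\ell'(y_i\langle\vw,\vx_i\rangle+y_i\rho\langle\vs(\vw),\vx_i\rangle)|\,y_i\vx_i ,\qquad \vs(\vw):=\sign(\nabla\mathcal L(\vw)).
\end{align*}
The sign vector $\vs$ takes only finitely many values and each entry is bounded. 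The perturbation thus shifts each margin by a bounded amount $|\rho\langle\vs,y_i\vx_i\rangle|\le\rho\|\vx_i\|_1=:c_i$. The velocity is always a nonnegative combination of the $y_i\vx_i$ with weights comparable to the GF weights: since $\ell'(u)\approx -e^{-u}$ for large $u$, the weights lie within multiplicative factors $e^{\pm c_i}$ of $e^{-y_i\langle\vw,\vx_i\rangle}$.

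The plan is to follow the Soudry et al. (2018) argument, treating the perturbation as a bounded, time-varying reweighting of the examples.

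\textbf{Step 1: divergence and loss to zero.} Let $\vu$ be the $\ell_2$ max-margin vector, with $y_i\langle\vu,\vx_i\rangle\ge1$. Then $\frac{d}{dt}\langle\vw,\vu\rangle\ge\frac1N\sum_i|\ell'(\cdot)|>0$. Since $\vs$ ranges over a finite set, I would show that $\langle\vw,\vu\rangle\to\infty$, which forces all margins to infinity. Indeed, if some margin stayed bounded along a sequence, that term's weight would be bounded below, and $\langle\vw,\vu\rangle$ would grow linearly. The margins grow only logarithmically because the weights decay like $e^{-\text{margin}}$. Concretely, $\|\vw(t)\|\sim\log t$. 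A bound such as $\|\vw\|\le\|\vw(0)\|+\int\|\dot\vw\|$ together with $\sum_i|\ell'|\lesssim\mathcal L(\hat\vw)$ will be used.

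\textbf{Step 2: directional convergence to $\vu/\|\vu\|$.} Write $\vw(t)=\vu\log t+\vr(t)$ and show that $\vr(t)$ stays bounded. Following Soudry et al., let $S$ be the support vectors. For almost every dataset, $\vu=\sum_{i\in S}\alpha_i y_i\vx_i$ with all $\alpha_i>0$ and $|S|\le d$ (this is the ``almost every'' hypothesis). Then choose $\tilde\vw$ solving $\alpha_i=e^{-y_i\langle\tilde\vw,\vx_i\rangle}$ on $S$. Compute $\frac{d}{dt}\tfrac12\|\vr-\tilde\vw\|^2$. The non-support terms decay like $t^{-\gamma}$ with $\gamma>1$, hence are integrable.

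The support terms differ from the GF case only through the shift factor $e^{-y_i\rho\langle\vs,\vx_i\rangle}$. The key observation is that, after the reweighting, the flow is still a positive combination whose support-vector part behaves like $\sum_{i\in S}\alpha_i'(t)\,e^{-y_i\langle\vr,\vx_i\rangle}\,y_i\vx_i/t$, with $\alpha_i'\in[\alpha_ie^{-c_i},\alpha_ie^{c_i}]$. To absorb this, I would instead use an $\vr$-dependent Lyapunov argument. Along directions where $\vr$ is large, the terms $e^{-y_i\langle\vr,\vx_i\rangle}$ dominate any bounded multiplicative constant, so $\langle\vr,\dot\vr\rangle\le C/t^{1+\delta}+C'/t$ only in a bounded region. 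Together with the bounded reweighting, this keeps $\|\vr\|=O(\log\log t)$ or $O(1)$. Either rate suffices to give $\vw/\|\vw\|\to\vu/\|\vu\|$.

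\textbf{Main obstacle.} The hardest part is Step 2. There the sign vector $\vs(\vw(t))$ can switch discontinuously, so the flow must be understood in a Filippov or a.e. sense, and the reweighting is not asymptotically constant. I would handle this by proving only that $\|\vr(t)\|=o(\log t)$, which is all that directional convergence needs. The argument is a comparison: if $\langle\vr,\vz\rangle\to-\infty$ at rate $\epsilon\log t$ in some direction, the exponential weights in the support-vector terms would blow up by $t^{\epsilon}$ and overwhelm the bounded factors $e^{\pm c_i}$, contradicting the assumed growth.
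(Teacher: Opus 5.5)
Your plan is essentially the paper's proof. The paper invokes a general lemma for any perturbation with $\|\vepsilon\|_2\le B$ (here $B=\rho\sqrt d$) and writes $\dot\vw=\sum_n\gamma_n(t)e^{-m_n(t)}\vx_n$ with bounded $\gamma_n$. With no divergence step and no $\tilde\vw$ correction, it then shows $\tfrac12\tfrac{d}{dt}\|\vr\|^2\le At^{-1-\theta}+C/t$ for every value of $\vr=\vw-\vw^*\log t$, not just outside a bounded region. It gets this from $ze^{-z}\le 1$ on the non-support terms and a uniform upper bound on $(\delta e^{-z}-1)z$ on the support terms.

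Integrating the inequality you already wrote therefore gives $\|\vr\|=O(\sqrt{\log t})=o(\log t)$ directly, so the $O(1)$/$O(\log\log t)$ claims and the final comparison argument are unnecessary. Do keep a positive lower bound on the reweighting for large $t$, which you can get from the exact sigmoid without your Step 1. This matters because $\sup_z(\delta e^{-z}-1)z$ is uniform only for $\delta$ bounded away from $0$.
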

\vspace{-5pt}

The proof is deferred to \cref{app:depth1-infty}.
Since \cref{thm:depth1-linfty-anydata} holds for any $\rho$, it also recovers the implicit bias of GF.
While \cref{thm:depth1-linfty-anydata} characterizes the limit direction for almost all linearly separable datasets, \cref{thm:depth1-l-infty} shows that, for the single-example data, the $\ell_\infty$-SAM flow follows the same trajectory as GF.
The yellow lines in \cref{fig:intro-depth1} depict the flows.
As $t \to \infty$, $\vw(t)$ converges in direction to the $\ell_2$ max-margin direction $\vmu$.
Hence, when $L=1$, GD and $\ell_\infty$-SAM share the same bias toward the $\ell_2$ max-margin solution, independent of the initialization.

\vspace{-5pt}
\subsection{Deeper Networks \texorpdfstring{($L\geq2$)}{(L >=2)}}
\vspace{-5pt}

To isolate the depth-induced implicit bias of SAM from effects of data-point configuration and obtain a tractable characterization of the SAM dynamics, we analyze the minimalist separable dataset $\mathcal{D}_\vmu :=\{(\vmu, +1)\}$ with feature vector $\vmu \in \R^d$ satisfying $0<\mu_1<~\cdots < \mu_d$; without loss of generality, we assume this monotone ordering of $\mu_i$'s.
The additional technical difficulties that arise in the multi-point setting are deferred to \cref{app:multi-challenge}. In \cref{app:exp-l-infty}, we empirically verify that the same behaviors persist under multi-point datasets and discrete SAM updates, suggesting that our insights extend beyond the single-point setting.

In contrast to the depth-1 case, for deeper (linear diagonal) networks, the implicit bias of $\ell_\infty$-SAM differs from GD---even on this single-example dataset. For example, when $L=2$, while GD always aligns with the major feature, \mbox{$\ell_\infty$-SAM} can favor minor features depending on the initial condition. For $L\geq3$, we show that the implicit bias of $\ell_\infty$-SAM is more sensitive to initialization than GD, in the sense that a wider range of initialization leads to solutions focusing on minor features.
The next theorem characterizes the trajectory selected by the flow for different choices of initialization.

\begin{restatable}{theorem}{linftythm}
    For $i \in [L]$, suppose $\vw^{(i)}(0) = \valpha \in \sR^d_+$. Let $\vw^{(i)}(t)$ follow the rescaled $\ell_\infty$-SAM flow (\ref{eq:gradient_flow_spatial_trajectory}) with perturbation radius $\rho > 0$ on the dataset $\gD_\vmu$. Then, for the $j$-th coordinate of $\vbeta(t)$:
    \vspace{-5pt}
    \begin{itemize}[leftmargin=25pt,itemsep=-3pt]
        \item If $\alpha_j < \rho$, then $\beta_j(t)$ converges to $0$ if $L$ is even, or to $\rho^L$ if $L$ is odd.
        \item If $\alpha_j = \rho$, then $\beta_j(t)=\rho^L$ for all $t \geq 0$.
        \item If $\alpha_j > \rho$ and $L=2$, then $\beta_j(t)$ grows exponentially: $\beta_j(t) = \Theta(\exp(2 \mu _ j t))$.
        \item If $\alpha_j > \rho$ and $L>2$, let $J:=\arg\max_{j:\alpha_j>\rho} \mu_j (\alpha_j-\rho)^{L-2}$, and also let  $T:=\min_{k \in J}\nicefrac{1}{(L-2)\mu_k (\alpha_k -\rho)^{L-2}}$. If $j \in J$, then $\beta_{j}(t)\to \infty$ as $t \to T$; otherwise, $\beta_{j}(t)$ stays bounded for all $t<T$.
    \end{itemize}
    \label{thm:l-infty}
\end{restatable}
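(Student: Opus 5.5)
The plan is to reduce the rescaled $\ell_\infty$-SAM flow (\ref{eq:gradient_flow_spatial_trajectory}) to $d$ decoupled scalar ODEs, one per coordinate, and then solve each one explicitly.

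\textbf{Reduction.} For the single example $\gD_\vmu$, we have
$\nabla_{\vw^{(i)}}\gL(\vtheta)=\ell'(\langle\vbeta,\vmu\rangle)\,\vmu\odot\bigodot_{\ell\neq i}\vw^{(\ell)}$ with $\ell'<0$ and $\vmu>0$. Hence the $j$-th coordinate of $\vepsilon_\infty^{(i)}$ is $-\rho\,\sign\big(\prod_{\ell\neq i}w^{(\ell)}_j\big)$. All layers start at the same point $\valpha$, and the vector field is invariant under permuting layers. So I will argue, by uniqueness of solutions wherever the field is locally Lipschitz, that $\vw^{(i)}(t)=\vw(t)$ for all $i$. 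Then $\vbeta=\vw^{\odot L}$, and each coordinate satisfies
\begin{equation*}
\dot w_j=\mu_j\big(w_j-\rho\,\sign(w_j)^{L-1}\big)^{L-1}.
\end{equation*}
This equation involves only $w_j$, so the coordinates evolve independently.

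\textbf{Case $\alpha_j\ge\rho$.} On $w_j>0$ set $u=w_j-\rho$, which gives $\dot u=\mu_j u^{L-1}$.
\begin{itemize}[leftmargin=15pt,itemsep=0pt]
\item If $u(0)=0$, then $u\equiv0$. Thus $w_j\equiv\rho$ and $\beta_j\equiv\rho^L$.
\item If $u(0)>0$ and $L=2$, then $u=(\alpha_j-\rho)e^{\mu_j t}$. Hence $\beta_j=(\rho+u)^2=\Theta(e^{2\mu_j t})$.
\item If $u(0)>0$ and $L>2$, separation of variables gives
\begin{equation*}
u(t)=\big((\alpha_j-\rho)^{-(L-2)}-(L-2)\mu_j t\big)^{-1/(L-2)}.
\end{equation*}
This blows up exactly at $T_j=1/\big((L-2)\mu_j(\alpha_j-\rho)^{L-2}\big)$.
\end{itemize}
In the last case, minimizing $T_j$ is the same as maximizing $\mu_j(\alpha_j-\rho)^{L-2}$. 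So every $j\in J$ has $T_j=T$ and $\beta_j\to\infty$ as $t\to T$. Every other $j$ with $\alpha_j>\rho$ has $T_j>T$, so $\beta_j$ is continuous, hence bounded, on $[0,T]$. For coordinates with $\alpha_j\le\rho$, boundedness on $[0,T)$ will follow from the next case. Since the whole system only exists up to the first blow-up, $T$ is the maximal time.

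\textbf{Case $0<\alpha_j<\rho$, $L$ odd.} Here $L-1$ is even, so $\dot u=\mu_j u^{L-1}\ge0$ with $u(0)<0$. The point $u=0$ is an equilibrium of a smooth ODE, so uniqueness forbids $u$ from reaching it in finite time. Therefore $u$ increases monotonically to a limit that must be the equilibrium $0$. This gives $w_j\uparrow\rho$ while staying in $(\alpha_j,\rho)$, so the sign never changes and $\beta_j\to\rho^L$.

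\textbf{Case $0<\alpha_j<\rho$, $L$ even.} This is the delicate case, and I expect it to be the main obstacle, because the vector field is discontinuous at $w_j=0$.
\begin{itemize}[leftmargin=15pt,itemsep=0pt]
\item On $(0,\rho)$ we have $\dot w_j=\mu_j(w_j-\rho)^{L-1}\le-\mu_j(\rho-\alpha_j)^{L-1}<0$. So $w_j$ reaches $0$ in finite time.
\item For $w_j<0$ the equation reads $\dot w_j=\mu_j(w_j+\rho)^{L-1}$, which is positive near $0$. Thus the field points toward $0$ from both sides.
\item At $w_j=0$, the convention $\sign(0)=0$ gives $\hat w_j=0$ and $\dot w_j=0$.
\end{itemize}
I will take the natural solution concept: the unique absolutely continuous solution that is Carathéodory on each side and sticks at $0$ once it arrives. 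This agrees with the Filippov sliding mode. I will justify it by noting that any solution leaving $0$ would immediately be pushed back, so $w_j\equiv0$ after the hitting time. Hence $\beta_j\to0$, in fact in finite time.

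Finally, the layer-symmetry argument must also hold through this nonsmooth event. Every layer hits $0$ at the same instant by the same scalar dynamics, so the equal-layers solution remains consistent there.
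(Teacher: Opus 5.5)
Your proposal is correct and follows essentially the paper's route: layer-permutation symmetry reduces the flow to the decoupled scalar ODE $\dot w_j=\mu_j\bigl(w_j-\rho\,\sign(w_j^{L-1})\bigr)^{L-1}$, which is then solved case by case. The differences are minor. The paper writes closed forms in every subcase and, for even $L$, checks absolute continuity of the solution extended by $w_j\equiv0$ after the hitting time. You instead use a monotone-convergence/equilibrium argument for $\alpha_j<\rho$, add a uniqueness argument for the sticking at $0$, and make the link between the individual blow-up times and $J$, $T$ explicit.

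The one case you leave out is $\alpha_j=0$, which $\valpha\in\R^d_+$ allows. It is trivial for even $L$, but for odd $L$ the convention $\sign(0)=0$ makes $w_j\equiv0$ a valid solution. The stated limit $\rho^L$ then holds only for the pass-through solution, which the paper's closed form implicitly selects.
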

\vspace{-5pt}

We provide the proof of \cref{thm:l-infty} in \cref{app:thm-l-infty}.
The behavior of each coordinate $\beta_j(t)$ is completely determined by whether the initialization $\alpha_j$ lies below, at, or above the threshold $\rho$. In each of these three regimes, $\beta_j(t)$ is monotone in $t$.
Recall that $\vepsilon_\infty(\vtheta) := \rho\,\mathrm{sign}(\nabla \mathcal{L}(\vtheta))$. For $\gD_\vmu$, the sign of the gradient (\ref{eq:gradient}) is determined coordinate-wise. Thus, the rescaled $\ell_\infty$-SAM flow (\ref{eq:gradient_flow_spatial_trajectory}) decouples across coordinates, and each $\beta_j(t)$ evolves independently, allowing us to state \cref{thm:l-infty} for each separate trajectory of $\beta_j(t)$.

\begin{remark}[Interpretation of the Finite-time Blow-up]
    For $L>2$, the rescaled $\ell_\infty$-SAM flow (\ref{eq:gradient_flow_spatial_trajectory}) exhibits finite-time blow-up: some coordinates satisfy $\beta_j(t) \to \infty$ as $t \to T$. Interpreting this phenomenon in the original SAM time scale, the blow-up corresponds to \emph{infinite time} in the original SAM flow.
    Indeed, as $\hat \vbeta(t)^\top\vmu \to \infty$, we have $\ell'(\hat \vbeta(t)^\top\vmu) \to 0^-$, and therefore 
    $$
    \tau(t)=\int_0^t -\frac{1}{\ell'(\hat \vbeta(s)^\top\vmu)}ds \to \infty
    \quad\text{as}\quad t\to T.
    $$
    Thus, in the original SAM flow, only the coordinates in $J$ diverge as the original time $\tau(t) \to \infty$, while all other coordinates remain bounded.
    \label{remark:finite-time-blow-up}
\end{remark}

\begin{remark}[Interpretation of Exponential Growth]
For $L=2$, each coordinate $\beta_j(t)$ with $\alpha_j>\rho$ grows exponentially as $t \to \infty$.
Since $\tau(t) \to \infty$ as $t \to \infty$, divergence occurs on the same infinite-time limit in both the rescaled and original $\ell_\infty$-SAM flows.
Nevertheless, because the dynamics are obtained after a time reparameterization, the exponential rate observed in the rescaled flow should not be directly interpreted as the actual divergence speed in the original SAM dynamics.
Still, for fixed $L=2$, all coordinates share the same rescaled time, so their relative growth can be compared. Among the coordinates with $\alpha_j>\rho$, the one with the largest feature weight $\mu_j$ dominates asymptotically and the $\ell_\infty$-SAM flow therefore converges in that coordinate direction. We formalize these conclusions for general $L$ in the following corollary, characterizing the dominant direction.
\end{remark}

\begin{restatable}{corollary}{corlinfty}
    Under the assumptions of \cref{thm:l-infty}, let $S:=\{j:\alpha_j>\rho\}$ and assume $S \neq~\varnothing$. 
    If there is a unique maximizing index   $j^*:=\arg \max_{j \in S} \mu_j (\alpha_j - \rho)^{L-2},$ then the $\ell_\infty$-SAM flow converges in the $\ve_{j^*}$ direction.
    In particular, when $L=2$, we have $j^* :=~\arg \max_{j \in S} \mu_j$.
    \label{cor:l-infty}
    \vspace{-5pt}
\end{restatable}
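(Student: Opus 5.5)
We derive the corollary from \cref{thm:l-infty} by comparing the coordinates of $\vbeta(t)$ and showing that $\beta_{j^*}(t)$ dominates all others in magnitude as $t$ approaches the maximal time. Since the original and rescaled flows differ only by a time reparameterization (and, by \cref{remark:finite-time-blow-up}, the blow-up time $T$ of the rescaled flow corresponds to infinite original time), it suffices to prove that $\vbeta(t)/\|\vbeta(t)\|\to\ve_{j^*}$ along the rescaled flow as $t\to T_{\max}$.

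The coordinates split into two groups. For $j\notin S$, \cref{thm:l-infty} gives $\beta_j(t)\to 0$ or $\rho^L$, or $\beta_j\equiv\rho^L$. These coordinates are therefore bounded on $[0,T_{\max})$ in every case. For $j\in S$, each coordinate evolves independently with $w^{(\ell)}_j=\alpha_j$ for all layers at initialization. By the symmetric initialization, all layers stay equal: $w_j^{(\ell)}(t)=w_j(t)$. The scalar ODE is then $\dot w_j=\mu_j (w_j-\rho)^{L-1}$. Indeed, $w_j>\rho$ implies a positive gradient sign, so the perturbation is $-\rho$ in the ascent direction (the sign convention from the proof of \cref{thm:l-infty}). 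Setting $u_j=w_j-\rho$ gives $\dot u_j=\mu_j u_j^{L-1}$.

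\textbf{Case $L=2$.} Here $u_j(t)=(\alpha_j-\rho)e^{\mu_j t}$, so $\beta_j=(u_j+\rho)^2=\Theta(e^{2\mu_j t})$ as $t\to\infty=T_{\max}$. For $j\in S$ with $j\neq j^*$ we have $\mu_j<\mu_{j^*}$, hence $\beta_j/\beta_{j^*}=O(e^{-2(\mu_{j^*}-\mu_j)t})\to0$. The bounded coordinates are also negligible against $\beta_{j^*}\to\infty$. Thus $\vbeta/\|\vbeta\|\to\ve_{j^*}$, and $j^*=\arg\max_{j\in S}\mu_j$ as claimed.

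\textbf{Case $L>2$.} Solving gives $u_j(t)=\big((\alpha_j-\rho)^{-(L-2)}-(L-2)\mu_j t\big)^{-1/(L-2)}$, which blows up at $T_j=1/((L-2)\mu_j(\alpha_j-\rho)^{L-2})$. The earliest blow-up time is $T=\min_j T_j$, and uniqueness of the maximizer of $\mu_j(\alpha_j-\rho)^{L-2}$ means $T_{j^*}=T$ strictly below every other $T_j$, $j\in S$. Hence $T_{\max}=T$, and each $\beta_j$ with $j\neq j^*$ stays bounded on $[0,T]$, consistent with \cref{thm:l-infty}. Meanwhile $\beta_{j^*}\to\infty$, so every other coordinate divided by $\beta_{j^*}$ tends to $0$ and the direction converges to $\ve_{j^*}$.

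The main obstacle is justifying that the layers stay equal and that the sign pattern, and hence the perturbation, stays constant along the trajectory, so that the scalar ODE is valid up to $T_{\max}$. I would handle this by noting that the layer-symmetric scalar ODE has a solution. Because the right-hand side is locally Lipschitz while $u_j>0$, uniqueness forces the actual flow to coincide with it. Positivity of $u_j$ is preserved since $\dot u_j>0$.
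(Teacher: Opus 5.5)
Your proposal is correct and follows essentially the same route as the paper's proof. Coordinates outside $S$ stay bounded. For $L=2$ you compare the $e^{2\mu_j t}$ growth rates, and for $L>2$ you use that the unique maximizer of $\mu_j(\alpha_j-\rho)^{L-2}$ has the strictly earliest blow-up time, so every other coordinate is still finite there. Your local-Lipschitz uniqueness argument for layer equality is a welcome addition; the paper only asserts this via permutation symmetry.

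One harmless wording slip: for positive weights the loss gradient is \emph{negative} (since $\ell'<0$), which is exactly why the perturbation on coordinate $j$ is $-\rho$. It is $w_j>0$, not $w_j>\rho$, that fixes this sign.
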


The proof is deferred to \cref{app:cor-l-infty}.
When $L=2$ and $\valpha \in \R^d_{++}$, setting $\rho=0$ in \cref{cor:l-infty} yields $S=[d]$. Hence, \cref{cor:l-infty} recovers that the GF always aligns in the $\ve_d$ direction---the $\ell_1$ max-margin direction---regardless of the initialization. 

\begin{figure}[t]
    \centering
    \begin{subfigure}[t]{0.24\textwidth}
        \centering
        \includegraphics[width=\textwidth]{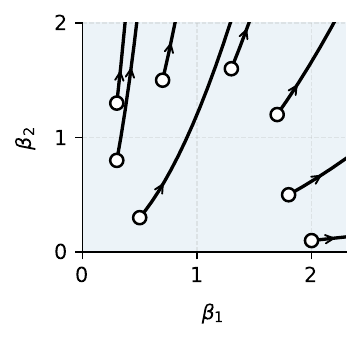}
        \caption{GD $(L=2)$}
        \label{fig:l-infty-gd-L2}
    \end{subfigure}
    \hfill
    \begin{subfigure}[t]{0.24\textwidth}
        \centering
        \includegraphics[width=\textwidth]{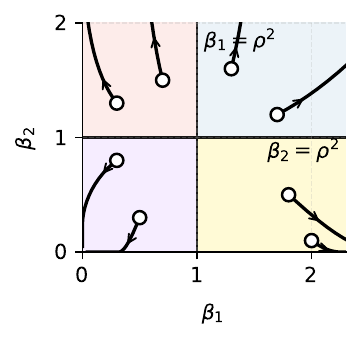}
        \caption{$\ell_\infty$-SAM $(L=2)$}
        \label{fig:l-infty-sam-L2}
    \end{subfigure}
    \hfill
    \begin{subfigure}[t]{0.24\textwidth}
        \centering
        \includegraphics[width=\textwidth]{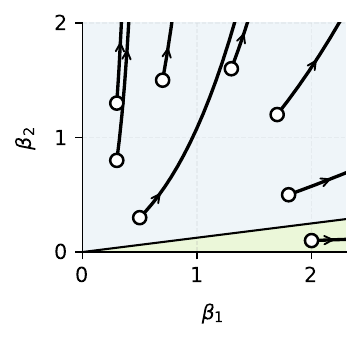}
        \caption{GD $(L=3)$}
        \label{fig:l-infty-gd-L3}
    \end{subfigure}
    \hfill
    \begin{subfigure}[t]{0.24\textwidth}
        \centering
        \includegraphics[width=\textwidth]{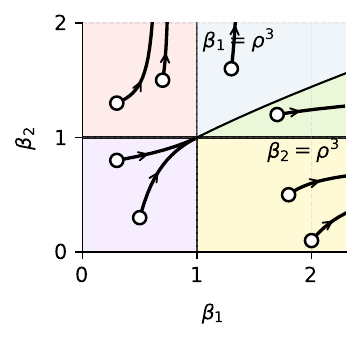}
        \caption{$\ell_\infty$-SAM $(L=3)$}
        \label{fig:l-infty-sam-L3}
    \end{subfigure}
    \caption{Trajectories $\beta(t)$ from identical initializations under GF and $\ell_\infty$-SAM flow with $d=2$ and $\vmu = (1,2)$. For SAM, $\rho=1$.}
    \label{fig:gradient-flow}
    \vspace{-7pt}
\end{figure}

\noindent\textbf{Illustrative Example.}~
\cref{fig:gradient-flow} shows the trajectories of $\vbeta(t)$ under GF and $\ell_\infty$-SAM flow with $L=2, 3$ and $\vmu=(1,2)$. \cref{fig:l-infty-gd-L2} depicts the $L=2$, GF case, where GF always aligns in the $\ve_2$ direction. 
For $L=2$ and $\ell_\infty$-SAM (\cref{fig:l-infty-sam-L2}), the plane $(\beta_1, \beta_2)$ is partitioned by the thresholds $\beta_j=\alpha_j^2 = \rho^2$. If $\alpha_2>\rho$ (so $2 \in S$), the $\ell_\infty$-SAM flow shows directional convergence in $\ve_2$ (red/blue regions). In the yellow region, $2 \notin S$ and $1 \in S$, so the limit direction is $\ve_1$---the ``minor'' feature. If all coordinates satisfy $\alpha_j<\rho$, the flow converges to $\vzero$ (purple region), by \cref{thm:l-infty}.

For $L>2$ (\cref{fig:l-infty-gd-L3,fig:l-infty-sam-L3}), the blue regions get partitioned once more because large $\alpha_1$ leads to $\mu_1 (\alpha_1 - \rho)^{L-2} > \mu_2 (\alpha_2 - \rho)^{L-2}$, leading to directional convergence toward $\ve_1$. Comparing the green regions in \cref{fig:l-infty-gd-L3,fig:l-infty-sam-L3} shows that the slope of the boundary between blue and green regions is steeper in $\ell_\infty$-SAM flow than that of GF. Considering that initializations in the yellow region also result in the limit direction $\ve_1$, these together indicate that $\ell_\infty$-SAM exhibits a greater sensitivity to initialization and stronger implicit bias toward minor features than GD.
\vspace{-5pt}
\section{SAM with \texorpdfstring{$\ell_2$}{l2}-Perturbations: Sequential Feature Amplification}
\vspace{-5pt}
\label{sec:l2}
We now turn to $\ell_2$-SAM, which is the form most commonly used in practice.

\vspace{-5pt}
\subsection{Asymptotic Behavior on Depth-1 and Depth-2 Networks}
\vspace{-3pt}

For depth-1 models, $\ell_2$-SAM converges in the $\ell_2$ max-margin direction regardless of initialization, matching the implicit bias of GD and $\ell_\infty$-SAM.
We prove the following theorem in \cref{app:depth1-l2}:

\begin{restatable}{theorem}{depthoneanydata}
For almost every dataset which is linearly separable, any perturbation radius $\rho$ and any initialization, consider the linear model $f(\vx)=\langle \vw, \vx \rangle$ trained with logistic loss. Then, $\ell_2$-SAM flow directionally converges in the $\ell_2$ max-margin direction.
\label{thm:depth1-l2-anydata}
\vspace{-3pt}
\end{restatable}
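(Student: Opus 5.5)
For the linear model the $\ell_2$-SAM flow reads $\dot\vw=-\nabla\gL(\vw+\rho\,\nabla\gL(\vw)/\|\nabla\gL(\vw)\|_2)$. Write $\vg(\vw):=\nabla\gL(\vw)=\frac1N\sum_i \ell'(y_i\langle\vw,\vx_i\rangle)y_i\vx_i$ and $\vz_i:=y_i\vx_i$. The plan is to reduce the dynamics to a perturbed gradient flow whose perturbation becomes negligible in the margin-dominated regime. We then invoke the standard analysis of \citet{soudry2018implicit}, which identifies the limit direction through the support vectors and is valid for almost every separable dataset.

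\textbf{Step 1: descent and divergence.} Since $-\vg(\vw)$ is a nonnegative combination of the $\vz_i$, the ascent point satisfies $\langle\hat\vw,\vz_j\rangle=\langle\vw,\vz_j\rangle+\rho\langle\vg/\|\vg\|,\vz_j\rangle\ge\langle\vw,\vz_j\rangle-\rho\max_i\|\vz_i\|\,\|\vz_j\|$. The ascent shift is therefore bounded by a constant $C\rho$ uniformly in time. Let $\vw^*$ be the $\ell_2$ max-margin vector, normalized so that $\min_i\langle\vw^*,\vz_i\rangle=1$. We get
\[
\tfrac{d}{dt}\langle\vw,\vw^*\rangle=-\langle\vg(\hat\vw),\vw^*\rangle\ge\tfrac1N\sum_i|\ell'(\langle\hat\vw,\vz_i\rangle)|>0 .
\]
Combined with a bound on $\|\dot\vw\|$, this shows $\|\vw(t)\|\to\infty$. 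We also show that $\gL(\vw(t))\to0$, i.e., that every margin $\langle\vw,\vz_i\rangle\to\infty$. For this, we compare $\gL(\vw)$ with $\gL(\hat\vw)$: the margins at $\hat\vw$ and at $\vw$ differ by at most $C\rho$, and $|\ell'(u)|$ is comparable to $e^{-u}$ for large $u$. Consequently $|\ell'(\langle\hat\vw,\vz_i\rangle)|\le e^{C\rho}|\ell'(\langle\vw,\vz_i\rangle)|\cdot(1+o(1))$, and the same comparison holds in the reverse direction.

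\textbf{Step 2: Soudry-type decomposition.} We write $\vw(t)=\log t\,\vw^*+\tilde\vw+\vr(t)$ and aim to show that $\vr$ stays bounded. This is where the main obstacle lies: the loss derivatives are evaluated at $\hat\vw$ rather than $\vw$. Expanding gives
\[
\langle\hat\vw,\vz_i\rangle=\langle\vw,\vz_i\rangle+\rho\Big\langle\tfrac{\vg}{\|\vg\|},\vz_i\Big\rangle .
\]
The key point is that, asymptotically, $\vg/\|\vg\|$ converges to a fixed vector: $-\vg$ is dominated by the support vectors with weights proportional to $e^{-\langle\vw,\vz_i\rangle}$, so $-\vg/\|\vg\|\to\vw^*/\|\vw^*\|$ by the KKT conditions (for almost every dataset the dual variables are strictly positive and unique). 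The ascent therefore shifts every support-vector margin by essentially the same constant $-\rho\langle\vw^*,\vz_i\rangle/\|\vw^*\|=-\rho/\|\vw^*\|$. This amounts to multiplying all support-vector loss derivatives by a common factor $\approx e^{\rho/\|\vw^*\|}$, which is a time rescaling and a bounded change of $\tilde\vw$. For the remaining discrepancy, we show that the error $\vg/\|\vg\|-(-\vw^*/\|\vw^*\|)$ decays at a rate that is integrable against the $1/t$ scale. We expect this to follow from the same $t^{-1-\delta}$ integrability estimates that Soudry et al.\ use for non-support vectors, since the error is itself driven by the relative sizes of the $e^{-\langle\vw,\vz_i\rangle}$ terms. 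This self-consistency argument, bootstrapping the boundedness of $\vr$ into control of the normalized gradient, is the step we expect to be hardest.

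\textbf{Step 3: conclusion.} Once $\|\vr(t)\|$ is bounded, we have $\vw(t)/\|\vw(t)\|\to\vw^*/\|\vw^*\|$ because $\log t\to\infty$, which establishes the directional convergence to the $\ell_2$ max-margin direction for every $\rho$ and every initialization. If the residual argument proves too delicate, the fallback is a weaker route: first establish directional convergence via a potential argument on $\langle\vw,\vw^*\rangle/\|\vw\|$, using that $-\vg(\hat\vw)$ lies in the cone of the $\vz_i$ weighted asymptotically like GF up to factors in $[e^{-C\rho},e^{C\rho}]$. Then identify the limit through the KKT characterization, noting that the limit direction must be supported on the asymptotically minimal-margin points.
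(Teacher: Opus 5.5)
Your Step~2 has a genuine gap. Everything hinges on one claim: that $\nabla\gL(\vw)/\|\nabla\gL(\vw)\|_2\to-\vw^*/\|\vw^*\|_2$ fast enough for the discrepancy to be integrable against $1/t$. You only state this as an expectation, and the mechanism you cite does not deliver it. The $t^{-1-\theta}$ estimates of \citet{soudry2018implicit} control only the non-support vectors. Up to $O(t^{-\theta})$, the direction of $-\nabla\gL(\vw)$ is set by the relative weights $e^{-\langle\vw,\vz_n\rangle}$ over support vectors $n\in S$. Its deviation from $\vw^*/\|\vw^*\|_2$ is therefore first order in $\langle\vr,\vz_n\rangle$, $n\in S$. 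The Soudry-type Lyapunov argument gives only boundedness of $\vr$, not decay, so the term this injects into $\dot\vr$ is $O(1/t)$ and is not integrable. To close the loop you would need to prove that $\vr$ decays at a polynomial rate in $t$ on the span of the support vectors. That requires a separate stability analysis in the time variable $\log t$, and it is not automatic. The residual SAM factor $e^{\rho\langle\cdot,\vz_n\rangle}$ depends on $\vr$, and it breaks the sign property $(e^{-z}-1)z\le0$ on which Soudry's argument rests. The fallback is likewise only a sketch: the existence of the limit direction is exactly the hard part, and no potential argument is actually given.

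The missing idea is that this refinement is unnecessary. The bounded ascent shift from your Step~1 already suffices if you aim for $\|\vr\|=o(\log t)$ instead of $O(1)$; this is the paper's route (\cref{lem:linear}). Write $|\ell'(\langle\hat\vw,\vz_n\rangle)|=\gamma_n(t)e^{-\langle\vw,\vz_n\rangle}$. The coefficient $\gamma_n$ is bounded above by the shift bound, and bounded away from $0$ once the margins are large, which your Step~1 provides. Set $\vr:=\vw-\vw^*\log t$ with $\vw^*=\sum_{n\in S}b_n\vz_n$, $b_n>0$ (this is where ``almost every dataset'' enters), and let $z_n:=\langle\vr,\vz_n\rangle$. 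Then
\[
\tfrac12\tfrac{d}{dt}\|\vr\|_2^2=\tfrac1t\sum_{n\in S}b_n\bigl(\tfrac{\gamma_n}{b_n}e^{-z_n}-1\bigr)z_n+\sum_{n\notin S}\gamma_n\, t^{-\langle\vw^*,\vz_n\rangle}e^{-z_n}z_n .
\]
The first sum is $O(1/t)$, because $\sup_z(\delta e^{-z}-1)z$ is bounded uniformly for $\delta$ in a compact subset of $(0,\infty)$. The second sum is $O(t^{-1-\theta})$ by the margin gap. Hence $\|\vr(t)\|_2=O(\sqrt{\log t})=o(\log t)$, and the direction converges to $\vw^*/\|\vw^*\|_2$.

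This argument uses only $\|\hat\vw-\vw\|_2\le B$, so the same lemma also covers $\ell_\infty$-SAM. The price is a weaker residual bound than the $\vw^*\log t+\tilde\vw+O(1)$ expansion you were after. Your route would give that sharper expansion, with $\tilde\vw$ shifted by $\rho/\|\vw^*\|_2$, if its stability step were completed.
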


While \cref{thm:depth1-l2-anydata} characterizes the limit direction for linearly separable datasets, \cref{thm:depth1-l2} shows that, for the single-example data, the $\ell_2$-SAM flow follows the same trajectory as GF.

For depth-2 models, $\ell_2$-SAM asymptotically converges in the $\ell_1$ max-margin direction as the loss converges to zero, independently of the initialization scale. 
This parallels the well-known behavior of GD \citep{gunasekar2018implicit}.
We formalize this below, with the proof in \cref{app:thm-depth2-limit}.

\begin{restatable}{theorem}{depthtwolimitanydata}
    For almost every dataset which is linearly separable, and any perturbation radius $\rho$, consider the linear diagonal network of depth 2, $f(\vx)=\langle \vw^{(1)}\odot \vw^{(2)}, \vx\rangle$ trained with logistic loss. Let $(\vw^{(1)}(t), \vw^{(2)}(t))$ follow the $\ell_2$-SAM flow with $\vw^{(1)}(0)=~\vw^{(2)}(0)$. Assume (a) the loss vanishes $\mathcal{L}(\vw^{(1)}(t), \vw^{(2)}(t))\to 0$, (b) the predictor $\vbeta(t):=\vw^{(1)}(t)\odot \vw^{(2)}(t)$ converges in direction. Then the limit direction of $\vbeta(t)$ is the $\ell_1$ max-margin direction.
    \label{thm:depth2-limit-direction-anydata}
\end{restatable}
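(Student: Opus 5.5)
Our plan is to reduce the $\ell_2$-SAM flow, in the regime where the loss vanishes, to a perturbed gradient flow whose asymptotics are governed by the same KKT argument used for GD in \citet{gunasekar2018implicit,NEURIPS2020_c76e4b2f}. With $\vw^{(1)}(0)=\vw^{(2)}(0)$, the layer gradients are
\[
\nabla_{\vw^{(1)}}\gL=\vw^{(2)}\odot\vr,\qquad \nabla_{\vw^{(2)}}\gL=\vw^{(1)}\odot\vr,\qquad \vr:=\tfrac1N\sum\nolimits_i \ell'(y_i\langle\vbeta,\vx_i\rangle)y_i\vx_i .
\]
Hence the perturbation $\vepsilon_2$ is symmetric in the two layers, and the SAM flow preserves $\vw^{(1)}(t)=\vw^{(2)}(t)=:\vw(t)$ (uniqueness of ODE solutions). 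Writing $\vg:=\nabla_{\vw}$ per layer, the ascent point is $\hat\vw=\vw+\rho\,\vw\odot\vr/(\sqrt2\|\vw\odot\vr\|)$. The flow becomes $\dot\vw=-\hat\vw\odot\hat\vr$, where $\hat\vr$ is $\vr$ evaluated at $\hat\vbeta=\hat\vw\odot\hat\vw$. Coordinatewise,
\[
\hat w_j=w_j\Big(1+\tfrac{\rho\, r_j}{\sqrt2\,\|\vw\odot\vr\|}\Big)=:w_j(1+\delta_j),
\]
so $\hat\vbeta=\vbeta\odot(1+\vdelta)^2$, with multiplicative factors bounded by $(1\pm\rho/(\sqrt2\min_j|w_j|))^2$ when the relevant coordinates are large.

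The key steps are as follows. (1) Since the loss vanishes, $\|\vbeta(t)\|\to\infty$; by assumption (b), $\vbeta/\|\vbeta\|\to\bar\vbeta$. We show that $\hat\vbeta/\|\hat\vbeta\|\to\bar\vbeta$ as well. On the support of $\bar\vbeta$ we have $|w_j|\to\infty$, so $\delta_j\to0$. Off the support, $\hat\beta_j$ is at most $(|w_j|+\rho)^2$, which is negligible relative to $\|\vbeta\|$. (2) Write the dynamics of $\beta_j=w_j^2$ as $\dot\beta_j=-2w_j\hat w_j\hat r_j=-2(1+\delta_j)\beta_j\hat r_j$, that is,
\[
\tfrac{d}{dt}\log|\beta_j|=-2(1+\delta_j)\hat r_j .
\]
Integrating gives $\log|\beta_j(t)|=\log|\beta_j(0)|+2\,\langle \vx^{\mathrm{eff}}_j\rangle$, where $\vx^{\mathrm{eff}}$ collects $\int_0^t(1+\delta_j)(-\hat r_j)\,ds=\sum_i y_ix_{ij}\,\nu_i(t)+E_j(t)$, with $\nu_i(t):=\int -\ell'(y_i\langle\hat\vbeta,\vx_i\rangle)/N\,ds$. (3) Show that the error $E_j=\int\delta_j(-\hat r_j)$ is $o(\sum_i\nu_i)$. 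Because $\delta_j$ is bounded in magnitude by $\rho/(\sqrt2|w_j|)$ for $|w_j|$ large, and on support coordinates $|w_j|\to\infty$, the error is dominated on those coordinates. On the remaining coordinates, only an upper bound on $\log|\beta_j|$ is needed, and the factor $1+\delta_j\le 1+\rho/(\sqrt2|w_j|)$ alone suffices there, provided $|w_j|$ is bounded below. (4) Normalize by $\nu(t):=\sum_i\nu_i(t)\to\infty$ and pass to subsequential limits $\lambda_i=\lim\nu_i/\nu$. Standard arguments (the losses of points with non-minimal margin are exponentially smaller, as in \citet{NEURIPS2020_c76e4b2f}) show that $\lambda_i>0$ only for support vectors of $\bar\vbeta$ under the $\hat\vbeta$-margin. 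Via step (1), these are also the support vectors for $\bar\vbeta$. (5) From $\log|\beta_j|/\nu\to 2\sum_i\lambda_iy_ix_{ij}$, together with $\log\|\vbeta\|/\nu$ converging to the common maximal value on the support, deduce the KKT conditions of $\min\|\vbeta\|_1$ subject to $y_i\langle\vbeta,\vx_i\rangle\ge1$. Specifically, $|\sum_i\lambda_iy_ix_{ij}|\le c$ everywhere, with equality and matching sign on $\mathrm{supp}(\bar\vbeta)$. Under the almost-every-dataset assumption the $\ell_1$ max-margin solution is unique, which concludes the proof.

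The main obstacle is step (3) on coordinates where $w_j$ may pass near zero. There $\delta_j$ is not small, and can even be as large as $\rho/(\sqrt2|w_j|)$, so the perturbation may flip the sign of $\hat w_j$. Our first attempt is to note that $\ell_2$ normalization gives $|\delta_j|\,|w_j|\le\rho/\sqrt2$ uniformly. Consequently $|\dot\beta_j|\le 2(|\beta_j|+\rho|w_j|/\sqrt2)|\hat r_j|$, which yields $\log(|w_j|+c)\lesssim\nu\cdot|\sum\lambda_iy_ix_{ij}|+o(\nu)$. That upper bound is all the KKT inequality requires off the support, and it sidesteps the lower-bound issue near zero.
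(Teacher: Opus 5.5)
Your route differs from the paper's. The paper asserts $\dot\vbeta=-2\,\mathrm{diag}(\vbeta)\nabla\gL(\vbeta)(1+o(1))$ and cites a steepest-descent max-margin theorem. You instead weight by $\nu_i$ at the \emph{perturbed} margins, which is the more careful choice. The paper's implicit claim $\ell'(\hat m_i)/\ell'(m_i)\to1$ fails, because $\hat m_i-m_i$ is of order $\|\vw\|$ (here $m_i:=y_i\langle\vbeta,\vx_i\rangle$ and $\hat m_i:=y_i\langle\hat\vbeta,\vx_i\rangle$).

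However, the off-support half of step (5) is argued in the wrong direction. Let $\gamma:=\lim\log\|\vbeta\|/\nu$ along your subsequence. For $j\notin\mathrm{supp}(\bar\vbeta)$, the only information is the free bound $\log\beta_j\le\log\|\vbeta\|=\gamma\nu+o(\nu)$. To convert it into $\sum_i\lambda_iy_ix_{ij}\le\gamma/2$ you need $\log\beta_j(t)\ge\log\beta_j(0)+2\sum_iy_ix_{ij}\nu_i(t)-o(\nu(t))$, i.e.\ $E_j\ge-o(\nu)$. That is a \emph{lower} bound on growth: it shows a coordinate with a larger dual value would overtake the support. Upper bounds on $1+\delta_j$ or on $\log(|w_j|+c)$ say nothing about this, so ``that upper bound is all the KKT inequality requires'' is backwards. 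Moreover, SAM works precisely against this lower bound: $\delta_j$ has the sign of $r_j$, so $\delta_j(-\hat r_j)\le0$ whenever $r_j\hat r_j\ge0$.

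What rescues the step is that $\delta_j=\rho r_j/(\sqrt2\|\vw\odot\vr\|_2)$ involves no factor $1/|w_j|$; that was only a crude bound. Eventually $\|\vw\odot\vr\|_2\ge|\langle\vbeta,\vr\rangle|/\|\vw\|_2\ge\min_im_i\cdot\tfrac1N\sum_i|\ell'(m_i)|/\sqrt{\|\vbeta\|_1}$, while $|r_j|\le\max_i\|\vx_i\|_\infty\tfrac1N\sum_i|\ell'(m_i)|$. Hence $\max_j|\delta_j|\lesssim\sqrt{\|\vbeta\|_1}/\min_im_i=O(\|\vbeta\|^{-1/2})$, provided $\bar\gamma:=\min_iy_i\langle\bar\vbeta,\vx_i\rangle>0$; step (5) needs this anyway, and you should justify it. Then $|E_j(t)|\le C\int_0^t\max_k|\delta_k|\,d\nu=o(\nu(t))$ for \emph{every} $j$, in both directions, and your ``main obstacle'' disappears.

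Second, the two-sided condition $|\sum_i\lambda_iy_ix_{ij}|\le\gamma/2$ cannot be derived at all. With $\vw^{(1)}=\vw^{(2)}$ you have $\beta_j=w_j^2\ge0$ for all $t$. A coordinate with very negative $\sum_i\lambda_iy_ix_{ij}$ is simply driven to $0$, and nothing produces the lower bound $\ge-\gamma/2$. The repaired argument gives $\sum_i\lambda_iy_ix_{ij}\le\gamma/2$ for all $j$, with equality on the support. These are the KKT conditions of $\min\{\vone^\top\vbeta:\ \vbeta\ge\vzero,\ y_i\langle\vbeta,\vx_i\rangle\ge1\ \forall i\}$, so they identify the $\ell_1$ max-margin direction \emph{over the nonnegative orthant}.

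This genuinely differs from the unconstrained direction on an open set of datasets. For the single example $\vx=(1,-2)$, $y=+1$, the unconstrained $\ell_1$ max-margin direction is $-\ve_2$. The balanced flow (GF, or $\ell_2$-SAM from $\alpha\vone$ with, e.g., $\alpha>\rho$) satisfies (a) and (b) but converges to $\ve_1$. The paper's proof has the same blind spot. The statement should be read with the constraint $\vbeta\ge\vzero$, which covers $\gD_\vmu$ with $\vmu>\vzero$, and your step (5) should claim only the one-sided conditions.
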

\vspace{-5pt}
Since \cref{thm:depth1-l2-anydata,thm:depth2-limit-direction-anydata} holds for any $\rho$, it also recovers the implicit bias of GF.
We now revisit \cref{fig:intro-flow}, which is the flow counterpart of \cref{fig:intro-discrete}, and compare the trajectories with the asymptotic directional convergence results above. First, the green lines in \cref{fig:intro-depth1} visualize the trajectories of $\ell_2$-SAM flow for $L=1$, and we can check that the trajectories coincide with GD's, as expected by theory. 
In the $L=2$ case (\cref{fig:intro-depth2}), the green $\ell_2$-SAM flow curves include ones that (i)~drift toward the origin, and those that (ii)~initially align with $\ve_1$, a direction \emph{orthogonal} to the $\ell_1$ max-margin direction $\ve_2$. Such behaviors are not explained by \cref{thm:depth2-limit-direction-anydata}.
Hence, to account for what is observed in \cref{fig:intro-depth2}, we move on to analyze the dynamics of $\ell_2$-SAM in finite time.

\vspace{-5pt}
\subsection{Pre-asymptotic Behavior on Depth-2 Networks}
\label{sec:preasymp-l2}
\vspace{-5pt}

We investigate the pre-asymptotic dynamics of $\ell_2$-SAM on depth-2 linear diagonal networks and show that the trajectory exhibits a behavior markedly different from its asymptotic limit.
This contrast highlights the need for a \emph{finite-time} analysis to understand how the implicit bias of SAM actually emerges.
In this section, we study the toy dataset $\mathcal{D}_\vmu:=\{(\vmu, +1)\}$ with $\vmu \in \R^d$ satisfying $0<\mu_1<~\cdots <~\mu_d$.
We further present experiments on multi-point datasets, discrete-time $\ell_2$-SAM, and deeper models ($L \ge 3$) in \cref{app:empirical-l2}, which confirm that the qualitative behaviors identified in the depth-$2$ single-point $\ell_2$-SAM flow persist in these more realistic settings.
Moreover, to capture the effect of the initialization scale with a single parameter, we adopt a coordinate-wise and layer-wise uniform initialization $\vw^{(1)}(0)=\vw^{(2)}(0)=\alpha \vone$ throughout this subsection. 
We additionally report similar empirical results under random Gaussian initialization in \cref{app:random-init}.

\vspace{-5pt}
\subsubsection{Sequential Feature Amplification} \label{sec:sfd}
\vspace{-4pt}

We begin by describing a newly observed and surprising phenomenon of $\ell_2$-SAM---\textbf{sequential feature amplification}.
For certain initialization scales $\alpha$ and times $t$, $\ell_2$-SAM first aligns with minor features; as $t$ increases or as $\alpha$ increases, the dominant coordinate transitions from minor, intermediate to major features.
In contrast, GD selects the major feature regardless of $\alpha$ and $t$.
We visualize this using rescaled $\ell_2$-SAM flow in \cref{fig:heatmap} and show the GF and $\ell_\infty$-SAM flow counterparts in \cref{fig:app-heatmap}. 
To quantify the phenomenon along the two axes---time $t$ and initialization scale $\alpha$---at each $t$ and $\alpha$, we track the index $j^\dagger = \argmax_j \beta_j(t)$ and color the grid $(t, \alpha)$ according to $j^\dagger$.
Regions where $\vbeta$ is negligibly small are shown in gray, indicating convergence to $\vzero$. Based on the observations from \cref{fig:heatmap}, we partition the initialization scale $\alpha$ into three regimes.

\begin{figure}[t]
    \begin{subfigure}[t]{0.51\textwidth}
        \centering
      \includegraphics[width=\textwidth]{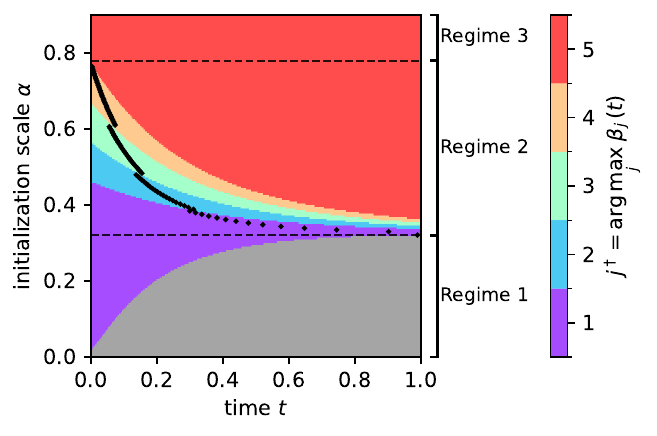}
      \caption{Dominant index $j^\dagger:=\arg \max_j \beta_j(t)$ over $\alpha$ and $t$.}
      \label{fig:heatmap}
    \end{subfigure}
    \hfill
    \begin{subfigure}[t]{0.43\textwidth}
        \centering
      \includegraphics[width=\textwidth]{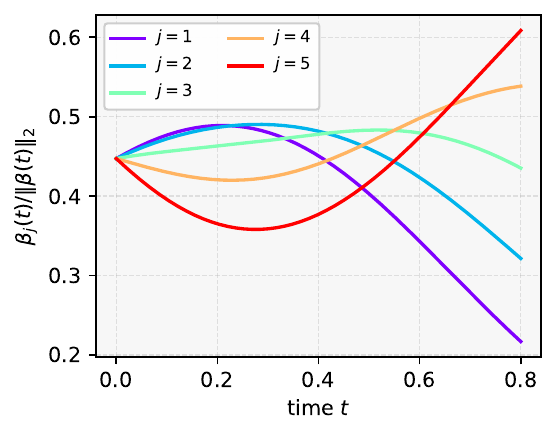}
      \caption{Normalized $\beta_j(t)/\|\vbeta(t)\|_2$ for $\alpha=0.4$.}
      \label{fig:beta-t}
    \end{subfigure}
    \caption{Rescaled $\ell_2$-SAM flow on $\mathcal{D}_\vmu$ with $\vmu= (4,5,6,7,8) \in \R^5$ and $\rho=1$.}
    \vspace{-10pt}
\end{figure}

\vspace{-5pt}
\begin{enumerate}[label=\textbf{(Regime \arabic*)}, leftmargin=50pt, itemsep=0pt]
    \item Starting from any $\alpha$ in this range, the trajectory eventually collapses to the origin as training proceeds; effectively no feature is expressed and the loss does not vanish.
    \item \textbf{Time-wise sequential feature amplification} emerges. With a fixed $\alpha$ chosen from this regime and increasing $t$, there exists the period where the dominant coordinate index $j^\dagger$ increases over time, transitioning from minor to major features. As shown in \cref{fig:beta-t}, $j^\dagger$ sequentially changes from $1$ to $5$ over time for $\alpha = 0.4$.
    \item $\vbeta$ aligns with the major feature from the outset and maintains this alignment throughout.
\end{enumerate}
\vspace{-3pt}

Beyond the time-wise phenomenon, \cref{fig:heatmap} also suggests that 
sequential feature amplification also happens in the $\alpha$-axis.
To see this, consider a fixed slice of time $t$ and navigate through the $\alpha$-axis: for small $\alpha$, the predictor $\vbeta$ remains near the origin with no feature discovered. As $\alpha$ grows, the dominant coordinate at $t$ shifts sequentially---$\beta_1$ becomes largest first, then $\beta_2$, and so on. 
However, this is \emph{not} a fair comparison between trajectories, because \cref{fig:heatmap} is obtained from the rescaled flow; each trajectory (for each $\alpha$) has a different time scale.

Nevertheless, we can compare between trajectories if we base our comparison on trajectory-wise maxima. Specifically, in Regime~2 (the sequential feature amplification phase), we define the trajectory-wise most-amplified index, to understand how the initialization scale $\alpha$ affects the ``amplification'' of minor components.
For each coordinate $j$, we track the ratio $\nicefrac{\beta_j(t)}{\beta_d(t)}$ over the entire trajectory, and define $j^*(\alpha):=\argmax_j \max_t \nicefrac{\beta_j(t)}{\beta_d(t)}$ as the coordinate with the greatest maximum relative amplification.
In \cref{fig:heatmap}, for each value of $\alpha$ in Regime~2, we plot the time step that attains the maximum value of $\nicefrac{\beta_{j^*(\alpha)}(t)}{\beta_d(t)}$ in black dots; we can clearly observe that $j^*(\alpha)$ increases from the minor index $1$ to second-most major index $d-1$ in Regime~2. We call this phenomenon \textbf{initialization-wise sequential feature amplification}.

\vspace{-3pt}
\subsubsection{Understanding the Effect of \texorpdfstring{$\ell_2$}{l2}-SAM}
\vspace{-3pt}
\label{sec:intuition}

Before analyzing sequential feature amplification, we describe the rescaled $\ell_2$-SAM flow for \mbox{depth-2} linear diagonal networks and offer an intuitive explanation of the sequential feature amplification phenomenon.
With initialization $\vw^{(1)}(0)=\vw^{(2)}(0) \in \R^d_+$, we have $\vw^{(1)}(t)=\vw^{(2)}(t)=:\vw(t)$ for all $t\geq 0$. 
Using this, we derive in \cref{app:l2-gradientflow} that the rescaled $\ell_2$-SAM flow for $\vw(t)$ reads
\begin{align}
    \dot{\vw}(t) = \vmu \odot \left( \vw(t)-\rho \tfrac{\vmu \odot \vw(t)}{n_\vtheta(t)} \right), \text{ where } n_\vtheta(t) := 
    \sqrt{2\|\vmu \odot \vw(t)\|_2^2}.
    \label{eq:gradient-flow-w}
\end{align}
\vspace{-5pt}

Compared to the $\rho = 0$ case, the extra term scales $\vmu \odot \vw(t)$ coordinate-wise by $ \vone - \rho \frac{\vmu}{n_\theta(t)} < \vone$. 
When $n_\vtheta(t)$ is large (e.g., under large initialization or after sufficient training), this factor is close to one and the dynamics becomes close to GF.
When $n_\vtheta(t)$ is small (e.g., small initialization), the coordinate-wise scaling factor multiplies different scalars to different coordinates, some of which can even be negative and decrease the corresponding coordinates of $\vw(t)$. 
Notice that larger $\mu_j$ leads to smaller $1 - \rho \frac{\mu_j}{n_\theta(t)}$. Thus, in the early stage of training, major features are suppressed while minor features are comparatively amplified, yielding the observed emphasis on minor features.

\vspace{-3pt}
\subsubsection{Analysis of Time-wise Sequential Feature Amplification}\label{sec:sam_flow_trajectory}
\vspace{-3pt}

We next provide a theoretical account of the time-wise sequential feature amplification. At each time $t$, we analyze the instantaneous growth rate of each coordinate $\beta_j(t)$, viewed as a function of both $t$ and the initialization scale $\alpha$.
This reveals how the growth behavior of different coordinates evolves across the training trajectory. 
In particular, we derive a coordinate-wise growth rule of $\beta_j(t)$, in a form analogous to \cref{eq:gradient-flow-w}.
The proof is provided in Appendix~\ref{app:beta_dynamics_proof}, and an extension to the $L$-layer setting---where an analogous growth rate can be derived---is given in \cref{app:temp_notes_for_depth_L_gt_2}.

\begin{restatable}{lemma}{lembetadynamics}\label{lem:beta_dynamics}
    The rescaled $\ell_2$-SAM flow (\ref{eq:gradient_flow_spatial_trajectory}) is
        $\dot{\beta}_j(t) = r_j(t) \beta_j(t)$ with $r_j(t) := 2 \mu_j \lps 1 - \frac{\rho \mu_j}{n_\vtheta(t)} \rps$.
    \vspace{-5pt}
\end{restatable}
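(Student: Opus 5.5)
The plan is to use the symmetric reduction from the preceding subsection and then apply the product rule. Since $\vw^{(1)}(0)=\vw^{(2)}(0)=\alpha\vone$ and the depth-2 rescaled flow (\ref{eq:gradient_flow_spatial_trajectory}) is symmetric in the two layers, both layers stay equal to a common $\vw(t)$ for all $t$. It follows that $\beta_j(t)=w_j(t)^2$, and that $\vw$ obeys \eqref{eq:gradient-flow-w}.

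First I verify \eqref{eq:gradient-flow-w} directly, since it is the one substantive step.
\begin{itemize}
\item On $\gD_\vmu$ we have $\nabla_{\vw^{(1)}}\gL=\ell'(\langle\vbeta,\vmu\rangle)\,\vmu\odot\vw^{(2)}$, and symmetrically for the second layer.
\item With $\vw^{(1)}=\vw^{(2)}=\vw$, the full gradient is $\ell'\cdot(\vmu\odot\vw,\vmu\odot\vw)$. Its $\ell_2$ norm is $|\ell'|\,n_\vtheta$ with $n_\vtheta=\sqrt{2\|\vmu\odot\vw\|_2^2}$.
\item Because $\ell'<0$, the $\ell_2$ perturbation in each block is $\vepsilon_2^{(\ell)}=-\rho\,\vmu\odot\vw/n_\vtheta$. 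The factor $|\ell'|$ cancels, so the perturbation does not depend on the loss value.
\item Substituting into (\ref{eq:gradient_flow_spatial_trajectory}) gives $\dot\vw=\vmu\odot(\vw-\rho\,\vmu\odot\vw/n_\vtheta)$, which is \eqref{eq:gradient-flow-w}.
\end{itemize}

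Next I compute the coordinate dynamics. We have $\dot\beta_j=2w_j\dot w_j=2\mu_j w_j^2(1-\rho\mu_j/n_\vtheta)$, which equals $r_j(t)\beta_j(t)$ with $r_j$ as stated. I could also compute $\dot\beta_j=\dot w^{(1)}_jw^{(2)}_j+w^{(1)}_j\dot w^{(2)}_j$ and then use the symmetry, which gives the same result.

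The main obstacle is well-posedness, meaning that $n_\vtheta(t)>0$ for all $t$ in the existence interval so that the perturbation is defined. Initially $n_\vtheta>0$ because $\alpha>0$. From $\dot w_j=\mu_j w_j(1-\rho\mu_j/n_\vtheta)$, each $w_j$ satisfies a linear ODE in $w_j$ with a coefficient that stays continuous while $n_\vtheta>0$. Hence $w_j(t)=\alpha\exp(\int_0^t\cdots)>0$, so $\vw$ stays in the positive orthant and $n_\vtheta$ stays positive on the maximal interval.

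Positivity of the coordinates also lets me drop absolute values, and it justifies that the two layers remain equal. The equality follows from uniqueness of solutions, since the vector field is locally Lipschitz away from $n_\vtheta=0$. This positivity further shows $\beta_j(t)=\alpha^2\exp(\int_0^t r_j)$, which is a useful corollary form. The identity is needed only on $[0,T_{\max})$, so I do not have to rule out collapse to the origin as $t\to T_{\max}$.
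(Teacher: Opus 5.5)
Your proposal is correct and follows essentially the same route as the paper. The paper also reduces to the balanced system through its balancedness-preservation proposition, which gives $w^{(1)}_j=w^{(2)}_j$ and a positive exponential solution for each $w_j$. It then applies the product rule $\dot\beta_j=\dot w^{(1)}_j w^{(2)}_j+w^{(1)}_j\dot w^{(2)}_j$, differing only in that it carries the factor $\hat{\lambda}(t)$ of the unrescaled flow before absorbing it into time; your positivity argument keeping $n_\vtheta>0$ mirrors the sign-preservation part of that proposition.
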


By Lemma~\ref{lem:beta_dynamics}, the rate $r_j(t)$ controls the instantaneous growth or decay of $\beta_j(t)$. 
For fixed $t$, $r_j(t)$ is concave quadratic in $\mu_j$, maximized at $\mu_j = m_{\textup{c}}(t) := \frac{n_\vtheta(t)}{2\rho}$. 
Hence, indices with $\mu_j$ closest to $m_{\textup{c}}(t)$ attain the largest $r_j(t)$; \textbf{coordinates with feature strength $\mu_j$ nearest to} $m_{\text{c}}(t)$ \textbf{are amplified the most}, while those farther away may even decay.
Consequently, the trajectory of $m_{\textup{c}}(t)$ dictates the feature-amplification dynamics, and it exhibits three regimes depending on the initialization scale.
Recall that $0<\mu_1<\cdots<\mu_d$.

\begin{restatable}{theorem}{thmregimeclassification}\label{thm:regime_classification}
There exists a unique $\alpha_1$ such that {$\alpha_0 :=\rho\frac{ \mu_1}{\sqrt{2} \|\vmu\|_2}< \alpha_1<\rho\frac{ \|\vmu\|_4^4}{\sqrt{2}\|\vmu\|_2 \|\vmu\|_3^3}<\alpha_2:=\rho\frac{\mu_{d-1} + \mu_d}{\sqrt{2} \|\vmu\|_2}$} and the trajectory of $m_{\textup{c}}(t)$ falls into one of the following three regimes.

\begin{enumerate}[label=\textbf{(Regime \arabic*)}, leftmargin=50pt, itemsep=-3pt]
    \item If $\alpha < \alpha_1$, then $m_{\textup{c}}(t)$ strictly decreases for all $t \geq 0$ and there exists $T_1$ such that for $j \in [d]$, $\beta_j(t)$ strictly decreases for all $t \geq T_1$ .
    \item If $\alpha_1 < \alpha < \alpha_2$, there exists $T_2$ such that $m_{\textup{c}}(T_2)<\frac{\mu_{d-1}+\mu_d}{2}$ and $m_{\textup{c}}(t)$ strictly increases for all $t\geq T_2$.
    \item If $\alpha > \alpha_2$, then $m_{\textup{c}}(t)>\frac{\mu_{d-1}+\mu_d}{2}$, and $\beta_d(t)$ has the largest growth rate for all $t \geq 0$.
\end{enumerate}
\end{restatable}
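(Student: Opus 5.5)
The plan is to reduce everything to the scalar quantity $n(t):=n_\vtheta(t)=2\rho\, m_{\textup{c}}(t)$ and to prove a one-sided crossing property for its derivative. Write $A:=\sum_j \mu_j^3 w_j^2$ and $B:=\sum_j \mu_j^4 w_j^2$. From \cref{eq:gradient-flow-w}, $\tfrac{d}{dt}(\mu_j w_j)=\mu_j(\mu_j w_j)(1-\rho\mu_j/n)$, so
\[
\tfrac{d}{dt} n^2 = 4A - 4\rho B/n, \qquad \text{hence}\qquad \operatorname{sign}\dot n = \operatorname{sign} g, \quad g:=nA-\rho B .
\]
At $t=0$, with $\vw=\alpha\vone$, we have $n(0)=\sqrt2\alpha\|\vmu\|_2$ and $B/A=\|\vmu\|_4^4/\|\vmu\|_3^3$. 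So $g(0)>0$ exactly when $\alpha>\alpha^\ast:=\rho\|\vmu\|_4^4/(\sqrt2\|\vmu\|_2\|\vmu\|_3^3)$, which is the middle constant in the statement.

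\textbf{Key lemma: $g$ can only cross zero upward.} Set $C:=\sum_j\mu_j^5w_j^2$ and $D:=\sum_j\mu_j^6w_j^2$. At a time with $g=0$ we have $\dot n=0$, and using $\dot A=2B-2\rho C/n$ and $\dot B=2C-2\rho D/n$ we get
\[
\dot g = n\dot A-\rho\dot B=\tfrac{2}{n}\left(n^2B-2\rho nC+\rho^2D\right)=\tfrac2n\sum_j\mu_j^4w_j^2(n-\rho\mu_j)^2>0 .
\]
Consequently either $g<0$ for all $t$, so $n$ and $m_{\textup{c}}$ strictly decrease forever, or there is a time $T^\ast\ge0$ after which $g>0$, so $m_{\textup{c}}$ strictly increases. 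Call the second case ``crossing''.

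\textbf{Monotonicity in $\alpha$.} In log coordinates the system is $\tfrac{d}{dt}\log w_j=\mu_j-\rho\mu_j^2/n(\vw)$, and the right-hand side is nondecreasing in every $w_k$. The system is therefore cooperative, and by the Kamke comparison principle a larger $\alpha$ gives coordinatewise larger $\vw(t)$, hence larger $n(t)$ for all $t$.

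\textbf{Dichotomy in the no-crossing case.} Suppose $n$ decreases to a limit $n_\infty$.
\begin{itemize}
\item If $n_\infty>\rho\mu_1$, then $w_1$ grows exponentially, forcing $n\to\infty$, a contradiction. So $n_\infty\le\rho\mu_1$.
\item If $n_\infty=\rho\mu_1$ exactly, this boundary case needs a short extra argument: $w_1$ would be nondecreasing while every other coordinate decays, so eventually $g\approx n\mu_1^3w_1^2-\rho\mu_1^4w_1^2\to 0$ from below. I would refine this with the sign of $\dot g$ to exclude it.
\end{itemize}
Hence $n(T_1)<\rho\mu_1$ for some $T_1$. From then on every rate $r_j=2\mu_j(1-\rho\mu_j/n)$ is negative, which is the $\beta_j$ claim of Regime 1.

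\textbf{Defining $\alpha_1$.} Let $\alpha_1:=\sup\{\alpha: \text{no crossing}\}$. By the comparison principle the no-crossing set is a down-set and the crossing set an up-set.
\begin{itemize}
\item Both sets are open. Crossing is witnessed by $g(t)>0$ at some finite $t$, and no crossing by $n(t)<\rho\mu_1$ at some finite $t$ (after which $n$ keeps decreasing). Both are strict conditions at a finite time, so they persist under small changes of $\alpha$ by continuous dependence on initial data.
\item If $\alpha\le\alpha_0$, then $n(0)\le\rho\mu_1$. For $\alpha<\alpha_0$ every rate is negative from the start and no crossing occurs. Openness then gives $\alpha_0<\alpha_1$, provided $\alpha_0$ itself is in the no-crossing set, which I would check directly.
\item For $\alpha\ge\alpha^\ast$ we have $g(0)\ge0$; when $g(0)=0$ the key lemma gives $\dot g(0)>0$. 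So $\alpha^\ast$ lies in the open crossing set, and $\alpha_1<\alpha^\ast$.
\item The comparison $\alpha^\ast<\alpha_2$ amounts to $\|\vmu\|_4^4/\|\vmu\|_3^3<\mu_{d-1}+\mu_d$. This holds because the left side is a weighted mean of the $\mu_j$ and so is at most $\mu_d$.
\end{itemize}
Uniqueness of $\alpha_1$ follows from the two sets being complementary intervals.

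\textbf{Regimes 2 and 3.} In Regime 2, take $T_2:=T^\ast$. Before $T^\ast$, $n$ is nonincreasing, so $m_{\textup{c}}(T_2)\le m_{\textup{c}}(0)<(\mu_{d-1}+\mu_d)/2$ because $\alpha<\alpha_2$; after $T_2$, $m_{\textup{c}}$ strictly increases. In Regime 3, $\alpha>\alpha_2>\alpha^\ast$ gives crossing at $t=0$, so $m_{\textup{c}}$ increases from $m_{\textup{c}}(0)>(\mu_{d-1}+\mu_d)/2$. Since $r_j$ is a concave quadratic in $\mu_j$ symmetric about $m_{\textup{c}}$, and $\mu_d$ is the largest feature with $m_{\textup{c}}$ past the midpoint of $\mu_{d-1}$ and $\mu_d$, the index $d$ is nearest to $m_{\textup{c}}$ and has the largest rate.

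I expect the main obstacle to be the no-crossing dichotomy, in particular excluding the boundary case $n_\infty=\rho\mu_1$ and making the openness of the no-crossing set fully rigorous. The crossing lemma and the comparison principle should be routine.
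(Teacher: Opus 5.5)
Your plan has a genuine gap in the no-crossing dichotomy, and your openness and uniqueness claims depend on it.

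\textbf{The boundary case must occur.} The case $n_\infty=\rho\mu_1$ cannot be excluded by any argument, because it is forced. You assert two things:
\begin{itemize}
\item the crossing set $\mathcal C=\{\alpha:\ g(t)>0\text{ for some }t\}$ is open;
\item the no-crossing set, which you identify with $\mathcal N=\{\alpha:\ n(t)<\rho\mu_1\text{ for some }t\}$, is open.
\end{itemize}
These two sets are disjoint and nonempty, and you take them to cover $(0,\infty)$. That is impossible for a connected interval. Hence some $\alpha$ has $g<0$ and $n(t)>\rho\mu_1$ for all $t$. Along that trajectory $n\downarrow\rho\mu_1$, $w_1\uparrow\rho/\sqrt2$, $w_j\to0$ for $j\ge2$, and $g\to0^-$ without ever vanishing. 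This is fully consistent with your key lemma, which only constrains $g$ at its zeros, so no sign argument on $\dot g$ can rule it out.

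This is exactly the paper's boundary trajectory $m_{\text{L}}<m_{\textup{c}}(t)<m_{\text{H}}(t)$. The paper obtains it by the same kind of connectedness argument (floor-first versus ceiling-first values of $m_{\textup{c}}(0)$), and it is what \emph{defines} $\alpha_1$. In particular, the no-crossing set is $(0,\alpha_1]$, which is not open. For such an $\alpha$, Regime 1 fails ($r_1>0$ forever, so $\beta_1$ keeps increasing) and Regime 2 fails ($m_{\textup{c}}$ never increases). So the real content of the theorem, including uniqueness of $\alpha_1$, is that $\mathcal B:=(0,\infty)\setminus(\mathcal N\cup\mathcal C)$ is a single point. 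Your appeal to ``complementary intervals'' assumes precisely this.

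\textbf{Your comparison principle supplies the missing step.} By (\ref{eq:beta}), $w_j(t;\alpha)=\alpha\exp\bigl(\mu_jt-\rho\mu_j^2I(t;\alpha)\bigr)$. For $c>1$, your monotonicity $n(\cdot;c\alpha)\ge n(\cdot;\alpha)$ gives $I(t;c\alpha)\le I(t;\alpha)$. Hence $\vw(t;c\alpha)\ge c\,\vw(t;\alpha)$ coordinatewise and $n(t;c\alpha)\ge c\,n(t;\alpha)$; equivalently, $c\,\vw(t;\alpha)$ is a strict subsolution of your cooperative system. If both $\alpha$ and $c\alpha$ lay in $\mathcal B$, letting $t\to\infty$ would give $\rho\mu_1\ge c\rho\mu_1$, a contradiction.

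So $\mathcal B=\{\alpha_1\}$. Connectedness, together with $\alpha_0\in\mathcal N$ and $\alpha^\ast\in\mathcal C$, then gives $\mathcal N=(0,\alpha_1)$, $\mathcal C=(\alpha_1,\infty)$ and $\alpha_0<\alpha_1<\alpha^\ast$. Your Regime 1--3 arguments then go through. If you want $\mathcal C$ to be an up-set directly, also record that crossing forces $n\to\infty$: after $T^\ast$, $n>\rho\mu_1$ and increasing, so $w_1$ grows exponentially.

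\textbf{Comparison with the paper.} Once repaired, your route differs from the paper's in a useful way. Your key lemma is the paper's ceiling transversality, namely $\tfrac{d}{dt}(m_{\textup{c}}-m_{\text{H}})=-\dot m_{\text{H}}>0$ at $m_{\textup{c}}=m_{\text{H}}$ because $m_{\text{H}}<m_{\text{D}}$. You write it directly as the sum of squares $\tfrac2n\sum_j\mu_j^4w_j^2(n-\rho\mu_j)^2$. The order structure in $\alpha$, however, replaces the paper's uniqueness argument, which appeals to $E_1$ being repelling in the $m_{\textup{c}}$-direction and to non-crossing of trajectories. That argument is heuristic: a priori the ray $\alpha\vone$ could meet the stable manifold of the saddle $(\rho/\sqrt2)\ve_1$ more than once. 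The scaling inequality settles uniqueness rigorously.
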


The proof of Theorem~\ref{thm:regime_classification} is provided in Appendix~\ref{app:regime_classification_proof}. Theorem~\ref{thm:regime_classification} identifies three regimes of the $m_{\textup{c}}(t)$ dynamics, each corresponding to a qualitatively different pattern of feature amplification.  

\vspace{-1pt}
\textbf{Regime~1}. $m_{\textup{c}}(t)$ decreases for all $t \geq 0$, and reaches $\frac{\mu_1}{2}$ at time $T_1$.
Once $m_{\textup{c}}(t) \leq \frac{\mu_1}{2}$, every coordinate satisfies $r_j(t)\leq0$ by the form of $r_j(t)$, and thus $\beta_j(t)$ strictly decreases for all $j \in [d]$. 

\vspace{-1pt}
\textbf{Regime~3}. When $m_{\textup{c}}(t) > \frac{\mu_d + \mu_{d-1}}{2}$, the closest feature strength to $m_{\textup{c}}(t)$ is $\mu_d$, so $\beta_d(t)$ attains the largest growth rate.
This explains why the major feature remains dominant throughout this regime. 

\vspace{-1pt}
\textbf{Regime~2}. When $m_{\textup{c}}(T_2) < \frac{\mu_d + \mu_{d-1}}{2}$, the closest index $j_{\textup{c}}$ satisfies $j_{\textup{c}}<d$.
At this time, the largest growth rate is therefore achieved by the non-major coordinate $\beta_{j_{\textup{c}}}(T_2)$. 
Since $m_{\textup{c}}(t)$ strictly increases for all $t \geq T_2$, the coordinate with the largest growth rate increases, exhibiting the \emph{time-wise sequential feature amplification} observed empirically in \cref{sec:sfd}. In Regime~2, there also exist instances where $m_{\textup{c}}(t)$ initially \emph{decreases} and later increases, leading to a \emph{non-monotonic} sequential feature amplification phenomenon. We discuss this in \cref{sec:regime2}.

Regime~2 also leaves a clear trace in the training loss. $\ell_2$-SAM exhibits an early plateau while it mainly amplifies minor coordinates, and the loss drops quickly only after it shifts to major coordinates, whereas GD shows a steadier decrease without this minor-to-major transition. The corresponding loss curves and further explanation are given in Figure~\ref{fig:loss} and Appendix~\ref{sec:lossdynamics}.
\vspace{-8pt}
\begin{figure}[H]  
  \centering

  \begin{subfigure}[t]{0.35\textwidth}
    \centering
    \includegraphics[width=\linewidth]{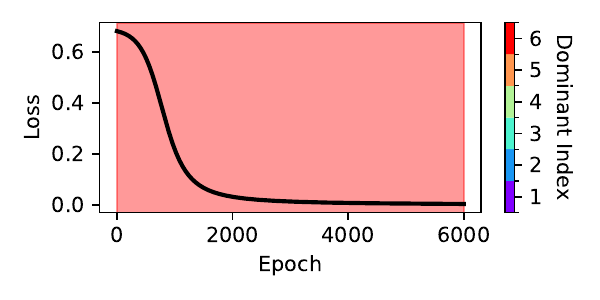}
  \end{subfigure}
  \hspace{3em}
  \begin{subfigure}[t]{0.35\textwidth}
    \centering
    \includegraphics[width=\linewidth]{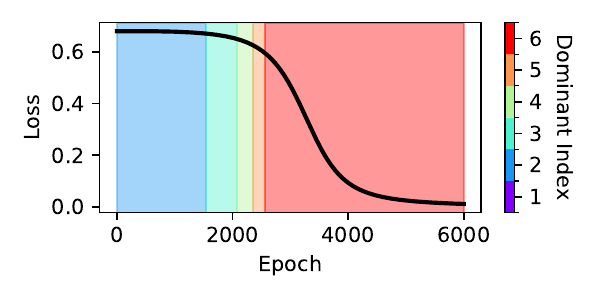}
  \end{subfigure}
  \vspace{-10pt}
  \caption{Loss curves of GD (left) and $\ell_2$-SAM (right) on a 2-layer diagonal network in Regime~2 ($\alpha=0.35$, $\mu = (1,2,3,4,5,6)$, $\rho=0.1$). Colored regions mark the coordinate with highest growth.}
  \label{fig:loss}
  \vspace{-15pt}
\end{figure}

\vspace{-3pt}
\subsubsection{Analysis of Initialization-wise Sequential Feature Amplification}
\vspace{-3pt}
\label{sec:initial-sfd}

In the previous subsection, we examined which coordinate attains the maximal instantaneous growth rate. 
We now turn to the cumulative update over time and study initialization-wise sequential feature amplification. 
In \cref{thm:regime_classification}, we characterize the range of $\alpha$ (Regime 2) in which sequential feature amplification can occur. 
Here, we quantify the strength of amplification within Regime 2 as a function of $\alpha$.
Since a coordinate $\beta_j(t)$ can diverge, we assess which feature is amplified---and by how much---via the ratio of the $j$-th feature to the major feature, $\nicefrac{\beta_j(t)}{\beta_d(t)}$. 
For a given initialization scale $\alpha$, we track and bound how large the amplification ratio $\nicefrac{\beta_j(t)}{\beta_d(t)}$ can be along the trajectory.

Integrating the rescaled $\ell_2$-SAM flow (\ref{eq:gradient-flow-w}) (derived in \cref{app:I(t)derivation}) yields the coordinate ODE
\begin{align}
    \beta_j(t) & = \beta_j(0) \exp\left( 2\mu_j t-2\rho \mu_j^2 I(t) \right) \quad \text{ where } I(t):=\smallint\nolimits_0^t \tfrac{1}{n_\vtheta(s)} ds & \text{for } j \in [d]. \label{eq:beta}
\end{align}
\vspace{-10pt}

The behavior of $\vbeta$ in (\ref{eq:beta}) is determined by $I(t)$. Recall that $n_\vtheta(t)$ controls the behavior of $\ell_2$-SAM in \cref{sec:intuition} and is used to characterize the instantaneous growth rate in \cref{sec:sam_flow_trajectory}. Here, we focus on cumulative updates over time, where the time integral $I(t)$ of $1/n_\vtheta$ becomes decisive.
By bounding $I(t)$, we quantify how strongly each feature is amplified relative to the major feature.

\begin{restatable}{theorem}{elltwothm}
    Let $\alpha_0, \alpha_2$ be defined in \cref{thm:regime_classification} and $\alpha_1$ be the threshold from there. Suppose $\alpha_1 < \alpha \leq \rho \frac{\mu_1 + \mu_d}{\sqrt{2}\|\vmu\|_2} < \alpha_2$. Then, for $j \in [d]$, there exists $T_j$ such that \\ 
         $$\tfrac{\beta_j(T_j)}{\beta_d(T_j)} \geq \textup{LB}_j(\alpha):=\exp\left( 2 R_j' \left( (R_j-1) \log\left(\tfrac{1}{1-\nicefrac{{\alpha}_0}{\alpha}}\right) + \log \left(\tfrac{1}{ \nicefrac{ {\alpha}_0}{\alpha}}\right)-C(R_j)\right) \right)$$
         where $R_j:=\nicefrac{(\mu_j+\mu_d)}{\mu_1}>2$, $R_j':=\nicefrac{(\mu_d-\mu_j)}{\mu_1}$ and $C(R):=R \log R-(R-1)\log(R-1)$.
    \label{thm:l2}
\end{restatable}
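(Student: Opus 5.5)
The plan is to reduce the ratio $\beta_j/\beta_d$ to a single scalar functional of $I(t)$ via \eqref{eq:beta}, lower bound $I(t)$ through an upper comparison ODE for $n_\vtheta(t)$, and then optimize over the time $t$ to define $T_j$. Since $\beta_j(0)=\beta_d(0)=\alpha^2$, \eqref{eq:beta} gives
\begin{align*}
\frac{\beta_j(t)}{\beta_d(t)} = \exp\Big( 2(\mu_d-\mu_j)\big(\rho(\mu_j+\mu_d) I(t) - t\big)\Big).
\end{align*}
Thus it suffices to exhibit a time $T_j$ at which $F_j(t):=\rho(\mu_j+\mu_d)I(t)-t$ is large, with $\mu_d-\mu_j = \mu_1 R_j'$. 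Because $I(t)=\int_0^t ds/n_\vtheta(s)$, a lower bound on $F_j$ follows from an upper bound on the growth of $n_\vtheta(t)$. Note also that $n_\vtheta(0)=\sqrt2\,\alpha\|\vmu\|_2$, so $\rho\mu_1/n_\vtheta(0)=\alpha_0/\alpha$. This explains why the bound is phrased in terms of $\alpha_0/\alpha$.

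\textbf{Comparison ODE.} Using \eqref{eq:gradient-flow-w}, $w_k(t)=\alpha\exp(\mu_k t-\rho\mu_k^2 I(t))$, and
\begin{align*}
\tfrac{d}{dt}\tfrac{n_\vtheta^2}{2} = 2\sum_k \mu_k^3 w_k^2\Big(1-\tfrac{\rho\mu_k}{n_\vtheta}\Big).
\end{align*}
Using $\mu_k\ge\mu_1$ in the negative term gives a differential inequality of the form $\dot n_\vtheta \le c\,(n_\vtheta-\rho\mu_1)$ for a suitable rate $c$. We need $n_\vtheta(0)>\rho\mu_1$, i.e.\ $\alpha>\alpha_0$, which holds since $\alpha>\alpha_1>\alpha_0$. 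To reproduce exactly the constants $R_j=(\mu_j+\mu_d)/\mu_1$, the rate should be normalized so that time is measured in units of $1/\mu_1$. I would first try the crude bound $\sum\mu_k^3w_k^2\le \mu_d\sum\mu_k^2w_k^2$. If that only yields a version of the statement with different constants, I would refine it: split $n_\vtheta$ into the coordinate-$1$ part and the rest, or bound $d\log n_\vtheta/dt$ through the weighted average $\sum\mu_k^3w_k^2/\sum\mu_k^2w_k^2$, and exploit that this average is controlled early on when $\alpha$ is in Regime 2.

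\textbf{Integration and optimization.} Solving the comparison ODE gives $n_\vtheta(s)\le \rho\mu_1+(n_\vtheta(0)-\rho\mu_1)e^{cs}$. Then
\begin{align*}
I(t)\ge \frac{1}{c\,\rho\mu_1}\Big(c t-\log\frac{\rho\mu_1+(n_\vtheta(0)-\rho\mu_1)e^{ct}}{n_\vtheta(0)}\Big),
\end{align*}
so $F_j(t)$ is bounded below by an explicit concave function of $x=e^{ct}$. Differentiating in $t$ and setting the derivative to zero gives the optimal $x^*$ in closed form, roughly $x^*=\frac{\alpha_0/\alpha}{1-\alpha_0/\alpha}(R_j-1)$. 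Substituting back produces terms $(R_j-1)\log\frac{1}{1-\alpha_0/\alpha}+\log\frac{\alpha}{\alpha_0}$ and the entropy-like constant $C(R_j)=R_j\log R_j-(R_j-1)\log(R_j-1)$. We set $T_j:=\log(x^*)/c$. The hypothesis $\alpha\le\rho\frac{\mu_1+\mu_d}{\sqrt2\|\vmu\|_2}$, i.e.\ $1/(\alpha_0/\alpha)\le R_1$, is exactly what makes $x^*\ge1$, hence $T_j\ge0$, for every $j$ (since $R_j\ge R_1$). The condition $R_j>2$ is automatic from $\mu_d>\mu_1$.

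\textbf{Main obstacle.} The hardest step is obtaining a differential inequality for $n_\vtheta$ that is both valid along the whole interval $[0,T_j]$ and sharp enough to give precisely the stated constants. Every step after it is a one-variable calculus optimization. I would first verify the crude $\mu_d$-rate bound and check whether, after rescaling, the algebra matches $\textup{LB}_j(\alpha)$. If it does not, I would refine the rate estimate as described above, possibly restricting to $t\le T_j$ where $n_\vtheta$ stays small.
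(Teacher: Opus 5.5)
Your reduction and endgame match the paper's. \cref{thm:I(t)}(a) is the integral of your comparison solution with rate $c=\mu_1$, namely $I(t)\ge\frac{1}{\rho\mu_1^2}\log\frac{1}{q e^{-\mu_1 t}+1-q}$ with $q:=\alpha_0/\alpha$. The paper's $T_j=\frac{1}{\mu_1}\log\frac{q(R_j-1)}{1-q}$ is your $\log(x^*)/\mu_1$.

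The gap is the step that carries the content: you never obtain the rate $\mu_1$. Write $p_k\propto\mu_k^2\beta_k$ and $M_r=\sum_k p_k\mu_k^r$; then exactly $\dot n_\theta=M_1 n_\theta-\rho M_2$. Your crude bound (using $M_2\ge\mu_1 M_1$ and $M_1\le\mu_d$) does give $n_\theta\le n_{\mathrm{comp}}$ for all $t$. But your optimization with a general rate $c$ yields $\log\frac{\beta_j}{\beta_d}\ge\frac{2(\mu_d-\mu_j)}{c}\Phi_{R_j}(q)$, where $\Phi_R(q)=(R-1)\log\frac{1}{1-q}+\log\frac1q-C(R)\ge0$. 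So $c=\mu_d$ gives only $\textup{LB}_j(\alpha)^{\mu_1/\mu_d}$.

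Refining through the weighted average cannot fix this once $M_2$ is discarded: you are left with $\dot n_\theta\le M_1(n_\theta-\rho\mu_1)$, and $M_1(t)>\mu_1$ at every finite time. The rate $\mu_1$ requires keeping $M_2$:
\[
\dot n_\theta-\mu_1\bigl(n_\theta-\rho\mu_1\bigr)=\sum_{k}p_k(\mu_k-\mu_1)\bigl(n_\theta-\rho(\mu_k+\mu_1)\bigr).
\]
This is guaranteed to be $\le 0$ only while $n_\theta\le\rho(\mu_1+\mu_2)$. The paper gets the same comparison under an analogous side condition, $\rho(\mu_1+\mu_2)I(s)\ge s$, i.e.\ $\beta_1(s)\ge\beta_k(s)$ for all $k$, which lets it bound $n_\theta$ by its first-coordinate term. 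That condition must hold on all of $[0,t]$, not just at $t$.

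Neither window is guaranteed on $[0,T_j]$. The hypothesis allows $n_\theta(0)=\sqrt2\|\vmu\|_2\alpha$ up to $\rho(\mu_1+\mu_d)$. For $d\ge3$ and $\alpha>\rho\frac{\mu_1+\mu_2}{\sqrt2\|\vmu\|_2}$ one has $r_2(0)>r_1(0)$, so both side conditions fail immediately. Even when $n_\theta(0)$ is small, your comparison solution reaches $n_{\mathrm{comp}}(T_j)=\rho(\mu_j+\mu_d)>\rho(\mu_1+\mu_2)$. Staying below it up to $T_j$ therefore requires controlling $n_\theta$ on $\bigl(\rho(\mu_1+\mu_2),\rho(\mu_j+\mu_d)\bigr]$, where the terms with $\rho(\mu_k+\mu_1)<n_\theta$ are positive. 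The paper does not close this either: its proof applies the lower bound on $I(T_j)$ without checking the side condition.

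With your tools you can rigorously prove two things:
\begin{enumerate}
\item[(i)] The weaker estimate $\beta_j(T)/\beta_d(T)\ge\textup{LB}_j(\alpha)^{\mu_1/\mu_d}$ at $T=\frac{1}{\mu_d}\log\frac{q(R_j-1)}{1-q}$, for all $\alpha_0<\alpha\le\rho\frac{\mu_1+\mu_d}{\sqrt2\|\vmu\|_2}$.
\item[(ii)] The stated bound when $d=2$. There the window $n_\theta\le\rho(\mu_1+\mu_2)=\rho(\mu_1+\mu_d)$ is exactly the range $n_{\mathrm{comp}}$ traverses on $[0,T_1]$. A first-crossing Gronwall argument then gives $n_\theta\le n_{\mathrm{comp}}$ there.
\end{enumerate}
For $d\ge3$, getting the exponent $2R_j'$ needs an argument beyond this comparison, which neither your proposal nor the paper supplies.
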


The proof follows from a lower bound on $I(t)$, and is deferred to \cref{app:thm-l2}.
A numerical illustration of $\mathrm{LB}_j(\alpha)$ for several choices of $\vmu$ is provided in \cref{app:numerical-LB}.
\cref{thm:l2} applies to the small-$\alpha$ portion of Regime~2. For each coordinate $j$, we select the time $T_j$ maximizing $\frac{\beta_j(t)}{\beta_d(t)}$ over the entire trajectory, and obtain a nontrivial lower bound $\mathrm{LB}_j(\alpha)$ for this maximal amplification.

The theorem goes beyond the qualitative picture in \cref{fig:heatmap}, which only identifies which coordinate becomes dominant (the index $j^{\dagger}$).  
\cref{thm:l2} additionally quantifies \emph{how large} this dominant coordinate must grow: as shown in \cref{app:numerical-LB}, $\mathrm{LB}_{j}(\alpha)$ often exceeds $10$, indicating that the minor to intermediate coordinates can take values more than ten times larger than the major coordinate.

\noindent \textbf{Dependence on $\alpha$.} 
For all $\alpha$ in Regime~2, the ratio $\alpha_0/\alpha$ lies in $(0,1)$, so both logarithmic terms in $\mathrm{LB}_j(\alpha)$ are positive.  
Since $R_j>2$, the first logarithmic term dominates the exponent, making $\mathrm{LB}_j(\alpha)$ grow rapidly as $\alpha \to \alpha_1$.  
Thus smaller $\alpha$ in Regime~2 produces stronger amplification as shown in \cref{app:numerical-LB}.
This is substantiated by \cref{fig:heatmap}:
smaller $\alpha$ in Regime 2 keeps the dynamics aligned with minor-intermediate features for a longer time $t$, leading to greater amplification.

\noindent \textbf{Dependence on Feature Geometry.}
The coefficients $R_j$ and $R_j'$ increase with the spectral gap $\mu_d/\mu_1$, so datasets with larger feature contrast amplify more strongly as shown in \cref{app:numerical-LB}.

Since $\mathrm{LB}_j(\alpha)$ varies across $j$, it is natural to ask which coordinate 
experiences the strongest amplification.  
\cref{thm:l2-thm-sequential} identifies the maximizing index $j^*(\alpha)$, with the proof in \cref{sec:prop-l2}.

\begin{restatable}{proposition}{elltwothmsequential}
    Under the conditions of \cref{thm:l2}, define $j^*(\alpha):=\arg \max_{j \in [d]}  \textup{LB}_j(\alpha)$ and set $\alpha^*_0 := \alpha_0$.
    Then, there exist thresholds 
    $
    {\alpha}_0^* < \alpha_1^* < \cdots < \alpha_m^* \le \rho\frac{\mu_1 + \mu_d}{\sqrt{2}\|\vmu\|_2}
    $
    for some $m \le d-1$
    such that 
    $j^*(\alpha) = j$ for $\alpha \in (\alpha_{j-1}^*, \alpha^*_j]$.
    \label{thm:l2-thm-sequential}
\end{restatable}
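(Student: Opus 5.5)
Write $x:=\alpha_0/\alpha$ and $s_j:=\mu_j/\mu_1$. Then $R_j=s_j+s_d$ and $R_j'=s_d-s_j$. Set $A(x):=\log\frac{1}{1-x}$ and $B(x):=\log\frac1x$, and
$$
f_j(x):=\log \mathrm{LB}_j(\alpha)=2R_j'\bigl((R_j-1)A(x)+B(x)-C(R_j)\bigr).
$$
The range $\alpha_1<\alpha\le \rho\frac{\mu_1+\mu_d}{\sqrt2\|\vmu\|_2}$ corresponds to $x\in[\underline{x},\bar x)$ with $\underline{x}:=\frac{\mu_1}{\mu_1+\mu_d}$ and $\bar x:=\alpha_0/\alpha_1<1$. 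Increasing $\alpha$ means decreasing $x$. The plan is to show that $j^*$ is nondecreasing in $\alpha$ by a pairwise single-crossing argument. The thresholds $\alpha^*_i$ are then the successive switching points of the argmax, and there are at most $d-1$ of them. Two observations fix the endpoints. First, $f_d\equiv 0$, so $\mathrm{LB}_d=1$. Second, the coefficient of $A$, namely $F(s_j):=(R_j-1)R_j'=(s_d-1+s_j)(s_d-s_j)$, is strictly decreasing in $s_j$ because $F'(s)=1-2s<0$ for $s\ge1$. Hence, as $x\uparrow 1$ (i.e. $\alpha\downarrow\alpha_0$), the $A$-term dominates and $j^*=1$. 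This matches the convention $\alpha_0^*=\alpha_0$.

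For the pairwise step, fix $j<k$ and let $h_{jk}:=f_k-f_j$. Then $h_{jk}=-c_1A(x)-c_2B(x)+c_0$ with
$$
c_1=F(s_j)-F(s_k)=(s_k-s_j)(s_j+s_k-1)>0,\qquad c_2=2(s_k-s_j)>0,
$$
where the second coefficient carries the factor $2$ from the leading $2R'$ (and $c_1$ likewise). The function $h_{jk}$ is strictly concave in $x$, and its derivative vanishes only at $x^*_{jk}=\frac{c_2}{c_1+c_2}$, which is a fixed multiple of $\frac{1}{s_j+s_k}$ after carrying the factor through. Moreover $h_{jk}\to-\infty$ as $x\to1$. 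If $x^*_{jk}\le \underline{x}$, then $h_{jk}$ is strictly decreasing in $x$ on the whole range. It therefore has at most one zero there, changing sign from $+$ (small $x$, large $\alpha$) to $-$. That is exactly the single-crossing property in favour of the larger index as $\alpha$ grows. Given this property for every pair, a standard argument gives a nondecreasing argmax: once $k$ beats $j$, it keeps beating $j$ for all larger $\alpha$. So each index's winning set is an interval, and these intervals are ordered by index. This yields the nested thresholds $\alpha_0^*<\alpha_1^*<\cdots<\alpha_m^*$ with $j^*(\alpha)=j$ on $(\alpha^*_{j-1},\alpha^*_j]$, after relabelling to skip indices that never win. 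At points where the argmax is tied I will use the convention of taking the larger index, which makes the intervals right-closed.

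The main obstacle is the pairs with $x^*_{jk}>\underline{x}$, i.e. roughly those with $\mu_j+\mu_k<\mu_1+\mu_d$. For such a pair, $h_{jk}$ increases in $x$ on $[\underline x,x^*_{jk}]$, so two crossings are a priori possible inside the range. My first attempt is to use the lower endpoint $\alpha>\alpha_1$, i.e. $x<\bar x$, together with $\alpha_1>\rho\frac{\|\vmu\|_4^4}{\sqrt2\|\vmu\|_2\|\vmu\|_3^3}$ from \cref{thm:regime_classification}. The aim is to show that on the portion $[\underline{x},x^*_{jk}]$ one has $h_{jk}\ge 0$, so that the only sign change occurs on the decreasing branch. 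Concretely, it suffices to check $h_{jk}(\underline x)\ge0$ whenever $x^*_{jk}>\underline x$: by concavity, $h_{jk}$ then stays nonnegative up to its peak and crosses zero at most once afterwards. This reduces to an explicit inequality in $s_j,s_k,s_d$ involving $C(R_j)$ and $C(R_k)$. I would attack it by writing $C(R_k)-C(R_j)=\int_{R_j}^{R_k}\log\frac{R}{R-1}\,dR$ and comparing it term by term with $c_1A(\underline x)+c_2B(\underline x)$.

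If this direct check fails for some pairs, the fallback is weaker but still sufficient. Such a pair only needs single crossing on the set where both $j$ and $k$ are candidates for the global argmax. There I would use that index $1$, or an intermediate index, dominates them, so that the argmax sequence stays monotone even if an individual pairwise comparison does not.
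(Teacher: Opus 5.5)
\textbf{The proposal is incomplete at exactly the step you flag as the main obstacle.}

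Your global picture is right, and in one respect it is better aimed than the paper's proof. On $x\in[\underline x,1)$ each $h_{jk}$ is strictly concave. With the factor $2$ carried through both $c_1$ and $c_2$, its peak is exactly $x^*_{jk}=1/(s_j+s_k)=\mu_1/(\mu_j+\mu_k)$, and $h_{jk}\to-\infty$ as $x\uparrow 1$. So the sign change that matters is the one on the decreasing branch. The paper instead places an adjacent crossing in $(1/R_{j+1},1/R_j]$, which corresponds to $\alpha\ge\rho\frac{\mu_j+\mu_d}{\sqrt2\|\vmu\|_2}$, i.e.\ at or beyond the upper end of the stated range.

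The pairs with $x^*_{jk}>\underline x$ are not marginal: they include every pair $(1,k)$ with $k<d$. For them you only reduce the claim to $h_{jk}(\underline x)\ge 0$. You then leave this to an unperformed comparison of $\int_{R_j}^{R_k}\log\frac{R}{R-1}\,dR$ with $c_1A(\underline x)+c_2B(\underline x)$, and to a fallback (``index $1$ or an intermediate index dominates them'') that is not an argument. Also, \cref{thm:regime_classification} gives $\alpha_1<\rho\frac{\|\vmu\|_4^4}{\sqrt2\|\vmu\|_2\|\vmu\|_3^3}$, not $>$; fortunately $\bar x$ plays no role in the check.

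The missing idea is the fact the paper's proof is built on: $\Phi_R\ge 0$ and $\Phi_R(1/R)=0$, so $f_j(1/R_j)=0\le f_k(1/R_j)$. Note that $1/R_j=\mu_1/(\mu_j+\mu_d)$ is at most both $\underline x=1/R_1$ and $x^*_{jk}$, and $h_{jk}$ increases on $(0,x^*_{jk}]$. Hence $h_{jk}(x)\ge h_{jk}(1/R_j)=f_k(1/R_j)\ge 0$ on $[\underline x,x^*_{jk}]$. So on all of $[\underline x,1)$, i.e.\ for $\alpha\in(\alpha_0,\rho\frac{\mu_1+\mu_d}{\sqrt2\|\vmu\|_2}]$, every $h_{jk}$ changes sign at most once, from $\ge 0$ to $<0$, and $j^*$ is nondecreasing in $\alpha$.

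\textbf{Further points needed to reach the statement as written.}

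First, pairwise single crossing does not prevent $j^*$ from jumping from $j$ directly to some $k\ge j+2$. ``Relabelling to skip indices that never win'' proves a weaker claim: the proposition asserts that each $j=1,\dots,m$ is the argmax on the nonempty interval $(\alpha^*_{j-1},\alpha^*_j]$. You must show that at a switching point where $f_j=f_k$ is maximal, every intermediate index does strictly better. One way is to show that for fixed $x\ge\underline x$, the map $s\mapsto(s_d-s)\Phi_{s+s_d}(x)$ is strictly log-concave on $(1,s_d)$. This holds because $R\mapsto\Phi_R(x)$ vanishes with zero slope at $R=1/x\le R_1$ and has strictly decreasing second derivative $1/(R(R-1))$. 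That yields $\Phi\,\partial_R^2\Phi<(\partial_R\Phi)^2$ for $R>1/x$, so ties at the top occur only between adjacent indices.

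Second, your tie convention is reversed. Right-closed intervals $(\alpha^*_{j-1},\alpha^*_j]$ assign each switching point to the \emph{smaller} index.

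Finally, $m\le d-1$ should be stated explicitly. It follows from $f_d\equiv 0<f_2$ throughout the range when $d\ge 3$.
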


\cref{thm:l2-thm-sequential} shows $j^*(\alpha)$ monotonically increases sequentially from $1$ to $m$ on $\alpha \in (\alpha_0, \alpha^*_{m}]$. Namely, as the initialization scale $\alpha$ grows, the index that maximizes the lower bound $\textrm{LB}_j(\alpha)$ shifts monotonically from minor to intermediate features. This matches the \emph{initialization-wise sequential feature amplification} discussed in \cref{sec:sfd} (i.e., the black dots in \cref{fig:heatmap}). 
Within Regime~2, our theoretical bound predicts a progression of the most-amplified coordinate from 1 to $m$. 

Lastly, through the cumulative update analysis, we characterize the asymptotic behavior of $\ell_2$-SAM flow for some extreme ranges of $\alpha$. We prove the following proposition in \cref{app:thm-regime13}. 
\begin{restatable}{proposition}{regimeonethree}
    Consider $\alpha_0$ defined in \cref{thm:regime_classification}.
    (i)~If $\alpha < \alpha_0$, then $\vbeta(t)$ converges to zero. (ii)~If $\alpha >\rho \frac{\|\vmu\|_2^2}{\sqrt{2d} (\prod_{i=1}^d \mu_i)^{\nicefrac{1}{d}} \|\vmu\|_1}$, then $\vbeta(t)$ converge in $\ell_1$ max-margin direction.
    \label{thm:regime13}
\end{restatable}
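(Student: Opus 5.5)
The plan is to work entirely with the closed form \eqref{eq:beta}, $\beta_j(t)=\alpha^2\exp\big(2\mu_j t-2\rho\mu_j^2 I(t)\big)$. We also use $w_j(t)^2=\beta_j(t)$ (valid because $\vw^{(1)}=\vw^{(2)}=\vw$ and each $w_j$ stays positive), which gives
\[
n_\vtheta(t)=\sqrt{2\textstyle\sum_j \mu_j^2\beta_j(t)}, \qquad n_\vtheta(0)=\sqrt2\,\alpha\|\vmu\|_2 .
\]
Both parts then reduce to controlling $n_\vtheta(t)$ through a self-consistent (bootstrap) argument, combined with the growth rates $r_j(t)=2\mu_j(1-\rho\mu_j/n_\vtheta(t))$ from \cref{lem:beta_dynamics}.

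\textbf{Part (i).} The condition $\alpha<\alpha_0$ is exactly $n_\vtheta(0)<\rho\mu_1$. Let $c:=2\mu_1(\rho\mu_1/n_\vtheta(0)-1)>0$.
\begin{itemize}
\item We first claim that $n_\vtheta(t)\le n_\vtheta(0)$ for all $t\ge0$. While this holds, every rate satisfies $r_j(t)\le 2\mu_j(1-\rho\mu_j/n_\vtheta(0))\le 2\mu_j(1-\rho\mu_1/n_\vtheta(0))<0$, so every $\beta_j$ is strictly decreasing and hence so is $n_\vtheta$.
\item A continuity argument closes this: take the supremum of times on which $n_\vtheta\le n_\vtheta(0)$. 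If it were finite, $n_\vtheta$ would be strictly decreasing just before it, and by continuity and $\dot n_\vtheta<0$ the inequality would persist past it, a contradiction.
\item Consequently $r_j(t)\le 2\mu_j(1-\rho\mu_1/n_\vtheta(0))\le -c$ for all $t\ge 0$ and all $j$, using $\mu_j\ge\mu_1$. Therefore $\beta_j(t)\le\alpha^2 e^{-ct}\to0$.
\item Positivity of $n_\vtheta$, needed for the flow to be well defined, is automatic from the exponential form of $\beta_j$.
\end{itemize}

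\textbf{Part (ii).} We compare each coordinate with the major one:
\[
\frac{\beta_j(t)}{\beta_d(t)}=\exp\Big(-2(\mu_d-\mu_j)\big[t-\rho(\mu_j+\mu_d)I(t)\big]\Big).
\]
It therefore suffices to show that $I(t)$ stays bounded, i.e.\ that $n_\vtheta$ grows fast enough. To lower bound $n_\vtheta$, apply AM--GM to $\sum_j\mu_j^2\beta_j$:
\[
n_\vtheta(t)^2\ \ge\ 2d\Big(\prod_j\mu_j^2\beta_j\Big)^{1/d}=2d\,g^2\alpha^2 e^{2G(t)/d},
\]
where $g:=(\prod_i\mu_i)^{1/d}$ and $G(t):=\|\vmu\|_1 t-\rho\|\vmu\|_2^2 I(t)$ (because $\prod_j\beta_j=\alpha^{2d}e^{2G(t)}$). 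Differentiating gives the closed differential inequality
\[
\dot G(t)=\|\vmu\|_1-\frac{\rho\|\vmu\|_2^2}{n_\vtheta(t)}\ \ge\ \|\vmu\|_1-\frac{\rho\|\vmu\|_2^2}{\sqrt{2d}\,g\,\alpha}\,e^{-G(t)/d}.
\]
The hypothesis on $\alpha$ is precisely that the right-hand side is some $c>0$ when $G=0$.
\begin{itemize}
\item Since $G(0)=0$ and the right-hand side is increasing in $G$, a comparison and continuity argument shows that $G$ never returns below $0$. Hence $\dot G\ge c$ and $G(t)\ge ct$.
\item It follows that $n_\vtheta(t)\ge\sqrt{2d}\,g\alpha e^{ct/d}$, so $I(t)\uparrow I_\infty<\infty$.
\item Then $t-\rho(\mu_j+\mu_d)I(t)\to\infty$, so $\beta_j/\beta_d\to0$ for every $j<d$, while $\beta_d\to\infty$. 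Thus $\vbeta(t)/\|\vbeta(t)\|\to\ve_d$.
\item Finally, $\ve_d$ is the $\ell_1$ max-margin direction for $\mathcal D_\vmu$, since $\max_{\|\vbeta\|_1\le1}\vmu^\top\vbeta=\mu_d$ is attained uniquely at $\ve_d$.
\end{itemize}

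\textbf{Main obstacle.} The step requiring the most care is the comparison argument for $G$ in part (ii). One must rule out $G$ dipping below $0$, where the lower bound on $\dot G$ could become negative. The plan is to argue via the first time $G$ would hit a level below $0$ and use $\dot G(0)\ge c>0$. Part (i) has an analogous but easier bootstrap.
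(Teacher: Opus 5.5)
Your part (ii) is correct and is essentially the paper's argument. AM--GM on $n_\vtheta^2$, together with $\prod_j\beta_j=\alpha^{2d}\exp\bigl(2\|\vmu\|_1t-2\rho\|\vmu\|_2^2I(t)\bigr)$, keeps $I$ bounded, so $\beta_j/\beta_d\to0$. The paper writes the same inequality as $e^{-\rho\|\vmu\|_2^2I/d}\,\dot I\le Ke^{-\|\vmu\|_1t/d}$ and integrates it directly, getting $I(t)\le\frac{d}{\rho\|\vmu\|_2^2}\log\frac{1}{1-\rho\overline{C}_{\vmu,\valpha}}$. Because that separated inequality holds unconditionally, the comparison argument for $G$ that you flag as the main obstacle is not needed, though yours is also fine. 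Global existence is harmless in (ii): $G\ge0$ gives $n_\vtheta\ge\sqrt{2d}\,g\alpha>0$, and $\beta_j\le\alpha^2e^{2\mu_jt}$.

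The gap is in part (i). You claim your bounds hold for all $t\ge0$ and that positivity of $n_\vtheta$ is automatic from the exponential form. That form only gives $\beta_j>0$ while $I(t)<\infty$, and in this regime $I$ blows up in finite time.

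\begin{itemize}
\item From $\frac{d}{dt}n_\vtheta^2=4\sum_j\mu_j^3\beta_j-\frac{4\rho}{n_\vtheta}\sum_j\mu_j^4\beta_j$ you get $\dot n_\vtheta\le\mu_d n_\vtheta-\rho\mu_1^2$. So once $n_\vtheta<\rho\mu_1^2/(2\mu_d)$, it falls at rate at least $\rho\mu_1^2/2$ and reaches $0$, where $\vepsilon_2$ is undefined, at a finite time $T_{\max}$.
\item The paper's proof shows exactly this: the lower bound of \cref{thm:I(t)}(a) diverges at $T=\frac{1}{\mu_1}\log\frac{\rho\underline{C}_{\vmu,\valpha}}{\rho\underline{C}_{\vmu,\valpha}-1}$.
\item So in your continuity step the supremum of good times is simply $T_{\max}<\infty$, which is no contradiction. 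Your bound $\beta_j(t)\le\alpha^2e^{-ct}$ does not tend to $0$ as $t\to T_{\max}$.
\end{itemize}

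The repair is short. On $[0,T_{\max})$ each $w_j$ is positive and decreasing, so $\vw(t)\to\vw^\ast$. The right-hand side of (\ref{eq:gradient-flow-w}) is locally Lipschitz on $\{\vw\neq\vzero\}$. If $T_{\max}<\infty$, the continuation theorem therefore forces $\vw^\ast=\vzero$; if $T_{\max}=\infty$, your exponential bound applies. Either way $\vbeta(t)\to\vzero$ as $t\to T_{\max}$.

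With that repair, your route for (i) is a valid and more elementary alternative to the paper's explicit lower bound on $I$: it uses uniformly negative rates instead. Your bootstrap $n_\vtheta(t)\le n_\vtheta(0)<\rho\mu_1$ gives $\dot I>1/(\rho\mu_1)$. That is also exactly what verifies the side condition $I(t)/t\ge1/(\rho(\mu_1+\mu_2))$, under which the paper applies \cref{thm:I(t)}(a) without checking it.
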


Recall that \cref{thm:depth2-limit-direction-anydata} assumes that the loss vanishes and the limit direction exists. \cref{thm:regime13}(i) shows that for small $\alpha$ in Regime~1, the loss never vanishes. \cref{thm:regime13}(ii) shows that for some $\alpha$'s in Regimes~2 or 3, the limit direction exists and is the $\ell_1$ max-margin direction.

\vspace{-5pt}
\section{Experiments}
\vspace{-5pt}
Our investigation shows how depth, perturbation geometry, and initialization jointly shape SAM's optimization trajectory. We substantiate these findings with controlled experiments: 2-layer CNNs and linear networks on synthetic banded data, where we systematically vary the dataset construction and metrics across architectures (\cref{E.3}), as well as multi-point (\cref{D.8.2}) and deeper diagonal models (\cref{app:deeperdepthexperiments}). We also present experiments with practical CNNs trained on MNIST, where we use Grad-CAM~\citep{selvaraju2017grad} to visualize which image pixels are emphasized (\cref{fig:gradcam,E.4}). These experiments show that $\ell_2$-SAM places relatively greater emphasis on weaker/background pixels than GD, qualitatively matching our theory.

\begin{figure}[H]  
  \centering

  \begin{subfigure}[t]{0.4\textwidth}
    \centering
    \includegraphics[width=\linewidth]{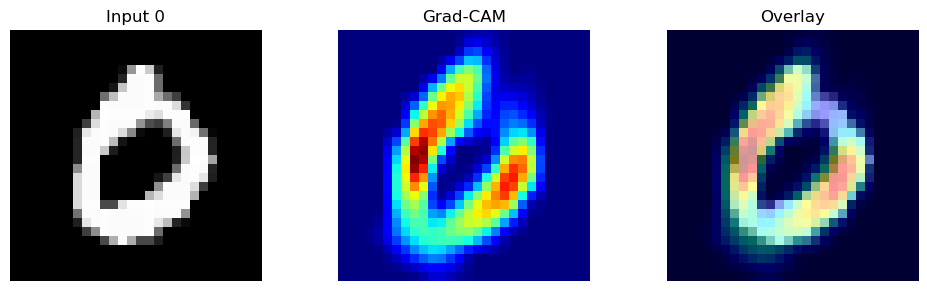}
    \caption{GD}
  \end{subfigure}
  \hspace{3em}
  \begin{subfigure}[t]{0.4\textwidth}
    \centering
    \includegraphics[width=\linewidth]{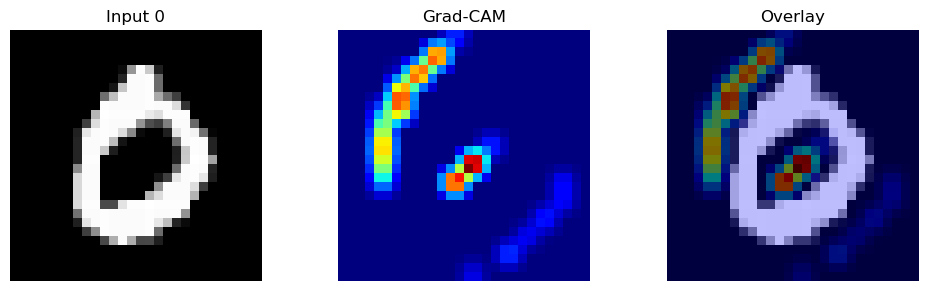}
    \caption{$\ell_2$-SAM}
  \end{subfigure}
  \vspace{-5pt}
  \caption{Grad-CAM comparison of GD and $\ell_2$-SAM on a CNN trained on MNIST. GD focuses on dominant digit pixels, whereas $\ell_2$-SAM highlights minor background regions.}
  \label{fig:gradcam}
  \vspace{-10pt}
\end{figure}

\vspace{-12pt}
\section{Conclusion}
\vspace{-5pt}
We characterized how network depth changes SAM's implicit bias on linear diagonal networks. 
For depth 1, SAM preserves GD's implicit bias. For deeper networks ($L \geq 2$) with $\ell_\infty$-SAM, we derived precise weight trajectories depending on initialization scale and perturbation radius, where each weight coordinate either diverges toward a standard basis vector or converges to a finite point.
The most interesting regime arises for $L=2$ with $\ell_2$-SAM: while the limit direction converges to the $\ell_1$ max-margin solution, the finite-time dynamics exhibit \emph{sequential feature amplification}, where the predictor initially relies on minor coordinates and gradually shifts to larger ones. These observations suggest that implicit bias statements made only in the $t\to\infty$ limit can overlook how the bias emerges, motivating a \emph{finite-time} perspective.

\section*{Ethics Statement}
This work is purely theoretical, analyzing the optimization dynamics and implicit bias of SAM in simplified models. It does not involve human subjects, personal data, or sensitive information, and introduces no new datasets. Broader impacts are indirect: the results may inform more reliable training and diagnosis of SAM-like methods, but they do not by themselves address safety, fairness, or deployment risks, which must be evaluated in application-specific settings.

\section*{Reproducibility Statement}
We support the reproducibility of our results by (1) fully specifying the SAM variants and rescaled-flow dynamics, together with complete theoretical statements and proofs; (2) reporting exact experimental setups, including the initialization scheme, models, datasets, and the values of step sizes, perturbation radius, and initialization scale; and (3) computing several quantities used in our theoretical simulations in closed form, making the corresponding plots exactly reproducible.

\section*{Acknowledgment}
We thank Junsoo Oh for the helpful discussions. This work was supported by a National Research Foundation of Korea (NRF) grant funded by the Korean government (MSIT) (No.\ RS-2023-00211352) and an Institute of Information \& communications Technology Planning \& Evaluation (IITP) grant (No.\ RS-2019-II190075, Artificial Intelligence Graduate School Program (KAIST)) funded by the Korean government (MSIT).

\bibliography{iclr2026_conference}
\bibliographystyle{iclr2026_conference}

\clearpage

\tableofcontents

\appendix

\clearpage

\section*{Declaration of LLM Usage}
We used Large Language Models (LLMs) solely to aid or polish writing. They did not generate ideas, analyses, or conclusions. All LLM-assisted text was reviewed and edited by the authors.

\section{Figures and Discussions Omitted from Main Text}
\subsection{Flow Trajectories of GD and SAM}
\label{app:flow}
\begin{figure}[h]
    \centering
    \begin{subfigure}[t]{0.48\textwidth}
        \centering
        \includegraphics[width=\textwidth]{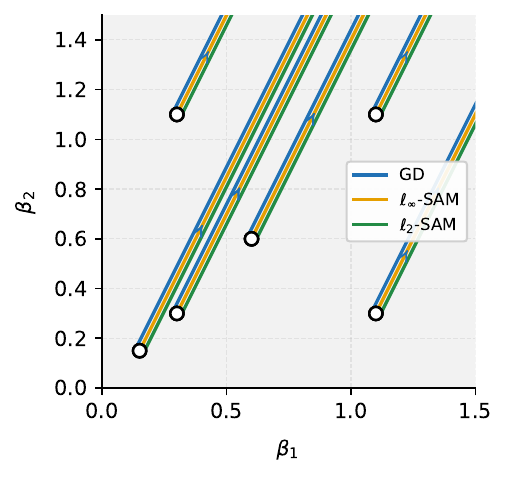}
        \caption{Depth 1 (linear network)}
        \label{fig:intro-depth1}
    \end{subfigure}
    \hfill
    \begin{subfigure}[t]{0.48\textwidth}
        \centering
        \includegraphics[width=\textwidth]{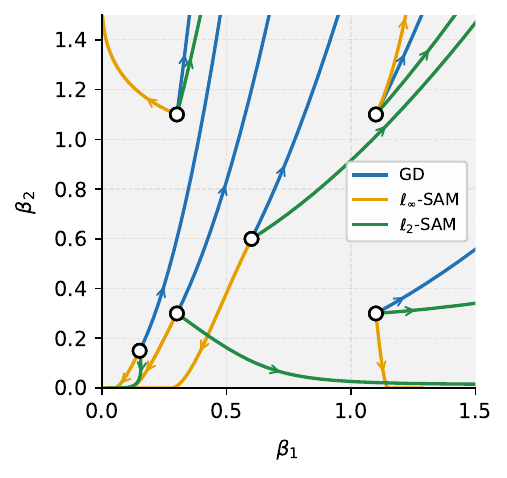}
        \caption{Depth 2 (linear diagonal network)}
        \label{fig:intro-depth2}
    \end{subfigure}
    \caption{Trajectories of the predictor $\vbeta(t) \in\R^2$ from identical initial conditions under GF, $\ell_\infty$-SAM flow and $\ell_2$-SAM flow on $\{ (\vmu, +1)\}$ with $\vmu = (1,2)$. For SAM, $\rho=1$.}
    \label{fig:intro-flow}
\end{figure}

\subsection{More Discussion on Related Work}

\subsubsection{Recent Work on Implicit Bias in Diagonal Linear Networks}
\label{app:diagrecent}

\citet{jacobs2024mask} study continuous sparsification with time-varying weight decay, formulating a time-dependent Bregman potential that causes the implicit bias to evolve from 
$\ell_2$- to $\ell_1$-type behavior over the course of training.
\citet{wang2024mirror} study smoothed sign descent on a quadratically parameterized regression problem, introducing a time-varying mirror map. and prove that the resulting limit point is an approximate KKT point of a Bregman-divergence–style objective, where the stability constant $\varepsilon$ quantifies the gap to KKT optimality.
\citet{papazov2024leveraging} analyze momentum gradient descent on diagonal linear network through a momentum gradient flow, showing that a newly defined intrinsic parameter determines the optimization trajectory and admits a second order, time-varying mirror-flow formulation. Within this framework, they characterize the induced implicit regularization and demonstrate that smaller values of this intrinsic parameter yield more balanced weights and sparser solutions compared to standard gradient flow.
\citet{jacobs2025mirror} extend the mirror flow framework to account for explicit regularization and analyze the evolution of the corresponding Legendre function over time, thereby describing how the implicit bias changes in different reparameterizations, including diagonal linear networks. In particular, they track how the implicit bias evolves in terms of its positional bias, bias type, and range shrinking.

\subsubsection{Comparison with Saddle-to-saddle Dynamics}
\label{app:saddletosaddlecompare}

In this section, we provide further details on the relation between our work and the saddle-to-saddle dynamics of gradient descent/flow. \citet{pesme2023saddle} consider diagonal linear networks trained with squared loss in the infinitesimal-initialization limit. In this regime, gradient flow exhibits incremental, stage-wise learning: the flow undergoes long plateaus near a saddle whose predictor is supported on the first $k$ coordinates, then escapes along a low-dimensional “fast escape’’ manifold to a saddle with support on $k{+}1$ coordinates, and so on. Sequentiality thus appears as \emph{discrete} transitions between saddles with support size $k$ and $k{+}1$. In the diagonal setting, complexity is captured by the number of active coordinates, which is constant on each plateau and changes only at these transition times.
    
In contrast, our work on the sequential feature amplification focuses on a linear diagonal \emph{classifier} trained with $\ell_2$-SAM and logistic loss, and on a different notion of complexity: individual coordinates (features) ordered by the strength of the teacher signal, from minor to major features. In our setting, all coordinates are present from the beginning. Instead of coordinate jumps, we track how the coordinate-wise alignments and margins evolve both over time and as a function of the initialization scale, where by “alignment” we mean the magnitude of the predictor at each coordinate, indicating how strongly the predictor attends to each feature. We show that $\ell_2$-SAM gives rise to two complementary forms of sequential feature amplification: (i) a \emph{time-wise} ordering, where alignment with minor features is relatively amplified earlier in training and gradually shifts toward major features; and (ii) an \emph{initialization-scale-wise} ordering, where the most-amplified feature over a finite training process changes systematically with the initialization scale. In both views, the ordering emerges through a \emph{continuous} evolution of the alignment across coordinates, and sequentiality is captured by which feature is currently most amplified, rather than by discrete activation or deactivation of features.

The mechanisms underlying these two phenomena are conceptually distinct. First, saddle-to-saddle dynamics start from the zero vector and involve successive coordinate \emph{activations}, where previously inactive coordinates become active over time. Our setting, by contrast, starts from $\alpha \mathbf{1}$ (without taking the limit $\alpha \to 0$), where all coordinates are already active, and the dynamics involve successive \emph{amplification} of already-active coordinates. Activation and amplification are fundamentally different: even if saddle-to-saddle dynamics exhibit successive activation, the identity of the most dominant coordinate can remain unchanged, unlike in our setting where dominance itself shifts over time.
    
Second, the ordering principles differ. In our work, the ordering of amplified coordinates is driven directly by the data geometry, namely the ordering of the signal strengths $\mu_j$. In saddle-to-saddle dynamics, the progression is governed by a dual-thresholding mechanism, tied to when integrated gradients hit constraint boundaries, and does not correspond to a minor-to-major feature progression.

Third, the role of initialization is opposite. Saddle-to-saddle dynamics arise in the vanishing-initialization limit ($\alpha \to 0$). In contrast, we observe sequential feature amplification across a wide range of non-vanishing initialization scales, and in fact show that increasing $\alpha$ induces a clear and systematic amplification ordering. Our phenomenon is therefore not a small-initialization effect.

Fourth, saddle points play no constructive role in our mechanism. Aside from the trivial effect that extremely small initialization can prevent SAM trajectories from escaping the origin, saddle points do not drive the sequential feature amplification we characterize. The observed dynamics are not mediated by saddle escape.

Finally, the problem setups are fundamentally different. Prior saddle-to-saddle works analyze regression under squared loss, whereas our work studies classification under logistic loss, where the optimization landscape and asymptotic behavior are qualitatively different.

Taken together, these observations indicate that sequential feature amplification is a SAM-specific phenomenon, distinct from known saddle-to-saddle or incremental learning dynamics, and does not arise under conventional gradient descent.

\subsubsection{Properties of SAM}
\label{app:samdiag}

SAM seeks flatter minima through a two-step update procedure that first perturbs parameters in the direction of steepest ascent before computing the gradient update. This unique optimization strategy has led to the discovery of several distinctive dynamical properties that differ fundamentally from standard gradient descent. 

SAM exhibits distinctive valley-bouncing dynamics~\citep{bartlett2022dynamics, wen2022does}. \citet{bartlett2022dynamics} demonstrated that SAM oscillates between ravines in quadratic loss landscapes, with update directions naturally aligning with the dominant eigenvector of the Hessian matrix. Building on this geometric intuition, \citet{wen2022does} proved that SAM implicitly follows trajectories that minimize the maximum eigenvalue of the Hessian, providing a precise characterization of SAM's eigenvalue regularization effect.

However, SAM's convergence properties present both opportunities and challenges. \citet{si2023practicalsharpnessawareminimizationconverge} showed that under practical settings, SAM can struggle to converge to local minima, while \citet{kim2023exploring} identified specific instabilities in SAM dynamics near saddle points. These findings highlight the delicate balance between SAM's beneficial regularization effects and potential optimization difficulties.

\citet{andriushchenko2023sharpness} demonstrated that SAM dynamics drive networks toward low-rank feature representations by systematically pruning activations, providing mechanistic insights into how SAM shapes learned representations. \citet{dai2023crucial} showed that normalization terms in SAM updates play a crucial role in stabilizing training dynamics and preventing gradient vanishing, highlighting the importance of architectural components in SAM's effectiveness. \citet{compagnoni2023sde} derived continuous-time stochastic differential equations for SAM, proving that the dynamics are equivalent to SGD on an implicitly regularized loss with Hessian-dependent noise, thereby connecting SAM to principled stochastic optimization theory. \citet{vani2024forget} argued that SAM's ascent perturbation mechanism systematically discards output-exposed biases, offering a perspective on how SAM's perturbation strategy contributes to improved generalization.

Previous works~\citep{andriushchenko2022towards, clara2025training} have studied SAM's implicit bias in diagonal linear networks. \citet{andriushchenko2022towards} analyze 2-layer linear diagonal networks under sparse regression with MSE loss, showing SAM induces better sparsity than gradient descent, but require the small-$\rho$ assumption. \citet{clara2025training} study SAM dynamics with noise, proving weight balancing across layers and sharpness minimization, also limited to MSE loss.
Our analysis removes the small-$\rho$ assumption to capture the full perturbation effect and studies logistic loss, revealing distinct implicit bias properties compared to the squared loss setting.

\citet{kim2026membershipprivacyriskssharpness} empirically observe that SAM is more
capable than SGD of capturing atypical subpatterns, while SGD relies more
heavily on majority features. Our result provides a complementary mechanistic
view: in diagonal linear networks, SAM can amplify minor coordinates
differently from GD during finite-time training, even though the two methods
share the same asymptotic implicit bias. This suggests that SAM's improved
generalization may partly arise from its ability to discover non-majority but
informative features.

\subsubsection{Depth-Induced Implicit Bias}
A line of work shows that increasing depth in overparameterized linearized models can qualitatively change the implicit bias of gradient-based training dynamics~\citep{vardi2023implicit}. In matrix problems, deep matrix factorization exhibits an enhanced tendency toward low-rank solutions as depth increases~\citep{arora2019implicit,chou2024gradient}, and recent work further connects depth to low-rank bias and loss of plasticity in matrix completion~\citep{shinimplicit}. Relatedly, depth can also induce incremental learning dynamics and sparsity-promoting behavior in simplified models~\citep{gissin2019implicit}. Our results complement these findings by isolating a depth-dependent implicit bias mechanism specific to SAM in diagonal linear networks.

\subsection{Derivation of Rescaled \texorpdfstring{$\ell_p$}{l-p}-SAM flow}
\label{app:spatial_l_infty}

For the dataset $\{(\vmu, +1)\}$, the loss function is given as:
\begin{align*}
\mathcal{L}(\vtheta) = \ell(\langle \vbeta (\vtheta), \vmu \rangle).
\end{align*}

For each $i \in [L]$, the gradient is
\begin{equation}
\begin{alignedat}{2}
\nabla_{\vw^{(i)}} \gL(\vtheta) &=
\ell'\big(\langle \vbeta(\vtheta),\vmu\rangle\big)
\nabla_{\vw^{(i)}}\langle \vbeta(\vtheta),\vmu\rangle
&= \ell'\big(\langle \vbeta(\vtheta),\vmu\rangle\big)
\vmu \odot \Big( \bigodot_{\ell\ne i} \vw^{(\ell)} \Big).
\end{alignedat}
\label{eq:gradient}
\end{equation}

Then, we have the $\ell_p$-SAM flow of $\vw^{(i)}$ as
\begin{align*}
\dot{\vw}^{(i)}(t) &= -\nabla_{\vw^{(i)}} \mathcal{L}(\hat{\vtheta}(t)) = -\ell'\big(\langle \vbeta(\hat{\vtheta}(t)), \vmu \rangle\big) \vmu \odot \Big( \bigodot_{\ell \neq i} \hat{\vw}^{(\ell)}(t)\Big).
\end{align*}
Since $\ell'(u)=-\frac{1}{1+\exp(u)}<0$, it has the same spatial trajectory (up to reparameterization of time):
\begin{align*}
\dot{\vw}^{(i)}(t) &= \vmu \odot \Big( \bigodot_{\ell \neq i} \hat{\vw}^{(\ell)}(t)\Big) = \vmu \odot \Big( \bigodot_{\ell \neq i} \big(\vw^{(\ell)}(t) + \vepsilon_p^{(\ell)}(\vtheta(t))\big) \Big).
\end{align*}
This derivation works for any $p$, not just $p = 2$ and $p = \infty$.

\subsection{GD and \texorpdfstring{$\ell_\infty$}{l-infty}-SAM do not exhibit sequential feature amplification}
\label{app:gd-sfd}

\begin{figure}[H]
    \centering
    \begin{subfigure}[t]{0.48\textwidth}
        \centering
        \includegraphics[width=\textwidth]{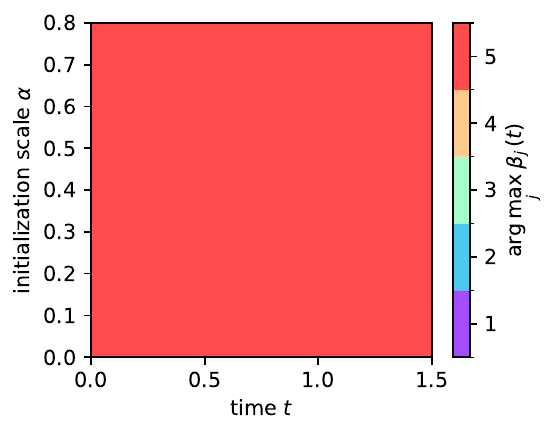}
        \caption{GF}
        \label{fig:gd-heatmap}
    \end{subfigure}
    \hfill
    \begin{subfigure}[t]{0.48\textwidth}
        \centering
        \includegraphics[width=\textwidth]{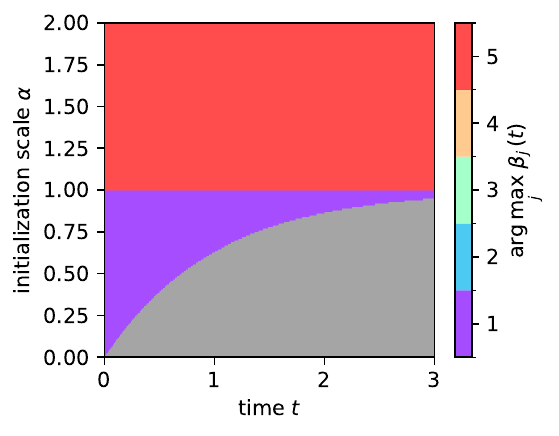}
        \caption{$\ell_\infty$-SAM}
        \label{fig:l-infty-heatmap}
    \end{subfigure}
    \caption{Dominant index $j^\dagger:=\arg \max_j \beta_j(t)$ for GF and $\ell_\infty$-SAM flow over $(t,\alpha)$ on $\mathcal{D}_\vmu$ with $\vmu= (4,5,6,7,8) \in \R^5$.}
    \label{fig:app-heatmap}
\end{figure}

\subsection{Interesting Trajectory in Regime~2 of \texorpdfstring{\cref{thm:regime_classification}}{Theorem 4.5}}
\label{sec:regime2}
\begin{figure}[H]   
  \centering

  \begin{subfigure}[t]{0.48\textwidth}
    \centering
    \includegraphics[width=\linewidth]{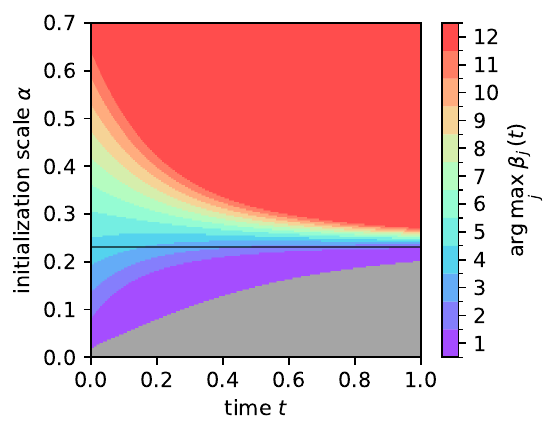}
    \caption{$T=1$ (short horizon).}
    \label{fig:heatmap-dip-a}
  \end{subfigure}
  \hfill
  \begin{subfigure}[t]{0.48\textwidth}
    \centering
    \includegraphics[width=\linewidth]{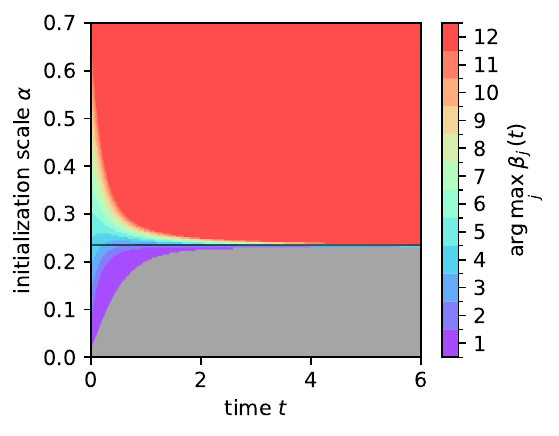}
    \caption{$T=6$ (long horizon).}
    \label{fig:heatmap-dip-b}
  \end{subfigure}

  \caption{Dominant index for $\ell_2$-SAM flow with $\vmu=(1,2,\dots,12)$. The black line indicates the interesting trajectory.}
  \label{fig:heatmap-dip}
\end{figure}

In Regime~2 of \cref{thm:regime_classification}, there is also an interesting sub-regime that corresponds to smaller values of $\alpha$ with the range of Regime~2. 
Define a critical threshold $\alpha_\textup{crit} := \frac{\rho \|\vmu\|_4^4}{\sqrt{2}\|\vmu\|_2 \|\vmu\|_3^3} \in (\alpha_1, \alpha_2)$. 
When $\alpha_1<\alpha < \alpha_{\textup{crit}}$, the trajectory $m_{\textup{c}}(t)$ initially decreases to a minimum above $\frac{\mu_1}{2}$ and then increases. 
During this decreasing phase, the $\ell_2$-SAM flow amplifies coordinates with smaller indices $j < j_{\textup{c}} (0)$ than the most-amplified index at initialization $j_{\textup{c}}(0) \in \argmin_j \left |\mu_j - m_{\textup{c}}(0) \right |$, enabling an aggressive exploration of weaker features before transitioning to the standard minor-first-major-last sequential amplification pattern.
Along the black path in \cref{fig:heatmap-dip}, this manifests as the most-amplified coordinate starting at $\beta_4$, then stepping down to $\beta_1$ sequentially during the initial decrease, and---after sufficient time---stepping back up sequentially toward $\beta_d$ as $m_{\textup{c}}(t)$ increases.

\newpage

\section{Core Lemma for SAM on Depth-1 Networks}
Although our argument is inspired by the simple proof of Theorem 9 in \citet{soudry2018implicit}, extending that analysis from gradient descent to the SAM flow is far from straightforward. In GD the gradient has a clean exponential form and all coefficients are fixed, which makes the support/non-support decomposition almost immediate.

In contrast, SAM evaluates the gradient at the perturbed point $\hat{\vw}(t)$, introducing the time–dependent factors $\gamma_n(t)$ and the perturbed margins $\widehat m_n(t)$, neither of which appear in GD. Controlling these additional terms turns out to be technically delicate: one must show that the SAM-induced coefficients remain uniformly bounded, that the perturbed margins stay within a fixed range, and that the resulting two-variable function $\psi(z,\delta)$ admits a uniform upper bound. Only after establishing these new ingredients can the GD-style argument be recovered. The proof below develops these steps and shows that, despite the additional complexity, the SAM flow converges to the same $\ell_2$ max-margin direction as GD.

\begin{lemma}
    \label{lem:linear}
    For almost every dataset which is linearly separable, any perturbation radius $\rho$ and any initialization, consider the linear model $f(\vx)=\langle \vw, \vx \rangle$ trained with logistic loss.
    For any SAM perturbation of the form
    $$
    \hat \vw = \vw + \vepsilon(\vw)
    $$
    with a perturbation direction $\vepsilon(\vw)$ satisfying
    $$
    \|\vepsilon(\vw)\|_2 \le B
    \quad\text{for some finite constant }B<\infty\text{ and all }\vw,
    $$
    the resulting SAM flow converges in $\ell_2$ max-margin direction.
\end{lemma}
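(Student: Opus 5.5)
We plan to adapt the proof of Theorem 9 in \citet{soudry2018implicit}. Absorb labels into the data ($\vx_n \leftarrow y_n\vx_n$). The flow is
$$
\dot\vw(t) = -\frac{1}{N}\sum\nolimits_n \ell'(\langle \hat\vw(t),\vx_n\rangle)\,\vx_n ,
$$
so $\dot \vw(t) = \frac1N\sum_n \gamma_n(t)\, e^{-\widehat m_n(t)}\vx_n$. Here $\widehat m_n(t) := \langle \hat \vw(t),\vx_n\rangle$ and $\gamma_n(t):=e^{\widehat m_n}/(1+e^{\widehat m_n})\in(0,1)$.

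The key observation is $|\widehat m_n(t)-\langle \vw(t),\vx_n\rangle|\le B\|\vx_n\|$, which holds uniformly in $t$. The perturbed margins therefore differ from the true margins by a bounded amount. Consequently $e^{-\widehat m_n} = \delta_n(t) e^{-\langle \vw,\vx_n\rangle}$ with $\delta_n(t)\in[e^{-B\|\vx_n\|},e^{B\|\vx_n\|}]$, and these constants are bounded uniformly.

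The steps are as follows.

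\textbf{(1) Divergence.} Take $\hat\vw_{\max}$, the $\ell_2$ max-margin vector, whose margins are all at least $1$. Then $\langle \dot\vw,\hat\vw_{\max}\rangle \ge \frac1N\sum_n |\ell'(\widehat m_n)|>0$. Integrating shows $\|\vw(t)\|\to\infty$ unless $\sum_n\int |\ell'(\widehat m_n)|<\infty$. The latter would force every $\widehat m_n\to\infty$, and hence $\langle\vw,\vx_n\rangle\to\infty$ as well. Either way, the margins $\langle \vw(t),\vx_n\rangle\to\infty$, so $\gamma_n(t)\to1$ and $\|\vw\|\to\infty$.

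\textbf{(2) Residual decomposition.} For almost every dataset, the support vectors $S$ are linearly independent and admit strictly positive dual coefficients. Following Soudry et al., write $\vw(t) = \hat\vw_{\max}\log t + \tilde\vw + \vr(t)$, where $\tilde\vw$ solves $\frac1N\exp(-\langle\tilde\vw,\vx_n\rangle)=\alpha_n$ for $n\in S$.

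\textbf{(3) Boundedness of the residual.} Compute
$$
\tfrac{d}{dt}\tfrac12\|\vr\|^2=\langle \vr,\dot\vw-\hat\vw_{\max}/t\rangle ,
$$
and split the sum over $n\in S$ and $n\notin S$. For $n\notin S$, the margin of $\hat\vw_{\max}$ exceeds $1+\theta$. Their terms are therefore $O(t^{-1-\theta}e^{-\langle\vr,\vx_n\rangle}\cdot e^{B\|\vx_n\|})$, which is integrable regardless of $\vr$, using $z e^{-z}\le 1$. For $n\in S$ the term becomes
$$
\tfrac1t\textstyle\sum_n \alpha_n\langle \vr,\vx_n\rangle\big(\gamma_n\delta_n e^{-\langle\vr,\vx_n\rangle}-1\big).
$$
If $\gamma_n\delta_n$ were $1$, this is $\le0$ as in GD. Here, bound $z(c e^{-z}-1)$ by a function $\psi(z,c)$ with $c\in[c_{\min},c_{\max}]$. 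One has $\psi\le0$ whenever $|z|\ge \max(|\log c_{\min}|,|\log c_{\max}|)$, and $\psi\le C_0$ otherwise. So this part is bounded by $C_1/t$.

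\textbf{Main obstacle.} The $C_1/t$ bound integrates to $\log t$, so it does not give boundedness. Naively one gets only $\|\vr(t)\|=O(\sqrt{\log t})$, which still yields directional convergence because $\vw/\|\vw\|=(\hat\vw_{\max}\log t+O(\sqrt{\log t}))/\|\cdot\|\to\hat\vw_{\max}/\|\hat\vw_{\max}\|$. I would aim for exactly this weaker statement, which suffices for the lemma.

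To make it rigorous, we must control the projection of $\vr$ onto span$\{\vx_n\}_{n\in S}$ versus its orthogonal complement. The orthogonal part changes only through the integrable non-support terms. On the span, linear independence of $S$ means $\|\vr\|$ large forces some $|\langle\vr,\vx_n\rangle|$ large, which makes that summand strongly negative. This is the point where the argument can be tightened if needed.

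Finally, since the margins diverge, $\gamma_n\to1$ from step (1). This lets $\psi$ be evaluated with the constants $\delta_n$, whose range is fixed by $B$.
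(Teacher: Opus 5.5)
Your route is the paper's. You use the same residual $\vr(t)=\vw(t)-\hat\vw_{\max}\log t$; the paper omits your constant shift $\tilde\vw$, and with it any need for linearly independent support vectors. You use the same split into support and non-support terms and the same function $\psi(z,c)=z(ce^{-z}-1)$. The paper likewise stops at $\tfrac12\tfrac{d}{dt}\|\vr\|^2\le C_2/t$, i.e.\ $\|\vr(t)\|=O(\sqrt{\log t})=o(\log t)$. No control of the projection onto $\mathrm{span}\{\vx_n\}_{n\in S}$ is needed.

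\textbf{The gap.} The uniform upper bound on $\psi$ needs $c=\gamma_n\delta_n$ to stay away from $0$: as $c\to0$, $\psi(z,c)\approx -z$ is unbounded as $z\to-\infty$. You rightly saw this. The paper's own proof slips at exactly this point: it lets its coefficient range over $[0,e^{C}/b_{\min}]$, on which $\psi$ is \emph{not} bounded above. But your Step (1) does not supply the bound either, because its ``either way'' is invalid.
\begin{itemize}
\item If $\int\sum_n|\ell'(\widehat m_n)|\,dt=\infty$, you only get $\langle\vw(t),\hat\vw_{\max}\rangle\to\infty$, not that every margin diverges.
\item The other case cannot occur. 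A finite integral gives $\int\|\dot\vw\|\,dt<\infty$, so $\vw(t)$ converges and the margins stay bounded. Then $|\ell'(\widehat m_n)|$ is bounded below, contradicting finiteness.
\end{itemize}
So Step (1) never yields $\gamma_n\to1$. Divergence of every margin is really a consequence of $\|\vr\|=o(\log t)$, and invoking it beforehand is circular.

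\textbf{The repair} needs no a priori information about the margins. For $n\in S$ and $z_n=\langle\vr,\vx_n\rangle$ you have the identity $\gamma_n\delta_n e^{-z_n}=t/\bigl(N\alpha_n(1+e^{\widehat m_n})\bigr)$. Split on the sign of $\widehat m_n$.
\begin{itemize}
\item If $\widehat m_n\ge0$, then $\gamma_n\ge\tfrac12$. So $c\in[\tfrac12 e^{-B\|\vx_n\|},e^{B\|\vx_n\|}]$, and your compactness bound applies.
\item If $\widehat m_n<0$, then $\langle\vw,\vx_n\rangle<B\|\vx_n\|$. This forces $z_n<0$ once $\log t>B\|\vx_n\|-\langle\tilde\vw,\vx_n\rangle$. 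Meanwhile $\gamma_n\delta_n e^{-z_n}>t/(2N\alpha_n)>1$ once $t>2N\alpha_n$. Hence that summand is negative.
\end{itemize}
Thus $\tfrac12\tfrac{d}{dt}\|\vr\|^2\le C_1/t+O(t^{-1-\theta})$ for all $t\ge t_0$. This gives $\|\vr\|=O(\sqrt{\log t})$, hence the lemma, and, a posteriori, divergence of all margins.
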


\begin{proof}
Let $\{(\vx_n,y_n)\}_{n=1}^N\subset\mathbb{R}^d\times\{\pm1\}$ be a linearly separable dataset, that is, there exists a vector $\vw_\ast$ such that
$$
y_n\,\vx_n^\top\vw_\ast>0\quad\text{for all }n.
$$
As usual in this setting, we absorb the labels into the inputs and assume without loss of generality that all labels are $y_n=1$. In other words, we redefine $\vx_n\leftarrow y_n\vx_n$ and work with a dataset $\{\vx_n\}_{n=1}^N$ such that
$$
\exists\vw_\ast\text{ with }\vx_n^\top\vw_\ast>0\quad\text{for all }n.
$$

For the linear model $f(\vx)=\vx^\top\vw$, the logistic loss is
$$
\gL(\vw)=\sum_{n=1}^N\ell(\vx_n^\top\vw),
\qquad
\ell(u)=\log(1+e^{-u}),
\qquad
\ell'(u)=-\frac{e^{-u}}{1+e^{-u}}.
$$
The SAM flow with perturbation $\vepsilon(\vw)$ is the gradient flow
\begin{equation}\label{eq:sam-flow-main}
\dot\vw(t)=-\nabla\gL(\widehat\vw(t)),
\qquad
\widehat\vw(t)=\vw(t)+\vepsilon(\vw).
\end{equation}

Let $m_n(t)=\vx_n^\top\vw(t)$ and $\widehat m_n(t)=\vx_n^\top\widehat\vw(t)$. Then
$$
\nabla\gL(\widehat\vw(t))
=-\sum_{n=1}^N\frac{e^{-\widehat m_n(t)}}{1+e^{-\widehat m_n(t)}}\,\vx_n
=-\sum_{n=1}^N\gamma_n(t)e^{-m_n(t)}\vx_n,
$$
with
$$
\gamma_n(t)
=\frac{e^{-(\widehat m_n(t)-m_n(t))}}{1+e^{-\widehat m_n(t)}}\ge0.
$$
Because $\widehat\vw(t)-\vw(t)=\vepsilon(\vw(t))$ and $\|\vepsilon(\vw(t))\|_2\le B$, if the data are bounded, say $\|\vx_n\|_2\le R$, then
\begin{align}
\label{eq:margin-diff}
    |\widehat m_n(t)-m_n(t)|=|\vx_n^\top(\widehat\vw(t)-\vw(t))|
\le BR=:C
\end{align}

for all $n,t$. Hence there is a constant $A>0$ such that
$$
0\le\gamma_n(t)\le A\quad\text{for all }n,t.
$$
The SAM flow \eqref{eq:sam-flow-main} can therefore be written as
\begin{equation}\label{eq:sam-alpha-form-main}
\dot\vw(t)=\sum_{n=1}^N\gamma_n(t)e^{-m_n(t)}\vx_n,
\qquad
0\le\gamma_n(t)\le A.
\end{equation}

Let $\vw^*$ denote the $\ell_2$ max-margin solution
$$
\vw^*=\arg\min_{\vw}\|\vw\|_2
\quad\text{s.t.}\quad
\vx_n^\top\vw\ge1\text{ for all }n.
$$
Let $S=\{n:\vx_n^\top\vw^*=1\}$ be the support set. Standard KKT conditions yield coefficients $b_n>0$ for $n\in S$ with $\sum_{n\in S}b_n=1$ such that
$$
\vw^*=\sum_{n\in S}b_n\vx_n.
$$
Define the residual
$$
\vr(t)=\vw(t)-\vw^*\log t.
$$
Our goal is to show that $\vr(t)$ is bounded. This will imply that
$$
\frac{\vw(t)}{\|\vw(t)\|}
=\frac{\vw^*\log t+\vr(t)}{\|\vw^*\|\log t+o(\log t)}
\to\frac{\vw^*}{\|\vw^*\|},
$$
that is, the SAM flow converges in the $\ell_2$ max-margin direction.

Differentiating and substituting \eqref{eq:sam-alpha-form-main}, we obtain
$$
\dot\vr(t)=\dot\vw(t)-\frac{\vw^*}{t}
=\sum_{n=1}^N\gamma_n(t)e^{-m_n(t)}\vx_n-\frac{\vw^*}{t}.
$$
We split the sum over the support and non-support points:
\begin{align*}
\dot\vr(t)
&=\sum_{n\in S}\gamma_n(t)e^{-m_n(t)}\vx_n
+\sum_{n\notin S}\gamma_n(t)e^{-m_n(t)}\vx_n
-\frac{\vw^*}{t}.
\end{align*}
For $n\in S$ we have $\vx_n^\top\vw^*=1$, so
$$
m_n(t)=\vx_n^\top\vw(t)
=\vx_n^\top\vw^*\log t+\vx_n^\top\vr(t)
=\log t+\vx_n^\top\vr(t),
$$
and therefore
$$
te^{-m_n(t)}=e^{-\vx_n^\top\vr(t)}.
$$
For $n \notin S$ we have
$$
e^{-m_n(t)}=e^{-\vx_n^\top\vw^*\log t-\vx_n^\top\vr(t)}
= t^{-\vx_n^\top\vw^*}e^{-\vx_n^\top\vr(t)}.
$$

Using $\vw^*=\sum_{n\in S}b_n\vx_n$ we rewrite
\begin{align}
\dot\vr(t)
&=\frac{1}{t}\sum_{n\in S}b_n\Big[\frac{\gamma_n(t)}{b_n}e^{-\vx_n^\top\vr(t)}-1\Big]\vx_n
+\sum_{n\notin S}\gamma_n(t)t^{-\vx_n^\top\vw^*}e^{-\vx_n^\top\vr(t)}\vx_n.
\label{eq:r-dot-split-main}
\end{align}

Consider the squared norm:
$$
\frac12\frac{d}{dt}\|\vr(t)\|^2
=\vr(t)^\top\dot\vr(t)=T_1(t)+T_2(t),
$$
where $T_1(t)$ and $T_2(t)$ are the contributions of the two terms in \eqref{eq:r-dot-split-main}.  
For the non-support term $T_2(t)$ in \eqref{eq:r-dot-split-main}, we have
$$
T_2(t)
=\sum_{n\notin S}\gamma_n(t)t^{-\vx_n^\top\vw^\star}
e^{-\vx_n^\top\vr(t)}\vx_n^\top\vr(t).
$$
There is a margin gap $\theta>0$ such that $\vx_n^\top\vw^*\ge1+\theta$ when $n\notin S$. Then
$$
t^{-\vx_n^\top\vw^*}\le t^{-(1+\theta)},
$$
and using $\gamma_n(t)\le A$ and $\forall z \; e^{-z}z\le1$, we have
$$
T_2(t)\le\frac{A}{t^{1+\theta}}.
$$
For the support points, write $z_n(t)=\vx_n^\top\vr(t)$ and define
$$
\delta_n(t):=\frac{\gamma_n(t)}{b_n},
\qquad
\psi_n(t)
=\big(\delta_n(t)e^{-z_n(t)}-1\big)z_n(t),
$$
so that
$$
T_1(t)=\frac{1}{t}\sum_{n\in S} b_n\,\psi_n(t).
$$
We first justify that the coefficients $\delta_n(t)=\gamma_n(t)/b_n$ remain in a fixed
compact interval. 
By \eqref{eq:margin-diff},
$$
|\widehat m_n(t)-m_n(t)|\le C.
$$
Since
$$
\gamma_n(t)
=\frac{e^{-(\widehat m_n(t)-m_n(t))}}{1+e^{-\widehat m_n(t)}},
$$
and the denominator satisfies $1+e^{-\widehat m_n(t)}\ge 1$, we obtain the uniform bound
$$
0 \le \gamma_n(t)
\le e^{-(\widehat m_n(t)-m_n(t))}
\le e^{C}
\qquad
\text{for all $n, t$}.
$$
Thus each $\gamma_n(t)$ lies in the compact interval
$$
[0,e^{C}].
$$

Next, since every $b_n>0$ for $n\in S$ and $S$ is a finite set, define
$$
b_{\min}:=\min_{n\in S}b_n>0,
\qquad
b_{\max}:=\max_{n\in S}b_n.
$$
Therefore
$$
\delta_n(t)=\frac{\gamma_n(t)}{b_n}
\qquad\Longrightarrow\qquad
0
\le \delta_n(t)
\le \frac{e^{C}}{b_{\min}}
\quad\text{for all $n\in S$ and all $t$}.
$$
Hence $\delta_n(t)$ ranges over the compact interval
$$
[\delta_{\min},\delta_{\max}]
=
\Big[\,0,\ \frac{e^{C}}{b_{\min}}\,\Big].
$$

For each fixed $\delta>0$, consider the function
$$
\psi(z,\delta):=(\delta e^{-z}-1)z.
$$
As $z\to\pm\infty$ we have $\psi(z,\delta)\to-\infty$, and therefore $\psi(z,\delta)$
attains a finite global maximum on $\R$. Since $\delta_n(t)\in[\delta_{\min},\delta_{\max}]$
for all $t$, there exists a constant $C_\psi>0$ such that
$$
\psi(z,\delta)\le C_\psi
\qquad
\forall z\in\R,\ \forall\delta\in[\delta_{\min},\delta_{\max}].
$$
Consequently,
$$
\psi_n(t)=\psi(z_n(t),\delta_n(t))\le C_\psi
\qquad
\forall n\in S,\ \forall t,
$$
and therefore
$$
T_1(t)\le\frac{C_1}{t},
\qquad
C_1:=C_\psi\sum_{n\in S}b_n.
$$

Combining the two bounds on $T_1(t), T_2(t)$, for sufficiently large $t$,
$$
\frac12\frac{d}{dt}\|\vr(t)\|^2
=T_1(t)+T_2(t)
\le \frac{C_1}{t}+\frac{A}{t^{1+\theta}}
\le \frac{C_2}{t},
$$
for some constant $C_2>0$.

Integrating from $t_0$ to $t$ gives
$$
\|\vr(t)\|^2
\le \|\vr(t_0)\|^2
+2C_2\int_{t_0}^t u^{-1}du
=\|\vr(t_0)\|^2+2C_2\log\Big(\frac{t}{t_0}\Big),
$$
so
$$
\|\vr(t)\|=O(\sqrt{\log t})=o(\log t).
$$

Since
$$
\vw(t)=\vw^*\log t+\vr(t),
\qquad
\|\vr(t)\|=o(\log t),
$$
we obtain
$$
\frac{\vw(t)}{\|\vw(t)\|}
=\frac{\vw^*}{\|\vw^*\|}
+o(1),
$$
which proves
$$
\frac{\vw(t)}{\|\vw(t)\|}
\to
\frac{\vw^*}{\|\vw^*\|_2}.
$$
Thus $\ell_2$-SAM flow converges in the $\ell_2$ max-margin direction
for any initialization and any fixed $\rho>0$.
\end{proof}

\newpage

\section{SAM with \texorpdfstring{$\ell_\infty$}{l infinity}-perturbations: Proof of Section~\ref{sec:l-infty}}
\label{app:proof of thm l-infty}
\subsection{Depth-1 Networks: Proof of \texorpdfstring{\cref{thm:depth1-linfty-anydata}}{Theorem 3.1}}
\label{app:depth1-infty}

\depthoneinftyanydata*

\begin{proof}
    Apply \cref{lem:linear} with $\vepsilon(\vw)=\rho \;\mathrm{sign} (\nabla \gL(\vtheta))$. Then $\| \vepsilon(\vw)\|_2 \le \rho \sqrt{d}$ for all $\vw$, so the conditions of \cref{lem:linear} hold. Thus, the flow converges to the $\ell_2$ max-margin direction.
\end{proof}

\begin{restatable}{theorem}{depthoneinfty}
    Consider the linear model $f(\vx)=\langle \vw, \vx \rangle$ trained on the dataset $\mathcal{D}_\vmu$ with loss $\mathcal{L}(\vw)=\ell(\langle \vw, \vx \rangle)$ where $\ell'(u)<0$ for all $u$. Then, GF and $\ell_\infty$-SAM flow, starting from any $\vw(0)$, evolve on the same affine line $\vw(0)+\mathrm{span} \{ \vmu \}$ and have the same spatial trajectory.
\label{thm:depth1-l-infty}
\end{restatable}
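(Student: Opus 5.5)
Both flows move only along the line through $\vw(0)$ in direction $\vmu$, and with the same orientation, so their velocities are positive multiples of each other and hence they trace the same path up to a time reparameterization.

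For the linear model on $\mathcal{D}_\vmu$, the gradient is
$$\nabla\gL(\vw)=\ell'(\langle \vw,\vmu\rangle)\,\vmu.$$
Gradient flow therefore reads $\dot\vw=-\ell'(\langle\vw,\vmu\rangle)\vmu$, which is a strictly positive scalar times $\vmu$ because $\ell'<0$.

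For $\ell_\infty$-SAM flow, the perturbation is $\vepsilon_\infty(\vw)=\rho\,\mathrm{sign}(\ell'(\langle\vw,\vmu\rangle)\vmu)$. Since $\ell'<0$, this equals $-\rho\,\mathrm{sign}(\vmu)$. Under the standing assumption $\mu_j>0$ it is simply $-\rho\vone$; in general it is the constant vector $-\rho\,\mathrm{sign}(\vmu)$, with the convention $\mathrm{sign}(0)=0$. Either way it is a constant vector $\vv$ independent of $\vw$. The ascent point is $\hat\vw=\vw+\vv$, and the SAM flow is
$$\dot\vw=-\ell'(\langle\vw,\vmu\rangle+\langle\vv,\vmu\rangle)\,\vmu,$$
again a strictly positive scalar times $\vmu$.

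\textbf{Same line.} In both flows $\dot\vw(t)\in\mathrm{span}\{\vmu\}$ for all $t$. Integrating gives $\vw(t)\in\vw(0)+\mathrm{span}\{\vmu\}$.

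\textbf{Same spatial trajectory.} Parameterize the line as $\vw(t)=\vw(0)+s(t)\vmu$. This reduces each flow to a scalar ODE:
$$\dot s=-\ell'\big(m_0+s\|\vmu\|^2\big)\ \text{for GF},\qquad \dot s=-\ell'\big(m_0+s\|\vmu\|^2+c\big)\ \text{for SAM},$$
where $m_0:=\langle\vw(0),\vmu\rangle$ and $c:=\langle\vv,\vmu\rangle$. Both right-hand sides are strictly positive and locally Lipschitz, since $\ell$ is smooth. Hence in each case $s(t)$ is strictly increasing from $0$ and exists globally: the velocity is bounded by $\|\vmu\|\sup|\ell'|$, so there is no finite-time blow-up. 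Moreover $s(t)\to\infty$, because $\dot s$ is bounded below on any bounded interval of $s$, so $s$ cannot stay bounded.

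Both flows therefore sweep the same half-ray $\{\vw(0)+s\vmu : s\ge 0\}$ monotonically. Explicitly, $s\mapsto t$ is given by $t=\int_0^s d\sigma/(-\ell'(\cdot))$ for each flow. Composing one of these maps with the inverse of the other gives a strictly increasing time change $\tau$ with $\vw_{\mathrm{SAM}}(t)=\vw_{\mathrm{GF}}(\tau(t))$.

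\textbf{Main obstacle.} There is essentially none. The one point needing care is verifying that the $\ell_\infty$ perturbation is constant, i.e.\ that $\mathrm{sign}(\ell'(\cdot)\vmu)$ does not depend on $\vw$. This uses $\ell'<0$ everywhere, which is exactly the hypothesis of the statement.
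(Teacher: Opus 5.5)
Your proposal is correct and follows essentially the same route as the paper. Both arguments compute the two velocity fields, observe that the $\ell_\infty$ perturbation is the constant vector $-\rho\,\mathrm{sign}(\vmu)$ because $\ell'<0$, and conclude that both flows move by positive multiples of $\vmu$; the paper records this as both flows sharing the spatial trajectory $\dot\vw=\vmu$.

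Your treatment of global existence and the explicit time change fills in what the paper leaves implicit. One minor caveat: the bound $\|\vmu\|\sup|\ell'|$ requires $\ell'$ to be bounded, which holds for the logistic loss but not for every $\ell$ with $\ell'<0$. It is not actually needed, since even a finite-time blow-up of $s$ would still sweep the same ray.
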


 \begin{proof}
    The model is $f(\vx) = \langle \vw, \vx \rangle = \vw^\top \vx$.
    The loss is $\mathcal{L}(\vw) = \ell (\vw^\top \vmu)$.
    The gradient is $\nabla_{\vw} \mathcal{L}(\vw) = \ell' (\vw^\top \vmu) \cdot \vmu$ with $\ell'(s)<0$.
    
    \paragraph{Gradient Descent}
    The GF is 
    \begin{align*}
        \dot{\vw} & = -\nabla_{\vw} \mathcal{L}(\vw) \\
        & = -\ell' (\vw^\top \vmu) \cdot \vmu.
    \end{align*}
    
    \paragraph{SAM with $\ell_\infty$ perturbation}
    The ascent point is 
    \begin{align*}
        \hat{\vw} & = \vw + \rho \vepsilon_\infty(\vw) \\
        & = \vw + \rho \mathrm{sign}(\nabla_{\vw} \mathcal{L}(\vw)) \\
        & = \vw - \rho \mathrm{sign}(\vmu).
    \end{align*}
    The equation of $\ell_\infty$-SAM flow is
    \begin{align*}
        \dot{\vw} & = -\nabla_{\vw} \mathcal{L}(\hat{\vw}) \\
        &= -\nabla_{\vw} \mathcal{L}(\vw - \rho \mathrm{sign}(\vmu)) \\
        &= -\ell' (\vw^\top \vmu - \rho \mathrm{sign}(\vmu)^\top \vmu) \cdot \vmu \\
        &= -\ell' (\vw^\top \vmu - \rho \| \vmu\|_1) \cdot \vmu.
    \end{align*}

    Therefore, they have the same spatial trajectory as:
    \begin{align*}
        \dot{\vw} & = \vmu.
    \end{align*}

    The term $-\ell' (\vw^\top \vmu-\rho \| \vmu\|_1)$ is the accelation in terms of $t$ since $-\ell'(s)$  is decreasing in $s$. 
\end{proof}

\subsection{Technical Challenges for Multi-Point Datasets}
\label{app:multi-challenge}

In the multi-point setting, as $\vw(t)$ diverges the SAM perturbation becomes asymptotically negligible, so SAM and GD share the same long-term behavior. The regime where they differ is precisely when the $\rho$-perturbation is non-negligible, but in the multi-point case the resulting gradients (and thus SAM updates) become considerably complex for a tractable characterization of the SAM flow in the regime where SAM and GD diverge. This motivates our focus on the single-example dataset $\gD_\vmu = \{(\vmu,+1)\}$, where the SAM dynamics admit a tractable dynamical characterization while still capturing depth-dependent phenomena unique to SAM. In \cref{app:exp-l-infty}, we empirically verify that these behaviors persist under multi-point datasets and discrete SAM updates, indicating that our insights extend beyond the single-point setting.

\subsection{Proof of \texorpdfstring{\cref{thm:l-infty}}{Theorem 3.2}}
\label{app:thm-l-infty}
\linftythm*

\begin{proof}
Since we suppose $\vw^{(i)}(0) = \valpha \in \mathbb{R}^d_+$ for all $i \in [L]$, and the dynamics of the linear diagonal network are invariant under any permutation of the layer indices $\{1,\dots,L\}$, we obtain

$$
{\vw}^{(1)}(t) = {\vw}^{(2)}(t) = \cdots = {\vw}^{(L)}(t) =: \vw(t) \quad \text{for all } t \ge 0.
$$

With $\ell_\infty$ perturbation, the rescaled $\ell_\infty$-SAM flow (\ref{eq:gradient_flow_spatial_trajectory}) becomes
\begin{align*}
\dot{\vw}^{(i)}(t) &= \vmu \odot \left( \bigodot_{\ell \neq i} \big(\vw^{(\ell)}(t) + \vepsilon_\infty^{(\ell)}(\vtheta(t))\big) \right) \\
&= \vmu \odot \left( \bigodot_{\ell \neq i} \big(\vw^{(\ell)}(t) + \rho\,\mathrm{sign}(\nabla_{\vw^{(\ell)}} \mathcal{L}(\vtheta(t)))\big) \right).
\end{align*}

Recall the gradient (\ref{eq:gradient})
\begin{align*}
\nabla_{\vw^{(\ell)}} \mathcal{L}(\vtheta(t)) = \ell'\big(\langle \vbeta(\vtheta(t)), \vmu \rangle\big) \vmu \odot \left( \bigodot_{\ell \neq i} \vw^{(\ell)} (t)\right),
\end{align*}
where $\ell'(u) = -\frac{1}{1+\exp(u)}<0$. Since we also have $\vmu >0$ (element-wise), we have
\begin{align*}
    \mathrm{sign}(\nabla_{\vw^{(\ell)}} \mathcal{L}(\vtheta(t))) &= -\mathrm{sign} \left( \bigodot_{\ell \neq i} \vw^{(\ell)} (t)\right) \\
    &\underset{(a)}{=} -\mathrm{sign} \left( \bigodot_{\ell=1}^{L-1} \vw (t)\right),
\end{align*}
where (a) follows from the fact that $\vw^{(i)}(t) = \vw(t)$ for all $i \in [L]$. Using this fact again, we have the ODE
\begin{align*}
    \dot{\vw}(t) = \dot{\vw}^{(i)}(t) &= \vmu \odot \left( \bigodot_{\ell \neq i} \left(\vw(t) - \rho\,\mathrm{sign} \left( \bigodot_{\ell=1}^{L-1} \vw (t)\right)\right) \right) \\
    &= \vmu \odot \left( \bigodot_{\ell=1}^{L-1} \left(\vw(t) - \rho\,\mathrm{sign} \left( \bigodot_{\ell=1}^{L-1} \vw (t)\right)\right) \right).
\end{align*}

This can be written as coordinate-wise as
\begin{align*}
    \dot{w}_j(t) &= \mu_j\left(w_j(t) - \rho\,\mathrm{sign} \left( w_j(t)^{L-1}\right)\right)^{L-1} \quad \text{for } j \in [d].
\end{align*}

Divide into three cases:

\paragraph{Case 1: $L=2$.}
\begin{align*}
    \dot{w}_j(t) &= \mu_j\left(w_j(t) - \rho\,\mathrm{sign} \left( w_j(t)\right)\right).
\end{align*}

By \cref{lem:infty-L-2}, we have
\begin{align*}
    w_j(t) = \begin{cases}
        \rho + (w_j(0) -\rho) e^{\mu_j t} & \text{if } w_j(0) > \rho, \\
        \rho & \text{if } w_j(0) = \rho, \\
        \rho + (w_j(0) -\rho) e^{\mu_j t} \; (t<T), \quad 0 \; (t\ge T)& \text{if } w_j(0) < \rho, \\
        0 & \text{if } w_j(0) = 0,
    \end{cases}
\end{align*}
where $T:=\frac{1}{\mu_j}\log\left(\frac{\rho}{\rho -w_j(0) }\right)$. Then, we have
\begin{align*}
    \beta_j(t) = w_j(t)^L \to \begin{cases}
        \Theta(e^{2\mu_j t}) & \text{if } \alpha_j > \rho, \\
        \rho^{L} & \text{if } \alpha_j = \rho, \\
        0 & \text{if } \alpha_j < \rho,
    \end{cases} \quad \text{as } t \to \infty.
\end{align*}

\paragraph{Case 2: $L>2$ and $L$ is even.}
    \begin{align*}
        \dot{w}_j(t) &= \mu_j\left(w_j(t) - \rho\,\mathrm{sign} \left( w_j(t)\right)\right)^{L-1}.
    \end{align*}

    By \cref{lem:infty-L-even}, we have
    \begin{align*}
        w_j(t) = \begin{cases}
            \rho + \left(-(L-2)\mu_j t + \frac{1}{(w_j(0)-\rho)^{L-2}}\right)^{-\frac{1}{L-2}} & \text{if } w_j(0) > \rho, \\
            \rho & \text{if } w_j(0) = \rho, \\
            \rho - \left(-(L-2)\mu_j t + \frac{1}{(w_j(0)-\rho)^{L-2}}\right)^{-\frac{1}{L-2}} \; (t<T), \quad 0 \; (t\ge T)& \text{if } w_j(0) < \rho, \\
            0 & \text{if } w_j(0) = 0,
            \end{cases}
    \end{align*}
    where $T:=\frac{(\rho -w_j(0))^{-(L-2)}-\rho^{-(L-2)}}{(L-2)\mu_j}$. Then, we have
    \begin{align*}
        \beta_j(t) = w_j(t)^L \to \begin{cases}
            \Theta \left((t^*-t)^{-\frac{L}{L-2}} \right) & \text{if } \alpha_j > \rho, \text{as } t \to t^*,\\
            \rho^{L} & \text{if } \alpha_j = \rho, \text{as } t \to \infty,\\
            0 & \text{if } \alpha_j < \rho, \text{as } t \to \infty,
        \end{cases}
    \end{align*}
    where $t^* = \nicefrac{1}{(L-2)\mu_j (w_j(0) -\rho)^{L-2}}$

    \paragraph{Case 3: $L>2$ and $L$ is odd.}

    \begin{align*}
        \dot{w}_j(t) &= \mu_j\left(w_j(t) - \rho \right)^{L-1}.
    \end{align*}

    By \cref{lem:infty-L-odd}, we have
    \begin{align*}
        w_j(t) = \begin{cases}
            \rho & \text{if } w_j(0) = \rho, \\
            \rho + \left(-(L-2)\mu_j t + \frac{1}{(w_j(0)-\rho)^{L-2}}\right)^{-\frac{1}{L-2}} & \text{if } w_j(0) \neq \rho.
            \end{cases}
    \end{align*}
    Then, we have
    \begin{align*}
        \beta_j(t) = w_j(t)^L \to \begin{cases}
            \Theta \left((t^*-t)^{-\frac{L}{L-2}} \right) & \text{if } \alpha_j > \rho, \text{as } t \to t^*, \\
            \rho^{L} & \text{if } \alpha_j \leq \rho, \text{as } t \to \infty,
        \end{cases}
    \end{align*}
    where $t^* = \nicefrac{1}{(L-2)\mu_j (w_j(0) -\rho)^{L-2}}$.

    These cases of $L$ cover all possible cases in \cref{thm:l-infty}.

\end{proof}

The following three lemmas (\cref{lem:infty-L-2,lem:infty-L-even,lem:infty-L-odd}) are used in the proof of \cref{thm:l-infty} and correspond, respectively, to the three cases.

\begin{lemma}
    Let $\mu>0$ and $\rho>0$. Consider
    \begin{align*}
        \dot{w}(t) = \mu\left(w(t) - \rho \sign (w(t))\right).
    \end{align*}
    Then, there exists the solution $w$ such that it is absolutely continuous (AC) and satisfies 
    \begin{align}
        w(t)=w(0) + \int_0^t \dot{w}(s) ds. \label{eq:caratheodory}
    \end{align}
    In particular,
    \begin{align*}
        w(t) = \begin{cases}
            \rho + (w(0) -\rho) e^{\mu t} & \text{if } w(0) > \rho, \\
            \rho & \text{if } w(0) = \rho, \\
            \rho + (w(0) -\rho) e^{\mu t} \; (t<T), \quad 0 \; (t\ge T)& \text{if } w(0) < \rho, \\
            0 & \text{if } w(0) = 0,
            \end{cases}
    \end{align*}
    where $T:=\frac{1}{\mu}\log\left(\frac{\rho}{\rho -w(0) }\right)$.
    \label{lem:infty-L-2}
\end{lemma}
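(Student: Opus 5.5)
The ODE $\dot w=\mu(w-\rho\,\mathrm{sign}(w))$ has a right-hand side that is discontinuous only at $w=0$. Away from $0$ it is affine, hence locally Lipschitz, so I will construct the solution piecewise, by explicit integration on each region where $\mathrm{sign}(w)$ is constant, and then check that the concatenation is absolutely continuous and satisfies the integral identity \eqref{eq:caratheodory}.

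\textbf{Region $w>0$.} Here the equation reads $\dot w=\mu(w-\rho)$, a linear ODE with unique solution $w(t)=\rho+(w(0)-\rho)e^{\mu t}$ for as long as $w$ stays positive.

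\begin{itemize}
\item If $w(0)>\rho$, this expression is increasing and stays above $\rho>0$, so it is valid for all $t\ge0$.
\item If $w(0)=\rho$, it is the constant equilibrium $w\equiv\rho$.
\item If $0<w(0)<\rho$, it is strictly decreasing and hits $0$ exactly when $e^{\mu t}=\rho/(\rho-w(0))$, i.e.\ at $t=T$.
\item If $w(0)=0$, then $\mathrm{sign}(0)=0$ makes the right-hand side $\mu\cdot 0=0$, so $w\equiv0$ is a solution.
\end{itemize}

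\textbf{Continuation past $T$ (main obstacle).} In the case $0<w(0)<\rho$, I set $w(t)=0$ for $t\ge T$. On $[T,\infty)$ we then have $\dot w=0=\mu(0-\rho\,\mathrm{sign}(0))$ under the convention $\mathrm{sign}(0)=0$, so the equation holds there. This is a Carathéodory solution rather than a classical one: at $t=T$ the left derivative is $-\mu\rho\neq0$ while the right derivative is $0$. I will show that:

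\begin{itemize}
\item the concatenated $w$ is continuous, since both pieces equal $0$ at $T$;
\item $w$ is piecewise $C^1$, hence absolutely continuous on compact intervals;
\item $\dot w$ equals the right-hand side for all $t\neq T$, a measure-zero exception;
\item integrating piecewise therefore gives $w(t)=w(0)+\int_0^t\dot w(s)\,ds$.
\end{itemize}

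I would also remark briefly why $0$ is absorbing under this convention. Any continuation entering $w<0$ would satisfy $\dot w=\mu(w+\rho)>0$ for $w\in(-\rho,0)$, which pushes it back toward $0$, so the sliding solution $w\equiv0$ is the consistent choice. Existence is all the lemma asserts, so uniqueness need not be proven.

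Finally, I would note that the statement's case "$w(0)<\rho$" is intended for $w(0)>0$ (the case $w(0)=0$ is listed separately). Negative initial values are not needed, since the theorem assumes $\valpha\in\R^d_+$.
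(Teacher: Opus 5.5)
Your proposal is correct and follows essentially the same route as the paper: integrate the linear ODE $\dot w=\mu(w-\rho)$ explicitly on the region $w>0$ in each case, continue the $0<w(0)<\rho$ solution by $w\equiv 0$ for $t\ge T$ using $\mathrm{sign}(0)=0$, and get absolute continuity from the absolute continuity of each piece together with continuity at $T$. Your reading that the case $w(0)<\rho$ is meant for $0<w(0)<\rho$ matches what the paper's proof actually treats (its Case 4), and your remark on why $0$ is absorbing is a harmless addition that is not needed for existence.
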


\begin{proof}
    \noindent\textbf{Case 1: $w(0) = 0$.}
    The constant function $w(t)=0$ is AC, and 
    \begin{align*}
        \int_0^t \mu\big(0-\rho\,\mathrm{sign}(0)\big)\,ds=\int_0^t 0\,ds=0.
    \end{align*}
    Thus, \cref{eq:caratheodory} holds.

    \noindent\textbf{Case 2: $w(0)=\rho$.}
    The constant function $w(t)=\rho$ is AC, and since $\sign (w(t))=1$, we have
    \begin{align*}
        \int_0^t \mu\big(\rho-\rho\cdot 1\big)\,ds=\int_0^t 0\,ds=0.
    \end{align*}
    Thus, \cref{eq:caratheodory} holds.

    \noindent\textbf{Case 3: $w(0)>\rho$.}
    At $t=0$, we have $\dot w(0)=\mu\big(w(0)-\rho\big)>0$.
Assume, for contradiction, that there exists $t_\star>0$ with $w(t_\star)=\rho$.
Then on $[0,t_\star)$ we have $w(t)>\rho$ and hence
$\dot w(t)=\mu\big(w(t)-\rho\big)>0$, so $w$ is strictly increasing on $[0,t_\star)$.
An increasing function cannot reach the smaller value $\rho$ starting from $w(0)>\rho$:
contradiction. Thus $w(t)>\rho$ for all $t\ge 0$.
On the region $\{w(t)>\rho\}$, $\mathrm{sign}(w(t))= 1$ and the ODE reduces to the linear equation
$$
\dot w=\mu(w-\rho).
$$
Then, we have
\begin{align*}
    &\frac{\dot w(t)}{w(t)-\rho} = \mu \\
    \Rightarrow & \int_0^t \frac{\dot w(s)}{w(s)-\rho} ds = \int_0^t \mu ds \\
    \Rightarrow & \log \left| \frac{ w(t)-\rho}{w(0)-\rho} \right| = \mu t \\
    \Rightarrow & w(t)=\rho+\big(w(0)-\rho\big)e^{\mu t}.
\end{align*}

This function is AC and satisfies \cref{eq:caratheodory}.

\paragraph{Case 4: $0<w(0)<\rho$.}
Initially $\mathrm{sign}(w(0))= 1$, so again $\dot w=\mu(w-\rho)$ and
$$
w(t)=\rho+\big(w(0)-\rho\big)e^{\mu t}.
$$
Since $w(0)-\rho<0$, the function $w$ is strictly decreasing and reaches $0$ exactly once at
$$
T:=\frac{1}{\mu}\log\!\Big(\frac{\rho}{\rho-w(0)}\Big)>0.
$$
On $[0,T]$, this solution is AC and satisfies \cref{eq:caratheodory}.
Define $w(t):=0$ for all $t\ge T$. Then, using $\mathrm{sign}(0)=0$,
$$
w(t)
= w(T)+\int_T^{t}\mu\big(0-\rho\,\mathrm{sign}(0)\big)\,ds
= 0 + \int_T^{t}0\,ds
= 0,
$$
so \cref{eq:caratheodory} also holds on $[T,\infty)$. The function $w$ is AC on
$[0,T]$ and on $[T,\infty)$, and it is continuous at $t=T$, hence it is absolutely
continuous.
\end{proof}

\begin{lemma}
        Let $\mu>0$, $\rho>0$, and $L$ is even. Consider
        \begin{align*}
            \dot{w}(t) = \mu\left(w(t) - \rho \sign (w(t))\right)^{L-1}.
        \end{align*}
        Then, there exists the solution $w$ such that it is absolutely continuous (AC) and satisfies \cref{eq:caratheodory}.
        In particular,
        \begin{align*}
            w(t) = \begin{cases}
                \rho + \left(-(L-2)\mu t + \frac{1}{(w(0)-\rho)^{L-2}}\right)^{-\frac{1}{L-2}} & \text{if } w(0) > \rho, \\
                \rho & \text{if } w(0) = \rho, \\
                \rho - \left(-(L-2)\mu t + \frac{1}{(w(0)-\rho)^{L-2}}\right)^{-\frac{1}{L-2}} \; (t<T), \quad 0 \; (t\ge T)& \text{if } w(0) < \rho, \\
                0 & \text{if } w(0) = 0,
                \end{cases}
        \end{align*}
        where $T:=\frac{(\rho -w(0))^{-(L-2)}-\rho^{-(L-2)}}{(L-2)\mu}$.
        \label{lem:infty-L-even}
    \end{lemma}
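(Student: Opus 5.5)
The plan mirrors the proof of \cref{lem:infty-L-2}: split on the position of $w(0)$ relative to $\rho$ and $0$. On each piece where $\mathrm{sign}(w)$ is constant, solve the ODE explicitly by separation of variables, and then glue the pieces into an absolutely continuous function satisfying \cref{eq:caratheodory}. As in \cref{thm:l-infty}, I restrict to $w(0)\ge 0$. Throughout I use that $L$ is even, so $L-1$ is odd and $L-2$ is even. Consequently $(w-\rho)^{L-1}$ carries the sign of $w-\rho$, and $(w(0)-\rho)^{L-2}=|w(0)-\rho|^{L-2}$. This is why the formula in the statement makes sense in the case $w(0)<\rho$.

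\textbf{Trivial cases.} If $w(0)=0$, the right-hand side is $\mu(0-\rho\,\mathrm{sign}(0))^{L-1}=0$ under the convention $\mathrm{sign}(0)=0$, so $w\equiv 0$ is an AC solution. If $w(0)=\rho$, then $\mathrm{sign}(w)=1$ and the right-hand side is $\mu\cdot 0^{L-1}=0$, so $w\equiv\rho$ works.

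\textbf{Case $w(0)>\rho$.} Set $u=w-\rho>0$. By the same monotonicity/contradiction argument as in \cref{lem:infty-L-2}, $u$ stays positive, since $\dot u=\mu u^{L-1}>0$. Hence $\mathrm{sign}(w)=1$ throughout and $\dot u=\mu u^{L-1}$. Separating variables gives
\[
u(t)^{-(L-2)}=u(0)^{-(L-2)}-(L-2)\mu t,
\]
which yields the stated formula. This solution is AC on every compact subinterval of $[0,t^*)$, where $t^*=\nicefrac{1}{(L-2)\mu (w(0)-\rho)^{L-2}}$ is the maximal existence time. I will note explicitly that the solution is meant on its maximal interval, consistent with the blow-up discussed in \cref{thm:l-infty}.

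\textbf{Case $0<w(0)<\rho$.} Here $\mathrm{sign}(w)=1$ as long as $w>0$. The variable $u=w-\rho<0$ satisfies $\dot u=\mu u^{L-1}<0$, so $w$ strictly decreases. Because $L-2$ is even, separation gives
\[
|u(t)|^{-(L-2)}=|u(0)|^{-(L-2)}-(L-2)\mu t,
\]
and therefore $w=\rho-\bigl(-(L-2)\mu t+(w(0)-\rho)^{-(L-2)}\bigr)^{-1/(L-2)}$. The quantity $|u|$ increases continuously and reaches $\rho$, i.e.\ $w$ hits $0$, exactly when $\rho^{-(L-2)}=(\rho-w(0))^{-(L-2)}-(L-2)\mu T$. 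This gives the stated $T$, which is positive and finite, and it occurs before the blow-up time of $|u|$. After $T$ I define $w\equiv 0$ and verify \cref{eq:caratheodory} on $[T,\infty)$ using $\mathrm{sign}(0)=0$, exactly as in \cref{lem:infty-L-2}. The function is AC on $[0,T]$ and on $[T,\infty)$ and continuous at $T$, hence AC overall.

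\textbf{Main obstacle.} The only delicate points are the discontinuity of the vector field at $w=0$ and the gluing at time $T$, which require working in the Carathéodory sense. I handle both by the same piecewise verification used in \cref{lem:infty-L-2}. A secondary check is that $T$ precedes the blow-up time of the auxiliary variable $u$; this follows immediately because $\rho^{-(L-2)}>0$.
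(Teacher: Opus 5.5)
Your proposal is correct and follows essentially the same route as the paper's proof. Both use the same four-case split, separation of variables on each piece where $\mathrm{sign}(w)$ is constant (using that $L-2$ is even to obtain $w=\rho-(\cdot)^{-1/(L-2)}$ when $w(0)<\rho$), and gluing with $w\equiv 0$ after $T$ via $\mathrm{sign}(0)=0$. Your remark that for $w(0)>\rho$ the solution exists, and is AC, only on $[0,t^*)$ is a precision the paper omits when it simply calls this function AC.
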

    
    \begin{proof}
        The proof is similar to the proof of \cref{lem:infty-L-2}.

        \noindent\textbf{Case 1: $w(0) = 0$.}
        The constant function $w(t)=0$ is AC, and 
        \begin{align*}
            \int_0^t \mu\big(0-\rho\,\mathrm{sign}(0)\big)^{L-1}\,ds=\int_0^t \mu \cdot 0^{L-1}\,ds=0.
        \end{align*}
        Thus, \cref{eq:caratheodory} holds.
    
        \noindent\textbf{Case 2: $w(0)=\rho$.}
        The constant function $w(t)=\rho$ is AC, and since $\sign (w(t))=1$, we have
        \begin{align*}
            \int_0^t \mu\big(\rho-\rho\cdot 1\big)^{L-1}\,ds=\int_0^t \mu \cdot 0^{L-1}\,ds=0.
        \end{align*}
        Thus, \cref{eq:caratheodory} holds.
    
        \noindent\textbf{Case 3: $w(0)>\rho$.}
        At $t=0$, we have $\dot w(0)=\mu\big(w(0)-\rho\big)^{L-1}>0$.
    Assume, for contradiction, that there exists $t_\star>0$ with $w(t_\star)=\rho$.
    Then on $[0,t_\star)$ we have $w(t)>\rho$ and hence
    $\dot w(t)=\mu\big(w(t)-\rho\big)>0$, so $w$ is strictly increasing on $[0,t_\star)$.
    An increasing function cannot reach the smaller value $\rho$ starting from $w(0)>\rho$:
    contradiction. Thus $w(t)>\rho$ for all $t\ge 0$.
    On the region $\{w(t)>\rho\}$, $\mathrm{sign}(w(t))= 1$ and the ODE reduces to 
    $$
    \dot w=\mu(w-\rho)^{L-1}.
    $$
    Then, we have
    \begin{align*}
        &\frac{\dot w(t)}{(w(t)-\rho)^{L-1}} = \mu \\
        \Rightarrow & \int_0^t \frac{\dot w(s)}{(w(s)-\rho)^{L-1}} ds = \int_0^t \mu ds \\
        \Rightarrow &  -\frac{1}{L-2} \left(\frac{1}{(w(t)-\rho)^{L-2}} - \frac{1}{(w(0)-\rho)^{L-2}} \right) = \mu t \\
        \Rightarrow & (w(t)-\rho)^{L-2} = \left(-(L-2)\mu t + \frac{1}{(w(0)-\rho)^{L-2}} \right)^{-1} \\
        \underset{(a)}{\Rightarrow} & w(t)=\rho+\left(-(L-2)\mu t + \frac{1}{(w(0)-\rho)^{L-2}}\right)^{-\frac{1}{L-2}},
    \end{align*}
    where $(a)$ follows from $w(t)-rho>0$. This function is AC and satisfies \cref{eq:caratheodory}.
    
    \paragraph{Case 4: $0<w(0)<\rho$.}
    Initially $\mathrm{sign}(w(0))= 1$, so again $\dot w=\mu(w-\rho)^{L-1}$ and
    $$
    (w(t)-\rho)^{L-2} = \left(-(L-2)\mu t + \frac{1}{(w(0)-\rho)^{L-2}} \right)^{-1}.
    $$
    Since $w(0)-\rho<0$ and $L$ is even, we have
    $$
    w(t)=\rho-\left(-(L-2)\mu t + \frac{1}{(w(0)-\rho)^{L-2}}\right)^{-\frac{1}{L-2}}.
    $$
    
    The function $w$ is strictly decreasing and reaches $0$ exactly once at
    $$
    T:=\frac{(\rho -w(0))^{-(L-2)}-\rho^{-(L-2)}}{(L-2)\mu}>0.
    $$
    On $[0,T]$, this solution is AC and satisfies \cref{eq:caratheodory}.
    Define $w(t):=0$ for all $t\ge T$. Then, using $\mathrm{sign}(0)=0$,
    $$
    w(t)
    = w(T)+\int_T^{t}\mu\big(0-\rho\,\mathrm{sign}(0)\big)^{L-1}\,ds
    = 0 + \int_T^{t}0\,ds
    = 0,
    $$
    so \cref{eq:caratheodory} also holds on $[T,\infty)$. The function $w$ is AC on
    $[0,T]$ and on $[T,\infty)$, and it is continuous at $t=T$, hence it is absolutely
    continuous.
    \end{proof}

\begin{lemma}
        Let $\mu>0$, $\rho>0$ and $L$ is odd. Consider
        \begin{align*}
            \dot{w}(t) = \mu\left(w(t) - \rho \right)^{L-1}.
        \end{align*}
        Then, there exists the solution $w$ such that it is absolutely continuous (AC) and satisfies \cref{eq:caratheodory}.
        In particular,
        \begin{align*}
            w(t) = \begin{cases}
                \rho & \text{if } w(0) = \rho, \\
                \rho + \left(-(L-2)\mu t + \frac{1}{(w(0)-\rho)^{L-2}}\right)^{-\frac{1}{L-2}} & \text{if } w(0) \neq \rho,
                \end{cases}
        \end{align*}
        \label{lem:infty-L-odd}
    \end{lemma}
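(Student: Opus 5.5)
The plan mirrors the proofs of \cref{lem:infty-L-2} and \cref{lem:infty-L-even}, but it is simpler. The right-hand side $g(w):=\mu(w-\rho)^{L-1}$ contains no $\mathrm{sign}$ term and is a polynomial in $w$, hence locally Lipschitz. By Picard--Lindel\"of there is therefore a unique $C^1$ (in particular AC) solution on its maximal interval of existence, and it satisfies \cref{eq:caratheodory}. It then only remains to identify this solution explicitly. Throughout, for odd $L-2$ I interpret $x^{-1/(L-2)}$ as the real odd root, so the formula makes sense for negative arguments.

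\textbf{Case $w(0)=\rho$.} The constant $w\equiv\rho$ satisfies $\dot w=0=g(\rho)$, so by uniqueness it is the solution.

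\textbf{Case $w(0)\neq\rho$.} Since $\rho$ is an equilibrium, uniqueness of solutions forbids the trajectory from ever reaching $\rho$. Hence $w(t)-\rho$ keeps the sign of $w(0)-\rho$ on the whole existence interval, and we may divide by $(w-\rho)^{L-1}\neq 0$. Separating variables and integrating as in \cref{lem:infty-L-even} gives
\[
-\frac{1}{L-2}\Big((w(t)-\rho)^{-(L-2)}-(w(0)-\rho)^{-(L-2)}\Big)=\mu t,
\]
that is, $(w(t)-\rho)^{L-2}=\big(-(L-2)\mu t+(w(0)-\rho)^{-(L-2)}\big)^{-1}$. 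Because $L-2$ is odd, the map $x\mapsto x^{L-2}$ is a bijection of $\R$ that preserves sign. Taking the unique real $(L-2)$-th root therefore yields the stated formula, with no sign ambiguity. Contrast this with the even case, where the square-root-type branch had to be chosen by hand.

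The main point needing care is checking consistency of this formula on the correct time interval. If $w(0)<\rho$, then $(w(0)-\rho)^{-(L-2)}<0$, so the bracket is negative for all $t\ge0$ and tends to $-\infty$. The formula is thus defined for all $t\ge 0$, stays below $\rho$, and converges to $\rho$ from below. If $w(0)>\rho$, the bracket is positive exactly for $t<t^*=\nicefrac{1}{(L-2)\mu(w(0)-\rho)^{L-2}}$, and the formula blows up as $t\to t^*$, so the solution is valid on $[0,t^*)$. In both subcases one verifies directly, by differentiating, that the expression solves the ODE and is $C^1$, hence AC, on its domain. Uniqueness then identifies it with the solution.
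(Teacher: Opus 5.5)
Your proposal is correct and follows the same route as the paper: the constant solution when $w(0)=\rho$, and otherwise separation of variables followed by taking the $(L-2)$-th root. Your additions make explicit several points the paper's proof leaves implicit. You use uniqueness for the polynomial right-hand side to show that $w(t)-\rho$ never vanishes, so the division is legitimate. You read $x^{-1/(L-2)}$ as the real odd root when $w(0)<\rho$. You restrict to $[0,t^*)$ when $w(0)>\rho$ because of the finite-time blow-up.
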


    \begin{proof}
        The proof is similar to the proof of \cref{lem:infty-L-2}.

        \noindent\textbf{Case 1: $w(0)=\rho$.}
        The constant function $w(t)=\rho$ is AC, and 
        \begin{align*}
            \int_0^t \mu\big(\rho-\rho\big)\,ds=\int_0^t 0\,ds=0.
        \end{align*}
        Thus, \cref{eq:caratheodory} holds.

        \noindent\textbf{Case 2: $w(0)\neq \rho$.}
        Separate variables:
        \begin{align*}
            \frac{dw}{(w-\rho)^{L-1}}=\mu\,dt.
        \end{align*}
        Integrating from $0$ to $t$ gives
        \begin{align*}
            -\frac{1}{L-2} \left(\frac{1}{(w(t)-\rho)^{L-2}} - \frac{1}{(w(0)-\rho)^{L-2}} \right) = \mu t.
        \end{align*}
        Solving for $w$ yields
        \begin{align*}
            w(t)=\rho+\left(-(L-2)\mu t + \frac{1}{(w(0)-\rho)^{L-2}}\right)^{-\frac{1}{L-2}}.
        \end{align*}
        The function is AC and satisfies \cref{eq:caratheodory}.

    \end{proof}

\subsection{Proof of \texorpdfstring{\cref{cor:l-infty}}{Corollary 3.3}}
\label{app:cor-l-infty}

\corlinfty*

\begin{proof}
Work under the assumptions of Theorem~3.2 and let
$$
S:=\{j:\ \alpha_j>\rho\}\neq\varnothing,\qquad
j^*:=\operatorname*{arg\,max}_{j\in S}\ \mu_j(\alpha_j-\rho)^{L-2},
$$
where the maximizer is unique. We prove that the (rescaled) $\ell_\infty$–SAM flow satisfies
$$
\frac{\vbeta(t)}{\|\vbeta(t)\|_2}\ \longrightarrow\ \ve_{j^*}.
$$

\textbf{Case \(L=2\).}
By \cref{thm:l-infty}, for \(j\in S\),
$$
\beta_j(t)=\Theta\!\big(e^{\,2\mu_j t}\big),
$$
whereas for \(j\notin S\) we have either \(\beta_j(t)\to 0\) (if \(L\) even) or \(\beta_j(t)\equiv \rho^L\) when \(\alpha_j=\rho\); in any event these coordinates stay bounded. Since the maximizer is unique and \(L-2=0\),
$$
j^*=\operatorname*{arg\,max}_{j\in S}\mu_j,
$$
hence for every \(k\in S\setminus\{j^*\}\),
$$
\frac{\beta_k(t)}{\beta_{j^*}(t)}
=\Theta\!\Big(e^{-2(\mu_{j^*}-\mu_k)t}\Big)\ \longrightarrow\ 0,
$$
and for \(k\notin S\) we also have \(\beta_k(t)/\beta_{j^*}(t)\to 0\) because the denominator grows exponentially while the numerator is bounded. Therefore \(\vbeta(t)/\|\vbeta(t)\|_2\to \ve_{j^*}\).

\textbf{Case \(L>2\).  }
By \cref{thm:l-infty}, for each \(j\in S\) there is a blow-up time
$$
t_j^*=\frac{1}{(L-2)\,\mu_j\,(\alpha_j-\rho)^{L-2}},
$$
and as \(t\uparrow t_j^*\),
$$
\beta_j(t)=\Theta\!\Big((t_j^*-t)^{-1/(L-2)}\Big).
$$
If \(j\notin S\), then \(\beta_j(t)\) is bounded (either converging to \(0\) when \(L\) is even, or equal to \(\rho^L\) when \(\alpha_j=\rho\)). The uniqueness of \(j^*\) implies
$$
t_{j^*}^*=\min_{j\in S} t_j^* \quad\text{and}\quad t_{j^*}^*<t_k^*\ \ \forall k\in S\setminus\{j^*\}.
$$
Hence, for any fixed \(t<t_{j^*}^*\), all coordinates with \(k\neq j^*\) are finite; moreover,
$$
\lim_{t\uparrow t_{j^*}^*}\frac{\beta_k(t)}{\beta_{j^*}(t)}=0
\qquad \text{for every } k\neq j^*,
$$
because \(\beta_{j^*}(t)\to\infty\) while \(\beta_k(t)\) remains finite as \(t<t_k^*\). Consequently,
$$
\lim_{t\uparrow t_{j^*}^*}\frac{\vbeta(t)}{\|\vbeta(t)\|_2}=\ve_{j^*}.
$$

Combining the two cases establishes the claim. In particular, when \(L=2\) we have \(j^*=\operatorname*{arg\,max}_{j\in S}\mu_j\).
\end{proof}

\subsection{Finite-time Blow-up}
In the setting of \cref{thm:depth1-l-infty}, the $\ell_\infty$-SAM flow evolves independently across coordinates. In the rescaled $\ell_\infty$-SAM flow, each coordinate indeed admits a finite blow-up time.
However, as explained in \cref{remark:finite-time-blow-up}, the smallest of these blow-up times corresponds to $t_{\mathrm{orig}}=\infty$ in the original SAM time scale.
Consequently, both the original flow and the rescaled flow terminate at this same time and cannot be extended beyond it.

To illustrate this behavior concretely, we provide \cref{fig:infty-original,fig:infty-rescaled} using $\mu=(1,2,3,4,5)$, $\rho=1$, and a depth-$L=3$ network.
In the original flow, only one coordinate diverges as $t_{\mathrm{orig}}\to\infty$. As shown in \cref{fig:infty-norm-beta}, the normalized trajectories $\beta_j(t)/\|\beta(t)\|$ show that the remaining coordinates grow much more slowly than the dominant one---indeed, they remain bounded.
Because their growth is negligible compared to the blow-up coordinate, their normalized values converge to zero.
Thus, in this example, the trajectory converges to the direction $\ve_5$.

In contrast, \cref{fig:infty-beta} shows that in the rescaled $\ell_\infty$-SAM flow, each coordinate $\beta_j(t)$ has its own finite blow-up time.
However, Theorem~\ref{thm:l-infty} identifies the blow-up time $T=\frac{1}{(L-2)\mu_j(\alpha_j-\rho)^{L-2}}$ for any $j\in J$, which is the minimum of these blow-up times—only the coordinates in $J$ blow up at $T$, while all remaining coordinates stay bounded.
Since this rescaled time $T$ corresponds to $t_{\mathrm{orig}}=\infty$, the flow cannot proceed past $T$. In this example, $T \approx0.25$.

Because the rescaled system is simply a time reparameterization of the original one, the two plots differ only in their $x$-axis scaling.
Before reaching $T$, the two flows exhibit the same evolution along the $y$-axis.
Indeed, reparameterizing the original trajectory (\cref{fig:infty-original}) by $\tau(t)$ reproduces the same curve as shown in \cref{fig:infty-rescaled} before $T$.

\begin{figure}[H]   
  \centering

  \begin{subfigure}[t]{0.35\textwidth}
    \centering
    \includegraphics[width=\linewidth]{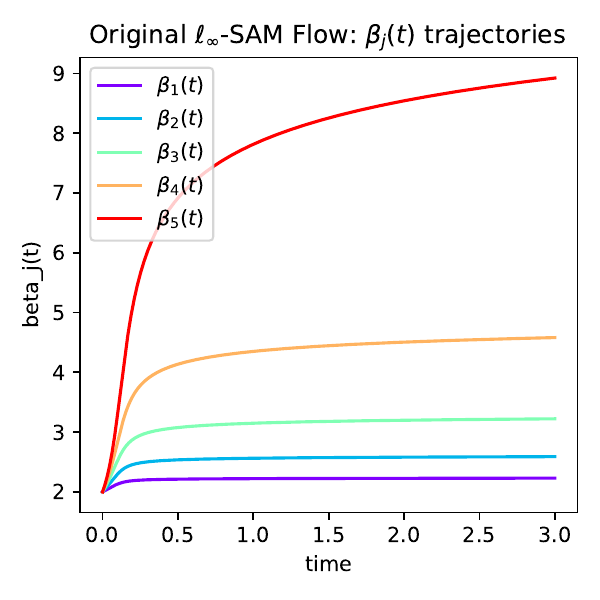}
    \caption{$\beta_j(t)$ trajectory.}
  \end{subfigure}
  \hspace{0.05\linewidth}
  \begin{subfigure}[t]{0.35\textwidth}
    \centering
    \includegraphics[width=\linewidth]{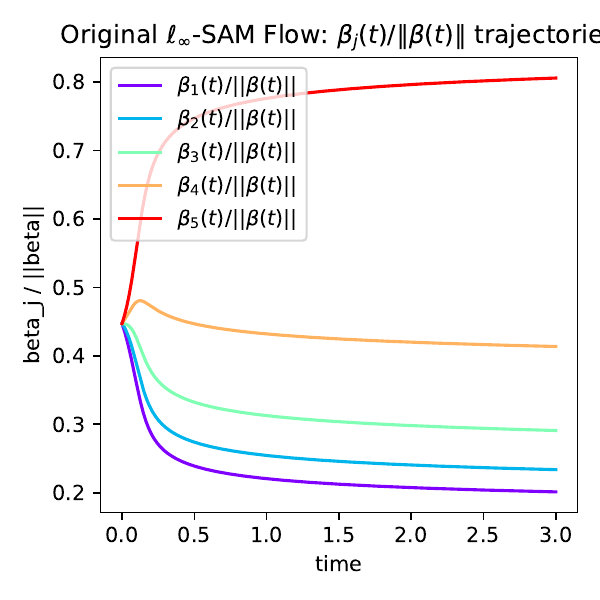}
    \caption{Normalized $\beta_j(t)$ trajectory.}
    \label{fig:infty-norm-beta}
  \end{subfigure}

  \caption{$\beta_j(t)$ and normalized $\beta_j(t)$ trajectory of the original $\ell_\infty$-SAM flow.}
  \label{fig:infty-original}
\end{figure}

\begin{figure}[H]   
  \centering

  \begin{subfigure}[t]{0.35\textwidth}
    \centering
    \includegraphics[width=\linewidth]{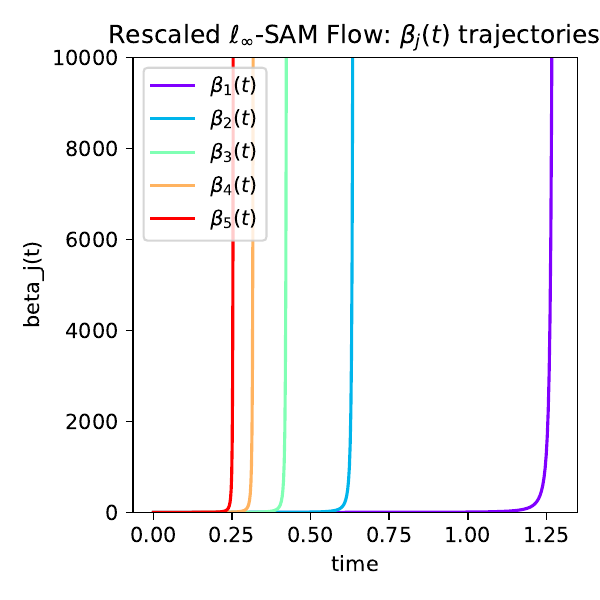}
    \caption{$\beta_j(t)$ trajectory.}
    \label{fig:infty-beta}
  \end{subfigure}
  \hspace{0.05\linewidth}
  \begin{subfigure}[t]{0.35\textwidth}
    \centering
    \includegraphics[width=\linewidth]{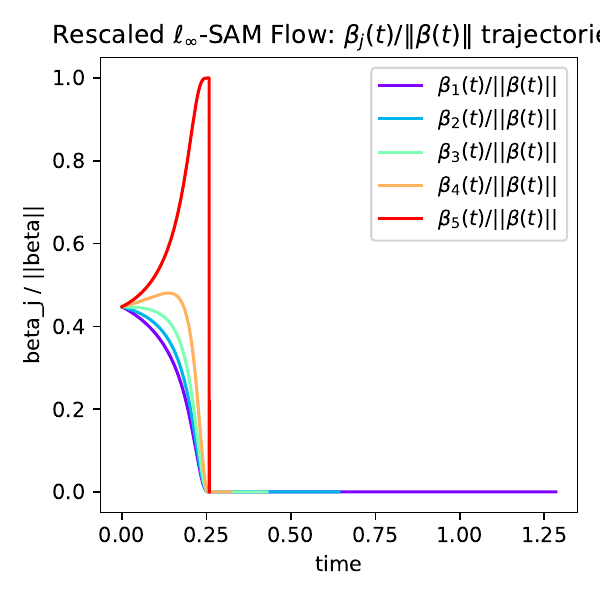}
    \caption{Normalized $\beta_j(t)$ trajectory.}
  \end{subfigure}

  \caption{$\beta_j(t)$ and normalized $\beta_j(t)$ trajectory of the rescaled $\ell_\infty$-SAM flow.}
  \label{fig:infty-rescaled}
\end{figure}

\subsection{Empirical Verification}
\label{app:exp-l-infty}

Our theoretical analysis (\cref{thm:l-infty} and \cref{cor:l-infty}) establishes the behavior of the $\ell_\infty$-SAM flow in the one-point setting $\gD_\vmu$. 
In this section, we investigate whether these phenomena extend beyond the idealized one-point regime.
We first examine the discrete-time dynamics (GD and discrete $\ell_\infty$-SAM) on the one-point dataset and verify that they exhibit exactly the same trajectory patterns predicted by the continuous-time theory.
We then turn to multi-point datasets and demonstrate that the same qualitative behaviors persist in both the continuous-time flows and their discrete counterparts.
Taken together, these experiments empirically confirm that the insights obtained from $\gD_\vmu$ carry over robustly to multi-point datasets and to practical discrete SAM updates.

For reproducibility, we detail the exact initialization used in all experiments.
We adopt the layer-wise balanced initialization $\vw^{(i)}(0)=\valpha$ for every $i\in[L]$, consistent with the setup of \cref{thm:l-infty}.  
The black-edged dot in \cref{fig:discrete_l_infty,fig:multipoints-l-infty} 
indicates the initial predictor $\vbeta(0)$.  
We set $\vw^{(i)}(0)=\vbeta(0)^{1/L}$ element-wise so that 
$\vbeta(0)=\bigodot_{i=1}^L \vw^{(i)}(0)$ holds exactly.
For the continuous-time trajectories, we approximate the flow using the corresponding discrete updates with a small step size $\eta = 10^{-3}$ via an explicit Euler scheme.

\subsubsection{One-point Case: Discrete vs. Continuous Dynamics}

To verify that our continuous-time analysis faithfully predicts the behavior of the corresponding discrete algorithms, we repeat the experiments in \cref{fig:gradient-flow} using exactly the same initializations, SAM radius $\rho$, and feature vector $\vmu$.
We simulate both the gradient flows (black curves) and their discrete counterparts (blue dots), including GD and discrete $\ell_\infty$-SAM updates.
As shown below, the discrete trajectories closely trace the qualitative evolution of their continuous-time versions.

\begin{figure}[H]
    \centering
    \begin{subfigure}[t]{0.35\textwidth}
        \centering
        \includegraphics[width=\textwidth]{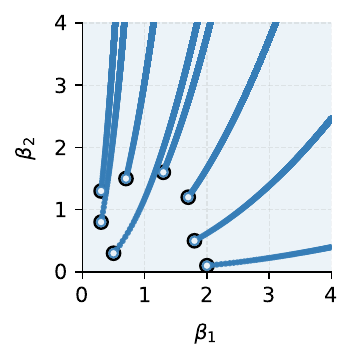}
        \caption{GD $(L=2)$}
    \end{subfigure}
    \hspace{0.02\linewidth}
    \begin{subfigure}[t]{0.35\textwidth}
        \centering
        \includegraphics[width=\textwidth]{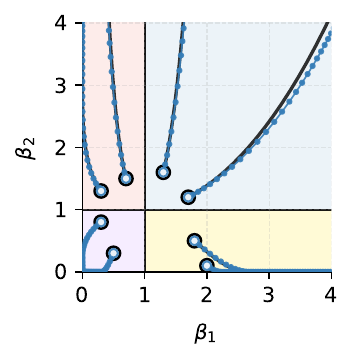}
        \caption{$\ell_\infty$-SAM $(L=2)$}
    \end{subfigure}
    \hfill
    \begin{subfigure}[t]{0.35\textwidth}
        \centering
        \includegraphics[width=\textwidth]{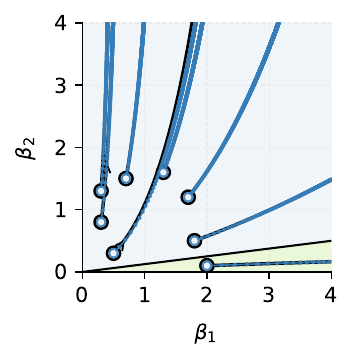}
        \caption{GD $(L=3)$}
    \end{subfigure}
    \hspace{0.02\linewidth}
    \begin{subfigure}[t]{0.35\textwidth}
        \centering
        \includegraphics[width=\textwidth]{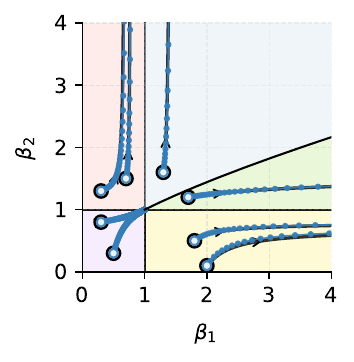}
        \caption{$\ell_\infty$-SAM $(L=3)$}
    \end{subfigure}
    \caption{Trajectories $\vbeta(t)$ under GF, $\ell_\infty$-SAM flow (black line), GD, and discrete $\ell_\infty$-SAM updates (blue dots) for $d=2$ and $\vmu = (1,2)$. For SAM, we set $\rho=1$. For GD and discrete $\ell_\infty$-SAM, we use step size $\eta=0.1$.}
    \label{fig:discrete_l_infty}
\end{figure}

\subsubsection{Multi-point Case: Persistence of One-point Behavior}

To examine whether the qualitative behaviors identified in the one-point analysis persist on more realistic datasets, we construct random linearly separable binary data by sampling two Gaussian clusters centered at $+\vmu$ and $-\vmu$ as shown in \cref{fig:multidata}.
Specifically, we draw
$$
\vx_n^{(+)} = \vmu + \vepsilon_n,\quad y_n = +1,
\qquad
\vx_n^{(-)} = -\vmu + \vepsilon_n,\quad y_n = -1,
$$
with $\vepsilon_n \sim \mathcal{N}(0, \sigma^2 \mI_d)$ and use $N/2$ samples per class (with $\vmu=(1,2), N=100, \sigma = 0.5$).

\cref{fig:discrete_l_infty,fig:multipoints-l-infty} show that the same qualitative patterns predicted by our one-point theory—such as the asymptotic trajectory structure—also emerge clearly in this multi-point setting.
Importantly, these behaviors are observed not only in the continuous-time flows but also in their discrete counterparts (GD and discrete $\ell_\infty$-SAM).
This empirical evidence demonstrates that the phenomena described in \cref{thm:l-infty,cor:l-infty} extend robustly beyond the one-point setting to general linearly separable datasets.

\begin{figure}[H]
    \centering
    \includegraphics[width=0.4\linewidth]{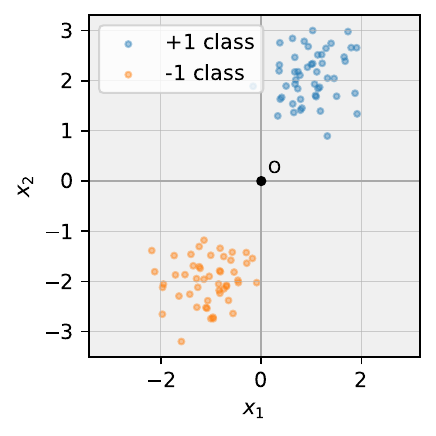}
    \caption{
    A randomly generated linearly separable dataset used in our multi-point experiments. 
    We sample two Gaussian clusters centered at $\pm \vmu=\pm (1,2)$ with isotropic noise 
    ($\vepsilon \sim \mathcal{N}(0, 0.5^2 \mI_2)$) and assign labels $+1$ and $-1$ accordingly. 
    This dataset is used to evaluate whether the one-point phenomena from \cref{thm:l-infty,cor:l-infty} persist in the multi-point regime.
    }
    \label{fig:multidata}
\end{figure}

\clearpage

\begin{figure}[H]
    \centering
    \begin{subfigure}[t]{0.35\textwidth}
        \centering
        \includegraphics[width=\textwidth]{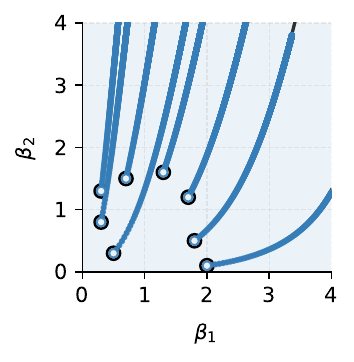}
        \caption{GD $(L=2)$}
    \end{subfigure}
    \hspace{0.02\linewidth}
    \begin{subfigure}[t]{0.35\textwidth}
        \centering
        \includegraphics[width=\textwidth]{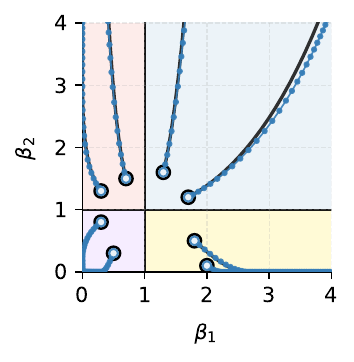}
        \caption{$\ell_\infty$-SAM $(L=2)$}
    \end{subfigure}
    \begin{subfigure}[t]{0.35\textwidth}
        \centering
        \includegraphics[width=\textwidth]{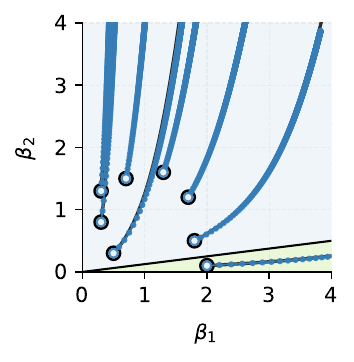}
        \caption{GD $(L=3)$}
    \end{subfigure}
    \hspace{0.02\linewidth}
    \begin{subfigure}[t]{0.35\textwidth}
        \centering
        \includegraphics[width=\textwidth]{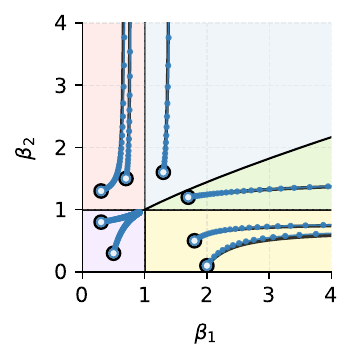}
        \caption{$\ell_\infty$-SAM $(L=3)$}
    \end{subfigure}
    \caption{Trajectories $\vbeta(t)$ under GF, $\ell_\infty$-SAM flow (black line), GD, and discrete $\ell_\infty$-SAM updates (blue dots) for $d=2$ on random multi-point dataset in \cref{fig:multidata}. For SAM, we set $\rho=1$. For GD and discrete $\ell_\infty$-SAM, we use step size $\eta=0.1$.}
    \label{fig:multipoints-l-infty}
\end{figure}

\newpage

\section{SAM with \texorpdfstring{$\ell_2$}{l2}-perturbations: Proof of Section~\ref{sec:l2}}
\subsection{Depth-1 Networks: Proof of \texorpdfstring{\cref{thm:depth1-l2-anydata}}{Theorem 4.1}}
\label{app:depth1-l2}

\depthoneanydata*

\begin{proof}
    Apply \cref{lem:linear} with $\vepsilon(\vw)=\rho \tfrac{\nabla \mathcal{L}(\vtheta)}{\|\nabla \mathcal{L}(\vtheta)\|_2}$. Then $\| \vepsilon(\vw)\|_2 \le \rho$ for all $\vw$, so the conditions of \cref{lem:linear} hold. Thus, the flow converges to the $\ell_2$ max-margin direction.
\end{proof}

\begin{restatable}{theorem}{depthonetwo}
    Consider the linear model $f(\vx)=\langle \vw, \vx \rangle$ trained on the dataset $\mathcal{D}_\vmu$ with loss $\mathcal{L}(\vw)=\ell(\langle \vw, \vx \rangle)$ where $\ell'(u)<0$ for all $u$. Then, GF and $\ell_2$-SAM flow, starting from any $\vw(0)$, evolve on the same affine line $\vw(0)+\mathrm{span} \{ \vmu \}$ and have the same spatial trajectory.
\label{thm:depth1-l2}
\end{restatable}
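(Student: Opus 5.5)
The plan is to mirror the proof of \cref{thm:depth1-l-infty}: compute both flows explicitly for the single-example loss $\mathcal{L}(\vw)=\ell(\vw^\top\vmu)$ and observe that each velocity is a positive scalar multiple of $\vmu$. Then the two flows differ only by a time reparameterization, as in the rescaled-flow discussion of \cref{sec:preliminaries}.

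First, the gradient is $\nabla\mathcal{L}(\vw)=\ell'(\vw^\top\vmu)\,\vmu$. Since $\ell'<0$ and $\vmu\neq\vzero$, this gradient is never zero, so the $\ell_2$ perturbation is always well defined. Because $\|\ell'(\vw^\top\vmu)\vmu\|_2=-\ell'(\vw^\top\vmu)\|\vmu\|_2$, it equals
\[
\vepsilon_2(\vw)=\rho\,\frac{\nabla\mathcal{L}(\vw)}{\|\nabla\mathcal{L}(\vw)\|_2}=-\rho\,\frac{\vmu}{\|\vmu\|_2}.
\]
This is a constant vector independent of $\vw$. Hence the ascent point is $\hat\vw=\vw-\rho\vmu/\|\vmu\|_2$, and $\hat\vw^\top\vmu=\vw^\top\vmu-\rho\|\vmu\|_2$.

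Next, I write both flows. Gradient flow is $\dot\vw=-\ell'(\vw^\top\vmu)\,\vmu$, and $\ell_2$-SAM flow is $\dot\vw=-\ell'(\vw^\top\vmu-\rho\|\vmu\|_2)\,\vmu$. In each case the coefficient of $\vmu$ is strictly positive. Integrating gives $\vw(t)-\vw(0)\in\mathrm{span}\{\vmu\}$, so both trajectories stay on the affine line $\vw(0)+\mathrm{span}\{\vmu\}$.

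To conclude that the spatial trajectories coincide, parameterize by $s(t)$ with $\vw(t)=\vw(0)+s(t)\vmu$. Then $\dot s=g(s)$, where $g(s)=-\ell'(\vw(0)^\top\vmu+s\|\vmu\|_2^2-c)>0$, with $c=0$ for GF and $c=\rho\|\vmu\|_2$ for SAM. In both cases $s$ is strictly increasing, so each flow is a reparameterization of the common curve $\dot\vw=\vmu$, namely $s\mapsto\vw(0)+s\vmu$ for $s\ge 0$, via $\tau(s)=\int_0^s du/g(u)$.

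The only point needing care, and the closest thing to an obstacle, is the range of $s$ covered by each flow. I would argue that both images are the whole ray $[0,\infty)$. For logistic loss, $0<g(s)\le 1$, so $\dot s\le 1$ and $s$ is finite at every finite time, which gives existence for all $t\ge 0$. Since $-\ell'(u)=\frac{1}{1+e^{u}}\ge \frac{e^{-u}}{2}$ for $u\ge 0$ (and $-\ell'(u)\ge\frac12$ for $u<0$), we have $\tau(s)\le \int_0^s 2(1+e^{a+bu})\,du<\infty$ for every finite $s$, where $a=\vw(0)^\top\vmu-c$ and $b=\|\vmu\|_2^2$. Hence $s(t)\to\infty$, and both flows traverse the same half-line. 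For a general $\ell$ with only $\ell'<0$, I would state the conclusion as identity of the traced curve up to reparameterization, with images being initial segments of the same ray.
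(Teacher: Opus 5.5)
Your proposal is correct and follows the same route as the paper. In both, the $\ell_2$ perturbation is the constant vector $-\rho\vmu/\|\vmu\|_2$, so the GF and $\ell_2$-SAM flow velocities are positive scalar multiples of $\vmu$, and both reduce to $\dot\vw=\vmu$ up to a time reparameterization.

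Your check that each flow sweeps the whole ray is extra care the paper skips. The hedge for general $\ell$ is unnecessary, though. For any continuous positive $g$, the maximal solution is $s=\tau^{-1}$ on $[0,\lim_{u\to\infty}\tau(u))$, and its image is all of $[0,\infty)$ whether or not that limit is finite.
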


\begin{proof}
    The model is $f(\vx) = \langle \vw, \vx \rangle = \vw^\top \vx$.
    The loss is $\mathcal{L}(\vw) = \ell (\vw^\top \vmu)$.
    The gradient is $\nabla_{\vw} \mathcal{L}(\vw) = \ell' (\vw^\top \vmu) \cdot \vmu$ with $\ell'(s)<0$.
    
    \paragraph{Gradient Descent}
    GF is 
    \begin{align*}
        \dot{\vw} & = -\nabla_{\vw} \mathcal{L}(\vw) \\
        & = -\ell' (\vw^\top \vmu) \cdot \vmu.
    \end{align*}
    
    \paragraph{SAM with $\ell_2$ perturbation}
    The ascent point is 
    \begin{align*}
        \hat{\vw} & = \vw + \rho \vepsilon_2(\vw) \\
        & = \vw + \rho \frac{\nabla_{\vw} \mathcal{L}(\vw)}{\| \nabla_{\vw} \mathcal{L}(\vw)\|_2} \\
        & = \vw - \rho \frac{\vmu}{\|\vmu\|_2}.
    \end{align*}
    The update of $\ell_2$-SAM flow is
    \begin{align*}
        \dot{\vw} & = -\nabla_{\vw} \mathcal{L}(\hat{\vw}) \\
        &= -\nabla_{\vw} \mathcal{L}(\vw - \rho \frac{\vmu}{\|\vmu\|_2}) \\
        &= -\ell' (\vw^\top \vmu - \rho \frac{\vmu^\top \vmu}{\| \vmu\|_2}) \cdot \vmu \\
        &= -\ell' (\vw^\top \vmu - \rho \| \vmu\|_2) \cdot \vmu.
    \end{align*}

    Therefore, they have the same spatial trajectory as:
    \begin{align*}
        \dot{\vw} & = \vmu.
    \end{align*}

    The term $-\ell' (\vw^\top \vmu-\rho \| \vmu\|_2)$ is the accelation in terms of $t$ since $-\ell'(s)$  is decreasing in $s$. 
\end{proof}

\subsection{Derivation of \texorpdfstring{$\ell_2$}{l-2}-SAM flow}
\label{app:l2-gradientflow}

Let us get the $\ell_2$-SAM flow. 
The gradient is
\begin{align*}
\nabla_{{\vw}^{(i)}} L(\vtheta) &=
\ell'\!\big(\langle \vbeta(\vtheta),\vmu\rangle\big)\,
\nabla_{{\vw}^{(i)}}\langle \vbeta(\vtheta),\vmu\rangle \\
&= \;\ell'\!\big(\langle \vbeta(\vtheta),\vmu\rangle\big)\,
\vmu \odot {\vw}^{(\ell)}  &\text{ for } (i,l) \in \{(1,2), (2,1)\}.
\end{align*}

From the gradient, we have
\begin{align*}
\vepsilon_2^{(i)}(\vtheta) &= \rho\, \dfrac{\nabla_{{\vw}^{(i)}} \mathcal{L}(\vtheta)}{\|\nabla \mathcal{L}(\vtheta)\|_2} \underset{(a)}{=} -\rho \frac{\vmu \odot {\vw}^{(\ell)}}{\sqrt{\|\vmu \odot {\vw}^{(1)}\|_2^2 + \|\vmu \odot {\vw}^{(2)}\|_2^2}} = -\rho \frac{\vmu \odot {\vw}^{(\ell)}}{n_\vtheta} 
\end{align*}
for $(i,l) \in \{(1,2), (2,1)\}$, where $n_\vtheta = \sqrt{\|\vmu \odot {\vw}^{(1)}\|_2^2 + \|\vmu \odot {\vw}^{(2)}\|_2^2}$ and (a) follows from $\ell'(u) = -\frac{1}{1 + e^u}<0$. 

We consider the initialization ${\vw}^{(1)}(0)={\vw}^{(2)}(0) \in \R^d_+.$ Then, since the loss function and dynamics are invariant under exchanging ${\vw}^{(1)}$ and ${\vw}^{(2)}$, we have ${\vw}^{(1)}(t)={\vw}^{(2)}(t)=:\vw(t)$ for all $t\geq 0$. Therefore, the update on $\vw(t)$ by rescaled $\ell_2$-SAM flow is given as
\begin{align*}
    \dot{\vw}(t) = \vmu \odot \left( \vw(t)-\rho \frac{\vmu \odot \vw(t)}{n_\vtheta(t)} \right).
\end{align*}

\subsection{Proof of \texorpdfstring{\cref{thm:depth2-limit-direction-anydata}}{Theorem 4.2}}
\label{app:thm-depth2-limit}

\depthtwolimitanydata*

\begin{proof}
Let $\{(\vx_n,y_n)\}_{n=1}^N\subset\R^d\times\{\pm1\}$ be a linearly separable dataset, meaning that there exists $\vw_\ast\in\R^d$ such that
$$
y_n\,\vx_n^\top \vw_\ast>0\qquad\forall n.
$$
As usual, we absorb the labels into the inputs by redefining
$\vx_n\leftarrow y_n\vx_n$, so that we may assume $y_n=1$ for all $n$ and
$$
\exists\vw_\ast\ \text{such that}\ \vx_n^\top\vw_\ast>0\ \ \forall n.
$$

We consider a depth-2 diagonal linear network with parameters
$\vw_1,\vw_2\in\R^d$, defining the predictor
$$
f(\vx;\vw_1,\vw_2)
=(\vw_1\odot\vw_2)^\top \vx
=\vbeta^\top\vx,
\qquad
\vbeta:=\vw_1\odot\vw_2.
$$
The loss function is logistic:
$$
\gL(\vw_1,\vw_2)
=\sum_{n=1}^N \ell\bigl(\vbeta^\top\vx_n\bigr),
\qquad
\ell(u)=\log(1+e^{-u}),
\qquad
\ell'(u)=-\frac{e^{-u}}{1+e^{-u}}.
$$

We study the $\ell_2$-SAM flow with fixed perturbation radius $\rho>0$:
$$
\dot\vw_1(t)=-\nabla_{\vw_1}\gL(\widehat\vw_1(t),\widehat\vw_2(t)),
\qquad
\dot\vw_2(t)=-\nabla_{\vw_2}\gL(\widehat\vw_1(t),\widehat\vw_2(t)),
$$
where
$$
\widehat\vw_i(t)
=\vw_i(t)+\rho\,\frac{\nabla_{\vw_i}\gL(\vw_1(t),\vw_2(t))}
{\|\nabla_{\vw_i}\gL(\vw_1(t),\vw_2(t))\|_2},
\qquad i=1,2.
$$

\paragraph{Step 1: Balanced initialization removes layer imbalance.}
Let
$$
z_j(t) := w^{(1)}_j(t)-w^{(2)}_j(t).
$$
From the SAM flow and
$$
\frac{\partial\gL}{\partial w^{(1)}_j}(\widehat\vw)
= \sum_{n=1}^N \ell'(\widehat\vbeta^\top\vx_n)\, x_{n,j}\,\widehat w^{(2)}_j,
\qquad
\frac{\partial\gL}{\partial w^{(2)}_j}(\widehat\vw)
= \sum_{n=1}^N \ell'(\widehat\vbeta^\top\vx_n)\, x_{n,j}\,\widehat w^{(1)}_j,
$$
one obtains
$$
\dot z_j(t)
= -G_j(t)\bigl(w^{(2)}_j(t)-w^{(1)}_j(t)\bigr)(1+o(1)),
\qquad
G_j(t)=\sum_{n=1}^N \ell'(\widehat\vbeta^\top\vx_n)\,x_{n,j}.
$$
Here the factor $1+o(1)$ arises because the gradients in the SAM update are evaluated at the perturbed parameter
$$
\widehat\vw(t)
= \vw(t) + \rho\frac{\nabla\gL(\vw(t))}{\|\nabla\gL(\vw(t))\|_2},
$$
rather than at $\vw(t)$ itself.  Since the perturbation has fixed magnitude $\rho$ while the parameter norm satisfies
$\|\vw(t)\|\to\infty$ along any vanishing-loss trajectory of a
$2$-homogeneous model, the relative perturbation decays:
$$
\frac{\|\widehat\vw(t)-\vw(t)\|_2}{\|\vw(t)\|_2}
= \frac{\rho}{\|\vw(t)\|_2}\longrightarrow0.
$$
Consequently, the gradients $\nabla\gL(\widehat\vw(t))$ and
$\nabla\gL(\vw(t))$ become asymptotically colinear, and replacing the latter by the former introduces only a vanishing multiplicative error $1+o(1)$ in the imbalance ODE for $z_j(t)$.

Since $z_j(0)=0$ under balanced initialization and the ODE
$\dot z_j(t)=-G_j(t)z_j(t)(1+o(1))$ is linear with a Lipschitz
right-hand side, uniqueness of solutions implies $z_j(t)\equiv0$
for all $t$.
Hence for all $t$
$$
w^{(1)}_j(t)=w^{(2)}_j(t)=:a_j(t),
\qquad
\beta_j(t)=a_j(t)^2.
$$

\paragraph{Step 2: Predictor ODE.}
From the SAM ODE,
$$
\dot a_j(t)
= -a_j(t)\,G_j(t)\,(1+o(1)).
$$
Hence
$$
\dot\beta_j(t)
= 2 a_j(t)\dot a_j(t)
= -2 a_j(t)^2 G_j(t)(1+o(1))
= -2 \beta_j(t) G_j(t)(1+o(1)).
$$
Noting that
$$
\nabla_{\vbeta}\gL(\vbeta)_j
= \sum_{n=1}^N \ell'(\vbeta^\top\vx_n)\, x_{n,j},
$$
since
$$
G_j(t)
=\sum_{n=1}^N \ell'(\widehat\vbeta^\top\vx_n)\,x_{n,j}
=\sum_{n=1}^N \ell'(\vbeta(t)^\top\vx_n)\,x_{n,j}\,(1+o(1)),
$$
we have
$$
G_j(t)
= \nabla_{\beta_j}\gL(\vbeta(t))\,(1+o(1)).
$$
Hence the coordinate-wise predictor dynamics
$$
\dot\beta_j(t)
= -2\,\beta_j(t)\,G_j(t)\,(1+o(1))
$$
become
$$
\dot\beta_j(t)
= -2\,\beta_j(t)\,\nabla_{\beta_j}\gL(\vbeta(t))\,(1+o(1)).
$$
Writing this in vector form using
$\mathrm{diag}(\vbeta)\nabla_{\vbeta}\gL = 
(\beta_1 \nabla_{\beta_1}\gL,\dots,\beta_d \nabla_{\beta_d}\gL)^\top$,
we obtain
\begin{align}
\label{eq:steepest-descent}
    \dot{\vbeta}(t)
= -2\,\mathrm{diag}(\vbeta(t))\,\nabla_{\vbeta}\gL(\vbeta(t))\,(1+o(1)).
\end{align}

\paragraph{Step 3: Geometry induced by the diagonal parameterization.}

To characterize the optimization geometry associated with the depth-$2$ diagonal model, we invoke Lemma~\ref{lem:diagonal-induced-norm-metric}.
The lemma shows that, for the parameterization
$$
\vbeta = \vw^{(1)} \odot \vw^{(2)}
\qquad\text{and}\qquad
R(\vw^{(1)},\vw^{(2)})
= \tfrac12\bigl(\|\vw^{(1)}\|_2^2+\|\vw^{(2)}\|_2^2\bigr),
$$
the induced predictor norm is exactly the $\ell_1$ norm:
$$
\|\vbeta\|_{\mathcal N}
:= \min_{\vw^{(1)}\odot\vw^{(2)}=\vbeta} R(\vw^{(1)},\vw^{(2)})
= \|\vbeta\|_1.
$$
Moreover, on the balanced submanifold $\vw^{(1)}=\vw^{(2)}=\va$ with
$\vbeta=\va^{\odot 2}$, the lemma establishes that the Riemannian metric induced on predictor space is
$$
\langle \vu,\vv\rangle_{\mathcal N}
= \vu^\top M(\vbeta)\vv,
\qquad
M(\vbeta)=2\,\mathrm{diag}(\vbeta).
$$
Therefore, the natural-gradient steepest-descent flow with respect to the induced norm $\|\cdot\|_{\mathcal N}$ takes the form
$$
\dot\vbeta(t)
= -M(\vbeta(t))\,\nabla_{\vbeta}\gL(\vbeta(t))
= -2\,\mathrm{diag}(\vbeta(t))\,\nabla_{\vbeta}\gL(\vbeta(t)).
$$
We next compare this asymptotic steepest-descent flow with the predictor ODE arising from the $\ell_2$-SAM dynamics.

\paragraph{Step 4: Asymptotic identification with $\ell_1$ steepest descent.}
Comparing \eqref{eq:steepest-descent} with the steepest-descent flow above shows that the SAM predictor
dynamics coincide with the $\ell_1$ steepest-descent dynamics up to a multiplicative
factor $1+o(1)$ and a vanishing perturbation.
Assumptions (a) and (b) guarantee that these perturbations do not change the limiting
direction of $\vbeta(t)/\|\vbeta(t)\|_2$.

\paragraph{Step 5: Conclude $\ell_1$ max-margin.}
By the max-margin theorem for steepest descent in a given norm
(\citet{gunasekar2018characterizing}, Thm.~5; extended to logistic loss by \citet{lyu2019gradient}), any trajectory following $\ell_1$ steepest descent and satisfying $\gL(\vbeta(t))\to0$
converges in direction to the $\ell_1$ max-margin solution.
Since the SAM predictor dynamics are asymptotically equivalent to $\ell_1$ steepest descent, and by (b) the direction limit exists, we obtain
$$
\bar\vbeta \parallel \vbeta^\star,
\qquad
\vbeta^\star \in \argmin_{\vbeta}\|\vbeta\|_1
\ \text{s.t.}\ 
\vbeta^\top \vx_n\ge 1.
$$
\end{proof}

\begin{lemma}[Induced Norm and Natural Gradient Metric for Depth-2 Diagonal Models]
\label{lem:diagonal-induced-norm-metric}
Consider the depth-$2$ diagonal parameterization
$$
\vbeta = \vw^{(1)} \odot \vw^{(2)} \in \mathbb{R}^d,
$$
and the quadratic parameter regularizer
$$
R(\vw^{(1)},\vw^{(2)})
:= \frac12\left(\|\vw^{(1)}\|_2^2+\|\vw^{(2)}\|_2^2\right).
$$
Then the induced predictor norm
$$
\|\vbeta\|_{\mathcal N}
:= \min_{\vw^{(1)}\odot\vw^{(2)}=\vbeta} R(\vw^{(1)},\vw^{(2)})
$$
satisfies
$$
\|\vbeta\|_{\mathcal N}=\|\vbeta\|_1.
$$
Moreover, on the submanifold where $\vw^{(1)}=\vw^{(2)}=\va$ and  
$\vbeta=\va^{\odot2}$, the Riemannian metric induced on the predictor space
by $R$ is
$$
\langle \vu,\vv\rangle_{\mathcal N}
= \vu^\top M(\vbeta)\vv,
\qquad
M(\vbeta)=2\,\mathrm{diag}(\vbeta).
$$
Consequently, the natural-gradient steepest-descent flow w.r.t.\ 
$\|\cdot\|_{\mathcal N}$ is
$$
\dot\vbeta = -M(\vbeta)\,\nabla_{\vbeta}\gL(\vbeta)
= -2\,\mathrm{diag}(\vbeta)\,\nabla_{\vbeta}\gL(\vbeta).
$$
\end{lemma}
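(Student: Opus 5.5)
The plan is to handle the two claims separately: first the norm identity by a coordinatewise AM--GM argument, then the metric by pulling back the Euclidean parameter metric through the map $\va\mapsto\va^{\odot 2}$ on the balanced submanifold.

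\textbf{Induced norm.} The constraint $\vw^{(1)}\odot\vw^{(2)}=\vbeta$ decouples across coordinates, and so does $R$. It therefore suffices to solve, for each $j$,
$$\min\Big\{\tfrac12\big((w^{(1)}_j)^2+(w^{(2)}_j)^2\big):\ w^{(1)}_jw^{(2)}_j=\beta_j\Big\}.$$
By AM--GM, $\tfrac12(a^2+b^2)\ge |ab|=|\beta_j|$. Equality is attained by $a=\sqrt{|\beta_j|}$ and $b=\sign(\beta_j)\sqrt{|\beta_j|}$. Summing over $j$ gives $\|\vbeta\|_{\mathcal N}=\|\vbeta\|_1$.

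\textbf{Metric.} On the balanced submanifold I parametrize $\vbeta=\va^{\odot 2}$, working where $\va>\vzero$ coordinatewise, as in the setting of the paper. A parameter tangent vector $(\dot\va,\dot\va)$ has squared Euclidean length $2\|\dot\va\|_2^2$ and maps to $\dot\vbeta=2\va\odot\dot\va$. Gradient flow of $\gL(\va^{\odot 2})$ in the parameters pushes forward to
$$\dot\beta_j=2a_j\dot a_j=-2a_j\cdot 2a_j\,\partial_{\beta_j}\gL\cdot c,$$
where $c$ is a constant determined by how $R$ normalizes the parameter metric. Hence the pushed-forward flow is $\dot\vbeta=-(\text{const})\,\mathrm{diag}(\vbeta)\nabla_{\vbeta}\gL$.

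To state this in the lemma's form, I fix the convention that the natural-gradient flow is $\dot\vbeta=-M(\vbeta)\nabla_{\vbeta}\gL$, where $M$ is the inverse metric tensor on predictor space induced by the parametrization. With the normalization of $R$ (factor $\tfrac12$ and two identical layers), $M(\vbeta)=2\,\mathrm{diag}(\vbeta)$, matching \eqref{eq:steepest-descent}. I would verify the constant by direct computation. A cleaner route is to treat the balanced dynamics as a single layer with the $\tfrac12$ weighting and check that the coefficient $2$ from the chain rule survives. The consequence for the flow then follows by substitution.

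\textbf{Main obstacle.} The substantive content is easy; the difficulty is bookkeeping. One must consistently distinguish the metric tensor ($\propto\mathrm{diag}(\vbeta)^{-1}$) from its inverse ($\propto\mathrm{diag}(\vbeta)$), which the lemma calls $M$. One must also pin down the constant $2$ under the stated normalization of $R$. I would state the convention explicitly so that the identification with the SAM predictor ODE is exact.
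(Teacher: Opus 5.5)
Your plan is correct and has the same two-part structure as the paper's proof: coordinatewise minimization for the norm, then pulling back through $\va\mapsto\va^{\odot 2}$ on the balanced submanifold for the metric. There are two differences worth noting.

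For the norm, the paper substitutes $w^{(2)}_j=\beta_j/w^{(1)}_j$, minimizes $\tfrac12(w^2+\beta_j^2/w^2)$ by calculus, and treats $\beta_j=0$ separately. Your AM--GM bound with the explicit signed minimizer covers every case at once.

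For the metric, the paper takes the squared length on $\va$ to be $\|\dot\va\|_2^2$. This gives the predictor metric $\sum_j u_jv_j/(4\beta_j)$, hence $4\,\mathrm{diag}(\vbeta)$. The paper then reaches the constant $2$ only by appealing to invariance under positive rescaling, which holds for trajectories (up to a time change) but not for the flow itself. Your choice $\|(\dot\va,\dot\va)\|_2^2=2\|\dot\va\|_2^2$ gives the constant exactly, and the step you defer is one line:
\begin{itemize}
\item under this metric, the Riemannian gradient of $f(\va)=\gL(\va^{\odot 2})$ is $\tfrac12\nabla_{\va}f=\va\odot\nabla_{\vbeta}\gL$;
\item so $c=\tfrac12$ and $\dot\beta_j=-2\beta_j\,\partial_{\beta_j}\gL$;
\item equivalently, the pushed-forward metric tensor is $\tfrac12\mathrm{diag}(\vbeta)^{-1}$, whose inverse is $2\,\mathrm{diag}(\vbeta)$, matching \eqref{eq:steepest-descent} with no rescaling.
\end{itemize}
Since the balanced lift $\dot w^{(1)}_j=\dot w^{(2)}_j=\dot\beta_j/(2a_j)$ is also the minimum-norm lift, this coincides with the submersion metric from the full parameter space.

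Your bookkeeping warning is well founded. Read literally, the lemma's $\langle\vu,\vv\rangle_{\mathcal N}=\vu^\top M(\vbeta)\vv$ puts the inverse metric where the metric belongs. The paper's proof swaps the two silently after computing $\tfrac14\mathrm{diag}(\vbeta)^{-1}$. Declaring $M$ to be the inverse metric tensor, as you do, is the right repair.

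Finally, this metric is not derived from the norm $\|\cdot\|_1$ of the first claim. The closing \emph{steepest descent w.r.t.\ $\|\cdot\|_{\mathcal N}$} can therefore only mean this Riemannian gradient flow, so your decision to treat the two claims separately is correct.
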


\begin{proof}
\textbf{(i) Computation of the induced norm.}
For each coordinate $j$, the constraint $\beta_j = w^{(1)}_j w^{(2)}_j$
decouples.  If $\beta_j=0$, the minimum is attained at 
$(w^{(1)}_j,w^{(2)}_j)=(0,0)$ and equals $0=|\beta_j|$.

For $\beta_j\neq 0$, eliminate $w^{(2)}_j$ via 
$w^{(2)}_j=\beta_j/w^{(1)}_j$ and minimize
$$
\phi_j(w)
:=\frac12\left(w^2+\frac{\beta_j^2}{w^2}\right),\qquad w\neq 0.
$$
Differentiation yields
$\phi_j'(w)=w-\beta_j^2 w^{-3}$, 
whose nonzero roots satisfy $w^4=\beta_j^2$, so that
$|w|=|\beta_j|^{1/2}$.  
Substitution gives $\phi_j(w^\star)=|\beta_j|$.
Summing over $j$ yields the induced norm
$$
\|\vbeta\|_{\mathcal N}
=\sum_{j=1}^d |\beta_j|
=\|\vbeta\|_1.
$$

\textbf{(ii) Local parametrization and Jacobian.}
On the balanced submanifold 
$\vw^{(1)}=\vw^{(2)}=\va\in \mathbb{R}^d$,  
the predictor is
$$
\beta_j = a_j^2.
$$
Hence the Jacobian of the map $\va\mapsto \vbeta$ is diagonal:
$$
\frac{\partial \beta_j}{\partial a_k}
= 2a_j\,\delta_{jk}.
$$

\textbf{(iii) Riemannian metric induced from $R$.}
The regularizer restricted to $\va$ becomes
$$
R(\va,\va)
= \|\va\|_2^2.
$$
Thus the parameter-space metric is Euclidean on $\va$.
For a tangent predictor perturbation $\mathrm{d}\vbeta$,  
the corresponding parameter perturbation is
$$
\mathrm{d}a_j
= \frac{\mathrm{d}\beta_j}{2a_j}
= \frac{\mathrm{d}\beta_j}{2\sqrt{\beta_j}}.
$$
Thus the squared parameter differential is
$$
\|\mathrm{d}\va\|_2^2
= \sum_{j=1}^d \left(\frac{\mathrm{d}\beta_j}{2\sqrt{\beta_j}}\right)^2
= \sum_{j=1}^d 
\frac{(\mathrm{d}\beta_j)^2}{4\,\beta_j}.
$$

Therefore the predictor-space inner product induced by $R$ is
$$
\langle \vu,\vv\rangle_{\mathcal N}
=\sum_{j=1}^d \frac{u_j v_j}{4\beta_j}.
$$
Equivalently,
$$
M(\vbeta)^{-1}
=\frac14\,\mathrm{diag}(\beta_1^{-1},\dots,\beta_d^{-1}).
$$
Inverting yields
$$
M(\vbeta)= 4\,\mathrm{diag}(\beta_1,\dots,\beta_d).
$$

\textbf{(iv) Removal of irrelevant constant factor.}
Steepest-descent flows are invariant to multiplication of $M$ by any 
positive scalar constant.  Thus $M(\vbeta)$ is equivalent, for 
optimization dynamics, to
$$
M(\vbeta)=2\,\mathrm{diag}(\vbeta),
$$
which is the conventional normalization in the induced-norm literature.

\textbf{(v) Natural gradient flow.}
By definition of steepest descent under the induced norm,
$$
\dot\vbeta = -M(\vbeta)\,\nabla_{\vbeta}\gL(\vbeta)
= -2\,\mathrm{diag}(\vbeta)\,\nabla_{\vbeta}\gL(\vbeta).
$$
\end{proof}

\subsection{Proofs for \texorpdfstring{\cref{sec:sam_flow_trajectory}}{Section 4.2.3} }\label{sec:sam_flow_trajectory_proofs}
In this section, we provide detailed proofs for the trajectory analysis of SAM flow, with a focus on the roles of the initialization scale $\alpha$, the perturbation radius $\rho$, and the feature vector $\vmu$. For notational simplicity, we omit the time dependence $(t)$ when the context is clear. The full SAM flow differs from the rescaled flow by the positive scalar factor
$\hat{\lambda}(t)$, so the coordinate-wise ordering of growth rates is unchanged.
For the rescaled flow, this factor is omitted.

\begin{assumption}\label{ass:initialization}
The initial weight parameters are positive and symmetric: $\vw^{(1)}(0) = \vw^{(2)}(0) = \alpha \vone$ for some scaling factor $\alpha > 0$.
\end{assumption}

\begin{assumption}\label{ass:dataordering}
The vector $\vmu$ has strictly positive, increasing coordinates: $0 < \mu_1 < \cdots < \mu_d$. (Equivalently, up to a fixed permutation we may assume the coordinates are monotone.)
\end{assumption}

 We introduce two auxiliary quantities. Define the normalized weights $p_j(t) := \frac{\mu_j^2 \beta_j(t)}{\sum_{k=1}^d \mu_k^2 \beta_k(t)}$ and their moments $M_k(t) := \sum_{j=1}^d \mu_j^k p_j(t)$. Using these, we set the thresholds
\begin{equation*}
    m_{\text{L}} := \frac{\mu_1}{2}, \qquad m_{\text{H}}(t) := \frac{M_2(t)}{2 M_1(t)}.
\end{equation*}

In the proof, we consider $\ell(\langle \vbeta, \vmu \rangle)$ term, so not only considering the spatial trajectory but full gradient flow without any reparameterization.
We define the margins at the current and perturbed parameters as $ s(t) := \langle \vbeta(t), \vmu \rangle$ and $\hat{s}(t) :=~\langle \hat{\vbeta}(t), \vmu \rangle$. 
Set $\hat{\lambda}(t):=|\ell'(\hat{s}(t))|$, the slope of the loss with respect to the margin evaluated at the perturbed margin.

\subsubsection{Recap: Basic Notation}

Recall the margin $s = \langle \vbeta, \vmu \rangle$ and the loss $\gL(s) = \log \lps 1 + \exp(-s) \rps$. The derivatives of the loss with respect to the margin $s$ are:
\begin{align*}
    \frac{\rd \gL}{\rd s} &= -\sigma(-s) = -\frac{1}{1 + \exp(s)}, \\
    \frac{\rd^2 \gL}{\rd s^2} &= \sigma(s)\sigma(-s) > 0,
\end{align*}
where $\sigma(s) = (1 + \exp(-s))^{-1}$ is the sigmoid function. We define $\lambda := \sigma(-s) \in (0, 1)$ as the logistic loss slope magnitude. The gradients with respect to the weight parameters, obtained via the chain rule, are:
\begin{equation*}
    \frac{\rd \gL}{\rd w^{(1)}_j} := \frac{\rd \gL}{\rd s} \frac{\rd s}{\rd w^{(1)}_j} = -\lambda \mu_j w^{(2)}_j, \qquad \frac{\rd \gL}{\rd w^{(2)}_j} := \frac{\rd \gL}{\rd s} \frac{\rd s}{\rd w^{(2)}_j} = -\lambda \mu_j w^{(1)}_j.
\end{equation*}
The squared norm of the gradient vector is:
\begin{equation*}
    \|\nabla_{\vtheta} \gL\|^2 = \sum_{j=1}^d \lambda^2 \mu_j^2 \lps \lps w^{(2)}_j \rps^2 + \lps w^{(1)}_j\rps^2 \rps = \lambda^2 n_\vtheta^2,
\end{equation*}
where $n_\vtheta := \sqrt{\sum_{j=1}^d \mu_j^2 \lps \lps w^{(1)}_j \rps^2 + \lps w^{(2)}_j \rps^2\rps}$. SAM perturbs parameters by taking a step of size $\rho$ along the normalized gradient direction.
\begin{align*}
    \vepsilon_2 &:= \rho \frac{\nabla_\vtheta \gL}{\|\nabla_\vtheta \gL\|_2}, \\
    (\varepsilon_2)_{w^{(1)}_j} &= -\frac{\rho \mu_j w^{(2)}_j}{n_\vtheta}, \\
    (\varepsilon_2)_{w^{(2)}_j} &= -\frac{\rho \mu_j w^{(1)}_j}{n_\vtheta}.
\end{align*}
The perturbed weight parameters are
\begin{equation*}
    (\hat{w}_1)_j := w^{(1)}_j - \frac{\rho \mu_j w^{(2)}_j}{n_\vtheta}, \qquad (\hat{w}_2)_j := w^{(2)}_j - \frac{\rho \mu_j w^{(1)}_j}{n_\vtheta}.
\end{equation*}
The perturbed $\beta_j$ becomes
\begin{align*}
    \hat{\beta}_j &:= \hat{w}^{(1)}_j \hat{w}^{(2)}_j \\
    &= w^{(1)}_j w^{(2)}_j - \frac{\rho \mu_j}{n_\vtheta} \lps \lps w^{(1)}_j\rps^2 + \lps w^{(2)}_j \rps^2\rps + \frac{\rho^2 \mu_j^2}{n_\vtheta^2} w^{(1)}_j w^{(2)}_j \\
    &= \beta_j \lps 1 + \frac{\rho^2 \mu_j^2}{n_\vtheta^2} \rps - \frac{\rho \mu_j}{n_\vtheta} \lps \lps w^{(1)}_j \rps^2 + \lps w^{(2)}_j \rps^2 \rps.
\end{align*}
The perturbed margin and loss slope magnitude are
\begin{equation*}
    \hat{s} := \langle \hat{\vbeta}, \vmu \rangle = \sum_{j=1}^d \mu_j \hat{\beta}_j, \qquad \hat{\lambda} := \sigma(-\hat{s}).
\end{equation*}
Recall that the SAM flow dynamics are given by:
\begin{equation*}
    \dot{\vtheta} = -\nabla_\vtheta \gL(\hat{\vtheta}).
\end{equation*}

\subsubsection{Preliminary Analysis}

We first establish a key property of the SAM flow: the balancedness of the weights.
\begin{lemma}\label{lem:balancedness}
    Under Assumption~\ref{ass:dataordering}, the SAM flow decays the quantity $w^{(1)}_j(t) - w^{(2)}_j(t)$ exponentially to zero.
\end{lemma}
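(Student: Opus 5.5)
The plan is to derive a closed scalar linear ODE for the imbalance $z_j(t) := w^{(1)}_j(t) - w^{(2)}_j(t)$ directly from the perturbed weights in the recap above, and then integrate it explicitly.

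First, write the SAM flow coordinate-wise. With $\hat{\lambda}(t) = \sigma(-\hat{s}(t)) > 0$, the flow reads
\begin{align*}
\dot{w}^{(1)}_j = \hat{\lambda}\,\mu_j\,\hat{w}^{(2)}_j, \qquad \dot{w}^{(2)}_j = \hat{\lambda}\,\mu_j\,\hat{w}^{(1)}_j .
\end{align*}
From the recap, $\hat{w}^{(1)}_j = w^{(1)}_j - \rho\mu_j w^{(2)}_j/n_\vtheta$ and $\hat{w}^{(2)}_j = w^{(2)}_j - \rho\mu_j w^{(1)}_j/n_\vtheta$. Subtracting these gives the key algebraic identity
\begin{align*}
\hat{w}^{(1)}_j - \hat{w}^{(2)}_j = \Big(1 + \tfrac{\rho\mu_j}{n_\vtheta}\Big) z_j .
\end{align*}
In words, the SAM perturbation multiplies the imbalance by a factor larger than one instead of mixing it with the balanced part. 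Therefore
\begin{align*}
\dot z_j(t) = -\hat{\lambda}(t)\,\mu_j\Big(1 + \tfrac{\rho\mu_j}{n_\vtheta(t)}\Big) z_j(t) =: -\kappa_j(t)\, z_j(t),
\end{align*}
a linear homogeneous ODE whose coefficient is continuous as long as $n_\vtheta>0$ (which holds along the trajectory by positivity of the initialization under Assumption~\ref{ass:dataordering}).

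Second, integrate the ODE:
\begin{align*}
z_j(t) = z_j(0)\exp\Big(-\int_0^t \kappa_j(s)\,ds\Big).
\end{align*}
Since $\kappa_j(s) \ge \hat{\lambda}(s)\mu_j > 0$, it follows that $|z_j(t)| \le |z_j(0)|\exp\big(-\mu_j\int_0^t\hat{\lambda}(s)\,ds\big)$, so $|z_j|$ is strictly decreasing. Moreover, if $z_j(0)=0$, as under Assumption~\ref{ass:initialization}, then $z_j\equiv 0$ by uniqueness.

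In the rescaled flow, where the factor $\hat{\lambda}$ is removed, this gives $|z_j(t)|\le |z_j(0)|e^{-\mu_j t}$, i.e.\ genuine exponential decay at rate at least $\mu_1$ uniformly in $j$.

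The main obstacle is interpreting ``exponentially'' in the original time. There $\hat{\lambda}(t)\to 0$ as the margin grows, so no uniform positive rate exists. I would state the decay as exponential in the accumulated time $\int_0^t\hat{\lambda}$, which is exactly the rescaled time. On any interval where the margin stays bounded, $\hat{\lambda}$ is bounded below, and this yields exponential decay in the original time as well. The remaining point to verify is that $n_\vtheta$ does not vanish in finite time, so that $\kappa_j$ stays well defined. If $n_\vtheta\to 0$, the bound still holds on the maximal existence interval, because $\kappa_j$ only becomes larger there.
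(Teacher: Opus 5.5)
Your proposal is correct and follows the paper's proof essentially verbatim. In both, the identity $\hat{w}^{(1)}_j - \hat{w}^{(2)}_j = (1+\rho\mu_j/n_\vtheta)(w^{(1)}_j - w^{(2)}_j)$ gives the linear ODE $\dot{z}_j = -\hat{\lambda}\mu_j(1+\rho\mu_j/n_\vtheta)\, z_j$, which is then integrated explicitly. Your extra remark that the decay has a uniform exponential rate only in the accumulated (rescaled) time $\int_0^t \hat{\lambda}$, because $\hat{\lambda}\to 0$ in original time, is a correct sharpening of the paper's looser phrasing. One small nit: Assumption~\ref{ass:dataordering} concerns $\vmu$, not the initialization, so $n_\vtheta>0$ simply requires that the weights are not all zero.
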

\begin{proof} 
    Define $\Delta_j := w^{(1)}_j - w^{(2)}_j$. The SAM dynamics yield
    \begin{equation*}
        \dot{w}^{(1)}_j = \hat{\lambda} \mu_j \hat{w}^{(2)}_j, \qquad \dot{w}^{(2)}_j = \hat{\lambda} \mu_j \hat{w}^{(1)}_j.
    \end{equation*}
    The time derivative of $\Delta_j$ is
    \begin{align*}
        \dot{\Delta}_j &= \dot{w}^{(1)}_j - \dot{w}^{(2)}_j \\
        &= \hat{\lambda} \mu_j \hat{w}^{(2)}_j - \hat{\lambda} \mu_j \hat{w}^{(1)}_j \\
        &= \hat{\lambda} \mu_j \lps w^{(2)}_j - \frac{\rho \mu_j w^{(1)}_j}{n_\vtheta} \rps - \hat{\lambda} \mu_j \lps w^{(1)}_j - \frac{\rho \mu_j w^{(2)}_j}{n_\vtheta} \rps \\
        &= -\hat{\lambda} \mu_j \lps 1 + \frac{\rho \mu_j}{n_\vtheta} \rps \Delta_j.
    \end{align*}
    Since $\hat{\lambda}$ is positive and $\mu_j > 0$, it gives exponential decay.
    \begin{equation*}
        \Delta_j(T) = \Delta_j(0) \cdot \exp \lps -\mu_j \int_0^T \hat{\lambda} \lps 1+ \frac{\rho \mu_j}{n_\vtheta} \rps \rd t \rps.
    \end{equation*}
    Hence, the quantity $w^{(1)}_j(t) - w^{(2)}_j(t)$ decays exponentially.
\end{proof}

\begin{proposition}\label{prop:balancedness_preservation}
    Under initialization with $w^{(1)}_j(0) = w^{(2)}_j(0)$ and Assumption~\ref{ass:dataordering}, the equality $w^{(1)}_j(t) = w^{(2)}_j(t)$ is preserved for all $t \geq 0$. Furthermore, the sign of $w^{(1)}_j(t)$ and $w^{(2)}_j(t)$ remains unchanged throughout the dynamics.
\end{proposition}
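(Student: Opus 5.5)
The plan is to use the explicit ODE for the difference $\Delta_j := w^{(1)}_j - w^{(2)}_j$ from the proof of \cref{lem:balancedness}, and then an analogous linear ODE for the sum. By that computation,
$$
\dot\Delta_j(t) = -\hat\lambda(t)\,\mu_j\Bigl(1+\tfrac{\rho\mu_j}{n_\vtheta(t)}\Bigr)\Delta_j(t).
$$
This is a scalar linear ODE in $\Delta_j$ whose coefficient $c_j(t):=\hat\lambda(t)\mu_j(1+\rho\mu_j/n_\vtheta(t))$ is continuous in $t$ wherever the flow is defined and $n_\vtheta(t)>0$. Its solution is $\Delta_j(t)=\Delta_j(0)\exp(-\int_0^t c_j)$. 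Since $\Delta_j(0)=0$, we get $\Delta_j(t)\equiv 0$, which gives $w^{(1)}_j(t)=w^{(2)}_j(t)$.

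To make this rigorous without circularity, I would argue as follows. The right-hand side of the full SAM flow is locally Lipschitz in $\vtheta$ on the open set $\{n_\vtheta>0\}$, so there is a unique maximal solution there. Next, the map $(\vw^{(1)},\vw^{(2)})\mapsto(\vw^{(2)},\vw^{(1)})$ leaves the vector field equivariant. Hence the swapped trajectory is also a solution with the same initial condition, and uniqueness forces the two to coincide. This reproduces $\Delta_j\equiv0$ and is cleaner than the integrating-factor formula.

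For the sign claim, I would set $a_j:=w^{(1)}_j=w^{(2)}_j$. On the balanced manifold the flow reads
$$
\dot a_j=\hat\lambda\,\mu_j\Bigl(1-\tfrac{\rho\mu_j}{n_\vtheta}\Bigr)a_j.
$$
This is again linear in $a_j$ with a continuous coefficient, so $a_j(t)=a_j(0)\exp\bigl(\int_0^t\hat\lambda\mu_j(1-\rho\mu_j/n_\vtheta)\,ds\bigr)$. The exponential is strictly positive, so $\operatorname{sign}a_j(t)=\operatorname{sign}a_j(0)$ for all $t$, and in particular $a_j$ never crosses zero in finite time.

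The main obstacle is the well-posedness of $n_\vtheta$, which must stay bounded away from $0$ on compact time intervals so that the coefficients are finite and continuous. I would handle this using the explicit exponential representation itself. On any finite interval where the solution exists, $\hat\lambda\le1$, and $1/n_\vtheta$ is finite as long as some $a_k\neq0$. Since every coordinate is multiplied by a finite positive exponential factor, $n_\vtheta(t)>0$ persists by a continuation argument: take the supremal time of positivity and derive a contradiction at that time. Once this is in place, both claims follow for all $t\ge0$ in the flow's existence interval.
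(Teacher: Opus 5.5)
Your core argument is the same as the paper's. The scalar linear ODE $\dot\Delta_j=-\hat\lambda\mu_j(1+\rho\mu_j/n_\vtheta)\Delta_j$ from \cref{lem:balancedness}, with $\Delta_j(0)=0$, forces $\Delta_j\equiv 0$. On the balanced set, $w^{(1)}_j$ then satisfies a linear ODE whose exponential solution fixes its sign. Your swap-equivariance-plus-uniqueness argument is a valid alternative; it is how the paper handles depth $L$. If only coordinate $j$ is assumed balanced, swap just the $j$-th entries of the two layers, which also leaves $\gL$ invariant.

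The step that fails is your continuation argument, which is meant to show that $n_\vtheta>0$ persists.
\begin{itemize}
\item At the supremal time $t^\ast$ there is no contradiction to derive. Each factor $\exp\bigl(\int_0^t\hat\lambda\mu_k(1-\rho\mu_k/n_\vtheta)\,ds\bigr)$ is finite and positive for $t<t^\ast$, but it may tend to $0$ as $t\uparrow t^\ast$, precisely because $\int_0^t ds/n_\vtheta$ may diverge.
\item This really happens. On the balanced set, the rescaled flow has $\dot\beta_k=2\mu_k\beta_k(1-\rho\mu_k/n_\vtheta)$ and $n_\vtheta^2=2\sum_k\mu_k^2\beta_k$, which give
\[
\tfrac{d}{dt}n_\vtheta \;\le\; \mu_d\, n_\vtheta-\rho\mu_1^2 .
\]
So if $n_\vtheta(0)<\rho\mu_1^2/\mu_d$, the trajectory reaches the origin in finite time, and there the $\ell_2$ perturbation is undefined.
\item The original flow behaves the same way: it differs only by a time change, and $\hat\lambda$ stays bounded away from $0$ near the origin. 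This is also the finite-time collapse $I(t)\to\infty$ behind \cref{thm:regime13}(i).
\end{itemize}

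So the maximal existence time $T_{\max}$ can be finite, and no argument can rule that out. Drop the continuation step and state both conclusions on the maximal interval $[0,T_{\max})$ of the solution in $\{n_\vtheta>0\}$. Your last sentence already does this, and the paper's proof implicitly assumes it.
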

\begin{proof}
    With $w^{(1)}_j(0) = w^{(2)}_j(0)$, we have $\Delta_j(0) = w^{(1)}_j(0) - w^{(2)}_j(0) = 0$. By Lemma~\ref{lem:balancedness}, $\Delta_j(t) = 0$ for all $t \geq 0$. Given this balancedness, each coordinate evolves multiplicatively according to
    \begin{equation*}
        \dot{w}^{(1)}_j = \hat{\lambda} \mu_j \hat{w}^{(2)}_j = \hat{\lambda} \mu_j \lps w^{(1)}_j - \frac{\rho \mu_j w^{(1)}_j}{n_\vtheta} \rps = \hat{\lambda} \mu_j \lps 1 - \frac{\rho \mu_j}{n_\vtheta} \rps w^{(1)}_j.
    \end{equation*}
    This differential equation has the unique solution
    \begin{equation*}
        w^{(1)}_j(T) = w^{(1)}_j(0) \cdot \exp \lps \mu_j \cdot \int_0^T \hat{\lambda}(t) \lps 1 - \frac{\rho \mu_j}{n_\vtheta} \rps \rd t \rps.
    \end{equation*}
    Since the exponential function is always positive, $w^{(1)}_j(t)$ and $w^{(2)}_j(t)$ maintain the same sign as their initial values throughout the dynamics.
\end{proof}

\subsubsection{Proof of Lemma~\ref{lem:beta_dynamics}}\label{app:beta_dynamics_proof}
    
We begin by restating Lemma~\ref{lem:beta_dynamics}.

\lembetadynamics*

\begin{proof}
Under Assumption~\ref{ass:initialization} and Assumption~\ref{ass:dataordering}, the Proposition~\ref{prop:balancedness_preservation} holds, which ensures that $w^{(1)}_j = w^{(2)}_j = \sqrt{\beta_j}$ for all $t \geq 0$. So we have
\begin{equation*}
    \lps w^{(1)}_j \rps^2 + \lps w^{(2)}_j \rps^2 = 2\beta_j, \qquad n_\vtheta^2 = 2\sum_{j=1}^d \mu_j^2 \beta_j.
\end{equation*}
The evolution equation for $\beta_j$ is
\begin{align}
    \dot{\beta_j} &= \dot{w}^{(1)}_j w^{(2)}_j + w^{(1)}_j \dot{w}^{(2)}_j \nonumber \\
    &= 2 \hat{\lambda} \mu_j \beta_j \lps 1 - \frac{\rho \mu_j}{n_\vtheta} \rps. \label{eq:beta_dot}
\end{align}
This yields
\begin{equation*}
    \beta_j(T) = \beta_j(0) \cdot \exp \lps 2 \mu_j \int_0^T \hat{\lambda} \lps 1 - \frac{\rho \mu_j}{n_\vtheta} \rps \rd t \rps.
\end{equation*}
Let $r_j := 2 \hat{\lambda} \mu_j \lps 1 - \frac{\rho \mu_j}{n_\vtheta} \rps$. When $r_j > 0$, $\beta_j$ grows locally exponentially. Otherwise, it decays locally exponentially. The key insight is that each $\beta_j$'s growth rate depends on the interaction between the gradient magnitude $\hat{\lambda}$ and the perturbation term $\frac{\rho \mu_j}{n_\vtheta}$. This interaction drives SAM's implicit bias.
\end{proof}

\subsubsection{Preliminary Analysis for \texorpdfstring{$m_{\textup{c}}(t)$}{mc (t)} Trajectory Analysis}\label{app:m_star_trajectory_pre_setting}

Before proving Theorem~\ref{thm:regime_classification}, we establish some preliminary results that will be used in the proof.

\begin{restatable}{lemma}{lem_mstar_dot_expression}\label{lem:mstar_dot_expression}
    Under Assumption~\ref{ass:initialization} and Assumption~\ref{ass:dataordering}, the time derivative of $m_{\textup{c}}(t)$ is given by
    \begin{equation*}
        \dot{m_{\textup{c}}} = \hat{\lambda}(t) M_1(t) \lps m_{\textup{c}}(t) - m_{\text{H}}(t) \rps.
    \end{equation*}
\end{restatable}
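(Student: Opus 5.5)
We differentiate $m_{\textup{c}} = n_\vtheta/(2\rho)$ directly, using the $\beta$-dynamics already obtained in the proof of \cref{lem:beta_dynamics}. By \cref{prop:balancedness_preservation} the weights stay balanced, so $n_\vtheta^2 = 2\sum_j \mu_j^2\beta_j$ holds for all $t\ge 0$. Write $S(t):=\sum_k \mu_k^2\beta_k(t)$, so that $n_\vtheta = \sqrt{2S}$ and $m_{\textup{c}} = \sqrt{2S}/(2\rho)$.

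\textbf{Step 1: chain rule.} We have
\[
\dot m_{\textup{c}} = \frac{\dot S}{2\rho\sqrt{2S}} = m_{\textup{c}}\,\frac{\dot S}{2S}.
\]

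\textbf{Step 2: compute $\dot S$.} Substitute \eqref{eq:beta_dot}, i.e. $\dot\beta_j = 2\hat\lambda\mu_j\beta_j\bigl(1-\rho\mu_j/n_\vtheta\bigr)$, into $\dot S=\sum_j\mu_j^2\dot\beta_j$. This gives
\[
\dot S = 2\hat\lambda\sum_j \mu_j^3\beta_j - \frac{2\hat\lambda\rho}{n_\vtheta}\sum_j\mu_j^4\beta_j .
\]
Dividing by $S$ and recalling $p_j=\mu_j^2\beta_j/S$, we get $\sum_j\mu_j^3\beta_j/S = M_1$ and $\sum_j\mu_j^4\beta_j/S = M_2$. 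Hence
\[
\frac{\dot S}{2S} = \hat\lambda\Bigl(M_1 - \frac{\rho}{n_\vtheta}M_2\Bigr).
\]

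\textbf{Step 3: rewrite in terms of $m_{\textup{c}}$.} Use $\rho/n_\vtheta = 1/(2m_{\textup{c}})$. Then
\[
\dot m_{\textup{c}} = \hat\lambda\, m_{\textup{c}}\Bigl(M_1 - \frac{M_2}{2m_{\textup{c}}}\Bigr) = \hat\lambda\Bigl(M_1 m_{\textup{c}} - \frac{M_2}{2}\Bigr) = \hat\lambda M_1\bigl(m_{\textup{c}} - m_{\text{H}}\bigr),
\]
since $m_{\text{H}} = M_2/(2M_1)$ and $M_1>0$.

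For the original (unrescaled) flow, $\hat\lambda$ is kept throughout, which matches the stated formula. For the rescaled flow one simply sets $\hat\lambda=1$.

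No step is a real obstacle. The only care needed is to justify that $\beta_j>0$, so that $S>0$ and $M_1>0$ and the division by $S$ is legitimate. This follows from \cref{ass:initialization} together with the sign preservation in \cref{prop:balancedness_preservation}.
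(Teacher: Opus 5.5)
Your proof is correct and follows the paper's argument essentially verbatim: the paper also differentiates $m_{\textup{c}} = \sqrt{S}/(2\rho)$, substitutes the $\beta$-dynamics into $\dot S$, and rewrites the resulting sums as $M_1$ and $M_2$. The only difference is that the paper normalizes $S := n_\vtheta^2$ rather than your $S=\sum_k \mu_k^2\beta_k$, which just changes constants; your remark that $\beta_j>0$ ensures $S>0$ and $M_1>0$ makes explicit a point the paper leaves implicit.
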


\begin{proof}
    Recall that $m_{\text{H}} = \frac{M_2}{2M_1}$, where
    \begin{equation} \label{eq:m_r_definition}
        M_r := \sum_{j=1}^d p_j \mu_j^r, \qquad p_j := \frac{\mu_j^2 \beta_j}{\sum_{k=1}^d \mu_k^2 \beta_k}.
    \end{equation}
    Substituting the definition of $p_j$, we obtain
    \begin{equation*}
        M_2 = \frac{\sum_j \mu_j^4 \beta_j}{\sum_k \mu_k^2 \beta_k} = \frac{2 \sum_j \mu_j^4 \beta_j}{n_\vtheta^2}, \qquad
        M_1 = \frac{\sum_j \mu_j^3 \beta_j}{\sum_k \mu_k^2 \beta_k} = \frac{2 \sum_j \mu_j^3 \beta_j}{n_\vtheta^2}.
    \end{equation*}
    Since $\mu_1 < \cdots < \mu_d$ and $p_j \geq 0$ with $\sum_j p_j = 1$, we have $\frac{\mu_1}{2} \leq m_{\text{H}} = \frac{M_2}{2M_1} \leq \frac{\mu_d}{2}$.
    We define a new expression for $m_{\textup{c}}$.
    \begin{equation}\label{eq:mstar_definition_s}
        m_{\textup{c}}(t) = \frac{\sqrt{S}}{2\rho}, \qquad \text{where } S := n_\vtheta^2.
    \end{equation}
    Taking the time derivative of $S$, we have
    \begin{align*}
        \dot{S} &= 2\sum_{j=1}^d \mu_j^2 \dot{\beta}_j.
    \end{align*}
    From Lemma~\ref{lem:beta_dynamics}, we have $\dot{\beta}_j = r_j \beta_j$ where $r_j = 2 \hat{\lambda} \cdot \mu_j \lps 1 - \frac{\rho \mu_j}{n_\vtheta} \rps = 2\hat{\lambda} \cdot \lps \mu_j - \frac{\mu_j^2}{2m_{\textup{c}}} \rps$. Substituting this into the expression for $\dot{S}$, we get
    \begin{align*}
        \dot{S} &= 2\sum_{j=1}^d \mu_j^2 \cdot 2\hat{\lambda} \cdot \lps \mu_j - \frac{\mu_j^2}{2m_{\textup{c}}} \rps \cdot \beta_j \\
        &= 4\hat{\lambda} \sum_{j=1}^d \mu_j^2 \beta_j \lps \mu_j - \frac{\mu_j^2}{2m_{\textup{c}}} \rps \\
        &= 4\hat{\lambda} \sum_{j=1}^d \lps \mu_j^3 \beta_j - \frac{\mu_j^4 \beta_j}{2m_{\textup{c}}} \rps.
    \end{align*}
    Recalling that $M_1 = \frac{2\sum_{j=1}^d \mu_j^3 \beta_j}{S}$ and $M_2 = \frac{2\sum_{j=1}^d \mu_j^4 \beta_j}{S}$, we can rewrite the sums as
    \begin{equation*}
        \sum_{j=1}^d \mu_j^3 \beta_j = \frac{M_1 S}{2}, \qquad \sum_{j=1}^d \mu_j^4 \beta_j = \frac{M_2 S}{2}.
    \end{equation*}
    Therefore, we have
    \begin{align*}
        \dot{S} &= 4\hat{\lambda} \lps \frac{M_1 S}{2} - \frac{M_2 S}{2 \cdot 2m_{\textup{c}}} \rps \\
        &= 2\hat{\lambda} S \lps M_1 - \frac{M_2}{2m_{\textup{c}}} \rps.
    \end{align*}
    Since $m_{\textup{c}} = \frac{\sqrt{S}}{2\rho}$, we have:
    \begin{equation*}
        \dot{m_{\textup{c}}} = \frac{1}{2\rho} \cdot \frac{\dot{S}}{2\sqrt{S}} = \frac{\dot{S}}{4\rho\sqrt{S}}.
    \end{equation*}
    Substituting our expression for $\dot{S}$:
    \begin{align*}
        \dot{m_{\textup{c}}} &= \frac{2\hat{\lambda} S \lps M_1 - \frac{M_2}{2m_{\textup{c}}} \rps}{4\rho\sqrt{S}} \\
        &= \frac{\hat{\lambda} \sqrt{S}}{2\rho} \lps M_1 - \frac{M_2}{2m_{\textup{c}}} \rps \\
        &= \hat{\lambda} m_{\textup{c}} \lps M_1 - \frac{M_2}{2m_{\textup{c}}} \rps \\
        &= \hat{\lambda} M_1 \lps m_{\textup{c}} - \frac{M_2}{2M_1} \rps \\
        &= \hat{\lambda} M_1 \lps m_{\textup{c}} - m_{\text{H}} \rps.
    \end{align*}
\end{proof}

Next, we derive the time derivative of $m_{\text{H}}$.

\begin{restatable}{lemma}{lem_mh_dot_expression}\label{lem:mh_dot_expression}
    Under Assumption~\ref{ass:initialization} and Assumption~\ref{ass:dataordering}, the time derivative of $m_{\text{H}}$ is given by
    \begin{equation*}
        \dot{m_{\text{H}}} = \frac{\hat{\lambda}}{2 (M_1)^2 m_{\textup{c}}} \lps 2m_{\textup{c}} \Gamma_1 - \Gamma_2 \rps,
    \end{equation*}
where $\Gamma_1 := M_1 M_3 - M_2^2$ and $\Gamma_2 := M_1 M_4 - M_2 M_3$.
\end{restatable}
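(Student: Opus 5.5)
The plan is a direct computation. We differentiate the normalized weights $p_j$ first, then the moments $M_r$, and finally the ratio $m_{\text{H}}=M_2/(2M_1)$, using only the coordinate dynamics from Lemma~\ref{lem:beta_dynamics} as they appear in its proof. There $\dot\beta_j=r_j\beta_j$ with
$$r_j=2\hat\lambda\Bigl(\mu_j-\tfrac{\mu_j^2}{2m_{\textup{c}}}\Bigr),$$
which is the form already used in the proof of Lemma~\ref{lem:mstar_dot_expression}. Proposition~\ref{prop:balancedness_preservation} keeps every $\beta_j>0$, so all quantities are smooth and $M_1>0$.

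\textbf{Step 1 (weights).} Since $p_j=\mu_j^2\beta_j/\sum_k\mu_k^2\beta_k$ and the $\mu_j$ are constant, logarithmic differentiation gives
$$\dot p_j=p_j\,(r_j-\bar r),\qquad \bar r:=\sum_k p_k r_k .$$
Writing $r_j$ as above, $\bar r$ becomes $2\hat\lambda\bigl(M_1-\tfrac{M_2}{2m_{\textup{c}}}\bigr)$.

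\textbf{Step 2 (moments).} We differentiate $M_r=\sum_j\mu_j^r p_j$ term by term, substituting Step~1:
$$\dot M_r=2\hat\lambda\Bigl[M_{r+1}-\tfrac{M_{r+2}}{2m_{\textup{c}}}-M_r\Bigl(M_1-\tfrac{M_2}{2m_{\textup{c}}}\Bigr)\Bigr].$$
We only need this for $r=1$ and $r=2$, which brings in $M_3$ and $M_4$.

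\textbf{Step 3 (quotient rule).} We compute
$$\dot m_{\text{H}}=\frac{\dot M_2 M_1-M_2\dot M_1}{2M_1^2}.$$
Substituting Step~2, the terms containing the common factor $\bigl(M_1-\tfrac{M_2}{2m_{\textup{c}}}\bigr)$ contribute $-M_1M_2(\cdot)+M_2M_1(\cdot)$ and cancel exactly. What remains is
$$\dot M_2M_1-M_2\dot M_1=2\hat\lambda\Bigl[(M_1M_3-M_2^2)-\tfrac{1}{2m_{\textup{c}}}(M_1M_4-M_2M_3)\Bigr]=2\hat\lambda\Bigl(\Gamma_1-\tfrac{\Gamma_2}{2m_{\textup{c}}}\Bigr).$$
Dividing by $2M_1^2$ and putting both terms over the common denominator $2m_{\textup{c}}$ yields the claimed expression
$$\dot m_{\text{H}}=\frac{\hat\lambda}{2M_1^2m_{\textup{c}}}\bigl(2m_{\textup{c}}\Gamma_1-\Gamma_2\bigr).$$

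The only place requiring care is the bookkeeping in Step~3: one has to check that the cross terms coming from $\bar r$ cancel rather than survive. This is the main, though mild, obstacle. Everything else is routine, since positivity of $\beta_j$ and $M_1$ guarantees that no division by zero occurs.
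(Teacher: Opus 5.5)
Your proposal is correct and is essentially the paper's computation: both use $\dot p_j = p_j(r_j-\bar r)$ with $r_j = 2\hat\lambda\bigl(\mu_j - \mu_j^2/(2m_{\textup{c}})\bigr)$, combined with the quotient rule for $m_{\text{H}} = M_2/(2M_1)$. The only difference is bookkeeping. The paper writes $\dot m_{\text{H}} = \frac{1}{2M_1}\sum_j \dot p_j(\mu_j^2 - 2m_{\text{H}}\mu_j)$ and evaluates two centered sums equal to $\Gamma_1/M_1$ and $\Gamma_2/M_1$, whereas you first derive a closed form for $\dot M_r$ and then let the $\bar r$ cross terms cancel.
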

\begin{proof}
    Starting from $m_{\text{H}} = \frac{M_2}{2M_1}$, we have
    \begin{align*}
        \dot{m_{\text{H}}} &= \frac{\dot{M_2} M_1 - M_2 \dot{M_1}}{2 (M_1)^2} \\
        &= \frac{1}{2M_1} \lps \dot{M_2} - \frac{M_2}{M_1}\dot{M_1} \rps\\
        &= \frac{1}{2M_1} \lps \sum_{j=1}^d \dot{p}_j \mu_j^2 - \frac{M_2}{M_1} \cdot \sum_{j=1}^d \dot{p}_j \mu_j \rps\\
        &= \frac{1}{2M_1} \sum_{j=1}^d \dot{p}_j \lps \mu_j^2 - 2m_{\text{H}} \mu_j \rps.
    \end{align*}
    Since $\dot{\beta}_j = r_j \beta_j$ where $r_j = 2 \hat{\lambda} \lps \mu_j - \frac{\mu_j^2}{2m_{\textup{c}}} \rps$, we can compute
    \begin{align*}
        \dot{p}_j &= \frac{(\mu_j^2 \beta_j) \cdot r_j \cdot \lps\sum_{k=1}^d \mu_k^2 \beta_k\rps - (\mu_j^2 \beta_j) \cdot \lps\sum_{k=1}^d \mu_k^2 \beta_k r_k\rps}{\lps\sum_{k=1}^d \mu_k^2 \beta_k\rps^2}\\
        &= p_j \lps r_j - \sum_{k=1}^d p_k r_k \rps \\
        &= p_j \cdot 2\hat{\lambda} \lps \lps \mu_j - \frac{\mu_j^2}{2m_{\textup{c}}} \rps - \sum_{k=1}^d p_k \cdot \lps \mu_k - \frac{\mu_k^2}{2m_{\textup{c}}} \rps \rps \\
        &= p_j \cdot 2\hat{\lambda} \lps \lps \mu_j - M_1 \rps - \frac{1}{2m_{\textup{c}}}\lps \mu_j^2 - M_2 \rps \rps.
    \end{align*}
    Substituting this into the expression for $\dot{m_{\text{H}}}$, we have
    \begin{align*}
        \dot{m_{\text{H}}} &= \frac{\hat{\lambda}}{M_1} \sum_{j=1}^d p_j \lps (\mu_j - M_1) - \frac{1}{2m_{\textup{c}}}(\mu_j^2 - M_2) \rps (\mu_j^2 - 2m_{\text{H}} \mu_j).
    \end{align*}
    We split the sum into two components:
    \begin{align*}
        \text{First term:} \quad C_1& = \sum_j p_j \lps \mu_j - M_1 \rps \lps \mu_j^2 - 2m_{\text{H}} \mu_j \rps, \\
        \text{Second term:} \quad C_2& = \sum_j p_j \lps \mu_j^2 - M_2 \rps \lps \mu_j^2 - 2m_{\text{H}} \mu_j \rps.
    \end{align*}
    For the first term,
    \begin{align*}
        C_1 &= \sum_j p_j \mu_j^3 - 2m_{\text{H}} \sum_j p_j \mu_j^2 - M_1 \sum_j p_j \mu_j^2 + 2m_{\text{H}} M_1 \sum_j p_j \mu_j \\
        &= M_3 - 2m_{\text{H}} M_2 - M_1 M_2 + 2m_{\text{H}} M_1^2 \\
        &= M_3 - \frac{M_2^2}{M_1} = \frac{M_1 M_3 - M_2^2}{M_1} = \frac{\Gamma_1}{M_1}.
    \end{align*}
    For the second term,
    \begin{align*}
        C_2 &= \sum_j p_j \mu_j^4 - 2m_{\text{H}} \sum_j p_j \mu_j^3 - M_2 \sum_j p_j \mu_j^2 + 2m_{\text{H}} M_2 \sum_j p_j \mu_j \\
        &= M_4 - 2m_{\text{H}} M_3 - M_2^2 + 2m_{\text{H}} M_1 M_2 \\
        &= M_4 - \frac{M_2 M_3}{M_1} = \frac{M_1 M_4 - M_2 M_3}{M_1} = \frac{\Gamma_2}{M_1}.
    \end{align*}
    Therefore, we have
    \begin{align*}
        \dot{m_{\text{H}}} &= \frac{\hat{\lambda}}{M_1} \sum_{j=1}^d p_j \cdot \lps \lps \mu_j - M_1 \rps - \frac{1}{2m_{\textup{c}}}\lps \mu_j^2 - M_2 \rps \rps \lps \mu_j^2 - 2m_{\text{H}} \mu_j \rps \\
        &= \frac{\hat{\lambda}}{M_1} \lps \frac{\Gamma_1}{M_1} - \frac{\Gamma_2}{2m_{\textup{c}} M_1} \rps \\
        &= \frac{\hat{\lambda}}{2\lps M_1 \rps^2 m_{\textup{c}}} \lps 2m_{\textup{c}} \Gamma_1 - \Gamma_2 \rps.
    \end{align*}
\end{proof}

Next, we establish a key inequalities involving the threshold $m_{\text{H}}$.

\begin{proposition}\label{prop:gamma_nonnegative}
    $\Gamma_1 \geq 0$ and $\Gamma_2 \geq 0$.
\end{proposition}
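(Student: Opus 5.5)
We need to show $\Gamma_1 = M_1M_3 - M_2^2 \ge 0$ and $\Gamma_2 = M_1M_4 - M_2M_3 \ge 0$, where $M_k=\sum_j p_j\mu_j^k$. The weights satisfy $p_j\ge 0$ and $\sum_j p_j=1$, and by Assumption~\ref{ass:dataordering} every $\mu_j>0$.

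The plan is to treat $M_k$ as moments of a discrete random variable $X$ with $\Pr(X=\mu_j)=p_j$, so that $M_k=\E[X^k]$. Both inequalities then come from Cauchy--Schwarz, or equivalently from symmetrizing double sums.

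For $\Gamma_1$, we apply Cauchy--Schwarz to $X^{1/2}$ and $X^{3/2}$, which is legitimate because $X>0$:
$$M_2^2=\E[X^{1/2}X^{3/2}]^2\le \E[X]\,\E[X^3]=M_1M_3.$$
Alternatively, we can write the symmetrization explicitly:
$$\Gamma_1=\sum_{j,k}p_jp_k(\mu_j\mu_k^3-\mu_j^2\mu_k^2)=\tfrac12\sum_{j,k}p_jp_k\,\mu_j\mu_k(\mu_k-\mu_j)^2\ge0.$$

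For $\Gamma_2$, we use the same symmetrization:
$$\Gamma_2=\sum_{j,k}p_jp_k(\mu_j\mu_k^4-\mu_j^2\mu_k^3)=\sum_{j,k}p_jp_k\,\mu_j\mu_k^3(\mu_k-\mu_j).$$
Swapping $j$ and $k$ and averaging the two expressions gives
$$\Gamma_2=\tfrac12\sum_{j,k}p_jp_k\,\mu_j\mu_k(\mu_k-\mu_j)(\mu_k^2-\mu_j^2)=\tfrac12\sum_{j,k}p_jp_k\,\mu_j\mu_k(\mu_k-\mu_j)^2(\mu_k+\mu_j)\ge0,$$
and each term is nonnegative since $\mu_j>0$.

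The step needing the most care is the symmetrization for $\Gamma_2$. Positivity of $\mu_j$ is essential here, since it makes the factors $\mu_j\mu_k$ and $\mu_j+\mu_k$ nonnegative. Nonnegativity of $p_j$ holds because $\beta_j(t)>0$ for all $t$, which follows from Proposition~\ref{prop:balancedness_preservation} together with $\beta_j(0)=\alpha^2>0$.

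Finally, for later use it is worth noting that equality holds only when $p$ is a point mass. Since every $p_j>0$ and $d\ge2$ with distinct $\mu_j$, both inequalities are in fact strict.
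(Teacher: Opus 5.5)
Your proof is correct. For $\Gamma_1$ it matches the paper's argument, which applies Cauchy--Schwarz to $\mu^{1/2}$ and $\mu^{3/2}$.

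For $\Gamma_2$ you take a different route. The paper applies Cauchy--Schwarz a second time, to $\mu$ and $\mu^2$, giving $M_3^2\le M_2M_4$. It then multiplies this by $M_2^2\le M_1M_3$ and cancels $M_2M_3>0$, a step it leaves implicit, to get $M_2M_3\le M_1M_4$. In effect the paper uses log-convexity of the moment sequence, $M_2/M_1\le M_3/M_2\le M_4/M_3$. You instead write $\Gamma_2$ as the symmetrized sum $\tfrac12\sum_{j,k}p_jp_k\,\mu_j\mu_k(\mu_j+\mu_k)(\mu_k-\mu_j)^2$, and this identity checks out.

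The paper's route is shorter and uses only a standard inequality. Yours needs no division and makes the equality case explicit: since the $\mu_j$ are distinct, $\Gamma_1$ and $\Gamma_2$ vanish exactly when $p$ is a point mass. The paper relies on this strictness later, for instance ``$\Gamma_1>0$ for non-collapsed $p$'' in the transversality lemma and in showing the ceiling-first set is non-empty. There it appeals only to the equality case of Cauchy--Schwarz, whereas your identity gives the strictness directly.

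One minor remark: the non-strict inequalities hold for any probability vector $p$ and any $\mu_j>0$. The appeal to the dynamics, i.e.\ $\beta_j(t)>0$, is needed only for the strict version.
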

\begin{proof}
    $\Gamma_1$ and $\Gamma_2$ are defined in Lemma~\ref{lem:mh_dot_expression}. $M_r$ and $p_j$ are defined in Equation~\ref{eq:m_r_definition}. Let $M_r := \sum_{j=1}^d p_j\mu_j^r = \E_{\vp} \left [ \mu_j^r \right ]$.
    By Cauchy–Schwarz with $X=\vmu^{1/2}$ and $Y=\vmu^{3/2}$,
    \begin{equation*}
        \bigl(\E_{\vp} \left [ \vmu^2 \right ]\bigr)^2 \leq \E_{\vp} \left [ \vmu \right ]\;\E_{\vp} \left [ \vmu^3 \right ]
        \quad\Longrightarrow\quad
        \Gamma_1 = M_1 M_3 - M_2^2 \geq 0.
    \end{equation*}
    By Cauchy–Schwarz with $X=\vmu$ and $Y=\vmu^2$,
    \begin{equation*}
        \bigl(\E_{\vp} \left [ \vmu^3 \right ]\bigr)^2 \leq \E_{\vp} \left [ \vmu^2 \right ]\;\E_{\vp} \left [ \vmu^4 \right ].
    \end{equation*}
    Multiplying the two inequalities gives
    \begin{equation*}
        \E_{\vp} \left [ \vmu^2 \right ]\;\E_{\vp} \left [ \vmu^3 \right ] \leq \E_{\vp} \left [ \vmu \right ]\;\E_{\vp} \left [ \vmu^4 \right ]
        \quad\Longrightarrow\quad
        \Gamma_2 = M_1 M_4 - M_2 M_3 \geq 0.
    \end{equation*}
\end{proof}

\begin{proposition}\label{prop:m_d_m_h_inequality}
    Let $m_{\text{D}} := \frac{\Gamma_2}{2\Gamma_1}$. We have $m_{\text{D}} \geq m_{\text{H}}$ for all $t \geq 0$.
\end{proposition}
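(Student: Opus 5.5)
We must show $\Gamma_2/(2\Gamma_1)\ge M_2/(2M_1)$, i.e. (assuming $\Gamma_1>0$) $M_1\Gamma_2\ge M_2\Gamma_1$. Expanding both sides,
\begin{align*}
M_1\Gamma_2 - M_2\Gamma_1 = M_1^2M_4 - M_1M_2M_3 - M_1M_2M_3 + M_2^3 = M_1^2M_4 - 2M_1M_2M_3 + M_2^3 .
\end{align*}
So the plan is to prove that this cubic expression in the moments of $\vp$ is nonnegative. The degenerate case $\Gamma_1=0$ is handled separately: $\Gamma_1=0$ forces equality in Cauchy--Schwarz, so $\vp$ is supported on a single $\mu_j$. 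Then $\Gamma_2=0$ as well, and $m_{\text{D}}$ is interpreted by continuity or convention. In fact, under Assumption~\ref{ass:initialization} every $\beta_j>0$ by Proposition~\ref{prop:balancedness_preservation}, so all $p_j>0$. With $d\ge2$ we therefore always have $\Gamma_1>0$, and I would note this at the start.

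The key step is to recognise the expression as a variance. Change measure to $q_j := p_j\mu_j/M_1$, which is a probability vector since $\mu_j>0$. Writing $N_r:=\E_{\vq}[\mu^r] = M_{r+1}/M_1$, the identity becomes
\begin{align*}
M_1^2M_4 - 2M_1M_2M_3 + M_2^3 = M_1^3\bigl(N_3 - 2N_1N_2 + N_1^3\bigr).
\end{align*}
Now observe that $N_3-2N_1N_2+N_1^3 = \E_{\vq}\bigl[\mu(\mu-N_1)^2\bigr]$. To check this, expand $\E_{\vq}[\mu^3 - 2N_1\mu^2 + N_1^2\mu] = N_3 - 2N_1N_2 + N_1^3$. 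Since $\mu_j>0$, the right-hand side is nonnegative. Hence $M_1\Gamma_2\ge M_2\Gamma_1$, and dividing by $2M_1\Gamma_1>0$ gives $m_{\text{D}}\ge m_{\text{H}}$ for every $t$.

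The only real obstacle is spotting the right reweighting. Without it, the cubic $M_1^2M_4-2M_1M_2M_3+M_2^3$ looks unstructured and resists a direct Cauchy--Schwarz argument. If the $\vq$-trick failed, the fallback would be a symmetric double-sum expansion over pairs $(j,k)$ with weights $p_jp_k$. Equality holds iff $\vq$ is a point mass, which is excluded when $d\ge 2$ and all $p_j>0$. So in fact the inequality is strict, and this may be useful in the regime analysis that follows.
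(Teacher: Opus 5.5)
Your proof is correct and is essentially the paper's argument in reweighted form. Since $\E_{q}[\mu(\mu-N_1)^2]=\frac{1}{M_1}\E_{p}\bigl[(\mu^2-a\mu)^2\bigr]$ with $a=M_2/M_1=N_1$, your identity is exactly the paper's observation that $M_1^2M_4-2M_1M_2M_3+M_2^3=M_1^2\,\E_{p}\bigl[(\mu^2-a\mu)^2\bigr]\ge 0$. Your explicit check that $\Gamma_1>0$ (so dividing by $\Gamma_1$ is legitimate) and your characterization of the equality case are small improvements: the paper divides after only noting $\Gamma_1\ge 0$, and later invokes the equality case without proof.
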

\begin{proof}
    We use same notation as in the proof of Proposition~\ref{prop:gamma_nonnegative}. Let $a := \frac{M_2}{M_1}$. $\Gamma_1 \geq 0$ and $\Gamma_2 \geq 0$ by Proposition~\ref{prop:gamma_nonnegative}. Then we have
    \begin{align*}
        \E_p\left[(\vmu^2 - a\vmu)^2\right]
        &= \E_p[\vmu^4] - 2a\,\E_p[\vmu^3] + a^2\,\E_p[\vmu^2] \\
        &= M_4 - 2a M_3 + a^2 M_2.
    \end{align*}
    Substituting $a=\frac{M_2}{M_1}$ and multiplying by $M_1^2$ gives
    \begin{align*}
      M_1^2 \E_p\left[(\vmu^2 - \tfrac{M_2}{M_1}\vmu)^2\right]
      = M_1^2 M_4 - 2 M_1 M_2 M_3 + M_2^3.
    \end{align*}
    Since an expectation of a square is nonnegative and $M_1^2\geq 0$, it follows that
    \begin{align*}
      M_1^2 M_4 - 2 M_1 M_2 M_3 + M_2^3 \geq 0.
    \end{align*}
    Therefore, we have
    \begin{equation*}
        \frac{\Gamma_2}{2\Gamma_1} \geq \frac{M_2}{2M_1} = m_{\text{H}}.
    \end{equation*}
\end{proof}

\subsubsection{Proof of Theorem~\ref{thm:regime_classification}}\label{app:regime_classification_proof}

We begin by restating Theorem~\ref{thm:regime_classification} for convenience.

\thmregimeclassification*

\begin{proof}
    Recall from \cref{sec:sam_flow_trajectory_proofs} that
    \begin{align*}
        p_j &:= \frac{\mu_j^2 \beta_j}{\sum_{k=1}^d \mu_k^2 \beta_k}, \quad M_r := \sum_{j=1}^d p_j \mu_j^r, \\
        m_{\text{H}} &:= \frac{M_2}{2 M_1}, \quad m_{\text{L}} := \frac{\mu_1}{2}, \quad \hat{\lambda} := |\ell'(\hat{s})| > 0.
    \end{align*}
    From \cref{lem:mh_dot_expression}, we further have
    \begin{align*}
        \Gamma_1 &:= M_1M_3 - M_2^2, \quad \Gamma_2 := M_1M_4 - M_2M_3, \quad m_{\text{D}} := \frac{\Gamma_2}{2\Gamma_1}.
    \end{align*}
    Also note that
    \begin{align*}
        m_{\textup{c}} := \frac{\sqrt{S}}{2\rho}, \quad S := n_\vtheta^2.
    \end{align*}
    From Lemma~\ref{lem:mstar_dot_expression} and Lemma~\ref{lem:mh_dot_expression}, we have
    \begin{align*}
        \dot{m_{\textup{c}}} &= \hat{\lambda} M_1 \lps m_{\textup{c}} - m_{\text{H}} \rps, \\
        \dot{m_{\text{H}}} &= \frac{\hat{\lambda}}{2\lps M_1 \rps^2 m_{\textup{c}}} \lps 2m_{\textup{c}} \Gamma_1 - \Gamma_2 \rps.
    \end{align*}

    We partition Regime 1 into sub-regimes 1-a ($\alpha < \alpha_0$) and 1-b ($\alpha_0 < \alpha < \alpha_1$), and similarly partition Regime 2 into sub-regimes 2-a $\left( \alpha_1 < \alpha < \rho\frac{\|\vmu\|_4^4}{\sqrt{2}\|\vmu\|_2^2\|\vmu\|_3^3} \right)$ and 2-b $\left(\rho\frac{\|\vmu\|_4^4}{\sqrt{2}\|\vmu\|_2^2\|\vmu\|_3^3} < \alpha < \alpha_2 \right)$. The proof is then structured into three distinct cases: Regime 1-a, Regimes 1-b and 2-a treated jointly, and Regimes 2-b and 3 treated jointly.

    \paragraph{Regime 1-a.}
    $n_{\vtheta}(0) = \sqrt{2} \|\vmu\|_2 \cdot \alpha$, so $\alpha < \alpha_0$ implies $m_{\textup{c}}(0) = \frac{n_\vtheta(0)}{2\rho} < \frac{\mu_{1}}{2} = m_{\text{L}}$.
    For any $t \geq 0$, if $m_{\textup{c}}(t) < m_{\text{L}}$, then
    $m_{\textup{c}}(t) < \frac{\mu_1}{2} < m_{\text{H}}(t)$.
    Hence $B(t) < 0$, and therefore $\dot{m_{\textup{c}}}(t) < 0$.
    Consequently, for any $t \geq 0$, whenever $m_{\textup{c}}(t) < m_{\text{L}}$, the function $m_{\textup{c}}(\cdot)$ is strictly decreasing. Since $m_{\textup{c}}(0) < m_{\text{L}}$, we have $m_{\textup{c}}(t) < m_{\text{L}}$ for all $t \geq 0$, and it is strictly decreasing.
    
    Moreover, we have $2m_{\textup{c}}(t) < \mu_1 \leq \mu_j$. Therefore,
    \begin{equation*}
        r_j(t) = 2 \hat{\lambda}(t) \cdot \left( \mu_j - \frac{\mu_j^2}{2m_{\textup{c}}(t)} \right) < 0,
    \end{equation*}
    Thus $\dot{\beta}_j(t) = \beta_j(t) r_j(t) < 0$, and $\beta_j(t)$ decays exponentially for all $t \geq 0$.
    
    \paragraph{Regimes 1-b and 2-a.}
        We define $A(t) := \hat{\lambda} M_1(t)$ and $B(t) := m_{\textup{c}}(t) - m_{\text{H}}(t)$. Then we get the following equalities:
        \begin{align*}
            \dot{m_{\textup{c}}} &= A B, \\
            \dot{B} &= \dot{m_{\textup{c}}} - \dot{m_{\text{H}}} = A B - \dot{m_{\text{H}}}.
        \end{align*}
        Let $I(t) := \exp \lps -\int_0^t A(\tau) d \tau \rps$. Then:
        \begin{align}
            I \dot{B} &= IAB - I \dot{m}_{\text{H}}, \\
            \frac{\rd}{\rd t} \lps I B \rps &= \dot{I} B + I \dot{B} = -IAB + I\dot{B} = -I\dot{m}_{\text{H}}, \\
            I(t)B(t) - I(0)B(0) &= - \int_0^t I(u) \dot{m}_{\text{H}}(u) \rd u.
        \end{align}
        $n_{\vtheta}(0) = \sqrt{2} \|\vmu\|_2 \cdot \alpha$, so $\alpha_0 < \alpha < \rho\frac{\|\vmu\|_4^4}{\sqrt{2}\|\vmu\|_2^2\|\vmu\|_3^3}$ implies $m_{\text{L}} < m_{\textup{c}}(0) < m_{\text{H}}(0)$. In this case, we have $B(0) < 0$ and thus $\dot{m_{\textup{c}}}(0) = A(0) B(0) < 0$, so $m_{\textup{c}}$ initially drifts downward. 

        For an initial condition $m_0 \in \left( m_{\text{L}}, m_{\text{H}}(0) \right)$, let $m_{\textup{c}}(\cdot; m_0)$ denote the solution with $m_{\textup{c}}(0) = m_0$. We define the \emph{floor hitting time} as $\tau_F(m_0) := \inf \left\{ t \geq 0 : m_{\textup{c}}(t;m_0) = m_{\text{L}} \right\} \in [0, \infty]$, and the \emph{ceiling hitting time} as $\tau_G(m_0) := \inf \left\{ t \geq 0 : B(t;m_0) = 0 \right\} \in [0, \infty]$. Define the \emph{first exit time} as $\tau(m_0) := \min \left\{ \tau_F(m_0), \tau_G(m_0) \right\}.$
        
        Now let
        \begin{align*}
            \mathcal{F} &:= \{m_0 \in \left( m_{\text{L}}, m_{\text{H}}(0) \right) : \tau_F(m_0) < \tau_G(m_0)\}, \\
            \mathcal{G} &:= \{m_0 \in \left( m_{\text{L}}, m_{\text{H}}(0) \right) : \tau_G(m_0) < \tau_F(m_0)\}.
        \end{align*}
        We will prove:
        \begin{itemize}
            \item $\mathcal{F}$ and $\mathcal{G}$ are open in $\left( m_{\text{L}}, m_{\text{H}}(0) \right)$.
            \item $\mathcal{F} \neq \emptyset$ and $\mathcal{G} \neq \emptyset$.
            \item therefore there exists $m_0^* \in \left( m_{\text{L}}, m_{\text{H}}(0) \right) \backslash \left( \mathcal{F} \cup \mathcal{G} \right)$, and for such $m_0^*$, we must have $\tau_F(m_0^*) = \tau_G(m_0^*) = \infty$.
        \end{itemize}

        Since $\alpha > 0$, we have $\beta_j(0) > 0$ for all $j \in [d]$, and from the explicit solution of the $\beta_j$ dynamics in \cref{lem:beta_dynamics}, we have that for each finite time $t$, $\beta_j(t) > 0$. Therefore, $p_j(t) > 0$ for all finite $t$ and $j \in [d]$, which implies that $p(t)$ is not collapsed onto a single feature at any finite time. Based on this observation, we make the following Lemma:
        \begin{lemma} \label{lem:transversal}
            $\dot{m}_{\textup{c}} (\tau_F(m_0)) < 0$ for all $m_0 \in \mathcal{F}$, and $\dot{B}(\tau_G(m_0)) > 0$ for all $m_0 \in \mathcal{G}$. In other words, the floor crossing and ceiling crossing are transversal.
        \end{lemma}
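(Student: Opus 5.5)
We prove the two claims separately. In both, we work at the first hitting time itself and use the ODEs for $m_{\textup{c}}$ and $B$ from \cref{lem:mstar_dot_expression} and \cref{lem:mh_dot_expression}. The key structural fact is the one just noted: at every finite time all $p_j(t)>0$. So the measure $\vp(t)$ is non-degenerate on the distinct values $\mu_1<\cdots<\mu_d$ (we assume $d\ge 2$). This makes the Cauchy--Schwarz inequalities of \cref{prop:gamma_nonnegative} strict, giving $\Gamma_1>0$ and $\Gamma_2>0$. It also makes the variance-type quantity in \cref{prop:m_d_m_h_inequality} strictly positive, giving $m_{\text{D}}>m_{\text{H}}$. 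In addition, $m_{\text{H}}(t)>\mu_1/2=m_{\text{L}}$ strictly, because $M_2/M_1$ is a $p_j\mu_j$-weighted average of the $\mu_j$ with positive weight on some $\mu_j>\mu_1$. Finally, $\hat\lambda>0$ and $M_1>0$, so $A(t)>0$.

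\textbf{Floor crossing.} Let $m_0\in\mathcal F$ and $t_F:=\tau_F(m_0)<\infty$. Then $m_{\textup{c}}(t_F)=m_{\text{L}}$. Since $t_F<\tau_G$ and $B(0)<0$, continuity gives $B(t)<0$ on $[0,t_F)$, so $B(t_F)\le 0$. We will show $B(t_F)<0$ directly, without the limit argument: $B(t_F)=m_{\text{L}}-m_{\text{H}}(t_F)<0$ by the strict inequality $m_{\text{H}}>m_{\text{L}}$ above. Hence $\dot m_{\textup{c}}(t_F)=A(t_F)B(t_F)<0$.

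\textbf{Ceiling crossing.} Let $m_0\in\mathcal G$ and $t_G:=\tau_G(m_0)<\infty$. Then $B(t_G)=0$, that is, $m_{\textup{c}}(t_G)=m_{\text{H}}(t_G)$. Since $t_G<\tau_F$, we have $m_{\textup{c}}(t_G)>m_{\text{L}}>0$, so the formula for $\dot m_{\text{H}}$ is well defined. Then
\[
\dot B(t_G)=A B-\dot m_{\text{H}}=-\frac{\hat\lambda}{2M_1^2m_{\textup{c}}}\bigl(2m_{\textup{c}}\Gamma_1-\Gamma_2\bigr)=\frac{\hat\lambda\,\Gamma_1}{M_1^2m_{\textup{c}}}\bigl(m_{\text{D}}-m_{\text{H}}\bigr),
\]
where the last step substitutes $m_{\textup{c}}=m_{\text{H}}$ and $\Gamma_2=2\Gamma_1m_{\text{D}}$. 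By the strict inequalities $\Gamma_1>0$ and $m_{\text{D}}>m_{\text{H}}$, every factor is positive, so $\dot B(t_G)>0$.

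\textbf{Main obstacle.} The real work is upgrading \cref{prop:gamma_nonnegative} and \cref{prop:m_d_m_h_inequality} to strict inequalities. Equality in Cauchy--Schwarz would require $\mu_j^{1/2}$ and $\mu_j^{3/2}$ to be proportional on the support of $\vp$, which forces $\vp$ to be supported on a single value of $\mu_j$. Likewise, $\E_p[(\mu^2-a\mu)^2]=0$ forces every $\mu_j$ in the support to equal $0$ or $a$. Since $\vp$ has full support on at least two distinct positive $\mu_j$, both equalities are impossible. We also need the regularity that makes $\dot B$ and $\dot m_{\textup{c}}$ meaningful pointwise. This holds because $n_\vtheta>0$ at finite times, so the vector field is smooth, and the trajectory is $C^1$ up to the hitting times.
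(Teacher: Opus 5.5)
Your proof is correct and follows the paper's argument essentially step for step. At the floor hit you use $B(t_F)=m_{\text{L}}-m_{\text{H}}(t_F)<0$, which comes from the full support of $\vp$. At the ceiling hit you use $\dot B=-\dot m_{\text{H}}$ together with $\Gamma_1>0$ and $m_{\text{D}}>m_{\text{H}}$. Your equality-case analysis of the Cauchy--Schwarz bound and of $\E_p[(\mu^2-a\mu)^2]\ge 0$ simply proves the strictness claims, which the paper asserts without proof when it says equality holds only for a collapsed $\vp$.
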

        \begin{proof}
            We first prove for the floor crossing case.
            $\tau_F(m_0) < \infty$. Let $t_f := \tau_F(m_0)$, so $m_{\textup{c}}(t_f; m_0) = m_{\text{L}}$. Then we have
            \begin{align*}
                \dot{m}_{\textup{c}}(t_f; m_0) = A(t_f) B(t_f).
            \end{align*}
            Here, $A(t_f) > 0$. Also $p(t_f)$ has positive mass on every $\mu_j$, hence $B(t_f) = m_{\text{L}} - m_{\text{H}}(t_f) < 0$. Thus $\dot{m}_{\textup{c}}(t_f; m_0) < 0$, in particular $\dot{m}_{\textup{c}} (t_f) \neq 0$. So the intersection with the floor is transversal.

            Now prove for the ceiling crossing case.
            $\tau_G(m_0) < \infty$. Let $t_g := \tau_G(m_0)$, so $B(t_g) = 0$ which means $m_{\textup{c}}(t_g) = m_{\text{H}}(t_g)$. Then we have
            \begin{align*}
                \dot{B}(t_g) = A(t_g)B(t_g) - \dot{m}_{\text{H}}(t_g) = -\dot{m}_{\text{H}}(t_g).
            \end{align*}
            From \cref{prop:m_d_m_h_inequality}, we have $m_{\text{H}} \leq m_{\text{D}}$, and the equality holds if and only if $p$ is collapsed onto a single feature. But at any finite time $t_g$, $p(t_g)$ is not collapsed onto a single feature, so $m_{\text{H}}(t_g) < m_{\text{D}}$. Therefore, we get $m_{\textup{c}}(t_g) - m_{\text{D}}(t_g) = m_{\text{H}}(t_g) - m_{\text{D}}(t_g) < 0$.

            Using the formula
            \begin{align*}
                \dot{m}_{\text{H}} = \frac{\hat{\lambda}}{2\lps M_1 \rps^2 m_{\textup{c}}} \lps 2m_{\textup{c}} \Gamma_1 - \Gamma_2 \rps = \frac{\hat{\lambda}}{2\lps M_1 \rps^2 m_{\textup{c}}} 2 \Gamma_1 \lps m_{\textup{c}} - m_{\text{D}} \rps,
            \end{align*}
            and noting $\Gamma_1 > 0$ for non-collapsed $p$, we conclude
            \begin{align*}
                \dot{m}_{\text{H}}(t_g) < 0.
            \end{align*}
            Hence,
            \begin{align*}
                \dot{B}(t_g) = -\dot{m}_{\text{H}}(t_g) > 0.
            \end{align*}
            So the intersection with the ceiling is transversal.
        \end{proof}
        Next, we make a lemma for the openness of $\mathcal{F}$ and $\mathcal{G}$.
        \begin{lemma} \label{lem:open_sets}
            The sets $\mathcal{F}$ and $\mathcal{G}$ are open subsets of $\left( m_{\text{L}}, m_{\text{H}}(0) \right)$.
        \end{lemma}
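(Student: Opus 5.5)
We will prove openness of $\mathcal{F}$ and $\mathcal{G}$ via continuous dependence of the flow on the initial condition, combined with the transversality established in \cref{lem:transversal}. The state of the system is $\vbeta(t)$ (equivalently $\vw(t)$). The initial value $m_0=m_{\textup{c}}(0)$ is determined by $\alpha$ through $m_0=\sqrt{2}\|\vmu\|_2\alpha/(2\rho)$, and $m_{\text{H}}(0)$ is independent of $\alpha$ since $p_j(0)\propto\mu_j^2$. The vector field $\dot\beta_j=2\hat\lambda\mu_j\beta_j(1-\rho\mu_j/n_\vtheta)$ is locally Lipschitz on the region $\{\vbeta\in\R^d_{++}\}$, where $n_\vtheta>0$ and $\hat\lambda\in(0,1)$ is smooth. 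Hence on any compact time interval $[0,T]$ the solution, and with it $m_{\textup{c}}$, $m_{\text{H}}$, $B$ and their time derivatives, depend continuously on $m_0$, uniformly in $t\in[0,T]$.

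\textbf{Openness of $\mathcal{F}$.} Fix $m_0\in\mathcal{F}$ and set $t_f=\tau_F(m_0)<\tau_G(m_0)$.

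First, on $[0,t_f]$ we have $B(t;m_0)<0$, so by compactness $B\le-\eta<0$ there. On the same interval $m_{\textup{c}}(t;m_0)>m_{\text{L}}$ for $t<t_f$.

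Second, by \cref{lem:transversal}, $\dot m_{\textup{c}}(t_f;m_0)<0$. We therefore pick $\delta>0$ small enough that $m_{\textup{c}}(t_f+\delta;m_0)<m_{\text{L}}$, that $B<0$ persists on $[0,t_f+\delta]$, and that $m_{\textup{c}}(t_f-\delta;m_0)>m_{\text{L}}$ with $\dot m_{\textup{c}}<0$ on $[t_f-\delta,t_f+\delta]$.

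Third, on $[0,t_f-\delta]$ we have $m_{\textup{c}}-m_{\text{L}}\ge\kappa>0$. By uniform continuity in $m_0$ on $[0,t_f+\delta]$, every nearby $m_0'$ satisfies: $B<0$ throughout, $m_{\textup{c}}>m_{\text{L}}$ on $[0,t_f-\delta]$, and $m_{\textup{c}}(t_f+\delta;m_0')<m_{\text{L}}$.

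The intermediate value theorem then gives $\tau_F(m_0')\le t_f+\delta$, while $\tau_G(m_0')>t_f+\delta$. Hence $m_0'\in\mathcal{F}$.

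\textbf{Openness of $\mathcal{G}$.} The argument is symmetric. For $m_0\in\mathcal{G}$ with $t_g=\tau_G(m_0)<\tau_F(m_0)$:

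First, $m_{\textup{c}}-m_{\text{L}}$ is bounded below by a positive constant on $[0,t_g]$.

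Second, $B<0$ on $[0,t_g)$, and $\dot B(t_g)>0$ by \cref{lem:transversal}. So $B(t_g+\delta)>0$ for small $\delta$, and $B\le-\kappa'$ on $[0,t_g-\delta]$.

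Third, continuous dependence transfers these sign conditions to nearby $m_0'$. This yields $\tau_G(m_0')\le t_g+\delta<\tau_F(m_0')$.

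\textbf{Main obstacle.} The delicate point is justifying uniform continuous dependence on initial conditions. This requires that the trajectories for $m_0'$ near $m_0$ stay in a compact subset of $\R^d_{++}$ on the relevant finite horizon, where the vector field is Lipschitz. We will secure this from the explicit representation $\beta_j(t)=\beta_j(0)\exp(\cdots)$. On $[0,T]$ the growth rates $r_j$ are bounded in absolute value by $2\mu_j(1+\rho\mu_j/n_\vtheta)$, and $n_\vtheta$ is bounded below by a positive constant for all $m_0'$ in a neighborhood, since $m_{\textup{c}}>m_{\text{L}}/2$ up to the exit times. This bounds every $\beta_j$ above and away from zero, which gives both Lipschitz continuity and continuity of $B$, $m_{\text{H}}$ and $\dot m_{\textup{c}}$.

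A secondary subtlety is that $\tau_F$ and $\tau_G$ are defined as infima. We avoid any issue by working with strict sign conditions on compact intervals, as above, rather than with the hitting times directly.
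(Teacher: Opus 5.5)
Your proposal is correct and follows essentially the same route as the paper. Both arguments use continuous dependence on initial data over a compact horizon extending slightly past the exit time, the transversality from \cref{lem:transversal}, and a strict sign margin for the other boundary function, so that nearby initial conditions exit through the same boundary first. The differences are minor. You locate the crossing for nearby $m_0'$ with the intermediate value theorem instead of the Implicit Function Theorem; since only an upper bound on the first hitting time is needed, this avoids the paper's implicit identification of the IFT root with the infimum $\tau_F(m)$. Your justification of the compactness step is slightly circular as phrased: it invokes $m_c>m_L/2$ for nearby trajectories, and over a horizon that runs past the exit time. It is nevertheless covered by the standard fact the paper tacitly uses, namely that the reference trajectory on $[0,t_f+\delta]$ lies in a compact subset of $\R^d_{++}$, so nearby solutions exist on that interval and stay uniformly close.
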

        \begin{proof}
            We use continuous dependence of ODE solutions on initial data on compact time intervals.
        
            \paragraph{1. $\mathcal{F}$ is open.}
            Take any $m_0 \in \mathcal{F}$ and set $t_f := \tau_F(m_0)$. Then $t_f < \infty$ and
            $m_{\textup{c}}(t_f; m_0) = m_{\text{L}}$.
            By \cref{lem:transversal}, the floor crossing is transversal:
            $\dot{m}_{\textup{c}}(t_f; m_0) < 0$.
        
            Consider $F(t,m) := m_{\textup{c}}(t;m) - m_{\text{L}}$. We have
            $F(t_f,m_0)=0$ and $\partial_t F(t_f,m_0)=\dot{m}_{\textup{c}}(t_f;m_0)\neq 0$.
            By the Implicit Function Theorem, there exists a neighborhood $U_1$ of $m_0$ and a continuous map
            $m \mapsto \tau_F(m)$ such that for all $m \in U_1$,
            \[
                m_{\textup{c}}(\tau_F(m);m)=m_{\text{L}}, \qquad |\tau_F(m)-t_f| < \delta
            \]
            for some $\delta>0$.
        
            Next we show that the ceiling (gap) cannot be hit before $\tau_F(m)$ for $m$ near $m_0$,
            without appealing to continuity of $\tau_G$.
            Since $m_0 \in \mathcal{F}$, we have $\tau_G(m_0)>t_f$, hence
            \[
                B(t;m_0) < 0 \quad \text{for all } t\in[0,t_f].
            \]
            Moreover, at $t_f$ we have $B(t_f;m_0)=m_{\text{L}}-m_{\text{H}}(t_f;m_0)<0$.
            By continuity in $t$, there exists $\delta_0 \in (0,\delta]$ such that
            \[
                B(t;m_0) < 0 \quad \text{for all } t\in[0,t_f+\delta_0].
            \]
            Since $[0,t_f+\delta_0]$ is compact and $B(\cdot;m_0)$ is continuous and strictly negative on it,
            we can define the margin
            \[
                \eta := - \max_{t\in[0,t_f+\delta_0]} B(t;m_0) \;>\;0,
            \]
            so that $B(t;m_0)\le -\eta$ for all $t\in[0,t_f+\delta_0]$.
        
            By continuous dependence on initial data on $[0,t_f+\delta_0]$, there exists a neighborhood $U_2$ of $m_0$
            such that for all $m\in U_2$,
            \[
                \sup_{t\in[0,t_f+\delta_0]} |B(t;m)-B(t;m_0)| < \eta/2.
            \]
            Hence for all $m\in U_2$ and all $t\in[0,t_f+\delta_0]$,
            \[
                B(t;m) \le B(t;m_0) + \eta/2 \le -\eta/2 < 0.
            \]
            In particular, $B(t;m)$ cannot reach $0$ before time $t_f+\delta_0$, so
            \[
                \tau_G(m) > t_f+\delta_0.
            \]
            Finally, take $m$ in $U:=U_1\cap U_2$ and shrink $U_1$ if needed so that $|\tau_F(m)-t_f|<\delta_0$.
            Then $\tau_F(m) < t_f+\delta_0 < \tau_G(m)$, i.e. $m\in\mathcal{F}$.
            Therefore $\mathcal{F}$ is open.
        
            \paragraph{2. $\mathcal{G}$ is open.}
            Take any $m_0 \in \mathcal{G}$ and set $t_g := \tau_G(m_0)$. Then $t_g < \infty$ and
            $B(t_g;m_0)=0$.
            By \cref{lem:transversal}, the ceiling crossing is transversal:
            $\dot{B}(t_g;m_0) > 0$.
        
            Let $G(t,m) := B(t;m)$. We have $G(t_g,m_0)=0$ and $\partial_t G(t_g,m_0)=\dot{B}(t_g;m_0)\neq 0$.
            By the Implicit Function Theorem, there exists a neighborhood $V_1$ of $m_0$ and a continuous map
            $m \mapsto \tau_G(m)$ such that for all $m \in V_1$,
            \[
                B(\tau_G(m);m)=0, \qquad |\tau_G(m)-t_g| < \delta
            \]
            for some $\delta>0$.
        
            Next we show that the floor cannot be hit before $\tau_G(m)$ for $m$ near $m_0$.
            Since $m_0\in\mathcal{G}$, we have $\tau_F(m_0)>t_g$, hence
            \[
                m_{\textup{c}}(t;m_0) > m_{\text{L}} \quad \text{for all } t\in[0,t_g].
            \]
            Because $m_{\textup{c}}(\cdot;m_0)$ is continuous and $[0,t_g]$ is compact, define
            \[
                \varepsilon_0 := \min_{t\in[0,t_g]} \bigl(m_{\textup{c}}(t;m_0)-m_{\text{L}}\bigr) \;>\;0,
                \qquad \varepsilon := \varepsilon_0/2.
            \]
            Then $m_{\textup{c}}(t;m_0) \ge m_{\text{L}}+2\varepsilon$ for all $t\in[0,t_g]$.
            By continuity in $t$ at $t=t_g$, there exists $\delta_0 \in (0,\delta]$ such that
            \[
                m_{\textup{c}}(t;m_0) \ge m_{\text{L}}+\varepsilon \quad \text{for all } t\in[0,t_g+\delta_0].
            \]
            By continuous dependence on initial data on $[0,t_g+\delta_0]$, there exists a neighborhood $V_2$ of $m_0$
            such that for all $m\in V_2$,
            \[
                \sup_{t\in[0,t_g+\delta_0]} |m_{\textup{c}}(t;m)-m_{\textup{c}}(t;m_0)| < \varepsilon/2.
            \]
            Hence $m_{\textup{c}}(t;m) \ge m_{\text{L}}+\varepsilon/2$ on $[0,t_g+\delta_0]$, so in particular
            $\tau_F(m) > t_g+\delta_0$ for all $m\in V_2$.
        
            Finally, take $m$ in $V:=V_1\cap V_2$ and shrink $V_1$ if needed so that $|\tau_G(m)-t_g|<\delta_0$.
            Then $\tau_G(m) < t_g+\delta_0 < \tau_F(m)$, i.e. $m\in\mathcal{G}$.
            Therefore $\mathcal{G}$ is open.
        \end{proof}

        Next, we make a lemma for the no simultaneous finite exit.
        \begin{lemma}[No simultaneous finite exit] \label{lem:no_simul_exit}
            There is no $m_0 \in (m_{\text{L}}, m_{\text{H}}(0))$ such that
            \[
                \tau_F(m_0) = \tau_G(m_0) < \infty.
            \]
        \end{lemma}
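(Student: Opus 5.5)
Suppose, for contradiction, that $t_\ast := \tau_F(m_0) = \tau_G(m_0) < \infty$. At the time $t_\ast$ both defining equalities hold:
\[
m_{\textup{c}}(t_\ast) = m_{\text{L}} = \tfrac{\mu_1}{2}
\qquad\text{and}\qquad
B(t_\ast) = 0 .
\]
The second equality means $m_{\textup{c}}(t_\ast) = m_{\text{H}}(t_\ast)$. Combining the two gives $m_{\text{H}}(t_\ast) = \mu_1/2$, that is, $M_2(t_\ast) = \mu_1 M_1(t_\ast)$.

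The plan is to show that this last identity cannot hold at any finite time. This is the same non-collapse observation already used in the proof of \cref{lem:transversal}. Write
\[
M_2 - \mu_1 M_1 = \sum_{j=1}^d p_j(t_\ast)\,\mu_j(\mu_j - \mu_1).
\]
Every summand is nonnegative, because $\mu_j \ge \mu_1 > 0$ by Assumption~\ref{ass:dataordering}.

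To see that the sum is strictly positive, note that $\beta_j(t_\ast) > 0$ for every $j$. This follows from the explicit exponential solution in \cref{lem:beta_dynamics}, since $t_\ast$ is finite and $\beta_j(0) = \alpha^2 > 0$. Hence $p_j(t_\ast) > 0$ for all $j$. The term with $j = d$ then equals $p_d\,\mu_d(\mu_d - \mu_1)$, which is strictly positive when $d \ge 2$ because $\mu_d > \mu_1$. Therefore $m_{\text{H}}(t_\ast) > \mu_1/2 = m_{\text{L}}$. This contradicts $m_{\text{H}}(t_\ast) = m_{\text{L}}$.

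The one delicate point is the degenerate case $d = 1$. There $p$ is always collapsed onto a single feature, so $m_{\text{H}} \equiv \mu_1/2$ and the interval $(m_{\text{L}}, m_{\text{H}}(0))$ is empty. The statement is then vacuous, and I will note this explicitly. Beyond that, the only thing to confirm is that both $\tau_F$ and $\tau_G$ are attained as infima of closed level sets of continuous functions. This gives the equalities at $t_\ast$ used above, and it is routine.
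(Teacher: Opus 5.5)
Your proposal is correct and follows essentially the same argument as the paper: a common finite exit time forces $m_{\text{H}}(t_\ast)=m_{\text{L}}$, which is impossible because every $p_j(t_\ast)>0$ at finite times. Your identity $M_2-\mu_1 M_1=\sum_j p_j\mu_j(\mu_j-\mu_1)$ spells out explicitly the paper's claim that $m_{\text{H}}=m_{\text{L}}$ holds only when $p$ collapses onto the first feature.
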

        \begin{proof}
            Suppose for contradiction that there exists $m_0 \in (m_{\text{L}}, m_{\text{H}}(0))$ and a finite time
            $t^\star := \tau_F(m_0) = \tau_G(m_0) < \infty$.
            Then by definition,
            \[
                m_{\textup{c}}(t^\star; m_0) = m_{\text{L}}
                \quad\text{and}\quad
                B(t^\star; m_0)=0.
            \]
            Since $B(t)=m_{\textup{c}}(t)-m_{\text{H}}(t)$, the second equality implies
            $m_{\text{H}}(t^\star; m_0)=m_{\textup{c}}(t^\star; m_0)=m_{\text{L}}$.
        
            However, $m_{\text{H}}(t)=m_{\text{L}}=\mu_1/2$ holds if and only if $p(t)$ collapses onto the first feature
            (i.e., $p_1(t)=1$). Under the uniform positive initialization, we have $\beta_j(0)>0$ for all $j$,
            and by the explicit $\beta$-dynamics (\cref{lem:beta_dynamics}) we have $\beta_j(t)>0$ for every finite $t$.
            Hence $p_j(t)>0$ for all $j$ at every finite time, so $p(t)$ cannot be collapsed at time $t^\star<\infty$.
            This contradicts $m_{\text{H}}(t^\star)=m_{\text{L}}$.
        
            Therefore, no such $m_0$ exists.
        \end{proof}

        Next, we make the non-emptiness of $\mathcal{F}$ and $\mathcal{G}$.
        \begin{lemma} \label{lem:f_nonempty}
            The set $\mathcal{F}$ is non-empty.
        \end{lemma}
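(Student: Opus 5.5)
The plan is to show that every initial value $m_0$ sufficiently close to the floor $m_{\text{L}}$ belongs to $\mathcal{F}$. The argument is a quantitative comparison between the distance to the floor, $x(t) := m_{\textup{c}}(t) - m_{\text{L}}$, and the spread of $p(t)$ away from the first feature, $y(t) := 1 - p_1(t)$.

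The first point is that under Assumption~\ref{ass:initialization} the vector $p(0)$, with $p_j(0) \propto \mu_j^2$, does not depend on $\alpha$. Hence $m_{\text{H}}(0)$ and $y_0 := y(0) > 0$ are fixed constants, and varying $\alpha$ varies only $m_0 = m_{\textup{c}}(0)$. Since $\mu_j \geq \mu_2$ for $j \geq 2$, we have
$m_{\text{H}} - m_{\text{L}} = \frac{M_2 - \mu_1 M_1}{2M_1} = \frac{\sum_j p_j \mu_j(\mu_j - \mu_1)}{2M_1} \geq c\, y$
for an explicit constant $c > 0$, for instance $c = \mu_2(\mu_2 - \mu_1)/(2\mu_d)$. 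Writing Lemma~\ref{lem:mstar_dot_expression} in these variables gives $\dot x = \hat\lambda M_1 (x - (m_{\text{H}} - m_{\text{L}})) \leq \hat\lambda M_1 (x - c y)$. Consequently, as long as $c y \geq 2x$ we have $B < 0$ (no ceiling hit) and $\dot x \leq -\tfrac12 \hat\lambda \mu_1 c\, y$, using $M_1 \geq \mu_1$.

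The second step is to bound how fast $y$ can decay. From the computation of $\dot p_j$ in the proof of Lemma~\ref{lem:mh_dot_expression}, $\dot p_j = p_j (r_j - \sum_k p_k r_k)$. Writing $r_j = \hat\lambda g_j$, while $m_{\textup{c}}$ stays in $[m_{\text{L}}, m_{\text{L}} + \varepsilon] \subset [\mu_1/2, \mu_1]$ each $g_j$ is bounded in absolute value by a constant $K/2$ depending only on $\vmu$. Therefore $\dot y \geq -K \hat\lambda\, y$. Introduce the accumulated time $s(t) := \int_0^t \hat\lambda$; then $y \geq y_0 e^{-K s}$.

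The third step chooses $m_0 = m_{\text{L}} + \varepsilon$ with $\varepsilon$ small, and fixes $s^* = 1$. On the interval where $s \leq 1$ and the invariant $c y \geq 2x$ holds, $x$ is decreasing, so $x \leq \varepsilon$, and $c y \geq c y_0 e^{-K} \geq 2\varepsilon$ provided $\varepsilon \leq c y_0 e^{-K}/2$; thus the invariant propagates. Integrating in $s$ gives $x \leq \varepsilon - \tfrac12 \mu_1 c y_0 \int_0^s e^{-K u}\,du$. Choosing $\varepsilon < \tfrac12 \mu_1 c y_0 (1 - e^{-K})/K$ forces $x$ to reach $0$ at some $s < 1$, still strictly inside the region where $B < 0$. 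This yields $\tau_F(m_0) < \tau_G(m_0)$, i.e.\ $m_0 \in \mathcal{F}$.

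The step I expect to need the most care is converting $s < 1$ into a finite real time $t$. This needs a lower bound on $\hat\lambda$ along the relevant stretch. Since $m_{\textup{c}} \leq \mu_1$ there, $n_\vtheta$ is bounded and so all $\beta_j$ are bounded. The perturbed margin $\hat s$ is then bounded, so $\hat\lambda = \sigma(-\hat s) \geq \lambda_{\min} > 0$, and $s$ grows at least linearly in $t$. A secondary check is that the invariant argument is a genuine continuity (bootstrap) argument: take the first time either $s = 1$, $c y = 2x$, or $x = 0$, and show that the last of these occurs first.
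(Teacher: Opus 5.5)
Your proof is correct, and it rests on the same mechanism as the paper's: start $m_0$ just above the floor. Then $m_{\textup{c}}$ moves toward $m_{\text{L}}$ at a speed of order one, while the ceiling $m_{\text{H}}$ stays an order-one distance away for long enough.

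The paper argues softly, in real time near $t=0$. It says $\dot m_{\textup{c}}$ is bounded away from zero on a short window, so $\tau_F(m_0)\to 0$ as $m_0\downarrow m_{\text{L}}$. It then says $B$ cannot reach zero before some $\varepsilon_0$ that is uniform in $m_0$. That uniformity is left implicit: it amounts to continuous dependence on initial data up to the limiting state $m_0=m_{\text{L}}$, together with $\hat\lambda$ being bounded below.

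You make both halves explicit instead:
\begin{itemize}
\item the inequality $m_{\text{H}}-m_{\text{L}}\ge c\,(1-p_1)$ ties the ceiling gap to the mass off the first feature;
\item the estimate $\dot y\ge -K\hat\lambda\, y$ controls how fast that mass can drain;
\item passing to the accumulated time $s=\int_0^t\hat\lambda$ removes $\hat\lambda$ from the comparison, and its lower bound is used only to turn $s<1$ into a finite $t$.
\end{itemize}

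The paper's version is shorter. Yours is self-contained and gives an explicit threshold $\varepsilon<\min\{\tfrac12 c\,y_0e^{-K},\ \tfrac12\mu_1 c\,y_0(1-e^{-K})/K\}$, together with $\varepsilon\le\mu_1/2$, which your bound on the $g_j$ implicitly needs. It then shows that every $m_0$ within that distance of the floor lies in $\mathcal{F}$. The condition $m_0<m_{\text{H}}(0)$ follows automatically, since $\varepsilon<c\,y_0\le m_{\text{H}}(0)-m_{\text{L}}$.
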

        \begin{proof}
            We show that there exists at least one initial condition $m_0 \in (m_{\text{L}}, m_{\text{H}}(0))$ that hits the floor before the gap closes.
            
            Consider an initial condition $m_0$ arbitrarily close to $m_{\text{L}}$ (with $m_0 > m_{\text{L}}$). At time $t=0$, we have $B(0) = m_0 - m_{\text{H}}(0)$. Since $m_{\text{H}}(0) > m_{\text{L}}$, the initial gap $B(0)$ is strictly negative and bounded away from $0$ as $m_0 \to m_{\text{L}}$. Because $A(0) = \hat{\lambda}(0) M_1(0) > 0$, we have $\dot{m}_{\textup{c}}(0) = A(0)B(0) < 0$, meaning $m_{\textup{c}}(t)$ strictly decreases initially. Because the velocity $\dot{m}_{\textup{c}}(t)$ is continuous and bounded away from zero in a small neighborhood of $t=0$, the time needed for the trajectory to travel the distance to the floor scales with the initial distance $m_0 - m_{\text{L}}$. Thus, we can make the floor hitting time $\tau_F(m_0)$ arbitrarily small by choosing $m_0$ sufficiently close to $m_{\text{L}}$.
            
            On the other hand, the gap $B(t)$ is a continuous function of time and starts from a strictly negative value $B(0) \approx m_{\text{L}} - m_{\text{H}}(0) < 0$. By continuity, it cannot reach $0$ instantly. Therefore, there exists some uniform constant $\varepsilon_0 > 0$ such that for all $m_0$ sufficiently close to $m_{\text{L}}$, we have $B(t) < 0$ for all $t \in [0, \varepsilon_0]$. This implies that the ceiling hitting time satisfies $\tau_G(m_0) \geq \varepsilon_0$ for all such $m_0$.
            
            By choosing $m_0 > m_{\text{L}}$ close enough to $m_{\text{L}}$, we can guarantee that $\tau_F(m_0) < \varepsilon_0 \leq \tau_G(m_0)$. This strict inequality implies $\tau_F(m_0) < \tau_G(m_0)$, meaning $m_0 \in \mathcal{F}$. Hence, $\mathcal{F} \neq \emptyset$.
        \end{proof}

        \begin{lemma} \label{lem:g_nonempty}
            The set $\mathcal{G}$ is non-empty.
        \end{lemma}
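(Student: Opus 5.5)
The plan is to mirror \cref{lem:f_nonempty}, this time working at the upper endpoint $m_0 \uparrow m_{\text{H}}(0)$ of the admissible interval, where the ceiling is reached almost immediately. First note that under \cref{ass:initialization} the initial distribution is $p_j(0)=\mu_j^2/\|\vmu\|_2^2$ for every $\alpha$. Hence $m_{\text{H}}(0)=\|\vmu\|_4^4/(2\|\vmu\|_2^2\|\vmu\|_3^3)\cdot\|\vmu\|_2^2$, and more to the point, $m_{\text{H}}(0)$ does not depend on $\alpha$. The map $\alpha\mapsto m_{\textup{c}}(0)=\sqrt{2}\|\vmu\|_2\alpha/(2\rho)$ is a continuous increasing bijection. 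So the family $m_{\textup{c}}(\cdot;m_0)$ is just the ODE for $\vbeta$ with initial condition $\alpha^2\vone$. Its right-hand side is locally Lipschitz wherever $n_\vtheta>0$, which holds for all finite $t$ because every $\beta_j(t)>0$. We may therefore extend the family to the closed endpoint $m_0^\sharp:=m_{\text{H}}(0)$ and use continuous dependence on initial data on compact time intervals, exactly as in \cref{lem:open_sets}.

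Next, examine the endpoint trajectory $m_0^\sharp$. Here $B(0;m_0^\sharp)=0$, so $\dot{B}(0)=-\dot{m}_{\text{H}}(0)$. Since $p(0)$ has full support, $\Gamma_1(0)>0$, and \cref{prop:m_d_m_h_inequality} holds strictly, giving $m_{\text{H}}(0)<m_{\text{D}}(0)$. We then have $m_{\textup{c}}(0)=m_{\text{H}}(0)<m_{\text{D}}(0)$. By \cref{lem:mh_dot_expression} this gives $\dot{m}_{\text{H}}(0)<0$, and hence $\dot{B}(0;m_0^\sharp)>0$; this is the same transversality computation as in \cref{lem:transversal}. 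Consequently there is $t_1>0$ with $B(t_1;m_0^\sharp)=:c>0$. Moreover $m_{\textup{c}}(0;m_0^\sharp)=m_{\text{H}}(0)>m_{\text{L}}$ strictly, because $p(0)$ is not collapsed on the first coordinate. Shrinking $t_1$ if necessary, continuity gives $m_{\textup{c}}(t;m_0^\sharp)\ge m_{\text{L}}+2\varepsilon$ on $[0,t_1]$ for some $\varepsilon>0$.

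Now perturb. By continuous dependence on $[0,t_1]$, for $m_0<m_0^\sharp$ sufficiently close we have $B(t_1;m_0)>c/2>0$ and $m_{\textup{c}}(t;m_0)\ge m_{\text{L}}+\varepsilon$ on $[0,t_1]$. Since $m_0<m_{\text{H}}(0)$, we also have $B(0;m_0)=m_0-m_{\text{H}}(0)<0$. The intermediate value theorem then yields $\tau_G(m_0)<t_1$, while $\tau_F(m_0)>t_1$. Thus $\tau_G(m_0)<\tau_F(m_0)$, so $m_0\in\mathcal{G}$ and $\mathcal{G}\neq\emptyset$.

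The main obstacle is justifying the use of the endpoint $m_0=m_{\text{H}}(0)$, which lies outside the open interval where the sets $\mathcal{F}$ and $\mathcal{G}$ are defined. This is handled by the observation that the underlying $\vbeta$-ODE is well posed for every $\alpha>0$. If one prefers to avoid the endpoint, the alternative is a quantitative argument. Near $t=0$, $\dot{B}$ is uniformly bounded below by a positive constant for all $m_0$ in a neighborhood of $m_{\text{H}}(0)$, by continuity of $\dot{B}$ in $(t,m_0)$. Then $B$ closes the gap $m_{\text{H}}(0)-m_0$ in time $O(m_{\text{H}}(0)-m_0)$, while the floor stays at distance bounded away from zero over that time.
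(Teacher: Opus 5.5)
Your proof is correct, and its key fact is the same one the paper uses: because $p(0)$ has full support, $m_{\text{H}}(0)<m_{\text{D}}(0)$ and $\Gamma_1(0)>0$. Hence $-\dot m_{\text{H}}(0)>0$, and the ceiling is crossed transversally.

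The difference lies in how you pass to nearby initial conditions. The paper stays inside the open interval, takes $m_0=m_{\text{H}}(0)-\delta$, and argues quantitatively. It gets $\dot B(0)=-A(0)\delta-\dot m_{\text{H}}(0)\ge c_0/2$, hence $\dot B\ge c_0/4$ on a short window. So the gap closes by time $t_g\le 4\delta/c_0$, while $m_{\textup{c}}$ drops by at most $4A_{\max}\delta^2/c_0=O(\delta^2)$, which cannot reach $m_{\text{L}}$.

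You instead move to the closed endpoint $m_0^\sharp=m_{\text{H}}(0)$, which corresponds to a legitimate $\alpha>0$. You observe that its trajectory enters $\{B>0\}$ immediately while staying above the floor. You then transfer this to nearby $m_0<m_0^\sharp$ via continuous dependence on $[0,t_1]$ and the intermediate value theorem.

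Your version is softer and arguably cleaner, since it needs only the sign of $\dot B$ along a single trajectory. The paper's estimate tacitly requires the window on which $\dot B\ge c_0/4$ to be uniform in $\delta$. That uniformity is itself a continuous-dependence statement, which the paper asserts rather than proves.

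In exchange, the paper's version gives explicit rates: hitting time $O(\delta)$ and floor-side drift $O(\delta^2)$. It also never steps outside the interval on which $\mathcal{F}$ and $\mathcal{G}$ are defined. Your closing remark about a quantitative alternative is exactly the paper's route.
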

        \begin{proof}
            We show there exists at least one initial condition that closes the gap $B(t) \to 0$ before reaching the floor. Consider an initial condition $m_0 = m_{\text{H}}(0) - \delta$ with $\delta > 0$ small. Thus, the initial gap is $B(0) = -\delta$, which is strictly negative but arbitrarily close to $0$.
            
            First, we establish that $\dot{B}(0)$ is strictly positive and bounded below for small $\delta$. From the corresponding ODE, at $t=0$:
            \begin{align*}
                \dot{B}(0) = A(0)B(0) - \dot{m}_{\text{H}}(0) = -A(0)\delta - \dot{m}_{\text{H}}(0).
            \end{align*}
            We evaluate the $-\dot{m}_{\text{H}}(0)$ term. Under the uniform positive initialization, $\beta_j(0) > 0$ for all $j$, meaning the initial distribution $p(0)$ has full support across all features and is not collapsed. Because $m_{\text{H}} = m_{\text{D}}$ if and only if $p$ collapses onto a single feature, we have a strict inequality at $t=0$:
            \begin{align*}
                m_{\text{D}}(0) > m_{\text{H}}(0).
            \end{align*}
            Using the formula for $\dot{m}_{\text{H}}$ evaluated at $t=0$:
            \begin{align*}
                \dot{m}_{\text{H}}(0) = \frac{\hat{\lambda}(0)}{2\lps M_1(0) \rps^2 m_0} 2\Gamma_1(0) \lps m_0 - m_{\text{D}}(0) \rps.
            \end{align*}
            For $m_0 \in (m_{\text{H}}(0) - \delta_0, m_{\text{H}}(0))$, we have $m_0 < m_{\text{H}}(0) < m_{\text{D}}(0)$, which implies $m_0 - m_{\text{D}}(0) < m_{\text{H}}(0) - m_{\text{D}}(0) < 0$. Furthermore, $\hat{\lambda}(0) > 0$, $M_1(0) > 0$, $m_0 > m_{\text{L}} > 0$, and $\Gamma_1(0) > 0$ (by Cauchy-Schwarz on a non-collapsed distribution $p$). Therefore, there exists a constant $c_0 > 0$, independent of sufficiently small $\delta$, such that:
            \begin{align*}
                -\dot{m}_{\text{H}}(0) \geq c_0 > 0.
            \end{align*}
            For $\delta$ chosen small enough, the magnitude of the $-A(0)\delta$ term becomes strictly less than $c_0 / 2$. Consequently, we obtain $\dot{B}(0) \geq c_0 / 2 > 0$.
            
            Next, we bound the time required to close the gap versus the time required to hit the floor. Because $\dot{B}(0) \geq c_0 / 2 > 0$ and the vector field is continuous, there exists a short, finite time interval $[0, T]$ over which $\dot{B}(t) \geq c_0 / 4$. Starting from $B(0) = -\delta$, the trajectory must hit the ceiling (i.e., $B(t_g) = 0$) at some time $t_g$ bounded by:
            \begin{align*}
                t_g \leq \frac{\delta}{c_0 / 4} = \frac{4\delta}{c_0}.
            \end{align*}
            We now control how much the center $m_{\textup{c}}$ can drop during this interval $[0, t_g]$. Since $\dot{m}_{\textup{c}}(t) = A(t)B(t)$, and knowing that $B(t) \in [-\delta, 0]$ on this interval while $A(t)$ is bounded by some constant $A_{\max} > 0$, the total drop is bounded by:
            \begin{align*}
                m_{\textup{c}}(t_g) = m_0 + \int_0^{t_g} \dot{m}_{\textup{c}}(t) \rd t \geq m_0 - \int_0^{t_g} A_{\max} \delta \rd t = m_0 - A_{\max} \delta t_g \geq m_0 - A_{\max} \delta \frac{4\delta}{c_0}.
            \end{align*}
            This reveals that $m_{\textup{c}}$ decreases by at most $\mathcal{O}(\delta^2)$ in the time it takes the gap to close. Since the initial position $m_0 = m_{\text{H}}(0) - \delta$ is separated from the floor $m_{\text{L}}$ by an $\mathcal{O}(1)$ distance (specifically, $m_{\text{H}}(0) - m_{\text{L}} > 0$), we can choose $\delta$ sufficiently small such that:
            \begin{align*}
                m_{\textup{c}}(t_g) \geq m_0 - \mathcal{O}(\delta^2) > m_{\text{L}}.
            \end{align*}
            Thus, the trajectory hits the ceiling at time $t_g$ before it could possibly reach the floor. This implies $\tau_G(m_0) = t_g < \tau_F(m_0)$, meaning $m_0 \in \mathcal{G}$. Hence, $\mathcal{G} \neq \emptyset$.
        \end{proof}
        
        We have now established the following topological properties:
        \begin{itemize}
            \item $\mathcal{F}$ and $\mathcal{G}$ are open subsets of $(m_{\text{L}}, m_{\text{H}}(0))$ (\cref{lem:open_sets}).
            \item $\mathcal{F} \neq \emptyset$ (\cref{lem:f_nonempty}) and $\mathcal{G} \neq \emptyset$ (\cref{lem:g_nonempty}).
            \item $\mathcal{F} \cap \mathcal{G} = \emptyset$ by their definitions (a trajectory cannot hit the floor first and the ceiling first simultaneously).
            \item The interval $(m_{\text{L}}, m_{\text{H}}(0))$ is a connected topological space.
        \end{itemize}
        A connected space cannot be written as the union of two disjoint, non-empty open sets. Therefore, we must have $(m_{\text{L}}, m_{\text{H}}(0)) \setminus (\mathcal{F} \cup \mathcal{G}) \neq \emptyset$. 
        
        Let us pick an initial condition $m_0^* \in (m_{\text{L}}, m_{\text{H}}(0)) \setminus (\mathcal{F} \cup \mathcal{G})$. By the definition of $\mathcal{F}$ and $\mathcal{G}$, this means that the trajectory does not hit the floor first in finite time, nor does it hit the gap (ceiling) first in finite time. It implies that:
        \begin{align*}
            \tau_F(m_0^*) = \tau_G(m_0^*) = \infty.
        \end{align*}
        
        Thus, the trajectory starting at $m_0^*$ never reaches $m_{\text{L}}$ and never reaches $B=0$ in finite time. Consequently, it satisfies:
        \begin{align*}
            m_{\text{L}} < m_{\textup{c}}(t; m_0^*) < m_{\text{H}}(t; m_0^*) \quad \forall t \geq 0.
        \end{align*}
        This boundary trajectory exactly corresponds to the case where the trajectory reaches neither boundary, completing the proof of existence at the $m_0$ level.

        \paragraph{Existence of the Regime 2 Threshold $\alpha_1$.}
        Finally, we map the boundary initial condition $m_0^*$ back into the initialization scale parameter $\alpha$. Under the uniform positive initialization, the initial scale determines $n_{\vtheta}(0) = \sqrt{2}\|\vmu\|_2 \alpha$, which in turn dictates the initial center $m_{\textup{c}}(0)$ via:
        \begin{align*}
            m_{\textup{c}}(0) = \frac{n_{\vtheta}(0)}{2\rho} = \frac{\sqrt{2}\|\vmu\|_2}{2\rho} \alpha.
        \end{align*}
        Because $m_{\textup{c}}(0)$ is a continuous and strictly increasing linear function of $\alpha$, the existence of $m_0^* \in (m_{\text{L}}, m_{\text{H}}(0))$ guarantees the existence of a corresponding initialization scale $\alpha_1$ such that $m_{\textup{c}}(0; \alpha_1) = m_0^*$. This $\alpha_1$ is precisely the boundary threshold defining Regime 2.
    
        \paragraph{Uniqueness of the Regime 2 Threshold $\alpha_1$.}

        We now show that within the interval $\alpha \in (\alpha_0,\alpha_2)$, there can be \emph{at most one}
        initialization scale $\alpha_1$ for which the corresponding trajectory is a \emph{boundary trajectory},
        i.e. it never reaches either the floor $m_{\text{L}}$ or the ceiling $m_{\text{H}}(t)$ in finite time.

        Suppose for contradiction that there exist two distinct boundary initializations
        $\alpha_a < \alpha_b$ in $(\alpha_0,\alpha_2)$.
        Let $(m_{\textup{c}}^{(a)}(t),p^{(a)}(t))$ and $(m_{\textup{c}}^{(b)}(t),p^{(b)}(t))$ denote their trajectories.
        Since both are boundary trajectories, they satisfy
        \begin{align}\label{eq:boundary_corridor_two}
        m_{\text{L}} < m_{\textup{c}}^{(a)}(t) < m_{\text{H}}^{(a)}(t)
        \quad\text{and}\quad
        m_{\text{L}} < m_{\textup{c}}^{(b)}(t) < m_{\text{H}}^{(b)}(t)
        \qquad \forall t\ge 0.
        \end{align}
        In particular, $B^{(\cdot)}(t):=m_{\textup{c}}^{(\cdot)}(t)-m_{\text{H}}^{(\cdot)}(t) < 0$ for all $t$.
        Since $\dot m_{\textup{c}} = A(t)\,B(t)$ with $A(t)=\hat\lambda(t)M_1(t)>0$, we have
        $\dot m_{\textup{c}}^{(a)}(t)<0$ and $\dot m_{\textup{c}}^{(b)}(t)<0$, so both $m_{\textup{c}}^{(a)}(t)$ and
        $m_{\textup{c}}^{(b)}(t)$ are strictly decreasing and bounded below by $m_{\text{L}}$; hence they converge.

        Next, along the boundary corridor we have $m_{\textup{c}}(t) < m_{\text{H}}(t)\le m_{\text{D}}(t)$
        (using \cref{prop:m_d_m_h_inequality}), and therefore
        \[
        \dot m_{\text{H}}(t)
        =
        \frac{\hat{\lambda}(t)}{2(M_1(t))^2\,m_{\textup{c}}(t)}\bigl(2m_{\textup{c}}(t)\Gamma_1(t)-\Gamma_2(t)\bigr)
        =
        \frac{\hat{\lambda}(t)\Gamma_1(t)}{(M_1(t))^2\,m_{\textup{c}}(t)}\bigl(m_{\textup{c}}(t)-m_{\text{D}}(t)\bigr)
        <0,
        \]
        so $m_{\text{H}}^{(a)}(t)$ and $m_{\text{H}}^{(b)}(t)$ are also strictly decreasing and bounded below by $m_{\text{L}}$,
        hence both converge as well.

        Since $m_{\textup{c}}^{(\cdot)}(t)$ converges while $\dot m_{\textup{c}}^{(\cdot)}(t)=A^{(\cdot)}(t)\,B^{(\cdot)}(t)<0$,
        we must have $\dot m_{\textup{c}}^{(\cdot)}(t)\to 0$, hence $B^{(\cdot)}(t)\to 0$ along each boundary trajectory:
        \begin{align}\label{eq:mc_to_mH}
        m_{\textup{c}}^{(\cdot)}(t)-m_{\text{H}}^{(\cdot)}(t)\to 0.
        \end{align}
        Likewise, since $m_{\text{H}}^{(\cdot)}(t)$ converges and $\dot m_{\text{H}}^{(\cdot)}(t)<0$, we must have
        $\dot m_{\text{H}}^{(\cdot)}(t)\to 0$. Because $\hat\lambda(t)>0$, $M_1(t)>0$, $m_{\textup{c}}(t)>0$, and
        $\Gamma_1(t)\ge 0$, this forces
        \begin{align}\label{eq:mc_to_mD}
        m_{\textup{c}}^{(\cdot)}(t)-m_{\text{D}}^{(\cdot)}(t)\to 0.
        \end{align}
        Combining \eqref{eq:mc_to_mH} and \eqref{eq:mc_to_mD} yields $m_{\text{H}}^{(\cdot)}(t)-m_{\text{D}}^{(\cdot)}(t)\to 0$.
        By the equality case of \cref{prop:m_d_m_h_inequality}, this implies that $p^{(\cdot)}(t)$ collapses to a single feature
        as $t\to\infty$.

        We now rule out collapse to any major coordinate $k>1$. If $m_{\textup{c}}(t)\to \mu_k/2$ for some $k>1$, then
        the minor feature $j=1$ has asymptotic growth rate
        \[
        r_1(t)=2\mu_1\Bigl(1-\frac{\mu_1}{2m_{\textup{c}}(t)}\Bigr)\to
        2\mu_1\Bigl(1-\frac{\mu_1}{\mu_k}\Bigr) > 0,
        \]
        so any ``collapse-to-major'' state is unstable (the minor coordinate gets re-amplified).
        Therefore, the only possible boundary limit is the \emph{minor-feature collapse} equilibrium
        \[
        E_1:\qquad p(t)\to e_1,\qquad m_{\textup{c}}(t)\to m_{\text{L}}=\frac{\mu_1}{2}.
        \]
        In particular, \emph{both} boundary trajectories (for $\alpha_a$ and $\alpha_b$) must converge to $E_1$.

        Now use the special structure of the uniform initialization.
        Under uniform positive initialization, the initial proportions $p(0)$ depend only on $\vmu$ (and not on $\alpha$),
        so the two trajectories share identical $p(0)$ and differ only in their initial $m_{\textup{c}}(0)$:
        \[
        m_{\textup{c}}(0;\alpha_b) > m_{\textup{c}}(0;\alpha_a).
        \]
        We examine the stability of $E_1$ in the $m_{\textup{c}}$-direction.
        At $E_1$ we have $p=e_1$, hence $M_1=\mu_1$ and $m_{\text{H}}=m_{\text{L}}$.
        Thus the $m_{\textup{c}}$-dynamics
        \[
        \dot m_{\textup{c}}=\hat\lambda\,M_1\,(m_{\textup{c}}-m_{\text{H}})
        \]
        has strictly positive linearization along the $m_{\textup{c}}$ axis:
        \[
        \left.\frac{\partial \dot m_{\textup{c}}}{\partial m_{\textup{c}}}\right|_{E_1}
        =
        \hat\lambda(E_1)\,M_1(E_1)
        =
        \hat\lambda(E_1)\,\mu_1
        >0.
        \]
        Hence $E_1$ is \emph{repelling} along the $m_{\textup{c}}$-direction.

        Finally, by uniqueness of solutions to smooth ODEs, trajectories cannot cross in phase space.
        Since the two trajectories start with the \emph{same} initial proportions $p(0)$ but different initial height
        $m_{\textup{c}}(0)$, and since the limiting equilibrium $E_1$ repels along the $m_{\textup{c}}$-direction, it is
        impossible for \emph{both} trajectories to remain on the boundary corridor \eqref{eq:boundary_corridor_two} and still
        converge to $E_1$.

        Therefore, there is at most one boundary initialization scale $\alpha_1$ in $(\alpha_0,\alpha_2)$.
        Combined with the existence shown above, the threshold $\alpha_1$ is unique.

        Since we checked the existence and uniqueness of the boundary threshold $\alpha_1$ in Regime 1-b and 2-a, we can now conclude that for $\alpha < \alpha_1$, the dynamics goes to floor-first behavior(Regime 1) and for $\alpha > \alpha_1$, the dynamics goes to ceiling-first behavior(Regime 2).

        \paragraph{Regimes 2-b and 3.}
        $n_{\vtheta}(0) = \sqrt{2} \|\vmu\|_2 \cdot \alpha$, so $\alpha > \rho\frac{\|\vmu\|_4^4}{\sqrt{2}\|\vmu\|_2^2\|\vmu\|_3^3}$ implies $m_{\textup{c}}(0) = \frac{n_\vtheta(0)}{2\rho} > m_{\text{H}}(0)$.
        When $m_{\textup{c}}(0) > m_{\text{H}}(0)$, we have $B(0) > 0$ and thus $\dot{m_{\textup{c}}}(0) = A(0) B(0) > 0$, so $m_{\textup{c}}$ initially increases.
        We now show that $B(t) > 0$ for all $t \geq 0$. Suppose for contradiction that there exists a first time $\tau > 0$ such that $B(\tau) = 0$ (i.e., $m_{\textup{c}}(\tau) = m_{\text{H}}(\tau)$). Then
        \begin{align*}
            \dot{B}(\tau) &= \dot{m_{\textup{c}}}(\tau) - \dot{m_{\text{H}}}(\tau) \\
            &= A(\tau) B(\tau) - \dot{m_{\text{H}}}(\tau) \\
            &= 0 - \dot{m_{\text{H}}}(\tau) \\
            &= -\frac{\hat{\lambda}(\tau)}{2\lps M_1(\tau) \rps^2 m_{\textup{c}}(\tau)} \lps 2m_{\textup{c}}(\tau) \Gamma_1(\tau) - \Gamma_2(\tau) \rps.
        \end{align*}
        Proposition~\ref{prop:m_d_m_h_inequality} gives $m_{\text{D}}(\tau) \geq m_{\text{H}}(\tau)$. Therefore, $2m_{\textup{c}}(\tau) \Gamma_1(\tau) - \Gamma_2(\tau) \leq 0$, ensuring that $\dot{B}(\tau) > 0$.
        Yet, if $\tau$ represents the first instance where $B$ reaches zero from above, we must have $\dot{B}(\tau) \leq 0$. This contradiction establishes that $B(t) > 0$ for all $t \geq 0$, which implies $m_{\textup{c}}(t) > m_{\text{H}}(t)$ for all $t \geq 0$.
        Finally, since $A(t) = \hat{\lambda} M_1(t) > 0$ and $B(t) > 0$ for all $t \geq 0$, it follows that 
        \begin{equation*}
            \dot{m_{\textup{c}}}(t) = A(t) B(t) > 0
        \end{equation*}
        for all $t \geq 0$, proving that $m_{\textup{c}}(t)$ is strictly increasing.
        Additionally, $\alpha \gtrless \alpha_2$ implies $m_{\textup{c}}(0) \gtrless \frac{\mu_{d-1} + \mu_{d}}{2}$, ensuring that sub-regime 2-b exhibits the dynamics claimed for Regime 2, and that Regime 3 follows its respective theorem statement.
    \end{proof}

\subsection{Extension to deeper diagonal linear networks}\label{app:temp_notes_for_depth_L_gt_2}

In this section, we extend our analysis to $L$-layer diagonal linear networks. As the depth increases ($L > 2$), some notational adjustments are necessary.

Recall that the margin is given by
\begin{equation*}
s = \langle \vbeta, \vmu \rangle = \left\langle \vw^{(1)} \odot \vw^{(2)} \odot \cdots \odot \vw^{(L)},\ \vmu \right\rangle,
\end{equation*}
where $\odot$ denotes elementwise (Hadamard) product.

The gradient of the loss $\gL$ with respect to a particular weight $w^{(l)}_j$ can be computed via the chain rule:
\begin{equation*}
\frac{\rd \gL}{\rd w^{(l)}_j} = \frac{\rd \gL}{\rd s} \cdot \frac{\rd s}{\rd w^{(l)}_j} = -\lambda \mu_j \prod_{k \neq l} w^{(k)}_j,
\end{equation*}
where $\lambda$ is as before, and $k\neq l$ indicates multiplication over all layers except $l$.

The squared Euclidean norm of the gradient vector $\nabla_{\vtheta} \gL$ is then
\begin{equation*}
\|\nabla_{\vtheta} \gL\|^2 = \sum_{j=1}^d \sum_{l=1}^L \left( \frac{\rd \gL}{\rd w^{(l)}_j} \right)^2 = \lambda^2 \sum_{j=1}^d \sum_{l=1}^L \mu_j^2 \left( \prod_{k \neq l} w^{(k)}_j \right)^2.
\end{equation*}

Accordingly, we define
\begin{equation*}
n_\vtheta := \sqrt{ \sum_{j=1}^d \sum_{l=1}^L \mu_j^2 \left( \prod_{k \neq l} w^{(k)}_j \right)^2 }.
\end{equation*}

The resulting perturbation is:
\begin{align*}
    \vepsilon_2 &:= \rho \frac{\nabla_{\vtheta} \gL}{\|\nabla_{\vtheta} \gL\|_2}, \\
    (\varepsilon_2)_{w^{(l)}_j} &= -\frac{\rho \mu_j}{n_\vtheta} \prod_{k \neq l} w^{(k)}_j.
\end{align*}

Thus, the perturbed weights are given by
\begin{equation*}
    \hat{w}^{(l)}_j := w^{(l)}_j - \frac{\rho \mu_j}{n_\vtheta} \prod_{k \neq l} w^{(k)}_j.
\end{equation*}

The perturbed product then takes the form
\begin{equation*}
    \hat{\beta}_j := \prod_{l=1}^L \hat{w}^{(l)}_j.
\end{equation*}

Therefore, the ODE for each coordinate is:
\begin{equation*}
    \dot{w}^{(l)}_j = -\frac{\partial \gL(\hat{\vtheta})}{\partial w^{(l)}_j} = \hat{\lambda} \mu_j \prod_{k \neq l} \hat{w}^{(k)}_j.
\end{equation*}

Additionally, we define an assumption on the weight initialization scheme:
\begin{assumption}\label{ass:initialization_symmetry}
    The weights are initialized symmetrically at $t=0$, that is, $w^{(1)}_j(0) = w^{(2)}_j(0) = \cdots = w^{(L)}_j(0) = w_j(0)$ for all $j$.
\end{assumption}

Now we show the balancedness-preserving property of the SAM flow.
\begin{lemma}
    Suppose Assumption~\ref{ass:initialization_symmetry} holds. Then for all $t \geq 0$,
    \begin{equation*}
        w^{(l)}_j(t) = w_j(t) \quad \text{for every } l,\, j.
    \end{equation*}
    Furthermore, the sign of $w_j(t)$ is preserved for all $t \geq 0$.
\end{lemma}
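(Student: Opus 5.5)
The plan mirrors the depth-2 argument in Lemma~\ref{lem:balancedness} and Proposition~\ref{prop:balancedness_preservation}: first show that the symmetric configuration is invariant under the flow, then read off sign preservation from the scalar ODE that remains. Throughout we work on a maximal existence interval where $n_\vtheta(t)>0$, so that the vector field
\[
\dot w^{(l)}_j=\hat\lambda\,\mu_j\prod_{k\neq l}\hat w^{(k)}_j,
\qquad
\hat w^{(k)}_j=w^{(k)}_j-\frac{\rho\mu_j}{n_\vtheta}\prod_{m\neq k}w^{(m)}_j,
\]
is locally Lipschitz. Then solutions exist and are unique.

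\textbf{Invariance of the symmetric set.} The vector field is equivariant under any permutation $\pi$ of the layer indices. Permuting the blocks $(\vw^{(1)},\dots,\vw^{(L)})$ leaves $\vbeta$, $n_\vtheta$, $\hat s$ and $\hat\lambda$ unchanged. It also permutes the perturbed weights $\hat\vw^{(l)}$ and the velocities $\dot\vw^{(l)}$ in the same way. So if $\vtheta(t)$ is a solution, then $\pi\vtheta(t)$ is also a solution. Under Assumption~\ref{ass:initialization_symmetry}, $\pi\vtheta(0)=\vtheta(0)$, and uniqueness gives $\pi\vtheta(t)=\vtheta(t)$ for every $\pi$. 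Hence $w^{(l)}_j(t)=w_j(t)$ for all $l$ and $j$.

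As a more hands-on check, I would compute $\Delta^{(l,l')}_j:=w^{(l)}_j-w^{(l')}_j$ directly. Its derivative factors as $\Delta^{(l,l')}_j$ times a locally bounded coefficient. This is because $\prod_{k\neq l}\hat w^{(k)}_j-\prod_{k\neq l'}\hat w^{(k)}_j=(\hat w^{(l')}_j-\hat w^{(l)}_j)\prod_{k\neq l,l'}\hat w^{(k)}_j$, and $\hat w^{(l')}_j-\hat w^{(l)}_j$ is itself a multiple of $\Delta^{(l,l')}_j$. Gr\"onwall's inequality with $\Delta(0)=0$ then gives $\Delta\equiv 0$.

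\textbf{Reduced dynamics and sign preservation.} On the symmetric set, $\prod_{m\neq k}w^{(m)}_j=w_j^{L-1}$. So $\hat w_j=w_j-\frac{\rho\mu_j}{n_\vtheta}w_j^{L-1}$, and
\[
\dot w_j=\hat\lambda\,\mu_j\,\hat w_j^{\,L-1}
=\hat\lambda\,\mu_j\,w_j^{L-1}\Bigl(1-\frac{\rho\mu_j}{n_\vtheta}w_j^{L-2}\Bigr)^{L-1}.
\]
This has the form $\dot w_j=g_j(t)\,w_j$, with $g_j(t):=\hat\lambda\mu_j w_j^{L-2}\bigl(1-\frac{\rho\mu_j}{n_\vtheta}w_j^{L-2}\bigr)^{L-1}$ continuous in $t$ along the solution. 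Therefore $w_j(t)=w_j(0)\exp\!\bigl(\int_0^t g_j(s)\,ds\bigr)$, which keeps the sign of $w_j(0)$ (and stays $0$ if $w_j(0)=0$).

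\textbf{Main obstacle.} The step that needs care is well-posedness. $n_\vtheta$ appears in a denominator, and for $L\ge3$ the perturbation is not linear in $w$. We need $n_\vtheta>0$ along the trajectory, which holds as long as some $w_j\neq 0$ with $L\ge2$. The sign-preservation formula itself shows that no nonzero coordinate reaches $0$ in finite time, so $n_\vtheta$ stays positive wherever the solution exists. The statement should then be read on the maximal existence interval. For $L>2$ this interval may be finite because of blow-up, as in the $\ell_\infty$ case of Remark~\ref{remark:finite-time-blow-up}.
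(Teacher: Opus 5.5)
Your proposal is correct and follows essentially the paper's route. Balancedness comes from layer symmetry plus uniqueness: the paper phrases it as invariance of the per-coordinate diagonal $\{(x,\dots,x)\}\subset\sR^L$, and your Gr\"onwall variant on $\Delta^{(l,l')}_j$ is the depth-$L$ analogue of Lemma~\ref{lem:balancedness}. Sign preservation comes from the reduced scalar ODE: the paper argues that $w_j=0$ is an equilibrium and invokes uniqueness, while your formula $w_j(t)=w_j(0)\exp\bigl(\int_0^t g_j(s)\,ds\bigr)$ is exactly what it does for depth~2 in Proposition~\ref{prop:balancedness_preservation}.

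Your explicit treatment of well-posedness ($n_\vtheta>0$, maximal existence interval) is something the paper's proof omits. Note, however, that this interval can end not only by blow-up but also by collapse $n_\vtheta(t)\to0$ as $t\uparrow T_{\max}<\infty$. For $L=2$ with small $\alpha$ this happens because $\dot n_\vtheta\le\mu_d n_\vtheta-\rho\mu_1^2$ in rescaled time, so ``no nonzero coordinate reaches $0$'' holds only on $[0,T_{\max})$.
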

\begin{proof}
Fix $j$. Assume that at some time $t$ all weights corresponding to $j$ across the layers are equal, i.e.,
\begin{equation*}
    w^{(1)}_j(t) = w^{(2)}_j(t) = \cdots = w^{(L)}_j(t) = w_j(t).
\end{equation*}
Then $n^2_\vtheta(t)$ simplifies as follows:
\begin{align*}
    n^2_\vtheta(t) &= \sum_{j=1}^d \sum_{l=1}^L \mu_j^2 \left( \prod_{k \neq l} w^{(k)}_j(t) \right)^2 \\
    &= \sum_{j=1}^d \sum_{l=1}^L \mu_j^2 \left( w_j(t)^{L-1} \right)^2 \\
    &= \sum_{j=1}^d L \mu_j^2\, (w_j(t))^{2L-2}.
\end{align*}

Therefore, the perturbed weight for each layer $l$ simplifies to:
\begin{align*}
    \hat{w}^{(l)}_j(t) &= w^{(l)}_j(t) - \frac{\rho \mu_j}{n_\vtheta(t)} \prod_{k \neq l} w^{(k)}_j(t) \\
    &= w_j(t) - \frac{\rho \mu_j}{n_\vtheta(t)} w_j(t)^{L-1},
\end{align*}
which is independent of $l$. Hence,
\begin{equation*}
    \hat{w}^{(1)}_j(t) = \hat{w}^{(2)}_j(t) = \cdots = \hat{w}^{(L)}_j(t) =: \hat{w}_j(t).
\end{equation*}

Substituting this into the SAM flow equation yields:
\begin{equation*}
    \dot{w}^{(l)}_j(t) = \hat{\lambda}(t) \mu_j \hat{w}_j(t)^{L-1},
\end{equation*}
which is likewise independent of $l$.

Now, for a fixed $j$, consider the $L$-dimensional vector
\begin{equation*}
    u_j(t) := \lps w^{(1)}_j(t), w^{(2)}_j(t), \ldots, w^{(L)}_j(t) \rps.
\end{equation*}
The SAM dynamics specify the ODE:
\begin{equation*}
    \dot{u}_j(t) = F_j \lps u_j(t), \vtheta(t) \rps,
\end{equation*}
where $F_j$ is the vector whose $l$-th entry is $\hat{\lambda}(t) \mu_j \prod_{k \neq l} \hat{w}^{(k)}_j(t)$. This ODE is locally Lipschitz in $u_j$, ensuring uniqueness of solutions for given initial conditions.

Consider the one-dimensional diagonal manifold
\begin{equation*}
    \mathcal{D}_j := \left\{ (x, \ldots, x) \in \sR^L : x \in \sR \right\}.
\end{equation*}
if $u_j(t) \in \mathcal{D}_j$, then $\dot{u}_j(t) \in \mathcal{D}_j$ as well, because all coordinates have the same derivative. So $\mathcal{D}_j$ is invariant under the flow.

Since the initial condition $u_j(0)$ lies in $\mathcal{D}_j$ due to symmetric initialization, and the ODE solution is unique, we conclude that $u_j(t) \in \mathcal{D}_j$ for all $t \geq 0$. Therefore,
\begin{equation*}
    w^{(l)}_j(t) = w_j(t) \qquad \text{for all } l,\, j,\ \text{and } t \geq 0.
\end{equation*}
In summary, Assumption~\ref{ass:initialization_symmetry} guarantees balancedness at all times for any depth $L$.

Next, we consider the sign preservation property.

Recall that on the balanced manifold, we may write $w^{(l)}_j(t) = w_j(t)$ for all $l$, $j$, and $t \geq 0$, so the per-coordinate dynamics reduce to
\begin{equation*}
    \dot{w}_j(t) = \hat{\lambda}(t) \mu_j \lps w_j(t) - \rho \frac{\mu_j}{n_\vtheta (t)} w_j(t)^{L-1} \rps^{L-1}.
\end{equation*}

We claim that the sign of $w_j(t)$ is preserved for all $t \geq 0$. To see this, observe that the right-hand side of the ODE is a smooth (in fact, polynomial) function of $w_j$, so it is locally Lipschitz in $w_j$ for each fixed $t$. In particular, if at some time $\tau$ we have $w_j(\tau)=0$, then $\dot{w}_j(\tau)=0$, so $w_j(t)\equiv 0$ for all $t \geq \tau$ is a solution with the same initial value. By uniqueness of solutions to ODEs with Lipschitz right-hand side, it follows that once $w_j$ reaches zero, it remains identically zero for all future time and cannot cross to the opposite sign. Therefore, if $w_j(0)\neq 0$, the sign of $w_j(t)$ is preserved for all $t \geq 0$ by continuity; if $w_j(0)=0$, it remains zero.

In summary, the sign of $w_j(t)$ cannot change during the flow.

\end{proof}


Utilizing the balancedness-preserving property, we can now extend the lemma for the depth-$L$ diagonal network.

\begin{lemma}
    Under Assumption~\ref{ass:initialization_symmetry} and Assumption~\ref{ass:dataordering}, the rescaled $\ell_2$ SAM flow satisfies, for each coordinate $j$,
\begin{equation*}
    \frac{\rd}{\rd t} \beta_j(t) = r_j^{(L)}(t) \beta_j(t),
\end{equation*}
where
\begin{equation*}
    r_j^{(L)}(t) = L \mu_j \beta_j(t)^{(1-2/L)}\lps 1 - \frac{\rho \mu_j}{n_\vtheta(t)} \beta_j(t)^{(L-2)/L} \rps^{(L-1)},
\end{equation*}
and
\begin{equation*}
    \beta_j(t) = w_j(t)^{L}, \qquad n_\vtheta(t) = L \sum_{k=1}^d \mu_k^2 w_k(t)^{(2L-2)}.
\end{equation*}
\end{lemma}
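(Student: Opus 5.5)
The plan is a direct computation on the balanced manifold, mirroring the proof of \cref{lem:beta_dynamics} for $L=2$. I work with the rescaled flow, so the factor $\hat\lambda(t)$ is dropped throughout. I assume a positive symmetric initialization $w_j(0)>0$, as in \cref{ass:initialization}. By the preceding balancedness lemma, $w^{(l)}_j(t)=w_j(t)$ for all $l$ and $t\ge0$, and the sign of $w_j$ is preserved. Hence $w_j(t)>0$ for all $t$, so the fractional powers $\beta_j^{1-2/L}$ and $\beta_j^{(L-2)/L}$ of $\beta_j=w_j^L$ are well defined and real. On this manifold, the quantity entering the perturbation is the one defined before the lemma, namely $n_\vtheta=\big(\sum_{j}\sum_{l}\mu_j^2\prod_{k\ne l}(w^{(k)}_j)^2\big)^{1/2}=\big(L\sum_k\mu_k^2 w_k^{2L-2}\big)^{1/2}$. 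I read the displayed formula for $n_\vtheta$ in the statement as giving $n_\vtheta^2$, which is the only interpretation consistent with the definition of $\vepsilon_2$.

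\textbf{Step 1 (perturbed weights).} On the balanced manifold, the preceding lemma already gives
\[
\hat w_j=w_j-\frac{\rho\mu_j}{n_\vtheta}w_j^{L-1}=w_j\Big(1-\frac{\rho\mu_j}{n_\vtheta}w_j^{L-2}\Big).
\]
Substituting this into the rescaled flow $\dot w_j=\mu_j\hat w_j^{L-1}$ yields
\[
\dot w_j=\mu_j w_j^{L-1}\Big(1-\frac{\rho\mu_j}{n_\vtheta}w_j^{L-2}\Big)^{L-1}.
\]

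\textbf{Step 2 (chain rule).} Since $\beta_j=w_j^L$, we have $\dot\beta_j=Lw_j^{L-1}\dot w_j$, and therefore
\[
\dot\beta_j=L\mu_j w_j^{2L-2}\Big(1-\frac{\rho\mu_j}{n_\vtheta}w_j^{L-2}\Big)^{L-1}.
\]

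\textbf{Step 3 (rewrite in $\beta_j$).} Using positivity of $w_j$, substitute $w_j^{L-2}=\beta_j^{(L-2)/L}$ and
\[
w_j^{2L-2}=w_j^{L}\,w_j^{L-2}=\beta_j\,\beta_j^{1-2/L}.
\]
Factoring out $\beta_j$ produces exactly $\dot\beta_j=r_j^{(L)}\beta_j$ with the claimed $r_j^{(L)}$. As a sanity check, setting $L=2$ recovers $r_j=2\mu_j(1-\rho\mu_j/n_\vtheta)$ from \cref{lem:beta_dynamics}.

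The computation itself is routine. The only point needing care is justifying the fractional powers, which requires strict positivity of $w_j(t)$ at all times. Positivity follows from the sign-preservation part of the balancedness lemma: uniqueness for the locally Lipschitz ODE forbids $w_j$ from reaching $0$ in finite time when $w_j(0)>0$. The other point is the $n_\vtheta$ versus $n_\vtheta^2$ convention noted above, which I resolve by consistency with the definition of $\vepsilon_2$.
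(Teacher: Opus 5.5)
Your proposal is correct and is essentially the paper's own proof: on the balanced manifold you write $\hat w_j = w_j\bigl(1-\rho\mu_j w_j^{L-2}/n_\vtheta\bigr)$, apply $\dot\beta_j = L w_j^{L-1}\dot w_j$, and convert $w_j^{2L-2}$ and $w_j^{L-2}$ into powers of $\beta_j$. The only cosmetic difference is that the paper carries $\hat\lambda(t)$ through and absorbs it into time at the end, while you drop it from the outset; your side remarks (reading the displayed $n_\vtheta$ as $n_\vtheta^2$, and adding a positive initialization so the fractional powers are ordinary real powers) are correct and make explicit what the paper's proof uses tacitly.
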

\begin{proof}

Now define the effective coefficient per coordinate, for general depth $L$:
\begin{equation*}
    \beta_j(t) := \prod_{l=1}^L w^{(l)}_j (t) = w_j(t)^{(L)}.
\end{equation*}
Under the balanced $\ell_2$ SAM flow, the coordinate dynamics become:

\begin{align*}
    \dot{\beta}_j(t) &= \frac{\rd}{\rd t} \lps w_j(t)^{L} \rps = L w_j(t)^{(L-1)} \dot{w}_j(t) \\
    &= L w^{(L-1)}_j \hat{\lambda} \mu_j \hat{w}^{(L-1)}_j.
\end{align*}

We first compute the perturbed weight for coordinate $j$:
\begin{equation*}
    \hat{w}_j = w_j - \frac{\rho \mu_j}{n_\vtheta} w_j^{L-1} = w_j \lps 1 - \frac{\rho \mu_j}{n_\vtheta} w_j^{L-2} \rps.
\end{equation*}
Substituting this into the expression for $\dot{\beta}_j(t)$ gives:
\begin{equation*}
    \dot{\beta}_j(t) = L \hat{\lambda}(t) \mu_j\, w_j^{2L-2}\, \lps 1 - \frac{\rho \mu_j}{n_\vtheta(t)} w_j^{L-2} \rps^{L-1}.
\end{equation*}
To express this in terms of $\beta_j = w_j^L$, note that
\begin{equation*}
    w_j^{2L-2} = \beta_j^{2-2/L}, \qquad
    w_j^{L-2} = \beta_j^{(L-2)/L}.
\end{equation*}
Therefore, we obtain:
\begin{equation*}
    \dot{\beta}_j(t) = L \hat{\lambda}(t)\, \mu_j\, \beta_j(t)^{2-2/L} \lps 1 - \frac{\rho \mu_j}{n_\vtheta(t)} \beta_j(t)^{(L-2)/L} \rps^{L-1}.
\end{equation*}

\end{proof}


Absorbing $\hat{\lambda}(t)$ into the time parameter yields the rescaled SAM flow equation:
\begin{equation*}
    \frac{\rd}{\rd t} \beta_j(t) = r_j^{(L)}(t) \, \beta_j(t),
\end{equation*}
where
\begin{equation*}
    r_j^{(L)}(t) := L \mu_j \, \beta_j(t)^{1-2/L} \lps 1 - \frac{\rho \mu_j}{n_\vtheta(t)} \beta_j(t)^{(L-2)/L} \rps^{L-1}.
\end{equation*}


This provides the Depth-$L$ generalization of the SAM feature amplification dynamics.

\begin{proposition}
    Consider the depth-$L$ diagonal network under Assumption~\ref{ass:initialization_symmetry} and Assumption~\ref{ass:dataordering}. Define
\begin{equation*}
    \beta_j(t) := \prod_{l=1}^L w^{(l)}_j (t) = w_j(t)^{L}, \qquad z_j(t) := \mu_j w_j(t)^{L-2}, \qquad n^2_\vtheta(t) := L \sum_{k=1}^d \mu_k^2 w_k(t)^{(2L-2)},
\end{equation*}
and the critical effective scale:
\begin{equation*}
    z_c(t) := \frac{n_\vtheta(t)}{\rho L}.
\end{equation*}
Then for each time $t$, we have
\begin{equation*}
    r_j^{(L)}(t) = L z_j(t) \lps 1 - \frac{\rho}{n_\vtheta(t)} z_j(t) \rps^{L-1} =: \phi_t \lps z_j(t) \rps.
\end{equation*}
The function $z \mapsto \phi_t(z)$ is strictly increasing on $(0, z_c(t))$, strictly decreasing on $(z_c(t), n_\vtheta(t)/\rho)$, and possesses a unique interior maximum at $z = z_c(t)$.

In particular, at any fixed $t$, the coordinate(s) whose effective scale $z_j(t)$ is closest to the peak of $\phi_t$, i.e., near $z_c(t)$, experience the largest instantaneous relative growth rate.
\end{proposition}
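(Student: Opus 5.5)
We will prove the proposition in three steps: rewrite $r_j^{(L)}$ in terms of $z_j$, then study the one-variable function $\phi_t$, then read off the growth-rate ordering.

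The rewrite is a direct substitution into the preceding lemma, which gives
$$r_j^{(L)} = L\mu_j\beta_j^{1-2/L}\Bigl(1-\tfrac{\rho\mu_j}{n_\vtheta}\beta_j^{(L-2)/L}\Bigr)^{L-1}.$$
With $\beta_j=w_j^L$ we have $\beta_j^{1-2/L}=w_j^{L-2}$ and $\beta_j^{(L-2)/L}=w_j^{L-2}$. Hence $\mu_j\beta_j^{1-2/L}=z_j$ and $\frac{\rho\mu_j}{n_\vtheta}\beta_j^{(L-2)/L}=\frac{\rho}{n_\vtheta}z_j$. This yields $r_j^{(L)}=Lz_j(1-\frac{\rho}{n_\vtheta}z_j)^{L-1}=\phi_t(z_j)$.

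One caveat concerns the normalization. The lemma's exponent bookkeeping, $w^{2L-2}$ versus $\beta_j\cdot w^{L-2}$, should be checked, so that $\dot\beta_j = r_j\beta_j$ holds with exactly this $r_j$. If a stray power appears, I would absorb it into the definition of $z_j$ and keep the same structure. I would also treat $n_\vtheta$ consistently as the square root of $L\sum_k\mu_k^2w_k^{2L-2}$; the statement's display writes $n_\vtheta$ without the square root in one place.

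Next we analyze $\phi_t$ at fixed $t$. Set $a:=\rho/n_\vtheta(t)>0$, so that $\phi(z)=Lz(1-az)^{L-1}$. Differentiating gives
$$\phi'(z)=L(1-az)^{L-2}\bigl[(1-az)-(L-1)az\bigr]=L(1-az)^{L-2}(1-Laz).$$
On $(0,1/a)=(0,n_\vtheta/\rho)$ the factor $(1-az)^{L-2}$ is positive. So the sign of $\phi'$ is the sign of $1-Laz$: positive for $z<1/(La)=n_\vtheta/(\rho L)=z_c$ and negative for $z_c<z<1/a$. This gives strict increase, then strict decrease, with a unique interior maximum at $z_c$. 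We note that $z_c<n_\vtheta/\rho$ since $L\ge 2$, so the interval split is nondegenerate.

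The last claim, that the coordinates with $z_j$ nearest $z_c$ grow fastest, then follows from unimodality. If $z_j$ and $z_k$ lie on the same side of $z_c$ within $(0,n_\vtheta/\rho)$, the one closer to $z_c$ has the larger $\phi_t$. Across the peak, "closest" must be read through this unimodal profile rather than as literal distance, because $\phi_t$ is not symmetric. I would state the conclusion in that qualified form, matching the paper's phrasing "near $z_c(t)$".

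The main obstacle is not the calculus but the range of the $z_j$. We must confirm that $z_j\in(0,n_\vtheta/\rho)$, or else handle $z_j\ge n_\vtheta/\rho$, where $1-az_j\le 0$ and the sign of $\phi_t$ depends on the parity of $L-1$. To address this, I would show $z_j< n_\vtheta/\rho$ using
$$n_\vtheta\ge\sqrt L\,\mu_j w_j^{L-1}.$$
This gives $az_j\le \rho w_j^{-1}/\sqrt L$, so the range holds whenever $w_j>\rho/\sqrt L$. Otherwise I would simply restrict the monotonicity statement to the interval $(0,n_\vtheta/\rho)$, exactly as stated, and remark that outside it the ordering is not claimed. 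Positivity of $z_j$ comes from the sign preservation in the balancedness lemma, given a positive initialization.
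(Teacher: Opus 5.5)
Your proof is correct and follows essentially the same route as the paper: you substitute $\mu_j\beta_j^{1-2/L}=\mu_j\beta_j^{(L-2)/L}=z_j$ into the depth-$L$ growth rate, then read the monotonicity of $\phi_t$ off $\phi_t'(z)=L(1-cz)^{L-2}(1-Lcz)$ with $c=\rho/n_\vtheta(t)$. Your extra caveats sharpen the paper's informal final sentence and do not reveal gaps in the stated monotonicity claim: the range of $z_j$ matters because for odd $L$ the function $\phi_t$ becomes positive and increasing again beyond $n_\vtheta(t)/\rho$, and ``closest'' has to be read through the unimodal profile rather than as literal distance.
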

\begin{proof}

In rescaled SAM time, we have
\begin{equation*}
    r_j^{(L)}(t) = L \mu_j\, \beta_j(t)^{1-2/L} \lps 1 - \frac{\rho \mu_j}{n_\vtheta(t)} \beta_j(t)^{(L-2)/L} \rps^{L-1},
\end{equation*}
where
\begin{equation*}
    n_\vtheta^2(t) = L \sum_{k=1}^d \mu_k^2 w_k(t)^{2L-2}.
\end{equation*}
Define the effective $z$-scale by
\begin{equation*}
    z_j(t) := \mu_j\, w_j(t)^{L-2}.
\end{equation*}
Note that
\begin{equation*}
    \mu_j\, \beta_j^{(L-2)/L} = \mu_j\, w_j^{L-2} = z_j.
\end{equation*}
Plugging this yields
\begin{equation*}
    r_j^{(L)}(t) = \phi_t(z_j(t)), \qquad \text{where} \quad \phi_t(z) := L z \lps 1 - \frac{\rho}{n_\vtheta(t)} z \rps^{L-1}.
\end{equation*}

Define the critical effective scale:
\begin{equation*}
    z_c(t) := \frac{n_\vtheta(t)}{\rho L}.
\end{equation*}

Consider $\phi_t(z) = L z \lps 1 - cz \rps^{L-1}$, where $c = \frac{\rho}{n_\vtheta(t)} > 0$. Its derivative with respect to $z$ is:
\begin{equation*}
    \frac{\rd}{\rd z} \phi_t(z) = L \lps 1 - cz \rps^{L-2} \lps 1 - Lcz \rps,
\end{equation*}
so that:
\begin{itemize}
    \item $\phi_t'(z) > 0$ for $0 < z < z_c(t)$,
    \item $\phi_t'(z) = 0$ when $z = z_c(t)$,
    \item $\phi_t'(z) < 0$ for $z_c(t) < z < n_\vtheta(t)/\rho$.
\end{itemize}
Therefore, for each fixed $t$, the function $z \mapsto \phi_t(z)$ is strictly increasing on $(0, z_c(t))$, strictly decreasing on $(z_c(t), n_\vtheta(t)/\rho)$, and has a unique interior maximum at $z = z_c(t)$.

\end{proof}


Unlike the depth-2 case, where each $\mu_j$ is a fixed constant and their order remains unchanged throughout training, in the depth-$L$ case the effective quantities $z_j(t)$ are time-dependent and could, in principle, change order as the SAM flow evolves. However, the following proposition establishes that the order of $z_j(t)$ is actually preserved throughout the entire SAM trajectory.

\begin{proposition}
    Under Assumptions~\ref{ass:initialization_symmetry} and~\ref{ass:dataordering},
    the order of the $z_j(t)$ is preserved in the depth-$L$ SAM flow. That is, if $\mu_1 < \cdots < \mu_d$, then $z_1(t) < z_2(t) < \cdots < z_d(t)$ for all $t \geq 0$.
\end{proposition}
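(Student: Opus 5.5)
The plan is to show that all the effective scales $z_j(t)$ obey one and the same scalar non-autonomous ODE. Their ordering is then preserved because solutions of a scalar ODE with a locally Lipschitz right-hand side cannot cross.

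Throughout, I work under the uniform positive initialization $w_j(0)=\alpha>0$ used in this section, on the maximal existence interval of the rescaled flow. By the balancedness lemma above, $w^{(l)}_j(t)=w_j(t)$ for all $l$. By its sign-preservation part, $w_j(t)>0$ for all $t$, so $n_\vtheta(t)>0$ and $c(t):=\rho/n_\vtheta(t)$ is a continuous, positive function of $t$. At $t=0$ we have $z_j(0)=\mu_j\alpha^{L-2}$, which is strictly increasing in $j$ by Assumption~\ref{ass:dataordering}. For $L=2$ the claim is trivial since $z_j\equiv\mu_j$, so assume $L\ge 3$.

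First I compute the dynamics of $z_j$. From the balanced rescaled flow,
\[
\dot w_j = \mu_j w_j^{L-1}\bigl(1-c(t)\,\mu_j w_j^{L-2}\bigr)^{L-1},
\]
so $\frac{\rd}{\rd t}\log w_j = z_j\bigl(1-c(t)z_j\bigr)^{L-1}$. Since $\log z_j=\log\mu_j+(L-2)\log w_j$, this gives
\[
\dot z_j(t) = g\bigl(t,z_j(t)\bigr), \qquad g(t,z):=(L-2)\,z^2\bigl(1-c(t)z\bigr)^{L-1}.
\]
The key point is that $g$ does not depend on $j$: the only coupling between coordinates is through the common scalar $c(t)$. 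Moreover, $g$ is continuous in $t$ and polynomial, hence locally Lipschitz, in $z$.

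Next I use the non-crossing argument. Treat $c(\cdot)$ as a given continuous function along the actual trajectory, so each $z_j(\cdot)$ is a solution of the scalar ODE $\dot z=g(t,z)$. Suppose, for contradiction, that $z_j(t^\star)=z_{j+1}(t^\star)$ at some time $t^\star$. Then $z_j$ and $z_{j+1}$ solve the same initial value problem at time $t^\star$. By Picard--Lindel\"of uniqueness, applied backward in time on $[0,t^\star]$ where both solutions exist and stay in a compact set, they coincide on $[0,t^\star]$. This gives $z_j(0)=z_{j+1}(0)$, which contradicts $\mu_j<\mu_{j+1}$. By continuity, the strict inequality $z_j(t)<z_{j+1}(t)$ then persists for all $t$ in the existence interval. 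Chaining over $j$ gives $z_1(t)<\cdots<z_d(t)$.

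The main obstacle is mostly bookkeeping: justifying that freezing $c(t)$ as an external continuous coefficient is legitimate, and that uniqueness applies backward in time. Both follow because the full system is locally Lipschitz while $w_j>0$ and $n_\vtheta>0$. A second point to flag is that the statement needs the initial ordering $z_j(0)$ increasing. This holds under the uniform initialization $\alpha\vone$, or more generally whenever $\mu_j w_j(0)^{L-2}$ is increasing in $j$, but not under an arbitrary layer-symmetric initialization. I would add this hypothesis explicitly.
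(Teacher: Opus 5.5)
Your proposal is correct and follows essentially the same route as the paper: show that every $z_j$ solves the common scalar non-autonomous ODE $\dot z = (L-2)\,z^2\bigl(1-\rho z/n_\vtheta(t)\bigr)^{L-1}$, then use uniqueness (non-crossing) starting from the ordered initial values $z_j(0)=\mu_j\alpha^{L-2}$; the paper keeps the shared factor $\hat\lambda(t)$ in this ODE, whereas you work in rescaled time. Your remark that uniform initialization, not just layer symmetry, is needed for the initial ordering is apt, since the paper's proof also silently uses $w_j(0)=\alpha$; restricting to the maximal existence interval is also a sensible refinement given the possible finite-time blow-up for $L>2$.
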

\begin{proof}
We first compute the ODE satisfied by $z_j(t)$. By definition,
\begin{equation*}
    z_j = \mu_j w_j^{L-2},
\end{equation*}
Taking the time derivative, we get
\begin{align*}
    \dot{z}_j &= \mu_j \lps L-2 \rps w_j^{(L-3)} \dot{w}_j \\
    &= \mu_j \lps L-2 \rps w_j^{(L-3)} \lps \hat{\lambda} \mu_j \hat{w}_j^{(L-1)} \rps \\
\end{align*}
Therefore, the perturbed weight is
\begin{equation*}
    \hat{w}_j = w_j \lps 1 - \frac{\rho \mu_j}{n_\vtheta} w_j^{(L-2)} \rps.
\end{equation*}
Also, we get
\begin{equation*}
    w_j^{(L-3)} \hat{w}_j^{(L-1)} = w_j^{(2L-4)} \lps 1 - \frac{\rho \mu_j}{n_\vtheta} w_j^{(L-2)} \rps^{(L-1)}.
\end{equation*}
Using $w_j^{(L-2)} = \frac{z_j}{\mu_j}$ and $w_j^{(2L-4)} = \frac{z_j^2}{\mu_j^2}$, we obtain
\begin{equation*}
    \dot{z}_j = (L-2) \hat{\lambda} \mu_j^2 \frac{z_j^2}{\mu_j^2} \lps 1 - \frac{\rho \mu_j}{n_\vtheta} \frac{z_j}{\mu_j} \rps^{(L-1)} = (L-2) \hat{\lambda} z_j^2 \lps 1 - \frac{\rho z_j}{n_\vtheta} \rps^{(L-1)}.
\end{equation*}
Thus, the ODE for $z_j(t)$ can be expressed as
\begin{equation*}
    \dot{z}_j(t) = f(t, z_j(t)) := (L-2) \hat{\lambda} \, z_j(t)^2 \lps 1 - \frac{\rho z_j(t)}{n_\vtheta(t)} \rps^{L-1}.
\end{equation*}
Notice that in this expression, the dependence on $j$ appears only through $z_j(t)$; both $\hat{\lambda}$ and $n_\vtheta(t)$ are time-dependent scalars shared across all coordinates. So each $z_j(t)$ solves the same scalar non-autonomous ODE,
\begin{equation*}
    \dot{z}(t) = f(t, z(t)),
\end{equation*}
with $z(t) = z_j(t)$.

Now at $t=0$, under symmetric positive init $w_j(0) = \alpha > 0$, we have $z_j(0) = \mu_j \alpha^{L-2}$. Since $\mu_1 < \cdots < \mu_d$ and $\alpha^{L-2} > 0$, we have $z_1(0) < z_2(0) < \cdots < z_d(0)$. For this ODE with $f$ is smooth and locally Lipschitz in $z$, the two different solutions $z_j(t)$ cannot cross each other. If two solutions ever meet (same values at some time), then uniqueness makes them to be identical for all times. So the order of $z_j(t)$ is preserved for all $t \geq 0$. 
Thus, we have $z_1(t) < z_2(t) < \cdots < z_d(t)$ for all $t \geq 0$.

\end{proof}

\subsection{Proofs for \texorpdfstring{\cref{sec:initial-sfd}}{Section 4.2.4}}

\subsubsection{Derivation of the Dynamics of \texorpdfstring{$\vbeta(t)$}{beta(t)}}
\label{app:I(t)derivation}

The dynamics of $\vbeta(t)=\vw(t) \odot \vw(t)$ is given by
\begin{align*}
    \dot{\vbeta}(t) = \dot{\vw}(t) \odot \vw(t) + \vw(t) \odot \dot{\vw}(t).
\end{align*}

By \cref{eq:gradient-flow-w}, it is given as
\begin{align*}
    \dot{\vbeta}(t) 
    &= 2\vmu \odot \vw(t) \odot \left( \vw(t)-\rho \frac{\vmu \odot \vw(t)}{n_\vtheta(t)} \right) \\
    &= 2\vmu \odot \left( \vbeta(t)-\rho \frac{\vmu \odot \vbeta(t)}{n_\vtheta(t)} \right).
\end{align*}

Coordinate-wise, we have the linear equation

\begin{align*}
    \dot{\beta}_j(t) = 2\mu_j \left( \beta_j(t)-\rho \frac{\mu_j \beta_j(t)}{n_\vtheta(t)} \right) = 2\mu_j \beta_j(t)\left( 1-\rho \frac{\mu_j}{n_\vtheta(t)} \right).
\end{align*}

Therefore, separating variables and integrating, we get
\begin{align*}
    &\frac{\dot{\beta}_j(t)}{\beta_j(t)} = 2\mu_j-2\rho \frac{\mu_j^2}{n_\vtheta(t)} \\
    \Rightarrow &\int_0^t \frac{\dot{\beta}_j(s)}{\beta_j(s)} ds = \int_0^t \left( 2\mu_j-2\rho \frac{\mu_j^2}{n_\vtheta(s)} \right) ds \\
    \Rightarrow &\log \frac{\beta_j(t)}{\beta_j(0)} = 2\mu_j t-2\rho \mu_j^2 \int_0^t \frac{1}{n_\vtheta(s)} ds.
\end{align*}

Define $I(t):=\int_0^t \frac{1}{n_\vtheta(s)} ds$.
Then, the solution is given by
\begin{align*}
    \beta_j(t) & = \beta_j(0) \exp\left( 2\mu_j t-2\rho \mu_j^2 I(t) \right) & \text{for } j \in [d].
\end{align*}

\subsubsection{Proof of \texorpdfstring{\cref{thm:l2}}{Theorem 4.6}}
\label{app:thm-l2}

Before proving Theorem~\ref{thm:l2}, we establish Theorem~\ref{thm:I(t)}, which provides lower and upper bounds for $I(t)$ and serves as a key ingredient in the proof of Theorem~\ref{thm:l2} below.

\begin{restatable}{theorem}{it}
    Suppose ${\vw}^{(1)}={\vw}^{(2)}=\valpha \in \R^d$. Let $({\vw}^{(1)}(t))_{t \geq 0}$ and $({\vw}^{(2)}(t))_{t \geq 0}$ follow the rescaled $\ell_2$-SAM flow (\ref{eq:gradient_flow_spatial_trajectory}) reduced to (\ref{eq:gradient-flow-w}) with perturbation radius $\rho$ and data point $\vmu$. 
    Define $\underline{C}_{\vmu, \valpha} = \frac{\mu_1}{\sqrt{2\sum_{j=1}^d \mu_j^2 \alpha_j^2}}$ and $\overline{C}_{\vmu, \valpha} = \frac{\|\vmu\|_2^2}{\sqrt{2d} (\prod_{j=1}^d \mu_j \alpha_j)^{\nicefrac{1}{d}} \|\vmu\|_1}$.
    Then,

    \begin{enumerate}[label=(\alph*)]
        \item $I(t) \geq \frac{1}{\rho \mu_1^2} \log \left( \frac{1}{\rho \underline{C}_{\vmu, \valpha} \exp(-\mu_1 t)+1-\rho \underline{C}_{\vmu, \valpha}}\right)$ when $\frac{I(t)}{t} \geq \frac{1}{\rho (\mu_1+\mu_2)}$,
        \item $I(t) \leq \frac{d}{\rho \|\vmu\|^2_2} \log \left( \frac{1}{\rho \overline{C}_{\vmu, \valpha} \exp(-\frac{\| \vmu\|_1}{d} t)+1-\rho \overline{C}_{\vmu, \valpha}}\right)$.
    \end{enumerate}
    \label{thm:I(t)}
\end{restatable}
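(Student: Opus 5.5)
The plan is to turn the implicit relation between $I(t)$ and $n_\vtheta(t)$ into a scalar differential inequality for $I$ alone, and then integrate it. By \cref{eq:beta} with $\beta_j(0)=\alpha_j^2$ (under the balanced initialization $\vw^{(1)}(0)=\vw^{(2)}(0)=\valpha$),
\begin{equation*}
\dot I(t)=\frac{1}{n_\vtheta(t)},\qquad n_\vtheta(t)^2=2\sum_{j=1}^d \mu_j^2\alpha_j^2\exp\bigl(2\mu_j t-2\rho\mu_j^2 I(t)\bigr).
\end{equation*}
An upper bound on $n_\vtheta$ will give a lower bound on $\dot I$, and a lower bound on $n_\vtheta$ will give an upper bound on $\dot I$. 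In both cases the resulting inequality has the form $\dot I\gtrless K e^{-at+bI}$. This is solved exactly through the substitution $u=e^{-bI}$, which linearizes it: $\dot u=-b\dot I u\lessgtr -bKe^{-at}$. Integrating from $0$ with $u(0)=1$ and taking $-\frac1b\log$ then produces the stated forms.

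\textbf{Part (a).} The exponent of coordinate $j$ is $2\mu_j(t-\rho\mu_j I)$. A short computation shows $\mu_j t-\rho\mu_j^2I\le \mu_1t-\rho\mu_1^2I$ is equivalent to $t\le\rho(\mu_1+\mu_j)I$. For every $j\ge2$ this is implied by $I/t\ge 1/(\rho(\mu_1+\mu_2))$, since $\mu_j\ge\mu_2$. Under this condition every exponent is at most the $j=1$ exponent, so
\begin{equation*}
n_\vtheta^2\le 2\Bigl(\sum_j\mu_j^2\alpha_j^2\Bigr)e^{2\mu_1t-2\rho\mu_1^2I},
\qquad\text{hence}\qquad
\dot I\ge \frac{\underline C_{\vmu,\valpha}}{\mu_1}\,e^{-\mu_1t+\rho\mu_1^2I}.
\end{equation*}
With $u=e^{-\rho\mu_1^2 I}$ this becomes $\dot u\le-\rho\mu_1\underline C_{\vmu,\valpha}e^{-\mu_1 t}$. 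Integrating gives $u(t)\le 1-\rho\underline C_{\vmu,\valpha}(1-e^{-\mu_1t})$, and taking logarithms yields (a).

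The main obstacle is that the hypothesis $I(t)/t\ge 1/(\rho(\mu_1+\mu_2))$ is stated only at the final time $t$, while the integration needs the comparison at every $s\in[0,t]$. My first attempt is to show that $s\mapsto I(s)/s$ is nonincreasing on the relevant range, which would propagate the condition backwards. This follows, for instance, if $n_\vtheta$ is nondecreasing there, because $I(s)/s$ is then an average of a nonincreasing function $1/n_\vtheta$. If monotonicity fails, the fallback is to read the hypothesis as holding on all of $[0,t]$, or to restrict to the interval where it holds. It also helps that $I(s)/s\to 1/n_\vtheta(0)$ as $s\to0$. In the regime of \cref{thm:l2} this limit exceeds $1/(\rho(\mu_1+\mu_2))$ whenever $\alpha<\rho(\mu_1+\mu_2)/(\sqrt2\|\vmu\|_2)$, so the condition already holds near $0$.

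\textbf{Part (b).} Apply AM--GM to the $d$ positive summands of $n_\vtheta^2/2$:
\begin{equation*}
\sum_j\mu_j^2\alpha_j^2e^{2\mu_jt-2\rho\mu_j^2I}\ \ge\ d\Bigl(\prod_j\mu_j\alpha_j\Bigr)^{2/d}\exp\Bigl(\tfrac{2\|\vmu\|_1t-2\rho\|\vmu\|_2^2I}{d}\Bigr).
\end{equation*}
This gives $\dot I\le K e^{-\frac{\|\vmu\|_1}{d}t+\frac{\rho\|\vmu\|_2^2}{d}I}$ with $K=1/(\sqrt{2d}(\prod_j\mu_j\alpha_j)^{1/d})$. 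With $u=e^{-\rho\|\vmu\|_2^2I/d}$ we get $\dot u\ge-\frac{\rho\|\vmu\|_2^2}{d}Ke^{-\|\vmu\|_1t/d}$. Integrating gives $u(t)\ge 1-\rho\overline C_{\vmu,\valpha}(1-e^{-\|\vmu\|_1t/d})$, using $\rho\|\vmu\|_2^2K/\|\vmu\|_1=\rho\overline C_{\vmu,\valpha}$, and taking logarithms yields (b). No side condition is needed here. If the right-hand side reaches $0$, the bound becomes $+\infty$ and is vacuous, so the statement remains valid.
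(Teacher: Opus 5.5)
Your proposal is correct and follows the paper's proof essentially step for step. You write $\dot I=1/n_\vtheta$ via the closed form of $\beta_j$, bound $n_\vtheta$ from above by the dominant $j=1$ exponent for (a) and from below by AM--GM for (b), and integrate the resulting separable inequality; your substitution $u=e^{-bI}$ is just the paper's separation of variables.

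The paper also integrates from $0$ to $t$ without comment, so it too uses the hypothesis of (a) on all of $[0,t]$, which is your fallback reading. Your sign bookkeeping in (a) is the consistent one: the paper's display asserts the exponent differences are $\ge 0$ (they are $\le 0$) and then omits the flip when dividing by $-\rho\mu_1^2$, two slips that cancel.
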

    
\begin{proof}

    From the definition of $I(t)$, $I(t):=\int_0^t \frac{1}{n_\vtheta(s)} ds$, we have $I'(t) = \frac{1}{n_\vtheta(t)}.$
    
    Since we suppose ${\vw}^{(1)}(0) = {\vw}^{(2)}(0)$, and the loss function and dynamics are invariant under exchanging ${\vw}^{(1)}$ and ${\vw}^{(2)}$, we have ${\vw}^{(1)}(t)={\vw}^{(2)}(t)=:\vw(t)$ for all $t\geq 0$.

    From the definition of $n_\vtheta(t)$, we have
    \begin{align*}
        n_\vtheta(t) &=\sqrt{\|\vmu \odot {\vw}^{(1)}(t)\|_2^2 + \|\vmu \odot {\vw}^{(2)}(t)\|_2^2} \\
        &=\sqrt{2 \| \vmu \odot \vw(t) \|_2^2} \\
        &= \sqrt{2 \left( \sum_{j=1}^d \mu_j^2 w_j(t)^2 \right)} \\
        &= \sqrt{2 \left( \sum_{j=1}^d \mu_j^2 \beta_j(t) \right)}.
    \end{align*}

    From \cref{eq:beta}, which is $\beta_j(t) = \beta_j(0) \exp\left( 2\mu_j t-2\rho \mu_j^2 I(t) \right)$, we have
    \begin{align*}
        n_\vtheta(t) &= \sqrt{2 \left( \sum_{j=1}^d \mu_j^2 \beta_j(0) \exp\left( 2\mu_j t-2\rho \mu_j^2 I(t) \right) \right)},
    \end{align*}
    and therefore,
    \begin{align*}
        I'(t) = \frac{1}{\sqrt{2 \left( \sum_{j=1}^d \mu_j^2 \beta_j(0) \exp\left( 2\mu_j t-2\rho \mu_j^2 I(t) \right) \right)}}.
    \end{align*}

    (a) When $\frac{I(t)}{t} \geq \frac{1}{\rho (\mu_1+\mu_2)} \geq \frac{1}{\rho (\mu_1+\mu_j)}$ for $j=2, \dots, d$, it holds that
    $$
    (2\mu_j t - 2\rho \mu_j^2 I(t))-(2\mu_1 t - 2\rho \mu_1^2 I(t)) = 2(\mu_j - \mu_1) (t - \rho (\mu_j+\mu_1) I(t)) \geq 0.
    $$

    Therefore,
    \begin{align*}
        I'(t) &= \frac{1}{\sqrt{2  \sum_{j=1}^d \mu_j^2 \beta_j(0) \exp\left( 2\mu_j t-2\rho \mu_j^2 I(t) \right) }} \\
        &\leq \frac{1}{\sqrt{2  \sum_{j=1}^d \mu_j^2 \beta_j(0) \exp\left( 2\mu_1 t-2\rho \mu_1^2 I(t) \right) }} \\
        &= \frac{1}{\sqrt{2  \sum_{j=1}^d \mu_j^2 \beta_j(0)} \exp\left( \mu_1 t-\rho \mu_1^2 I(t)  \right)}
    \end{align*}
    
    Separating variables and integrating, we get
    \begin{align*}
        &\exp(-\rho \mu_1^2 I(t)) dI \leq \frac{1}{\sqrt{2  \sum_{j=1}^d \mu_j^2 \beta_j(0) }} \exp(-\mu_1 t) dt \\
        \Rightarrow & \int_{I(0)}^{I(t)} \exp(-\rho \mu_1^2 u) du \leq \int_0^t \frac{1}{\sqrt{2  \sum_{j=1}^d \mu_j(s)^2 \beta_j(0) }} \exp(-\mu_1 s) ds \\
        \Rightarrow & -\frac{1}{\rho \mu_1^2} (\exp(-\rho \mu_1^2 I(t)) - \exp(-\rho \mu_1^2 I(0))) \leq -\frac{1}{\sqrt{2  \sum_{j=1}^d \mu_j(s)^2 \beta_j(0) }} \frac{1}{\mu_1} (\exp(-\mu_1 t) - \exp(-\mu_1 0)) \\
        \underset{(a)}{\Rightarrow} &\frac{1}{\rho \mu_1^2} (\exp(-\rho \mu_1^2 I(t)) - 1) \geq \frac{1}{\sqrt{2  \sum_{j=1}^d \mu_j(s)^2 \beta_j(0) }}\frac{1}{\mu_1} (\exp(-\mu_1 t) - 1) \\
        \Rightarrow & \exp(-\rho \mu_1^2 I(t))  \geq \rho \frac{\mu_1}{\sqrt{2  \sum_{j=1}^d \mu_j(s)^2 \beta_j(0) }} (\exp(-\mu_1 t) - 1) +1\\
        \Rightarrow & -\rho \mu_1^2 I(t)  \geq \log \left( \rho \underline{C}_{\vmu, \valpha} (\exp(-\mu_1 t) - 1) +1 \right)\\
        \Rightarrow & I(t) \geq \frac{1}{\rho \mu_1^2} \log \left(\frac{1}{ \rho \underline{C}_{\vmu, \valpha}\exp(-\mu_1 t) +1-\rho \underline{C}_{\vmu, \valpha}} \right),
    \end{align*}
    where (a) holds since $I(0) = 0$ from the definition of $I(t)$.

    (b) By AM-GM inequality, we have
    \begin{align*}
        I'(t) &= \frac{1}{\sqrt{2  \sum_{j=1}^d \mu_j^2 \beta_j(0) \exp\left( 2\mu_j t-2\rho \mu_j^2 I(t) \right) }} \\
        &\leq \frac{1}{\sqrt{2d  \left( \prod_{j=1}^d \mu_j^2 \beta_j(0) \exp\left( 2\mu_j t-2\rho \mu_j^2 I(t) \right) \right)^{1/d} }} \\
        &= \frac{1}{\sqrt{2d  \left( \prod_{j=1}^d \mu_j^2 \beta_j(0) \right)^{1/d} \exp\left( \frac{2 \sum_{j=1}^d \mu_j}{d} t-\frac{2\rho \sum_{j=1}^d \mu_j^2}{d} I(t) \right)}} \\
        &= \frac{1}{\sqrt{2d  \left( \prod_{j=1}^d \mu_j^2 \alpha_j^2 \right)^{1/d} \exp\left( \frac{2 \|\vmu\|_1}{d} t-\frac{2\rho \|\vmu\|_2^2}{d} I(t) \right)}} \\
        &= \frac{1}{\sqrt{2d}  \left( \prod_{j=1}^d \mu_j \alpha_j \right)^{1/d} \exp\left( \frac{ \|\vmu\|_1}{d} t-\frac{\rho \|\vmu\|_2^2}{d} I(t) \right)} \\
    \end{align*}

    Separating variables and integrating, we get
    \begin{align*}
        &\exp(- \frac{\rho \|\vmu\|_2^2}{d} I(t)) dI \leq \frac{1}{\sqrt{2d}  \left( \prod_{j=1}^d \mu_j \alpha_j \right)^{1/d} } \exp\left( -\frac{ \|\vmu\|_1}{d} t \right)dt \\
        \Rightarrow &\int_{I(0)}^{I(t)} \exp(- \frac{\rho \|\vmu\|_2^2}{d} u) du \leq \int_0^t \frac{1}{\sqrt{2d}  \left( \prod_{j=1}^d \mu_j \alpha_j \right)^{1/d} } \exp\left( -\frac{ \|\vmu\|_1}{d} s \right)ds \\
        \Rightarrow &-\frac{d}{{\rho \|\vmu\|_2^2}} (\exp(- \frac{\rho \|\vmu\|_2^2}{d} I(t)) - \exp(- \frac{\rho \|\vmu\|_2^2}{d} I(0))) \leq -\frac{1}{\sqrt{2d}  \left( \prod_{j=1}^d \mu_j \alpha_j \right)^{1/d} } \frac{d}{{ \|\vmu\|_1}} (\exp(-\frac{ \|\vmu\|_1}{d} t) - 1) \\
        \Rightarrow &\exp(- \frac{\rho \|\vmu\|_2^2}{d} I(t))  \geq \rho \frac{\|\vmu\|_2^2}{\sqrt{2d}  \left( \prod_{j=1}^d \mu_j \alpha_j \right)^{1/d} \|\vmu\|_1} (\exp(-\frac{ \|\vmu\|_1}{d} t) - 1) +1\\
        \Rightarrow &-\rho \frac{\|\vmu\|_2^2}{d} I(t)  \geq \log \left( \rho \overline{C}_{\vmu, \valpha} (\exp(-\frac{ \|\vmu\|_1}{d} t) - 1) +1 \right)\\
        \Rightarrow & I(t) \leq \frac{d}{\rho \|\vmu\|^2_2} \log \left( \frac{1}{\rho \overline{C}_{\vmu, \valpha} \exp(-\frac{\| \vmu\|_1}{d} t)+1-\rho \overline{C}_{\vmu, \valpha}}\right).
    \end{align*}
\end{proof}

\elltwothm*

\begin{proof}
    By the assumption $\alpha_0<\alpha_1 < \alpha$, we have $\underline{C}_{\vmu, \valpha} = \frac{ \alpha_0 }{\rho \alpha} < \frac{1}{\rho}$. 
    We also have
    \begin{align*}
    & \underline{C}_{\vmu, \valpha} =\frac{\mu_1}{\sqrt{2} \|\mu\|_2 \alpha} \geq \frac{\mu_1}{\sqrt{2} \|\mu\|_2 \rho \alpha_\vmu^{(2)}}= \frac{\mu_1}{\rho(\mu_1+\mu_d)} \geq \frac{\mu_1}{\rho(\mu_j+\mu_d)}=\frac{1}{\rho \; R_j} &\text{ for all } j \in [d]. \\
    \Rightarrow & \frac{1-\rho \underline{C}_{\vmu, \valpha}}{\rho \underline{C}_{\vmu, \valpha}} = \frac{1}{\rho \underline{C}_{\vmu, \valpha}} -1 < R_j-1 &\text{ for all } j \in [d]. 
    \end{align*}
    Let $T_j := \frac{1}{\mu_1}\log \left(\frac{\rho \underline{C}_{\vmu, \valpha}}{1-\rho \underline{C}_{\vmu, \valpha}}\left(R_j-1\right)\right)\geq0$. 
    
    From \cref{thm:I(t)}, we have
    \begin{align*}
        I(T_j) &\geq \frac{1}{\rho \mu_1^2} \log \left( \frac{1}{\rho \underline{C}_{\vmu, \valpha} \exp(-\mu_1 T_j)+1-\rho \underline{C}_{\vmu, \valpha}}\right) \\
        &= \frac{1}{\rho \mu_1^2} \log \left( \frac{1}{\rho \underline{C}_{\vmu, \valpha} \exp\left(\log \left(\frac{1-\rho \underline{C}_{\vmu, \valpha}}{\rho \underline{C}_{\vmu, \valpha}\left(R_j-1\right)}\right)\right)+1-\rho \underline{C}_{\vmu, \valpha}}\right) \\
        &= \frac{1}{\rho \mu_1^2} \log \left( \frac{1}{ \frac{1-\rho \underline{C}_{\vmu, \valpha}}{R_j-1}+1-\rho \underline{C}_{\vmu, \valpha}}\right) \\
        &= \frac{1}{\rho \mu_1^2} \log \left( \frac{1}{(1-\rho \underline{C}_{\vmu, \valpha})\left(1+\frac{1}{R_j-1}\right)}\right) \\
        &= \frac{1}{\rho \mu_1^2} \log \left( \frac{1}{(1-\rho \underline{C}_{\vmu, \valpha})\left(\frac{R_j}{R_j-1}\right)}\right) \\
        &= \frac{1}{\rho \mu_1^2} \log \left( \frac{1-\frac{1}{R_j}}{1-\rho \underline{C}_{\vmu, \valpha}}\right). \\
    \end{align*}
    
    Recall from \cref{eq:beta} that 
    $$
    \beta_j(T_j) = \beta_j(0) \exp\left( 2\mu_j T_j-2\rho \mu_j^2 I(T_j) \right) \text{for } j \in [d].
    $$
    Thus, for $j \in [d]$, we have
    \begin{align*}
        \frac{\beta_j(T_j)}{\beta_d(T_j)} &= \exp\left(-2(\mu_d-\mu_j) T_j+2\rho (\mu_d^2-\mu_j^2) I(T_j) \right) \\
        &= \exp\left(-2\frac{\mu_d-\mu_j}{\mu_1}\log \left(\frac{\rho \underline{C}_{\vmu, \valpha}}{1-\rho \underline{C}_{\vmu, \valpha}}\left(R_j-1\right)\right)+2\rho (\mu_d^2-\mu_j^2) I(T_j) \right) \\
        & \geq \exp\left(-2\frac{\mu_d-\mu_j}{\mu_1}\log \left(\frac{\rho \underline{C}_{\vmu, \valpha}}{1-\rho \underline{C}_{\vmu, \valpha}}\left(R_j-1\right)\right)+2 \frac{\mu_d^2-\mu_j^2}{ \mu_1^2} \log \left( \frac{1-\frac{1}{R_j}}{1-\rho \underline{C}_{\vmu, \valpha}}\right) \right) \\
        & = \exp \left(2 \frac{\mu_d-\mu_j}{\mu_1} \left(\frac{\mu_d+\mu_j}{ \mu_1} \log \left( \frac{1-\frac{1}{R_j}}{1-\rho \underline{C}_{\vmu, \valpha}}\right) - \log \left(\frac{\rho \underline{C}_{\vmu, \valpha}}{1-\rho \underline{C}_{\vmu, \valpha}}\left(R_j-1\right)\right)\right) \right) \\
        &= \exp \left(2 R_j' \left(R_j \log \left( \frac{1-\frac{1}{R_j}}{1-\rho \underline{C}_{\vmu, \valpha}}\right) - \log \left(\frac{\rho \underline{C}_{\vmu, \valpha}}{1-\rho \underline{C}_{\vmu, \valpha}}\left(R_j-1\right)\right)\right) \right) \\
        &= \exp \left(2 R_j' \left(R_j \log \left( \frac{\frac{R_j-1}{R_j}}{1- \frac{\rho \alpha_0 }{\alpha}}\right) - \log \left(\frac{\frac{ \rho  \alpha_0 }{\alpha}}{1-  \frac{\rho \alpha_0 }{\alpha}}\left(R_j-1\right)\right)\right) \right) \\
        &= \exp \left(2 R_j' \left((R_j-1)\log(R_j-1) -R_j \log(R_j) - (R_j-1)\log \left({1- \frac{\rho \alpha_0 }{\alpha}}\right) - \log \left(\frac{ \rho  \alpha_0 }{\alpha}\right)\right) \right) \\
        &= \exp \left(2 R_j' \left(-C(R_j) - (R_j-1)\log \left({1- \frac{\rho \alpha_0 }{\alpha}}\right) - \log \left(\frac{ \rho  \alpha_0 }{\alpha}\right)\right) \right) \\
        &= \exp\left( 2 R_j' \left( (R_j-1) \log\left(\frac{1}{1-\nicefrac{\rho \alpha_0}{\alpha}}\right) + \log \left(\frac{1}{ \nicefrac{\rho \alpha_0}{\alpha}}\right)-C(R_j)\right) \right)
    \end{align*}
\end{proof}

\subsubsection{Proof of \texorpdfstring{\cref{thm:l2-thm-sequential}}{Proposition 4.7}}
\label{sec:prop-l2}

\elltwothmsequential*
\begin{proof}
For $\alpha \in (\alpha_0, \rho \frac{\mu_1+\mu_d}{\sqrt{2}\|\vmu\|_2})$, let $x=\alpha_0/\alpha\in(0,1)$ and write 
$$G_j(x)=\log\mathrm{LB}_j(\alpha)
=2R'_j\Phi_{R_j}(x),$$ where 
$$
\Phi_R(x)=(R-1)\log\frac{1}{1-x}+\log\frac{1}{x}-C(R),
\qquad 
C(R)=R\log R-(R-1)\log(R-1),
$$
and $R_j=(\mu_j+\mu_d)/\mu_1>1$, $R'_j=(\mu_d-\mu_j)/\mu_1\ge0$.

\textbf{(1) Shape of $\Phi_{R_j}$.}
We have 
$$
\Phi'_{R_j}(x)=\frac{R_jx-1}{x(1-x)}, 
\qquad 
\Phi''_{R_j}(x)=\frac{R_j-1}{(1-x)^2}+\frac{1}{x^2}>0.
$$
Thus $\Phi_{R_j}$ is strictly convex on $(0,1)$ and attains its unique minimum at $x=1/R_j$, where $\Phi_{R_j}(1/R_j)=0$. Consequently $\Phi_{R_j}(x)\ge0$ for all $x$ and it is strictly increasing on $[1/R_j,1)$.
\textbf{(2) Crossing between adjacent indices.}
For any $j\in\{1,\dots,d-1\}$ define 
$$
H_{j+1,j}(x)=G_{j+1}(x)-G_j(x)
=2\big(R'_{j+1}\Phi_{R_{j+1}}(x)-R'_j\Phi_{R_j}(x)\big).
$$
Because $R_{j+1}>R_j$, we have 
$\Phi_{R_{j+1}}(1/R_{j+1})=0$ and 
$\Phi_{R_j}(1/R_{j+1})>0$, hence 
$H_{j+1,j}(1/R_{j+1})<0$.
Likewise $\Phi_{R_j}(1/R_j)=0$ and 
$\Phi_{R_{j+1}}(1/R_j)>0$, giving 
$H_{j+1,j}(1/R_j)>~0$.
By continuity, $H_{j+1,j}$ has at least one zero 
$x_j^*\in(1/R_{j+1},1/R_j]$.

To show uniqueness, using the expression for $\Phi'_{R_j}$, we obtain
$$
H_{j+1,j}'(x)=\frac{2}{x(1-x)}\big((R_{j+1}'R_{j+1}-R_j'R_j)x-(R_{j+1}'-R_j')\big).
$$
Since
$$
R_k'R_k=\frac{(\mu_d-\mu_k)(\mu_k+\mu_d)}{\mu_1^2}
=\frac{\mu_d^2-\mu_k^2}{\mu_1^2},
$$
we obtain $R_{j+1}'R_{j+1}-R_j'R_j=\frac{\mu_j^2-\mu_{j+1}^2}{\mu_1^2}<0$. 
Its zero occurs at
$$
x_c=\frac{R'_{j+1}-R'_j}{\,R'_{j+1}R_{j+1}-R'_jR_j\,}
=\frac{\mu_1}{\mu_{j+1}+\mu_j},
$$
and therefore
$$
H'_{j+1,j}(x)>0\ \text{for}\ x<x_c,
\qquad
H'_{j+1,j}(x)<0\ \text{for}\ x>x_c.
$$
Hence $H_{j+1,j}(x)$ is strictly increasing up to $x_c$ and strictly
decreasing afterward. 
Since $1/R_j=\mu_1/(\mu_j+\mu_d)\le \mu_1/(\mu_{j+1}+\mu_j)$, $H_{j+1,j}$ is strictly increasing in the interval $(1/R_{j+1},1/R_j]$. 
Because
$H_{j+1,j}(1/R_{j+1})<0$ and $H_{j+1,j}(1/R_j)>0$,
this implies that $H_{j+1,j}$ crosses zero exactly once
in $(1/R_{j+1},1/R_j)$.
Consequently the root $x_j^*$ is unique, with
$H_{j+1,j}(x)<0$ for $x<x_j^*$ and $H_{j+1,j}(x)>0$ for $x>x_j^*$.

\textbf{(3) Thresholds and staircase structure.}
As $\alpha$ increases, $x=\alpha_0/\alpha$ decreases.
Define $\alpha_j^*=\alpha_0/x_j^*$. 
When $\alpha$ crosses $\alpha_j^*$, the maximizer between indices
$j$ and $j+1$ switches once from $j$ to $j+1$.
Because the intervals $(1/R_{j+1},1/R_j]$ are disjoint and ordered,
the thresholds satisfy
$\alpha_0^*<\alpha_1^*<\cdots<\alpha_m^*
\le\rho(\mu_1+\mu_d)/(\sqrt{2}\|\bm\mu\|_2)$
for some $m\le d-1$.

Thus $j^*(\alpha)$ takes constant values on each interval
$(\alpha_{j-1}^*,\alpha_j^*]$, increasing step by step
until the last threshold within the admissible range.
\end{proof}

\subsubsection{Proof of \texorpdfstring{\cref{thm:regime13}}{Theorem 4.8}}
\label{app:thm-regime13}

\regimeonethree*

\begin{proof}
    We use \cref{thm:I(t)} to prove the theorem.
    When ${\vw}^{(1)}(0) = {\vw}^{(2)}(0) = \alpha \vone$, we have 
    \begin{align*}
        \underline{C}_{\vmu, \valpha} &= \frac{\mu_1}{\sqrt{2\sum_{j=1}^d \mu_j^2 \alpha^2}}
        =\frac{\mu_1}{\sqrt{2\sum_{j=1}^d \mu_j^2} \alpha}
        = \frac{\mu_1}{\sqrt{2} \|\mu\|_2 \alpha}
        = \frac{ \alpha_0 }{\alpha} \\
        \overline{C}_{\vmu, \valpha} &= \frac{\|\vmu\|_2^2}{\sqrt{2d} (\prod_{j=1}^d \mu_j \alpha)^{\nicefrac{1}{d}} \|\vmu\|_1}
        = \frac{\|\vmu\|_2^2}{\sqrt{2d} (\prod_{j=1}^d \mu_j)^{\nicefrac{1}{d}}\alpha \|\vmu\|_1}
    \end{align*}
    
    \noindent\textbf{(i)}
    By the assumption $\alpha \leq  \alpha_0$, we have $\underline{C}_{\vmu, \valpha} = \frac{ \alpha_0 }{\rho\alpha} \geq \frac{1}{\rho}$.
    Let $T := \frac{1}{\mu_1}\log \left(\frac{\rho \underline{C}_{\vmu, \valpha}}{\rho \underline{C}_{\vmu, \valpha}-1}\right)\geq0$. 
    
    From \cref{thm:I(t)}, we have
    \begin{align*}
        I(t) \geq \frac{1}{\rho \mu_1^2} \log \left( \frac{1}{\rho \underline{C}_{\vmu, \valpha} \exp(-\mu_1 t)+1-\rho \underline{C}_{\vmu, \valpha}}\right).
    \end{align*}
    
    As $t \to T$, we have
    \begin{align*}
        &\rho \underline{C}_{\vmu, \valpha} \exp(-\mu_1 t)+1-\rho \underline{C}_{\vmu, \valpha} \\
        \to &\rho \underline{C}_{\vmu, \valpha} \exp(-\mu_1 T) + 1-\rho \underline{C}_{\vmu, \valpha} \\
        = & \rho \underline{C}_{\vmu, \valpha} \exp(\log \left(\frac{\rho \underline{C}_{\vmu, \valpha}-1}{\rho \underline{C}_{\vmu, \valpha}}\right)) + 1-\rho \underline{C}_{\vmu, \valpha}\\
        = & \rho \underline{C}_{\vmu, \valpha} \left(\frac{\rho \underline{C}_{\vmu, \valpha}-1}{\rho \underline{C}_{\vmu, \valpha}}\right) + 1-\rho \underline{C}_{\vmu, \valpha}
        = 0.
    \end{align*}
    Since $\rho \underline{C}_{\vmu, \valpha} \exp(-\mu_1 t)+1-\rho \underline{C}_{\vmu, \valpha}$ is strictly decreasing in $t$, we have 
    $$\rho \underline{C}_{\vmu, \valpha} \exp(-\mu_1 t)+1-\rho \underline{C}_{\vmu, \valpha} \to 0+ \text{ as } t \to T.$$
    Therefore, $I(t) \to +\infty$ as $t \to T$.
    
    Recall from \cref{eq:beta} that 
    $$
    \beta_j(t) = \beta_j(0) \exp\left( 2\mu_j t-2\rho \mu_j^2 I(t) \right) \text{for } j \in [d].
    $$
    As $t \to T$, we have $\beta_j(t) \to 0$ for all $j \in [d]$ since $I(t) \to +\infty$. 
    Therefore, $\vbeta(t) \to \vzero$ as $t \to T$.

    \noindent\textbf{(ii)}
    By the assumption $\alpha>\rho \frac{\|\vmu\|_2^2}{\sqrt{2d} (\prod_{i=1}^d \mu_i)^{\nicefrac{1}{d}} \|\vmu\|_1}$, we have $\overline{C}_{\vmu, \valpha} < \frac{1}{\rho}$.
    
    From \cref{thm:I(t)}, we have
    \begin{align*}
        I(t) \leq \frac{d}{\rho \|\vmu\|_2^2} \log \left( \frac{1}{\rho \overline{C}_{\vmu, \valpha} \exp(-\frac{\| \vmu\|_1}{d} t)+1-\rho \overline{C}_{\vmu, \valpha}}\right).
    \end{align*}
    
    For $t \in [0, \infty)$, we have
    \begin{align*}
        0<1-\rho \overline{C}_{\vmu, \valpha} \leq \rho \overline{C}_{\vmu, \valpha} \exp(-\frac{\| \vmu\|_1}{d} t)+1-\rho \overline{C}_{\vmu, \valpha} < 1.
    \end{align*}
    and as $t \to \infty$, we have
    \begin{align*}
        \rho \overline{C}_{\vmu, \valpha} \exp(-\frac{\| \vmu\|_1}{d} t)+1-\rho \overline{C}_{\vmu, \valpha} \to 1-\rho \overline{C}_{\vmu, \valpha} >0.
    \end{align*}
    
    As $t \to \infty$, we have
    \begin{align*}
        I(t) \leq \frac{d}{\rho \|\vmu\|_2^2} \log \left( \frac{1}{\rho \overline{C}_{\vmu, \valpha} \exp(-\frac{\| \vmu\|_1}{d} t)+1-\rho \overline{C}_{\vmu, \valpha}}\right) \to \frac{d}{\rho \|\vmu\|_2^2} \log \left( \frac{1}{1-\rho \overline{C}_{\vmu, \valpha}}\right) < \infty.
    \end{align*}
    Therefore, $I(t) < \infty$ as $t \to \infty$.
    
    Recall from \cref{eq:beta} that 
    $$
    \beta_j(t) = \beta_j(0) \exp\left( 2\mu_j t-2\rho \mu_j^2 I(t) \right) \text{for } j \in [d].
    $$
    Thus, for $j \in [d]$, we have
    \begin{align*}
        \frac{\beta_j(t)}{\beta_d(t)} = \exp\left(-2(\mu_d-\mu_j) t+2\rho (\mu_d^2-\mu_j^2) I(t) \right).
    \end{align*}
    As $t \to \infty$, we have $\frac{\beta_j(t)}{\beta_d(t)} \to 0$ for all $j <d$ since $\lim_{t\to\infty}I(t)< \infty$. 
    Therefore, $\vbeta(t)$ converges to the direction of $\ve_d$ as $t \to \infty$.

\end{proof}

\clearpage

\subsection{\texorpdfstring{Numerical Evaluation of \cref{thm:l2}}{Numerical Evaluation of Theorem 4.6}}
\label{app:numerical-LB}

In this section, we provide numerical illustrations of the lower bound
$\mathrm{LB}_{j}(\alpha)$ derived in \cref{thm:l2}.  
For several choices of $\mu$, we compute the value of
$$
\textup{LB}_j(\alpha):=\exp\left( 2 R_j' \left( (R_j-1) \log\left(\tfrac{1}{1-\nicefrac{{\alpha}_0}{\alpha}}\right) + \log \left(\tfrac{1}{ \nicefrac{ {\alpha}_0}{\alpha}}\right)-C(R_j)\right) \right)
$$
and visualize how much the ratio \(\beta_j(t)/\beta_d(t)\) must be amplified at minimum.

\cref{fig:LB} shows that for small $\alpha$ in Regime 2 and for $\vmu$ with a large spectral gap $\mu_d/\mu_1$, $\mathrm{LB}_j(\alpha)$ easily exceeds 10.
Since this is only a lower bound, the actual amplification can be even larger, indicating that minor-to-intermediate coordinates can grow by substantially more than the major coordinate.

\begin{figure}[H]
    \centering

    \begin{subfigure}{0.4\linewidth}
        \centering
        \includegraphics[width=\linewidth]{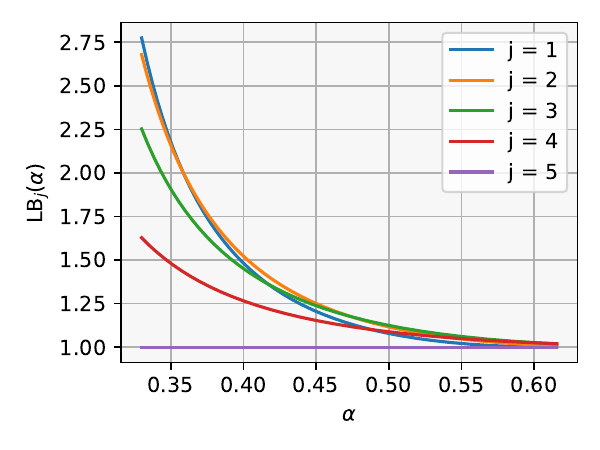}
        \caption{$\vmu = (4,5,6,7,8)$}
    \end{subfigure}
    \hspace{0.02\linewidth}
    \begin{subfigure}{0.4\linewidth}
        \centering
        \includegraphics[width=\linewidth]{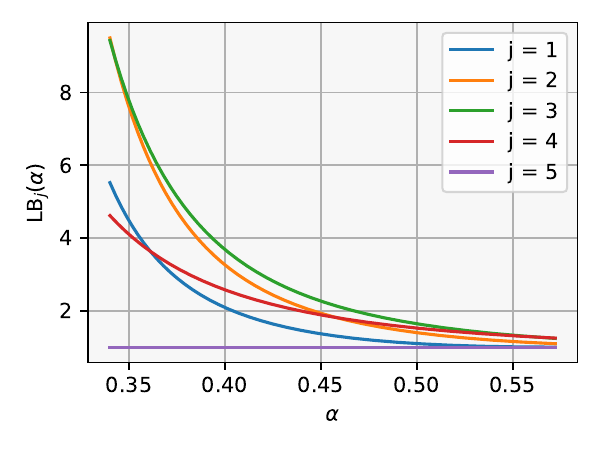}
        \caption{$\vmu = (1,2,3,4,5)$}
    \end{subfigure}

    \vspace{1em}

    \begin{subfigure}{0.4\linewidth}
        \centering
        \includegraphics[width=\linewidth]{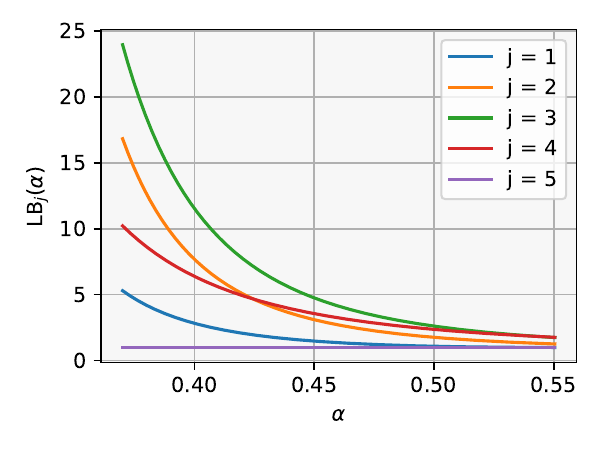}
        \caption{$\vmu = (1,3,5,7,9)$}
    \end{subfigure}
    \hspace{0.02\linewidth}
    \begin{subfigure}{0.4\linewidth}
        \centering
        \includegraphics[width=\linewidth]{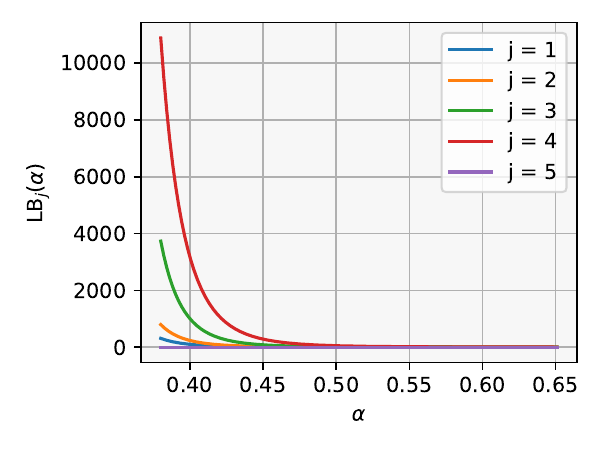}
        \caption{$\vmu = (1,2,4,8,16)$}
    \end{subfigure}

    \caption{Numerical evaluation of $\mathrm{LB}_j(\alpha)$ for various choices of $\vmu$.}
    \label{fig:LB}
\end{figure}

For reproducibility, we describe the numerical procedure used to generate \cref{fig:LB}.  
For each choice of $\vmu$ (with $d=\dim(\vmu)$), we evaluate $\mathrm{LB}_j(\alpha)$ for all $j\in[d]$ on a uniform grid of $\alpha$ values.  
Following the assumptions of \cref{thm:l2}, we first obtain the threshold $\alpha_1$ specified in \cref{thm:regime_classification}.  
We then set $\alpha \in \left[\alpha_1, \rho \frac{\mu_1+\mu_d}{\sqrt{2}\|\vmu\|_2}\right]$ using 400 grid points.  
The quantities $\alpha_0$, $R_j$, $R_j'$, and $C(R_j)$ are computed directly from their definitions in \cref{thm:regime_classification,thm:l2} using the given $\vmu$.  
The index $j\in[d]$ corresponds to the coordinate ordering $\mu_1<\cdots<\mu_d$.  
Since the computation is closed-form, no randomness is involved and the plots are exactly reproducible.

\subsection{Empirical Verification}
\label{app:empirical-l2}

Our analysis in \cref{sec:preasymp-l2} focuses on the one-point setting $\gD_\vmu$.  
We begin by verifying that the sequential feature amplification occurs across multiple choices of $\vmu$ in this one-point regime: both the continuous-time rescaled flows and the discrete $\ell_\infty$-SAM updates exhibit the same coordinate-wise progression, and the loss dynamics follow the theoretical prediction.  
We then turn to multi-point datasets and show that the sequential feature amplification persists in this more realistic setting under both the rescaled $\ell_2$-SAM flow and discrete $\ell_2$-SAM updates, as illustrated in \cref{fig:discrete_l_infty}.  
Finally, we confirm that this phenomenon is not limited to depth~2; the same coordinate-wise progression arises in deeper diagonal networks (general depth~$L$).  
Taken together, these results demonstrate that the sequential feature amplification is a robust and widely recurring behavior: it appears consistently across different $\vmu$, across multiple multi-point datasets, across both continuous and discrete SAM dynamics, and across depths $L\ge 2$.

To clarify the heatmap visualizations (e.g., \cref{fig:heatmap,fig:heatmap-45678,fig:heatmap-13579,fig:heatmap-12345,fig:heatmap-124816,fig:heatmap-multi,fig:heatmap-depth,fig:heatmap-multi-gd,fig:heatmap-gd,fig:heatmap-depth-gd}), 
for each time $t$ and initialization scale $\alpha$, we compute 
$j^\dagger = \arg\min_j \beta_j(t)$ and color the grid point $(t,\alpha)$ according to this index.  
Grid regions where the predictor $\vbeta$ becomes negligibly small are shown in gray, indicating convergence toward $\vzero$.  
We use the threshold $\|\vbeta(t)\|_2 \le 10^{-2}$ to define gray regions.

Following the visualization style of \cref{fig:heatmap}, we also partition the $\alpha$–axis into the three regimes defined in \cref{thm:regime_classification}: Regime~1 (small $\alpha$), 
Regime~2 (intermediate $\alpha$), and Regime~3 (large $\alpha$).  
These regime boundaries are indicated by horizontal black dashed lines in heatmap figures.

For reproducibility, we detail the exact initialization used in all experiments. As mentioned in \cref{sec:preasymp-l2}, we adopt a uniform initialization across coordinates and layers: $\vw^{(1)}(0)=\vw^{(2)}(0)=\alpha \vone$ for depth-$2$ setup and $\vw^{(1)}(0)=\cdots =\vw^{(L)}(0)=\alpha \vone$ for depth-$L$. 
To approximate continuous-time trajectories, we simulate the flow using an explicit Euler scheme with a small step size $\eta = 10^{-4}$.  
For discrete updates, we use a step size of $\eta = 0.01$.

\subsubsection{One-point Case: Continuous vs. Discrete Dynamics}

We first verify that sequential feature amplification appears robustly across multiple choices of $\vmu$ in the one-point setting.  
To demonstrate that this phenomenon is not limited to the continuous $\ell_2$-SAM flow, we additionally evaluate discrete $\ell_2$-SAM updates.  
Across all tested choices of $\vmu$, the resulting heatmaps closely match the structure in \cref{fig:heatmap}, showing both time–wise and initialization–wise sequential feature amplification.
To better visualize the evolution of $\vbeta(t)$, we also provide the loss heatmaps over $(\alpha, t)$.  
In the discrete $\ell_2$-SAM case, Regime~1 often appears unstable and does not become fully gray.  
This occurs because the relatively large step size causes the trajectory to hover near the origin without collapsing exactly to $\vzero$.  
As a result, the predictor norm stays above the gray threshold—so it is not colored gray—yet the loss remains large, revealing that the trajectory is still effectively stuck in the vicinity of the origin.

For comparison, we first present the results of GF and discrete GD with $\vmu = (4,5,6,7,8)$.  
The behavior is similar across different choices of $\vmu$.  
Both GF and GD consistently recover the major feature, independent of the initialization scale $\alpha$, and they do not exhibit sequential feature amplification.

\begin{figure}[H]
    \centering

    \begin{minipage}{0.3\linewidth}
        \centering
        \includegraphics[width=\linewidth]{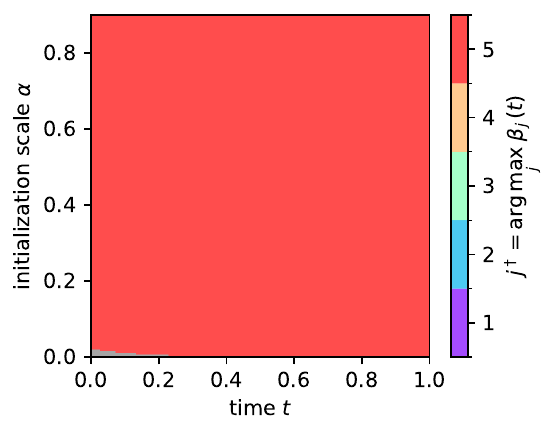}
    \end{minipage}
    \hspace{0.02\linewidth}
    \begin{minipage}{0.3\linewidth}
        \centering
        \includegraphics[width=\linewidth]{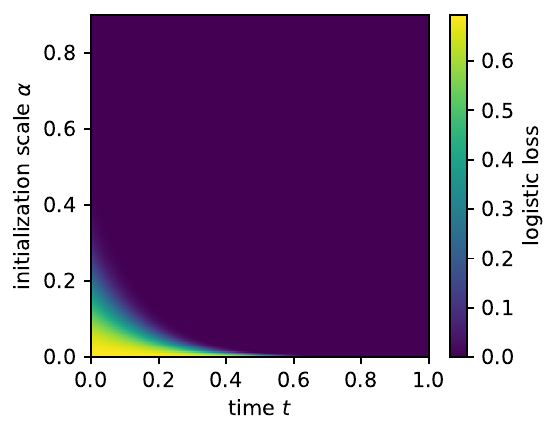}
    \end{minipage}

    \vspace{0.1em}
    {\small {(a) GF}}

    \vspace{1em}

    \begin{minipage}{0.3\linewidth}
        \centering
        \includegraphics[width=\linewidth]{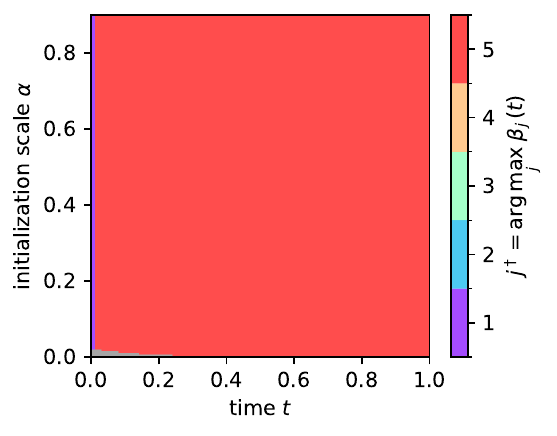}
    \end{minipage}
    \hspace{0.02\linewidth}
    \begin{minipage}{0.3\linewidth}
        \centering
        \includegraphics[width=\linewidth]{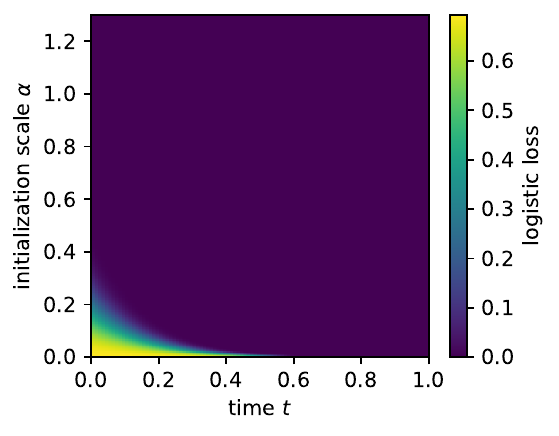}
    \end{minipage}

    \vspace{0.1em}
    {\small {(b) Discrete GD ($\eta=0.01$)}}

    \caption{Dominant index $j^\dagger$ over $\alpha,t$ and logistic loss on $\gD_\vmu$ with $\vmu=(4,5,6,7,8)$.}
    \label{fig:heatmap-gd}
\end{figure}

\clearpage

\begin{enumerate}
    \item $\vmu=(4,5,6,7,8)$
\begin{figure}[H]
    \centering

    \begin{minipage}{0.35\linewidth}
        \centering
        \includegraphics[width=\linewidth]{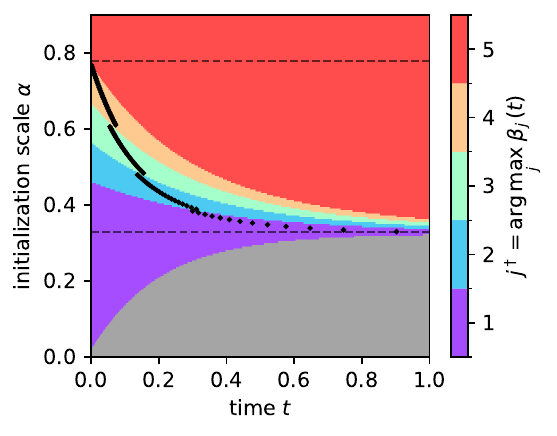}
    \end{minipage}
    \hspace{0.02\linewidth}
    \begin{minipage}{0.35\linewidth}
        \centering
        \includegraphics[width=\linewidth]{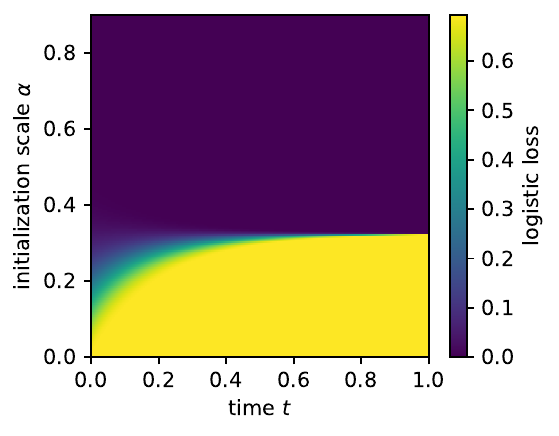}
    \end{minipage}

    \vspace{0.5em}
    {\small {(a) Rescaled $\ell_2$-SAM flow}}

    \vspace{1em}

    \begin{minipage}{0.35\linewidth}
        \centering
        \includegraphics[width=\linewidth]{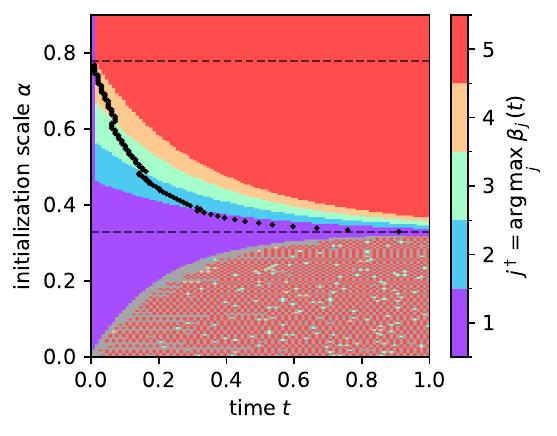}
    \end{minipage}
    \hspace{0.02\linewidth}
    \begin{minipage}{0.35\linewidth}
        \centering
        \includegraphics[width=\linewidth]{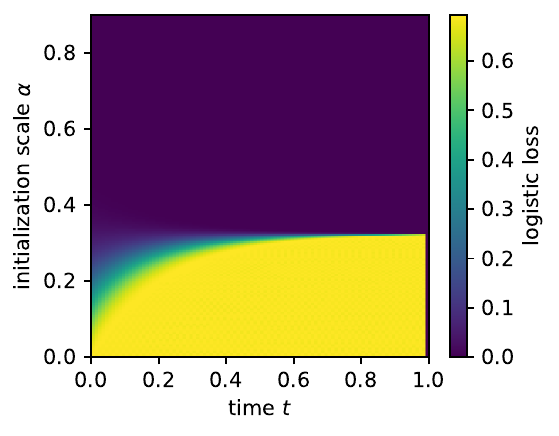}
    \end{minipage}

    \vspace{0.5em}
    {\small {(b) Discrete $\ell_2$-SAM updates ($\eta=0.01$)}}

    \caption{Dominant index $j^\dagger$ over $\alpha,t$ and logistic loss on $\gD_\vmu$ with $\vmu=(4,5,6,7,8)$ and $\rho=1$.}
    \label{fig:heatmap-45678}
\end{figure}

    \item $\vmu=(1,2,3,4,5)$
\begin{figure}[H]
    \centering
    \begin{minipage}{0.36\linewidth}
        \centering
        \includegraphics[width=\linewidth]{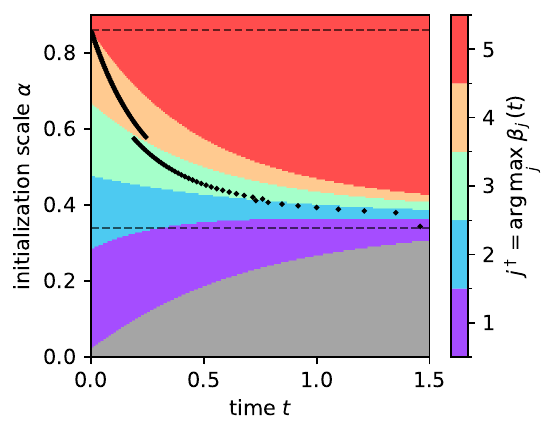}
    \end{minipage}
    \hspace{0.02\linewidth}
    \begin{minipage}{0.35\linewidth}
        \centering
        \includegraphics[width=\linewidth]{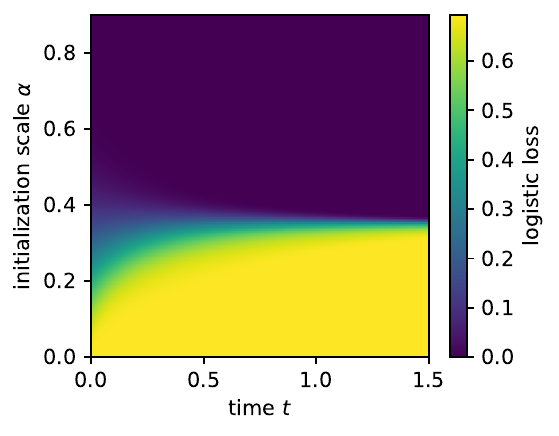}
    \end{minipage}

    \vspace{0.5em}
    {\small {(a) Rescaled $\ell_2$-SAM flow}}

    \vspace{1em}

    \begin{minipage}{0.35\linewidth}
        \centering
        \includegraphics[width=\linewidth]{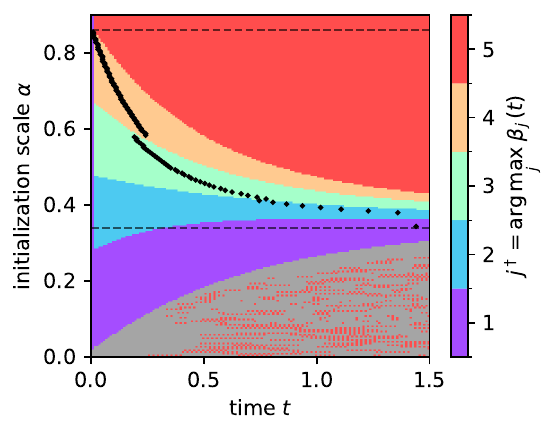}
    \end{minipage}
    \hspace{0.02\linewidth}
    \begin{minipage}{0.35\linewidth}
        \centering
        \includegraphics[width=\linewidth]{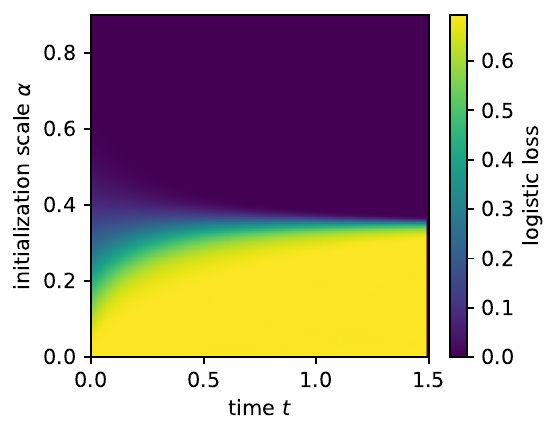}
    \end{minipage}

    \vspace{0.5em}
    {\small {(b) Discrete $\ell_2$-SAM updates ($\eta=0.01$)}}

    \caption{Dominant index $j^\dagger$ over $\alpha,t$ and logistic loss on $\gD_\vmu$ with $\vmu=(1,2,3,4,5)$ and $\rho=1$.}
    \label{fig:heatmap-12345}
\end{figure}
\clearpage

    \item $\vmu=(1,3,5,7,9)$
\begin{figure}[H]
    \centering

    \begin{minipage}{0.35\linewidth}
        \centering
        \includegraphics[width=\linewidth]{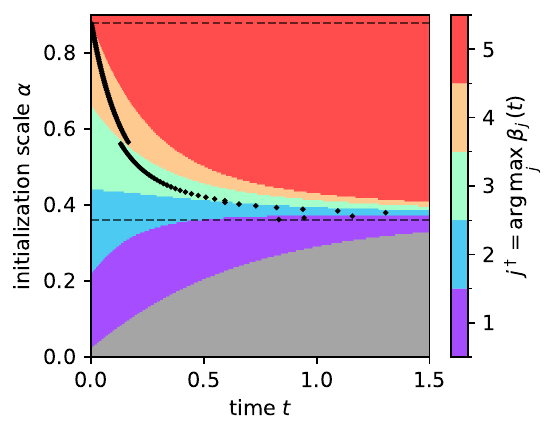}
    \end{minipage}
    \hspace{0.02\linewidth}
    \begin{minipage}{0.35\linewidth}
        \centering
        \includegraphics[width=\linewidth]{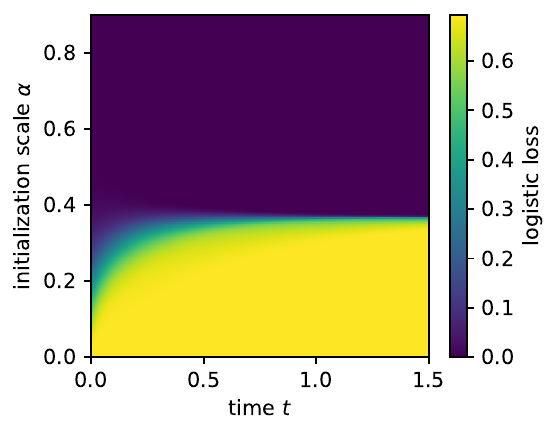}
    \end{minipage}

    \vspace{0.5em}
    {\small {(a) Rescaled $\ell_2$-SAM flow}}

    \vspace{1em}

    \begin{minipage}{0.35\linewidth}
        \centering
        \includegraphics[width=\linewidth]{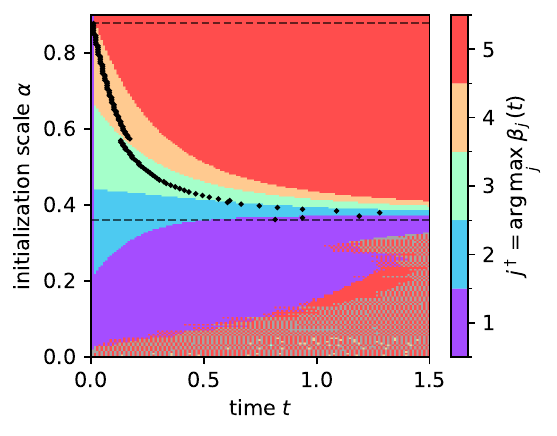}
    \end{minipage}
    \hspace{0.02\linewidth}
    \begin{minipage}{0.35\linewidth}
        \centering
        \includegraphics[width=\linewidth]{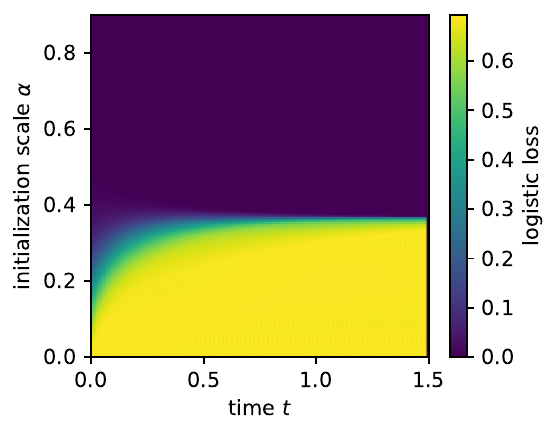}
    \end{minipage}

    \vspace{0.5em}
    {\small {(b) Discrete $\ell_2$-SAM updates ($\eta=0.01$)}}

    \caption{Dominant index $j^\dagger$ over $\alpha,t$ and logistic loss on $\gD_\vmu$ with $\vmu=(1,3,5,7,9)$ and $\rho=1$.}
    \label{fig:heatmap-13579}
\end{figure}

    \item $\vmu=(1,2,4,8,16)$
\begin{figure}[H]
    \centering

    \begin{minipage}{0.35\linewidth}
        \centering
        \includegraphics[width=\linewidth]{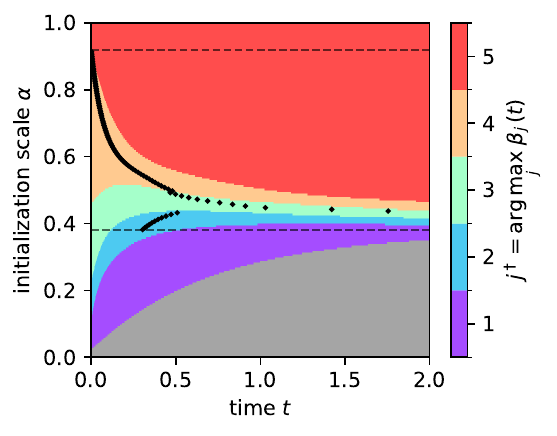}
    \end{minipage}
    \hspace{0.02\linewidth}
    \begin{minipage}{0.35\linewidth}
        \centering
        \includegraphics[width=\linewidth]{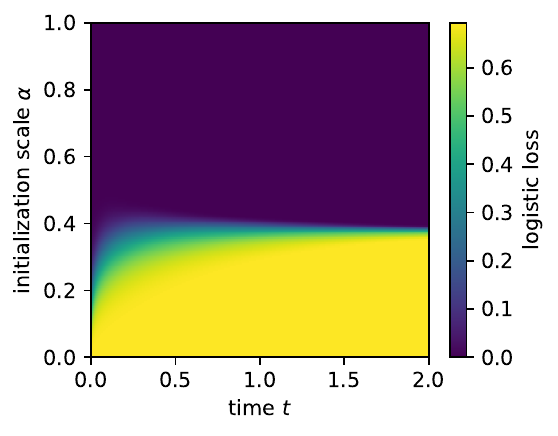}
    \end{minipage}

    \vspace{0.5em}
    {\small {(a) Rescaled $\ell_2$-SAM flow}}

    \vspace{1em}

    \begin{minipage}{0.35\linewidth}
        \centering
        \includegraphics[width=\linewidth]{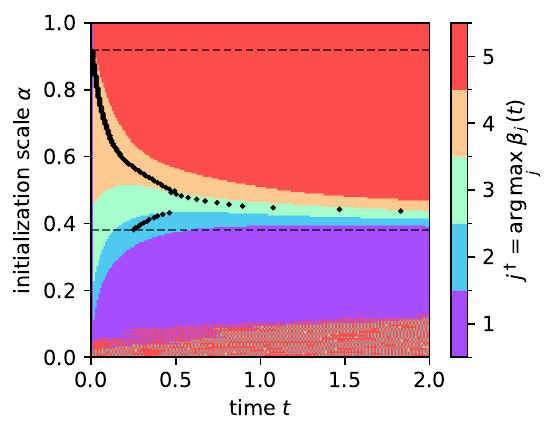}
    \end{minipage}
    \hspace{0.02\linewidth}
    \begin{minipage}{0.35\linewidth}
        \centering
        \includegraphics[width=\linewidth]{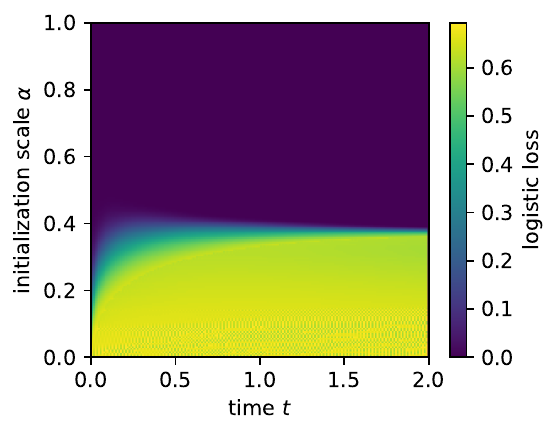}
    \end{minipage}

    \vspace{0.5em}
    {\small {(b) Discrete $\ell_2$-SAM updates ($\eta=0.01$)}}

    \caption{Dominant index $j^\dagger$ over $\alpha,t$ and logistic loss on $\gD_\vmu$ with $\vmu=(1,2,4,8,16)$ and $\rho=1$.}
    \label{fig:heatmap-124816}
\end{figure}

\end{enumerate}

\subsubsection{Multi-point Case: Persistence of One-point Behavior} \label{D.8.2}

To examine whether the sequential feature amplification identified in the one-point analysis persist in more realistic datasets, we construct random linearly separable binary data by sampling two Gaussian clusters centered at $+\vmu$ and $-\vmu$ for various choices of $\vmu$.
Specifically, we draw
$$
\vx_n^{(+)} = \vmu + \vepsilon_n,\quad y_n = +1,
\qquad
\vx_n^{(-)} = -\vmu + \vepsilon_n,\quad y_n = -1,
$$
with $\vepsilon_n \sim \mathcal{N}(0, \sigma^2 \mI_d)$ and use $N/2$ samples per class (with $\vmu=(1,2), N=100, \sigma = 0.5$).
For visualization, we plot only the first two dimensions of the dataset in the left panels.  
The middle panels show the results of the rescaled $\ell_2$-SAM flow on this dataset, and the right panels show the discrete $\ell_2$-SAM updates.  
Across all choices of multi-point datasets, the same sequential feature amplification behavior observed in the one-point setting persists.

For comparison, we present the results of GF and discrete GD with the multi-point dataset generated with mean $\vmu=(4,5,6,7,8)$.
The behavior is similar across different choices of $\vmu$. 
As in the one-point setting, both GF and GD consistently recover the major feature, independent of the initialization scale $\alpha$, and they do not exhibit sequential feature amplification.

\begin{figure}[H]
 \begin{minipage}{0.25\linewidth}
        \centering
        \includegraphics[width=\linewidth]{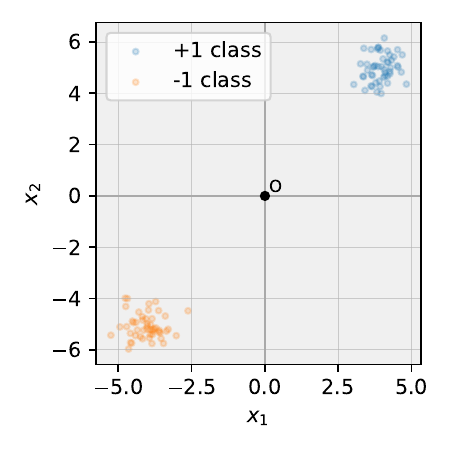}
    \end{minipage}
    \hfill
    \begin{minipage}{0.33\linewidth}
        \centering
        \includegraphics[width=\linewidth]{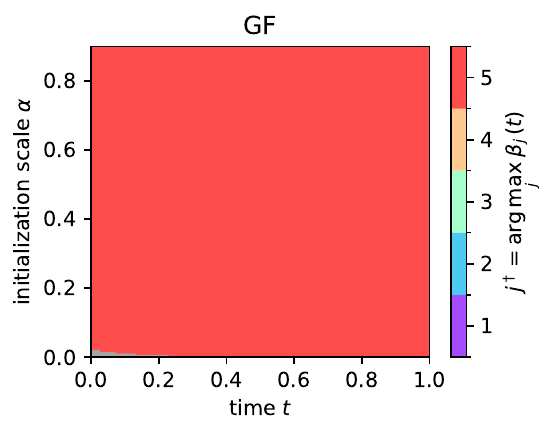}
    \end{minipage}
    \hfill
    \begin{minipage}{0.33\linewidth}
        \centering
        \includegraphics[width=\linewidth]{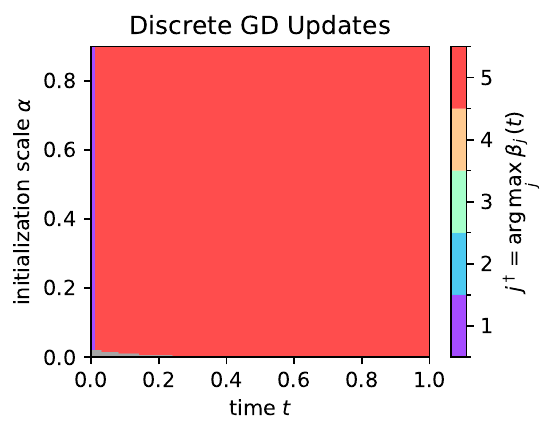}
    \end{minipage}
    
    \caption{First two dimensions of $\gD_\vmu$ with $\vmu=(4,5,6,7,8)$ and the dominant index $j^\dagger$ over $\alpha,t$ under GF and discrete GD updates.}
    \label{fig:heatmap-multi-gd}
\end{figure}

\clearpage
\begin{figure}[H]
    \centering
    
    \begin{minipage}{0.3\linewidth}
        \centering
        \includegraphics[width=\linewidth]{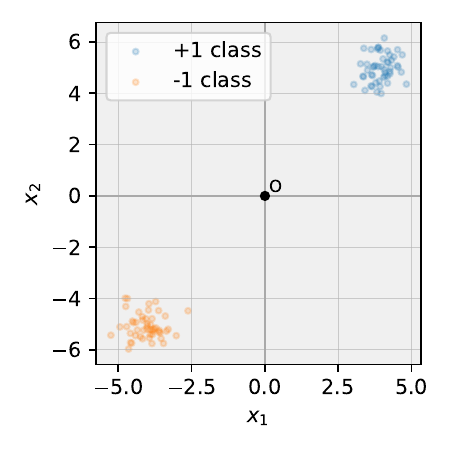}
    \end{minipage}
    \hfill
    \begin{minipage}{0.33\linewidth}
        \centering
        \includegraphics[width=\linewidth]{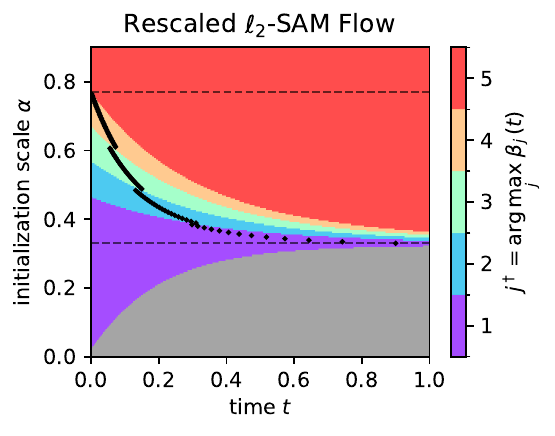}
    \end{minipage}
    \hfill
    \begin{minipage}{0.33\linewidth}
        \centering
        \includegraphics[width=\linewidth]{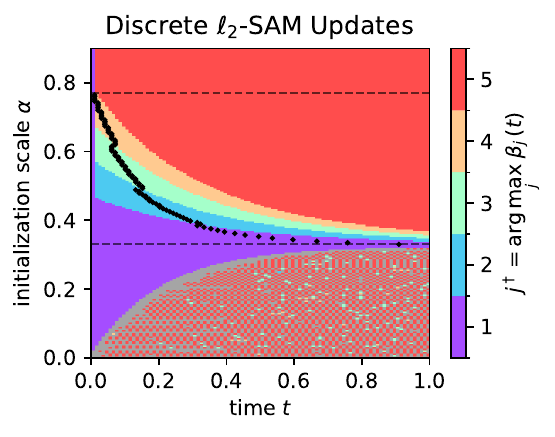}
    \end{minipage}
    
    \vspace{0.5em}
    {\small {(a) $\vmu=(4,5,6,7,8)$}}

    \vspace{1em}

    \begin{minipage}{0.3\linewidth}
        \centering
        \includegraphics[width=\linewidth]{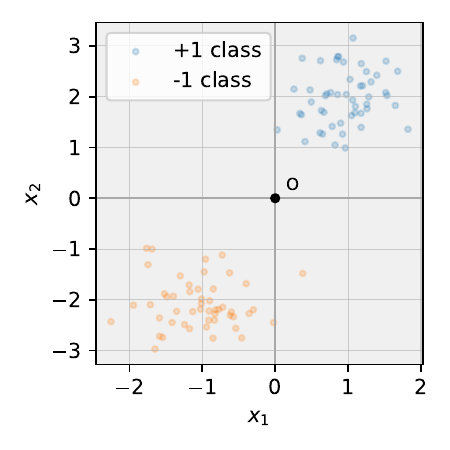}
    \end{minipage}
    \hfill
    \begin{minipage}{0.33\linewidth}
        \centering
        \includegraphics[width=\linewidth]{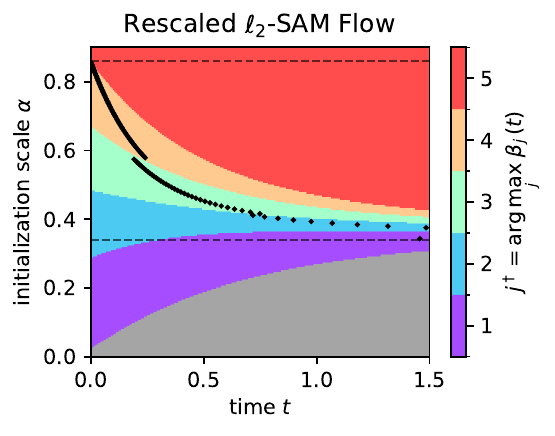}
    \end{minipage}
    \hfill
    \begin{minipage}{0.33\linewidth}
        \centering
        \includegraphics[width=\linewidth]{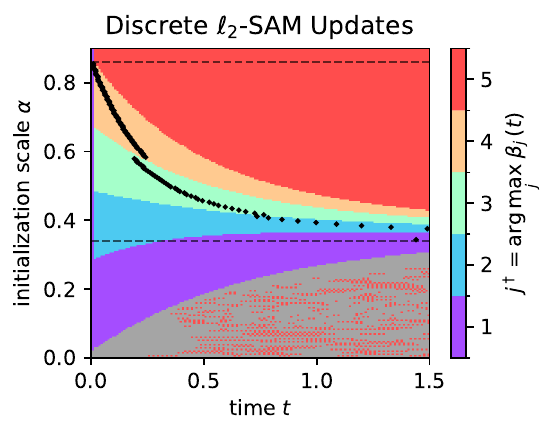}
    \end{minipage}
    
    \vspace{0.5em}
    {\small {(b) $\vmu=(1,2,3,4,5)$}}

    \vspace{1em}

    \begin{minipage}{0.3\linewidth}
        \centering
        \includegraphics[width=\linewidth]{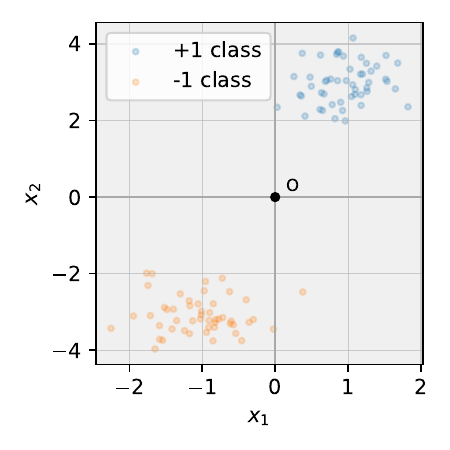}
    \end{minipage}
    \hfill
    \begin{minipage}{0.33\linewidth}
        \centering
        \includegraphics[width=\linewidth]{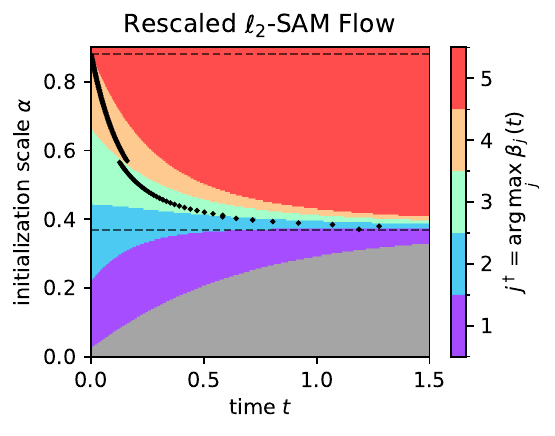}
    \end{minipage}
    \hfill
    \begin{minipage}{0.33\linewidth}
        \centering
        \includegraphics[width=\linewidth]{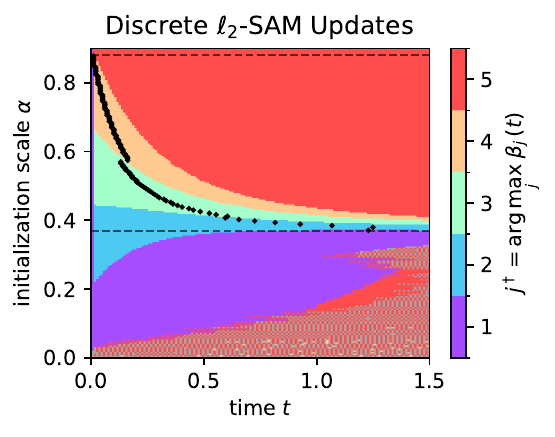}
    \end{minipage}

    \vspace{0.5em}
    {\small {(c) $\vmu=(1,3,5,7,9)$}}

    \vspace{1em}

    \begin{minipage}{0.3\linewidth}
        \centering
        \includegraphics[width=\linewidth]{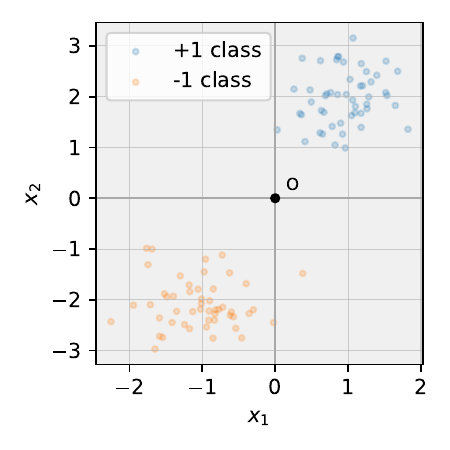}
    \end{minipage}
    \hfill
    \begin{minipage}{0.33\linewidth}
        \centering
        \includegraphics[width=\linewidth]{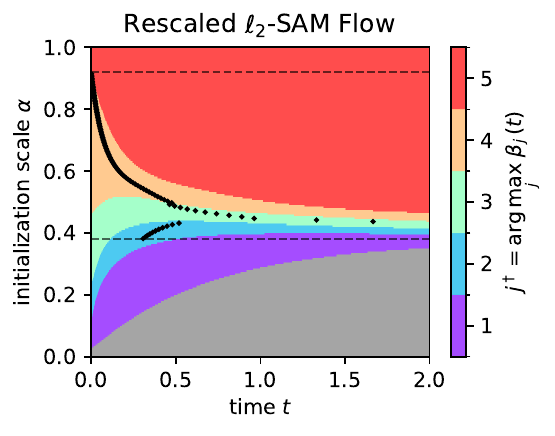}
    \end{minipage}
    \hfill
    \begin{minipage}{0.33\linewidth}
        \centering
        \includegraphics[width=\linewidth]{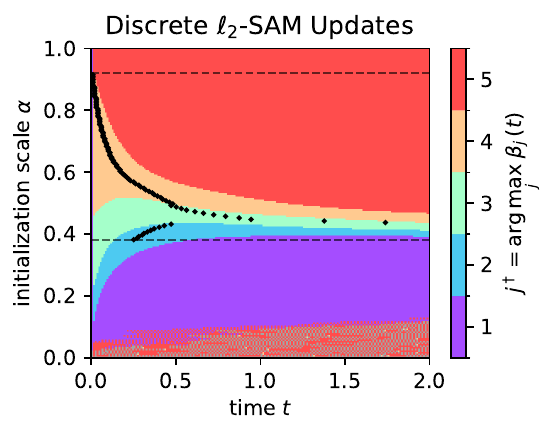}
    \end{minipage}
    
    \vspace{0.5em}
    {\small {(d) $\vmu=(1,2,4,8,16)$}}

    \caption{First two dimensions of $\gD_\vmu$ and the dominant index $j^\dagger$ over $\alpha,t$ under the rescaled $\ell_2$-SAM flow and discrete $\ell_2$-SAM updates.}
    \label{fig:heatmap-multi}
\end{figure}

\subsubsection{Depth-\texorpdfstring{$L$}{L} Case: Persistence of Depth-2 Dynamics}
\label{app:deeperdepthexperiments}

We confirm that the sequential feature amplification is not limited to depth $L=2$; the same coordinate-wise progression arises in deeper diagonal networks (general depth $L$).
Specifically, we observe GF and rescaled $\ell_2$-SAM flow on the one-point dataset $\gD_\vmu$ with $\vmu=(4,5,6,7,8)$. 
The behavior remains similar across different choices of $\vmu$, multi-point datasets, and under discrete updates. 
While GF appears to exhibit Regime 1 (being trapped near the origin), it does not show the sequential feature amplification, even in the deeper models. 
However, the rescaled $\ell_2$-SAM flow clearly demonstrates the sequential feature amplification for general depth $L$. 
Even though Regime 1 appears chaotic, Regime 2 and 3 are distintcly observed. 
Thus, the sequential feature amplification robustly occurs not only at depth $L=2$ but also in deeper models.

\begin{figure}[H]
    \centering

    \begin{subfigure}{0.4\linewidth}
        \centering
        \includegraphics[width=\linewidth]{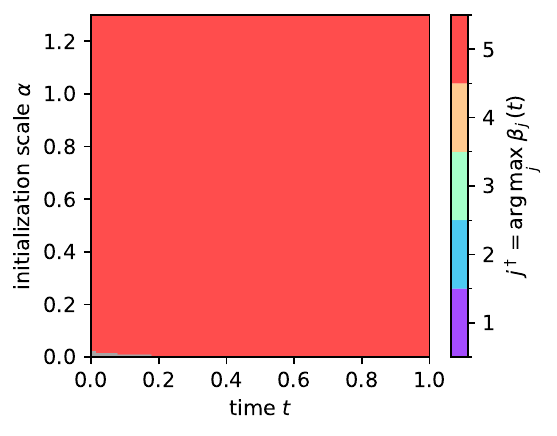}
        \caption{Depth 2}
    \end{subfigure}
    \hspace{0.02\linewidth}
    \begin{subfigure}{0.4\linewidth}
        \centering
        \includegraphics[width=\linewidth]{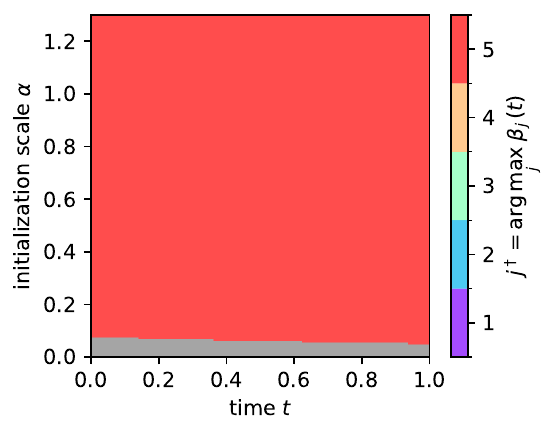}
        \caption{Depth 3}
    \end{subfigure}

    \vspace{1em}

    \begin{subfigure}{0.4\linewidth}
        \centering
        \includegraphics[width=\linewidth]{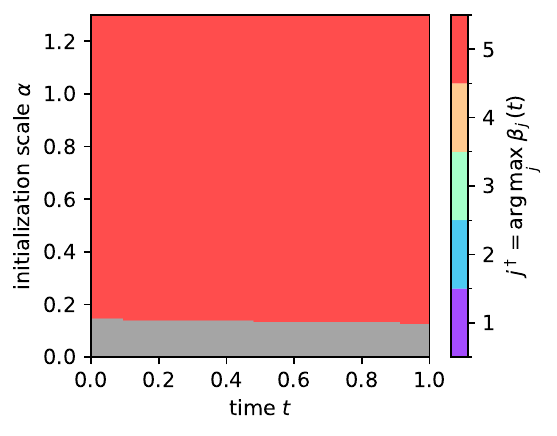}
        \caption{Depth 4}
    \end{subfigure}
    \hspace{0.02\linewidth}
    \begin{subfigure}{0.4\linewidth}
        \centering
        \includegraphics[width=\linewidth]{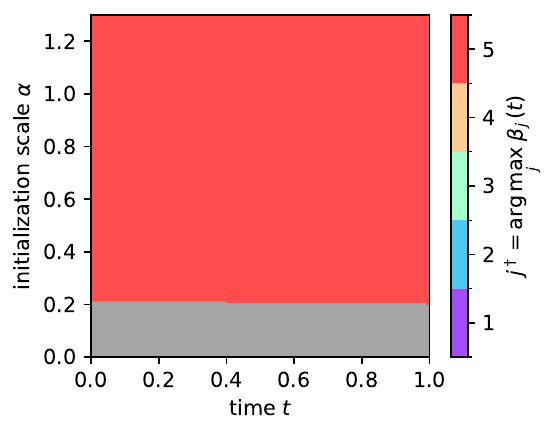}
        \caption{Depth 5}
    \end{subfigure}

    \caption{Dominant index $j^\dagger$ over $\alpha,t$ under the GF on $\gD_\vmu$ with $\vmu=~(4,5,6,7,8)$.}
    \label{fig:heatmap-depth-gd}
\end{figure}

\clearpage

\begin{figure}[H]
    \centering

    \begin{subfigure}{0.4\linewidth}
        \centering
        \includegraphics[width=\linewidth]{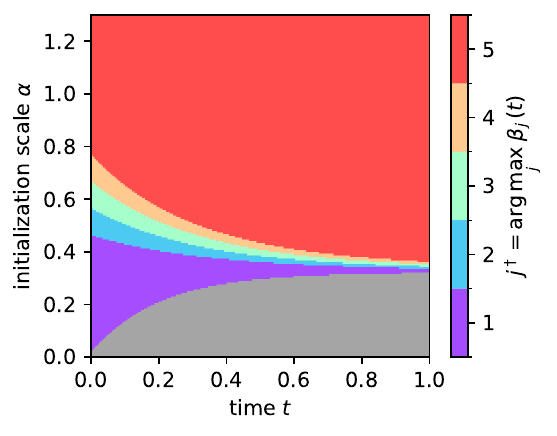}
        \caption{Depth 2}
    \end{subfigure}
    \hspace{0.02\linewidth}
    \begin{subfigure}{0.4\linewidth}
        \centering
        \includegraphics[width=\linewidth]{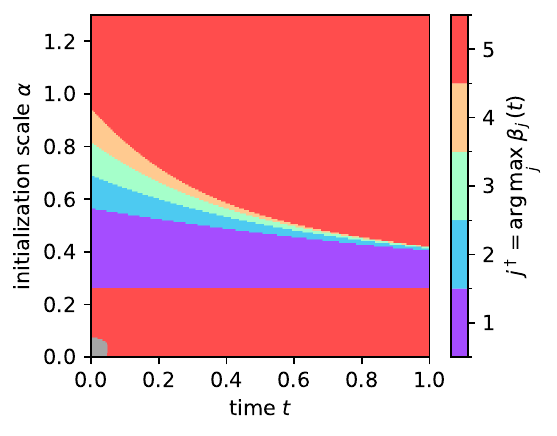}
        \caption{Depth 3}
    \end{subfigure}

    \vspace{1em}

    \begin{subfigure}{0.4\linewidth}
        \centering
        \includegraphics[width=\linewidth]{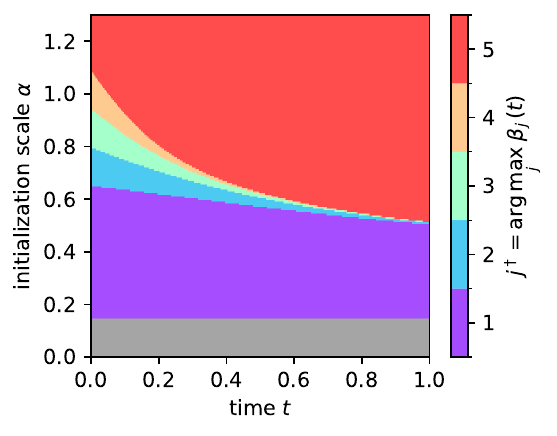}
        \caption{Depth 4}
    \end{subfigure}
    \hspace{0.02\linewidth}
    \begin{subfigure}{0.4\linewidth}
        \centering
        \includegraphics[width=\linewidth]{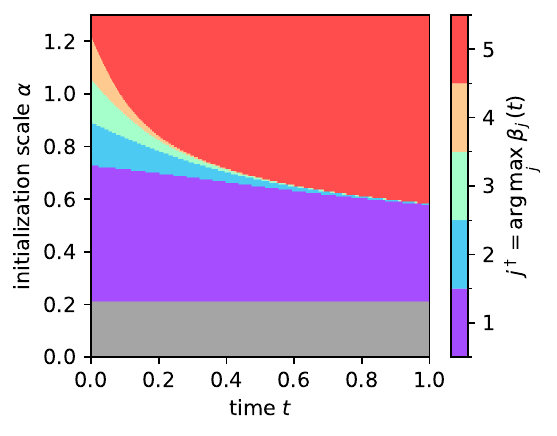}
        \caption{Depth 5}
    \end{subfigure}

    \caption{Dominant index $j^\dagger$ over $\alpha,t$ under the rescaled $\ell_2$-SAM flow on $\gD_\vmu$ with $\vmu=~(4,5,6,7,8)$ and $\rho=1$.}
    \label{fig:heatmap-depth}
\end{figure}

\newpage  

\newpage
\section{Experiments} \label{sec:experiments}
\subsection{Loss Dynamics}
\label{sec:lossdynamics}
For initialization scales in the intermediate regime (Regime~2 in Theorem~\ref{thm:regime_classification}), SAM first amplifies minor coordinates and only later focuses on the major ones. This also affects to the training loss curve. As shown in Figure~\ref{fig:2layer_diag_loss}, the loss curve of SAM is noticeably flatter than that of GD in the early phase of training. In this experiment, we train the diagonal linear network with full-batch SAM using radius $\rho = 0.5$, learning rate $0.05$, and $10000$ epochs. We fix the initialization scale to $\alpha = 0.06$ as a representative intermediate value. The data vector is $\mu = (1, 2, 3, 4, 5, 6)$, and all other settings follow the default diagonal-network configuration.

To make this precise, we track the dominant index $\arg\max_j r_j(t)$, where $r_j(t)$ denotes the growth rate of $\beta_j(t)$. In the early phase, this dominant index corresponds to minor features (coordinates with small $\mu_j$), while in the later phase it switches to major features (coordinates with larger $\mu_j$). When SAM is focusing on minor features, the loss decreases slowly, leading to a plateau; once SAM shifts to major features, the loss drops much faster. In contrast, GD does not exhibit this minor-to-major feature focusing behavior, and its loss decreases more rapidly from the beginning, without such plateau.

\begin{figure}[H]
    \centering
    \includegraphics[width=0.7\textwidth]{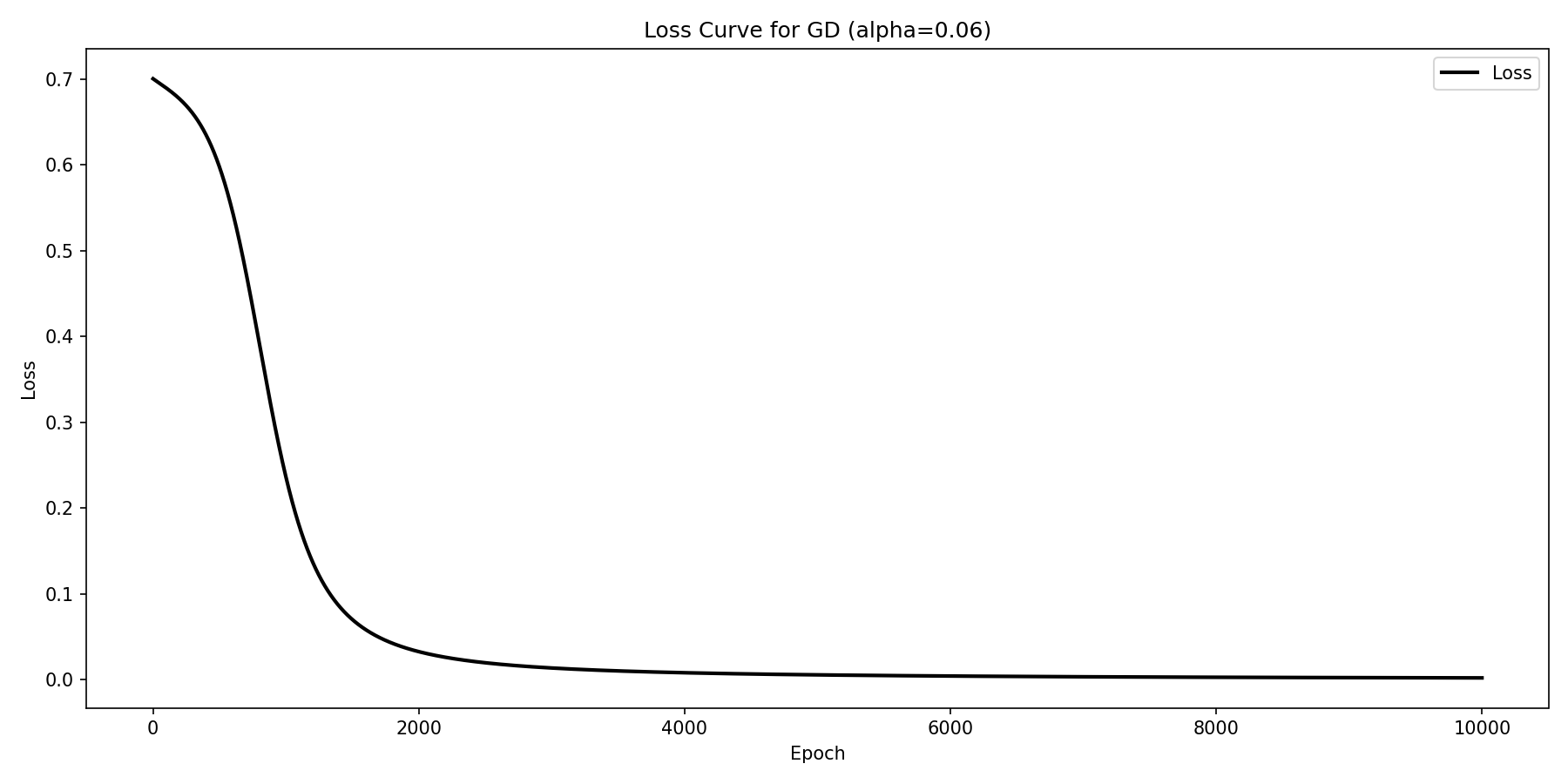}
    \includegraphics[width=0.7\textwidth]{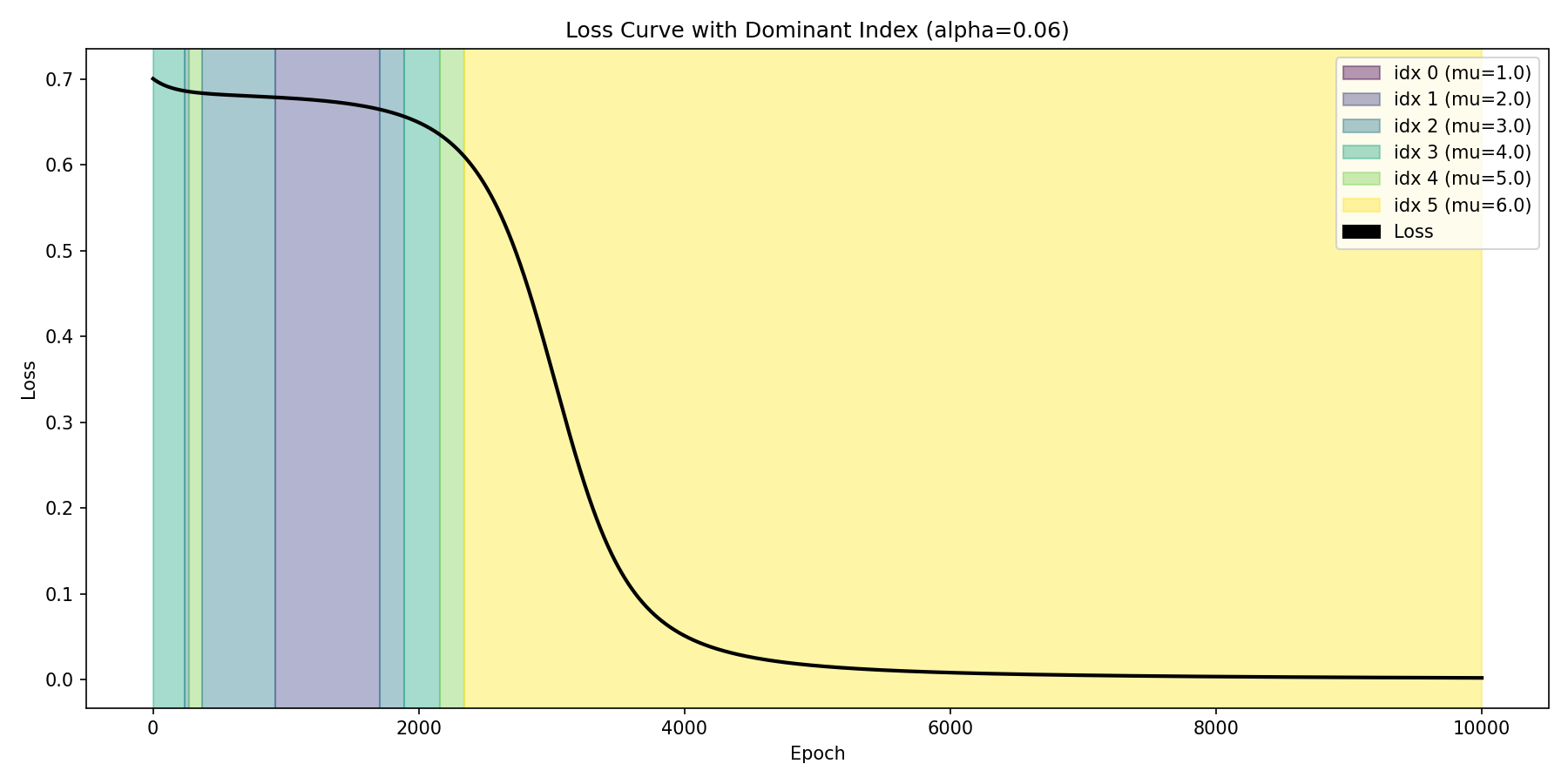}
    \caption{Training loss curves of GD (top) and SAM (bottom) on the 2-layer diagonal network in the intermediate initialization regime ($\alpha = 0.06$). The colored areas correspond to regimes where each feature is mostly amplified. Compared to GD, SAM exhibits an early plateau loss curve: in this phase, SAM primarily amplifies minor coordinates, leading to slow loss decrease. Once SAM shifts its focus to major coordinates, the loss drops rapidly. GD does not display this minor-to-major feature focusing behavior, thereby showing a more steadily decreasing loss without such a plateau.}
    \label{fig:2layer_diag_loss}
\end{figure}

\subsection{Sequential Feature Amplification under Random Initialization}
\label{app:random-init}

In the main analysis, we focused on a symmetric and layer-wise balanced initialization to obtain a clean theoretical characterization. 
Here, we examine whether the sequential feature amplification phenomenon persists under more general random initialization.

We initialize the two layers independently as
$$
\bm w^{(1)}(0), \bm w^{(2)}(0) \sim \mathcal{N}(0, \alpha^2 I),
$$
where the parameter $\alpha$ controls the initialization scale as the standard deviation of the Gaussian distribution.

\Cref{fig:random-init-seed0-traj} shows the normalized coordinate trajectories 
$\beta_j(t)/\|\vbeta(t)\|_2$ under random initialization (Seed~0) for 
$\alpha=0.65$, $\bm\mu=(1,2,3,4,5,6)$, and $\rho=0.1$. 
In this case, all coordinates except the fourth are sequentially amplified, with activation progressing roughly from the second to the sixth coordinate. 
Correspondingly, \Cref{fig:random-init-seed0-balance} shows that the layer-wise discrepancy 
$\|\bm w^{(1)}(t)-\bm w^{(2)}(t)\|_2$ rapidly decays to zero, indicating fast balancing of the two layers.

A qualitatively similar but quantitatively different pattern is observed under a different random seed. 
In \Cref{fig:random-init-seed1-traj} (Seed~1), the sequential amplification begins from the third coordinate and proceeds toward the sixth. 
Despite this seed-dependent variation in the detailed activation order, the overall sequential feature amplification phenomenon persists. 
Moreover, \Cref{fig:random-init-seed1-balance} confirms that the balancedness property is again achieved rapidly in the early stage of training.

These empirical observations are theoretically supported by \cref{lem:balancedness}, which shows that even when the layers start from imbalanced initializations, the dynamics drive them toward a balanced regime exponentially fast. 
This explains why the simplified, balanced initialization assumed in the main analysis captures the essential behavior of the training dynamics beyond this restricted setting.

\begin{figure}[ht]
    \centering
    
    \begin{subfigure}[t]{0.55\textwidth}
        \centering
        \includegraphics[width=\textwidth]{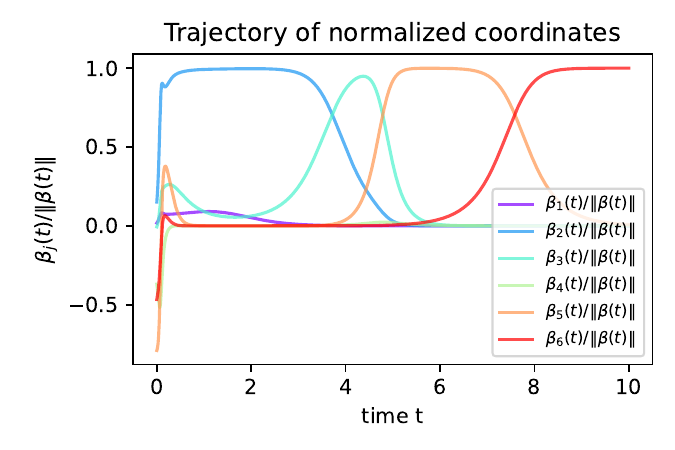}
        \caption{Normalized trajectories $\beta_j(t)/\| \bm \beta(t) \|_2$ (Seed 0)}
        \label{fig:random-init-seed0-traj}
    \end{subfigure}
    \hfill
    \begin{subfigure}[t]{0.43\textwidth}
        \centering
        \includegraphics[width=\textwidth]{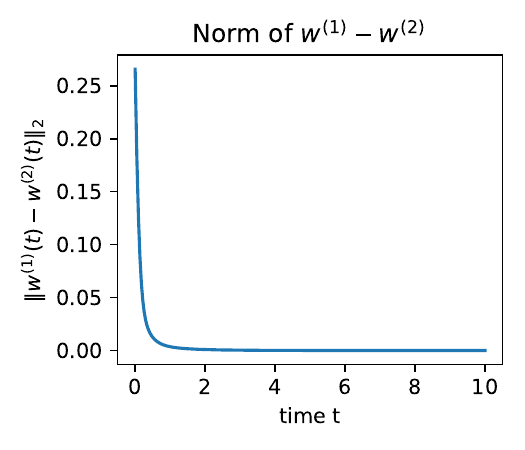}  
        \caption{Evolution of $\| \bm w^{(1)}(t)-\bm w^{(2)}(t)\|_2$ (Seed 0)}
        \label{fig:random-init-seed0-balance}
    \end{subfigure}

    \vspace{0.5em}

    \begin{subfigure}[t]{0.55\textwidth}
        \centering
        \includegraphics[width=\textwidth]{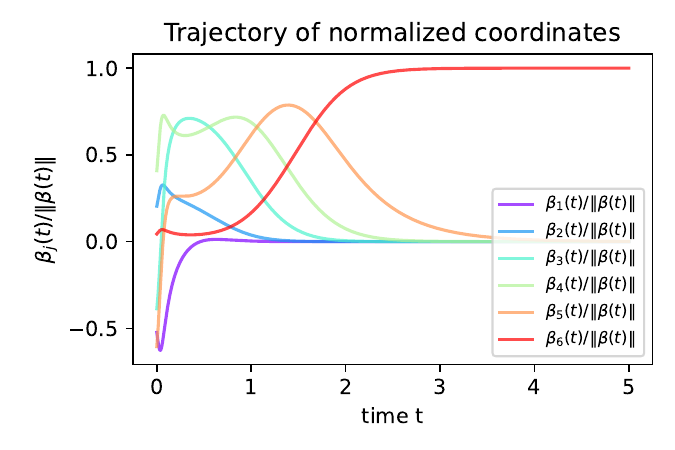}
        \caption{Normalized trajectories $\beta_j(t)/\| \bm \beta(t) \|_2$ (Seed 1)}
        \label{fig:random-init-seed1-traj}
    \end{subfigure}
    \hfill
    \begin{subfigure}[t]{0.43\textwidth}
        \centering
        \includegraphics[width=\textwidth]{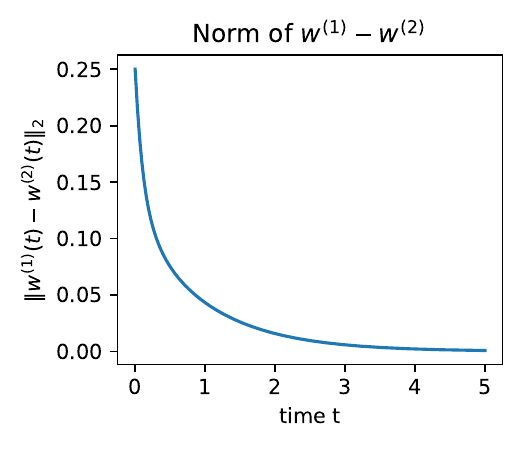}  
        \caption{Evolution of $\| \bm w^{(1)}(t)-\bm w^{(2)}(t)\|_2$ (Seed 1)}
        \label{fig:random-init-seed1-balance}
    \end{subfigure}

    \caption{
    Sequential feature amplification under random initialization in a two-layer diagonal network.
    Rows correspond to different random seeds (Seed 0 and Seed 1), and columns correspond to different plot types (left: normalized coordinate trajectories, right: balancedness).
    }
    \label{fig:random-init-seeds}
\end{figure}

\subsection{Alternative 2-Layer Models}\label{E.3}
To evaluate the generality of our theoretical predictions, we conduct experiments on alternative 2-layer models featuring different parameterizations and metrics. In all cases, the experimental settings and hyperparameters are chosen to closely match those used in our main theoretical simulations with the diagonal network.

\subsubsection{Linear Network}

We fix a small matrix dimension $d = 5$. All inputs are $d \times d$ matrices. We first draw a single random “signal” matrix $\mu \in \sR^{d \times d}$ with i.i.d.\ standard normal entries, and then compute its singular value decomposition (SVD)
\begin{equation*}
    \mu = U_{\mu} \,\mathrm{diag}(S_\mu)\, V_\mu^\top.
\end{equation*}
From this SVD, we construct an orthonormal basis of rank-1 matrices
\begin{equation*}
    \mu_i = u_i v_i^\top, \quad i = 1, \dots, d,
\end{equation*}
where $u_i$ is the $i$-th column of $U_\mu$ and $v_i^\top$ is the $i$-th row of $V_\mu^\top$. These $\mu_i$ play the role of “feature directions”, analogous to the coordinates in the diagonal model. 

We use the logistic loss, and the dataset follows the same format as in the diagonal model: we consider the two points $\{+\mu, -\mu\}$ with opposite labels $\{+1, -1\}$. The 2-layer linear network is
\begin{equation*}
    f_\theta(X) \;=\; \langle \beta, X \rangle_F 
    \;=\; \langle W^{(1)} W^{(2)}, X \rangle_F,
\end{equation*}
with learnable matrices $W^{(1)}, W^{(2)} \in \sR^{d \times d}$ and effective weight $\beta = W^{(1)}W^{(2)}$. Each layer is initially set to the identity matrix, and before training we rescale all layers by a scalar $\alpha$, so that $W^{(1)}(0) = W^{(2)}(0) = \alpha I$ and hence $\beta(0) = \alpha^2 I$.

For training, we use full-batch SAM with radius $\rho = 0.5$, learning rate 0.05, and a finite training epochs of $T = 5000$. We repeat the experiment over a range of initialization scales, $\alpha \in \{0.20, 0.21, \dots, 0.70\}$.

As our tracking metric, we monitor the normalized squared alignment
\begin{equation*}
    a_i(t) \;=\; 
    \frac{\langle \beta(t), \mu_i \rangle_F^2}{\|\beta(t)\|_F^2},
    \quad i = 1, \dots, d,
\end{equation*}
where $\beta(t)$ denotes the effective weight at training iteration $t$.

The results are shown in Figure~\ref{fig:2layer_linear_align}. As plotted in the figure, the dynamics of SAM and GD are qualitatively different. For SAM, when the initialization scale is smaller than $0.225$, training does not converge to a solution with sufficiently small loss. Beyond this regime, as the initialization scale increases, the dominant singular direction that maximizes the alignment (i.e., $\arg\max_i a_i(T)$) moves from $\sigma_5$ to $\sigma_1$, indicating that SAM sequentially aligns from the minor component to the major component as $\alpha$ grows.

\begin{figure}[H]
    \centering
    \includegraphics[width=0.48\textwidth]{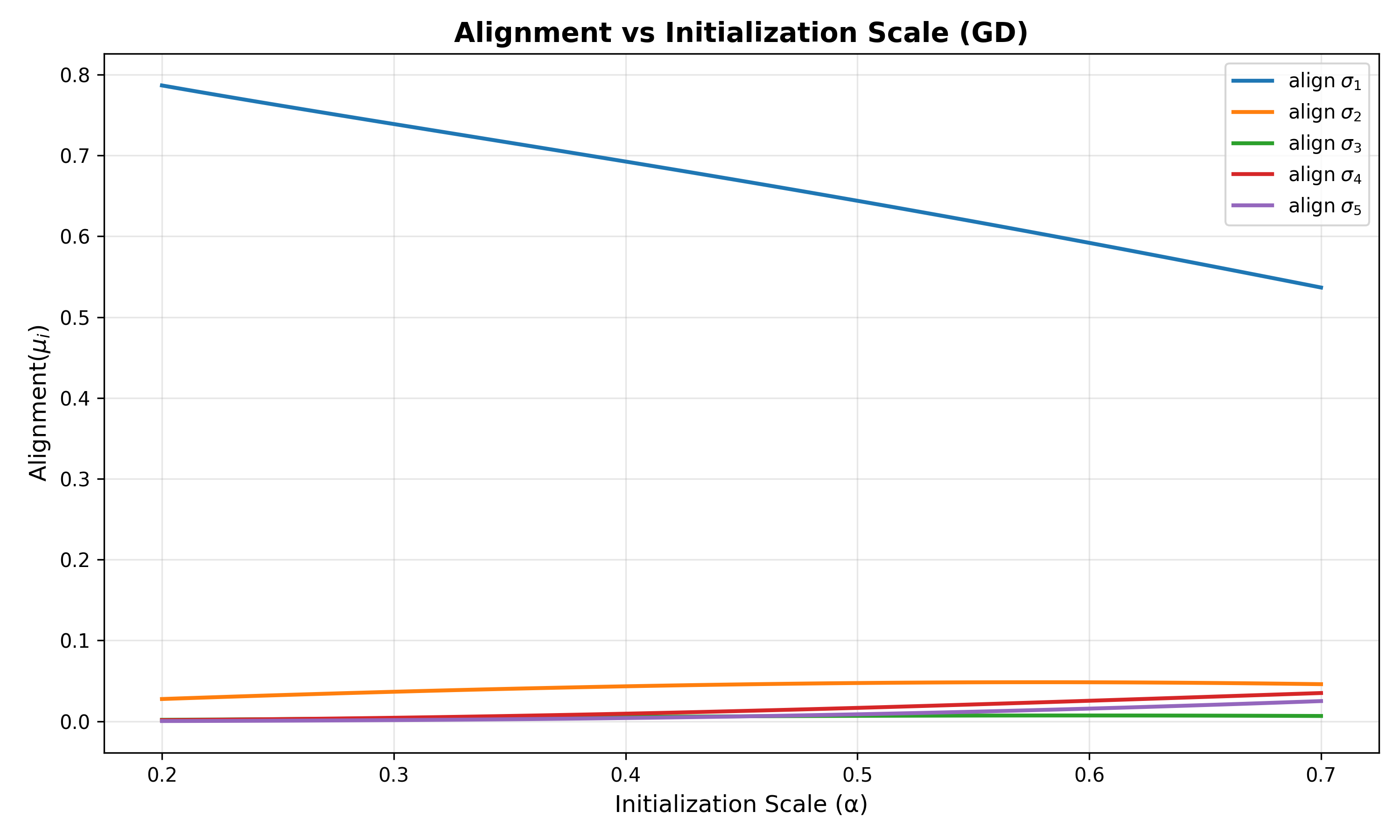}
    \includegraphics[width=0.48\textwidth]{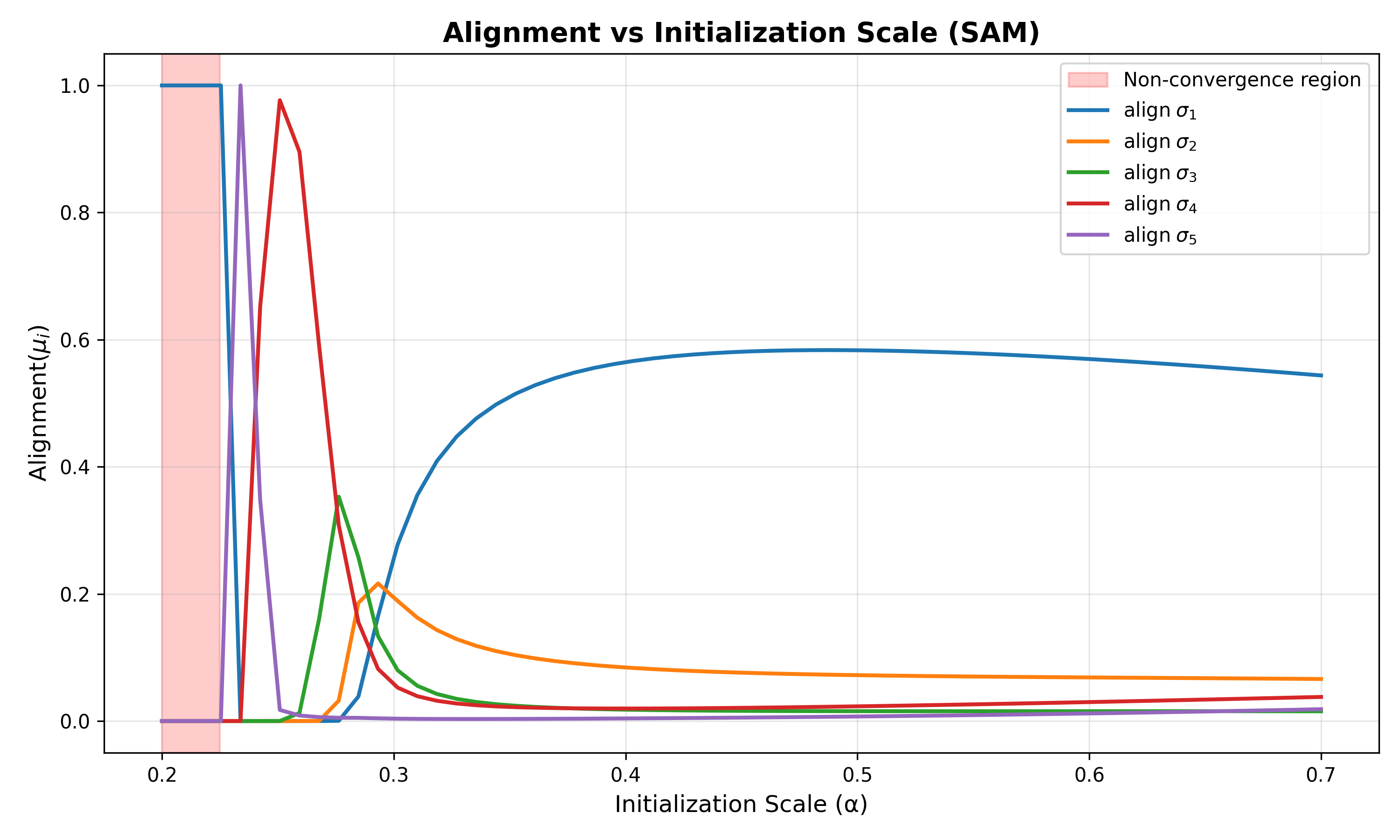}
    \caption{Alignment of the effective weight $\beta(t)$ for GD (left) and SAM (right) across initialization scales. }
    \label{fig:2layer_linear_align}
\end{figure}

\subsubsection{Convolutional Neural Network}
We consider a 2-layer linear convolutional network trained on a synthetic dataset built from a single image matrix $\mu$. This experiment is designed to probe frequency-wise feature selection under SAM. 

We fix an image size $d=32$ and construct a single base image $\mu \in \sR^{1 \times d \times d}$ as a sum of cosine plane waves with radial frequencies:
\begin{equation*}
    \mu(x, y) = \sum_{k=1}^K w_k \sum_{l=1}^{L_k} \cos \lps w \pi r_k \frac{x cos \theta_{k,l} + y \sin \theta_{k,l}}{d} + \phi_{k,l} \rps,
\end{equation*}
The experiment uses $K=5$ different frequency bands, where $r_k$ are target bands, $w_k > 0$ are band weights, and $\theta_{k,l}$, $\phi_{k,l}$ are random orientations and phases for each band. We take $r_k \in \{3, 9, 11, 13, 15\}$ and $w_k = \{1.0, 2.0, 3.0, 4.0, 5.0\}$ for all $k$. We set $L_k = 8$ for all $k$. We then renormalize $\mu$ to have unit euclidean norm, then shift it slightly to be strictly positive.
Next, we define the frequency bands by constructing radial masks $M_k \subset \{0, \cdots, d-1\}^2$ in the fourier domain. Let $\hat{\mu}$ denote the 2D FFT of $\mu$. The band energy of $\mu$ at band $k$ is then given by
\begin{equation*}
    \mu_k = \sum_{m \in M_k} |\hat{\mu}(m)|^2.
\end{equation*}
The bands are sorted by $\mu_k$. As we apply low weights to low frequency bands when constructing $\mu$, in this setting, low frequency bands have smaller $\mu_k$ and treated as minor features, and high frequency bands have larger $\mu_k$ and treated as major features.

The utilized model is a depth-2 convolutional network without nonlinearities. For the first convolutional layer, we use $3 \times 3$ convolution with $32$ output channels, stride 1, and padding 1.  For the second convolutional layer, we use same size of kernel, channel size, stride, and padding.

We used realistic gaussian initialization for the weights of the convolutional layers. The weights for each layer are independently initialized. Lastly, the final FC layer is a linear layer. the input for fc layer is squeezed 1d vector, and the output is a single logit. 

Logistic loss is used, and full-batch training is employed. We use learning rate of $0.03$ and $\rho = 0.1$. We train for 6000 epochs.

\paragraph{Band-wise effective weights.}
To compare with the diagonal model, we require a band-wise decomposition of the effective weight $\beta(\theta)$ in input space. Since the network is linear, $\beta(\theta)$ can be recovered from gradients. At a given parameter vector $\theta$, we consider the empirical margin

\begin{equation*}
    s(\theta)
    = \E_{(x,y)} \left[ y f_\theta(x) \right]
    = \frac{1}{2} \left( f_\theta(\mu) - f_\theta(-\mu) \right).
\end{equation*}
We compute the gradient of $s(\theta)$ with respect to the input and form a ``virtual gate'' version of $\beta$ in input space:

\begin{equation*}
    \nabla_x s(\theta) | _{x,y} = y \left( \nabla_x f_\theta(x) \right).
\end{equation*}
So,
\begin{equation*}
    \beta_{\mathrm{map}}(u,v)
    = \E_{(x,y)}
      \left[ \left(\nabla_x f_\theta(x) \odot x\right)_{u,v} \right],
\end{equation*}
which is proportional to $(\beta(\theta) \odot \mu)_{u,v}$ in our linear setting. In practice, this expectation is computed exactly by averaging over $x \in \{\mu,-\mu\}$.

We then take the 2D FFT of $\beta_{\mathrm{map}}$, denoted $\widehat{\beta}_{\mathrm{map}}$, and define the band-wise effective weights by
\begin{equation*}
    \beta_k(\theta)
    = \sum_{m \in M_k} \left|\widehat{\beta}_{\mathrm{map}}(m)\right|^2.
\end{equation*}
For each training epoch $t$ we record the vector
\[
    \left(\beta_1(\theta_t), \ldots, \beta_K(\theta_t)\right),
\]
and, in particular, the index of the dominant band
\[
    k_{\mathrm{dom}}(t) = \arg\max_{k} \beta_k(\theta_t).
\]
In our initialization-scale experiments, we repeat this procedure over a range of $\alpha \in [0.13, 0.20]$ and, for each $\alpha$, track both the dominant band $k_{\mathrm{dom}}$ at the end of training. This provides a CNN analogue of the feature-selection behavior observed in the diagonal model, where coordinates are replaced by frequency bands. 

Figure~\ref{fig:2layer_cnn_band} displays how the final dominant frequency band selected by the CNN varies with the initialization scale~$\alpha$. Consistent with expectations, when trained with SAM, the model emphasizes minor features (i.e., low frequency bands) for small $\alpha$, and shifts its focus to major features (high frequency bands) as $\alpha$ increases. In contrast, under standard GD, the dominant frequency band remains unchanged regardless of the initialization scale.

\begin{figure}[H]
    \centering
    \includegraphics[width=0.48\textwidth]{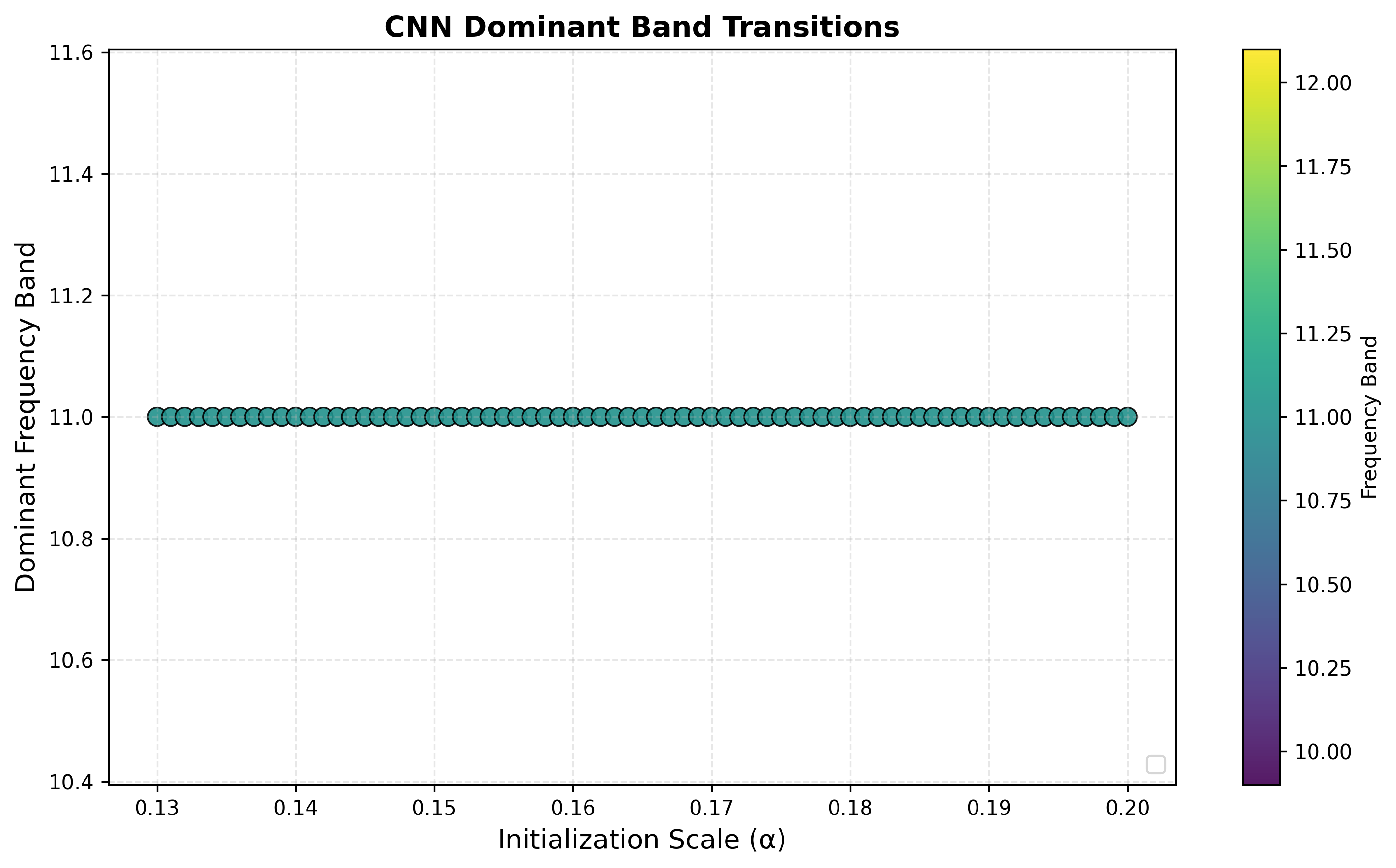}
    \includegraphics[width=0.48\textwidth]{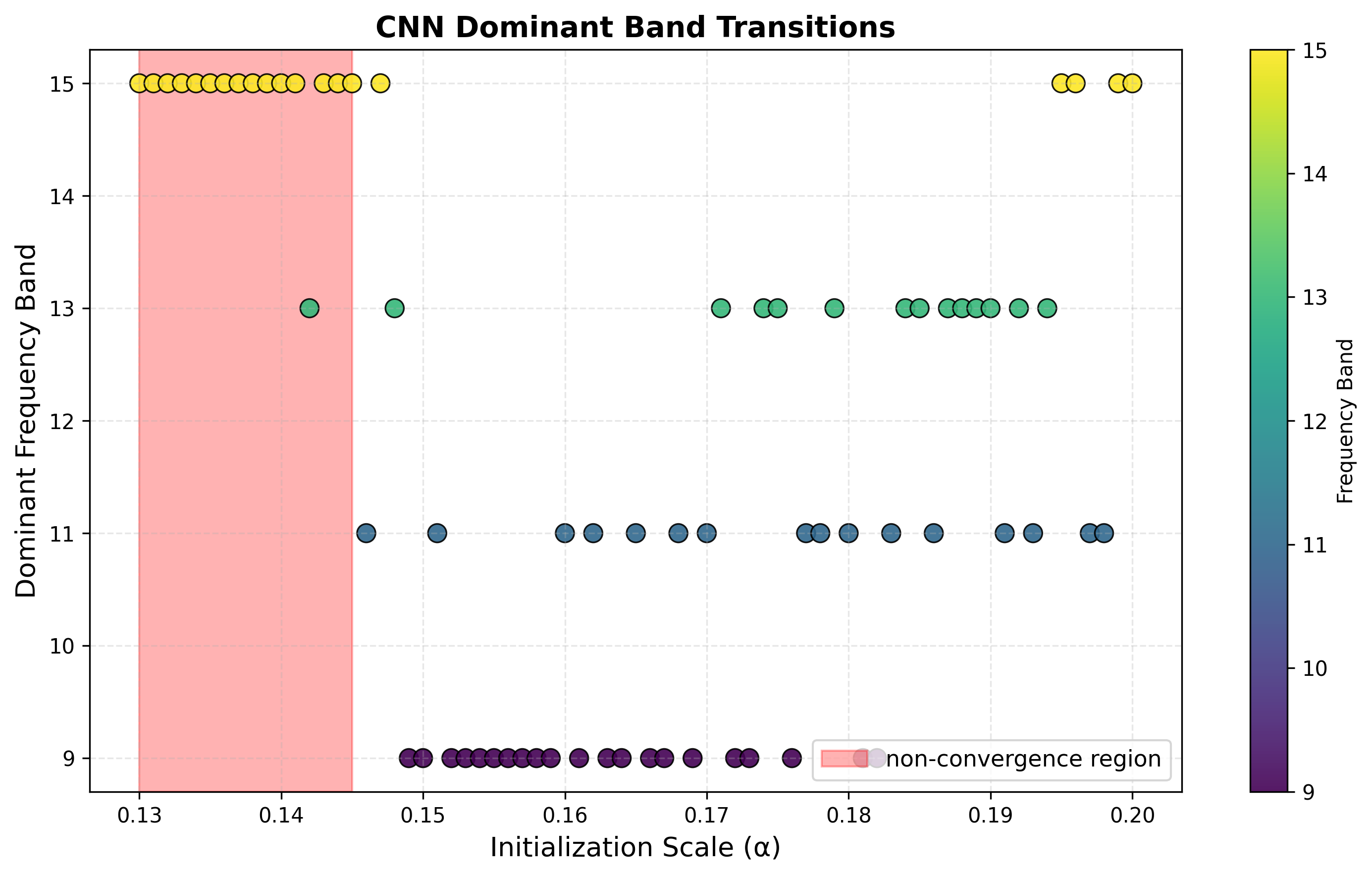}
    \caption{Dominant band for GD (top) and SAM (bottom) across gaussian initialization with different scales. Each point shows the dominant band (the band that model mostly focuses on) at the end of training; SAM systematically shifts from minor (low-frequency) to major(high-frequency) bands as $\alpha$ increases, whereas GD remains insensitive to $\alpha$.}
    \label{fig:2layer_cnn_band}
\end{figure}

\subsection{Grad-CAM} \label{E.4}

As our theoretical analysis rigorously characterizes the dynamics of SAM in linear diagonal networks, we extend our empirical investigation to convolutional neural networks (CNNs) to examine whether the same phenomena persist in more realistic architectures. Combining the results for both $\ell_\infty$-SAM and $\ell_2$-SAM, our theory predicts three practical regimes: for small initialization scale $\alpha$, SAM collapses toward the origin; for large $\alpha$, SAM behaves similarly to GD; and for intermediate $\alpha$, SAM preferentially amplifies minor to intermediate features relative to GD.

To examine these predictions in practice, we train depth-2 CNNs with ReLU activations using both SAM and GD. We then apply Grad-CAM \citep{Selvaraju_2019, jacobgilpytorchcam} to visualize which regions of the input image are emphasized by each model. In addition to qualitative visualizations, we compute the average values of pixels whose Grad-CAM activation exceeds a threshold (0.5) and plot this quantity as a function of the initialization scale $\alpha$. To characterize the sequential feature amplification as a function of the initialization scale, we rescale the default random initialization by multiplying it by $\alpha$ and train the model under this controlled initialization scheme. Unlike the theoretical setting of \cref{thm:l2}, which assumes a structured initialization, we use randomized initialization with rescaling in practice. In the corresponding figures, we indicate collapse-to-origin behavior in green and blow-up behavior in purple.

We conduct experiments on MNIST \citep{deng2012mnist}, SVHN \citep{netzer2011reading}, and CIFAR-10 \citep{krizhevsky2009learning}. Across all datasets, we consistently observe that GD-trained models concentrate on dominant, high-intensity pixels, whereas SAM-trained models emphasize lower-intensity, minor pixel regions. These results demonstrate that the distinct feature prioritization mechanism predicted by our theory persists in nonlinear CNN architectures.

\subsubsection{MNIST}

We first study this phenomenon on MNIST. MNIST has a simple structure, where the black background takes the minimum pixel value (0) and the white digit takes the maximum pixel value (1).

We construct a subset of $1{,}000$ images whose labels are in ${0,1,2,3}$ and train models using either GD or $\ell_2$-SAM. After training, we visualize the learned attention patterns using Grad-CAM, as shown in \cref{fig:MNIST-gradcam-l2}. We observe that the GD-trained model primarily bases its predictions on the white digit region, whereas the $\ell_2$-SAM–trained model concentrates more strongly on the black background region.
Unless otherwise stated, we use a learning rate of $0.1$, a SAM perturbation radius of $0.5$, and train for $500$ epochs with a batch size of $64$. We use no momentum and no weight decay. For the CNN architecture, we use $3 \times 3$ convolutional kernels and do not apply batch normalization or layer normalization.

\begin{figure}[H]
  \centering
  \begin{subfigure}{0.48\linewidth}
    \centering
    \includegraphics[width=\linewidth]{figures_appendix/mnist/gd1.png}
  \end{subfigure}\hfill
  \begin{subfigure}{0.48\linewidth}
    \centering
    \includegraphics[width=\linewidth]{figures_appendix/mnist/sam1.png}
  \end{subfigure}

  \medskip
  \begin{subfigure}{0.48\linewidth}
    \centering
    \includegraphics[width=\linewidth]{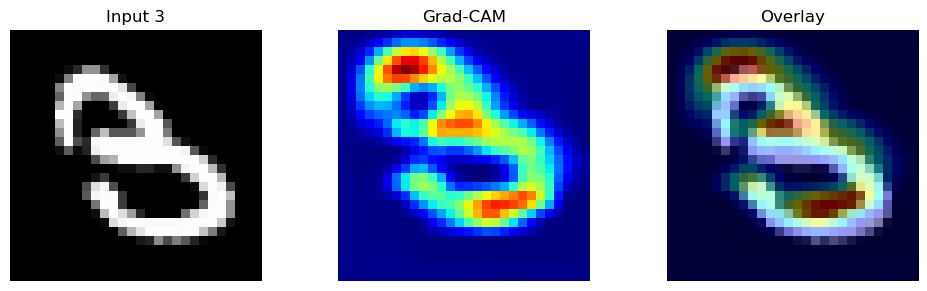}
    \caption{GD}
  \end{subfigure}\hfill
  \begin{subfigure}{0.48\linewidth}
    \centering
    \includegraphics[width=\linewidth]{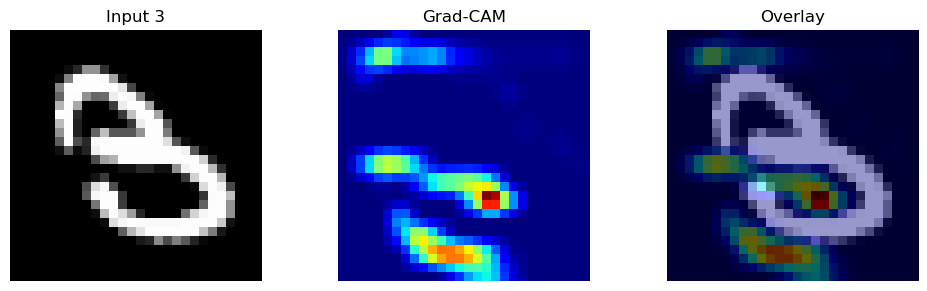}
    \caption{$\ell_2$-SAM}
  \end{subfigure}
    \caption{Grad-CAM comparison between GD and $\ell_2$-SAM on MNIST (labels 0–3).}
    \label{fig:MNIST-gradcam-l2}
\end{figure}

To study the practical behavior of $\ell_\infty$-SAM, we train models using $\ell_\infty$-SAM on a subset of $1{,}000$ MNIST images with labels in $\{0,1\}$. We then visualize the Grad-CAM maps, as shown in \cref{fig:MNIST-gradcam-linfty}. We observe a bias pattern similar to that of $\ell_2$-SAM, where the model places greater emphasis on background regions corresponding to minor features.
We use the same hyperparameters as in the previous experiment: learning rate $0.1$, perturbation radius $0.5$, training for $500$ epochs, and a batch size of $64$.

\begin{figure}[H]
  \centering
  \begin{subfigure}{0.48\linewidth}
    \centering
    \includegraphics[width=\linewidth]{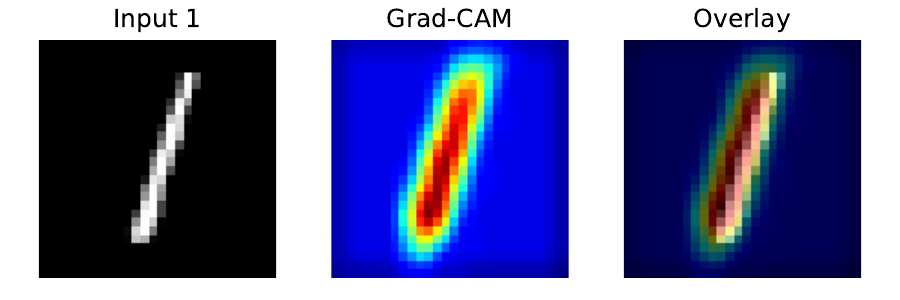}
  \end{subfigure}\hfill
  \begin{subfigure}{0.48\linewidth}
    \centering
    \includegraphics[width=\linewidth]{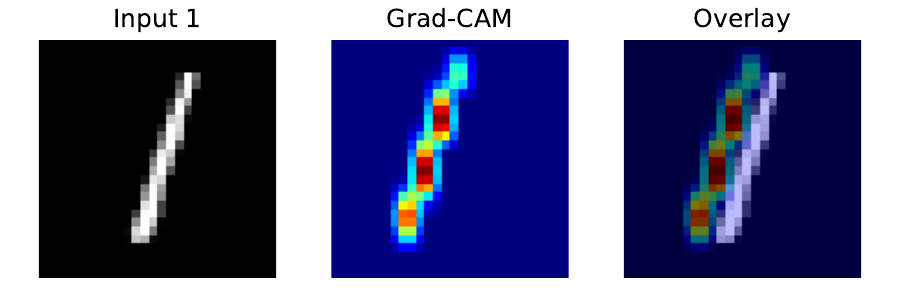}
  \end{subfigure}

  \medskip
  \begin{subfigure}{0.48\linewidth}
    \centering
    \includegraphics[width=\linewidth]{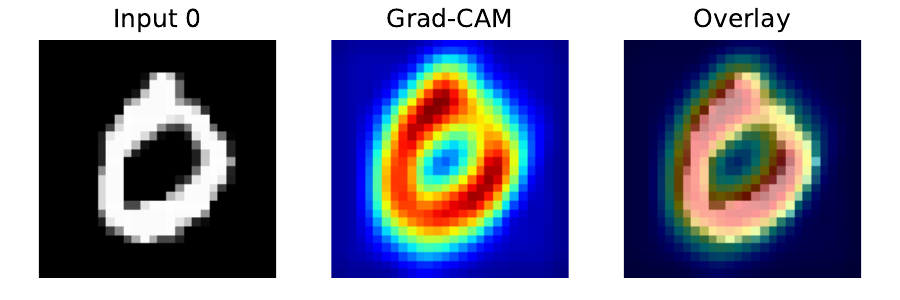}
    \caption{GD}
    \label{fig:c}
  \end{subfigure}\hfill
  \begin{subfigure}{0.48\linewidth}
    \centering
    \includegraphics[width=\linewidth]{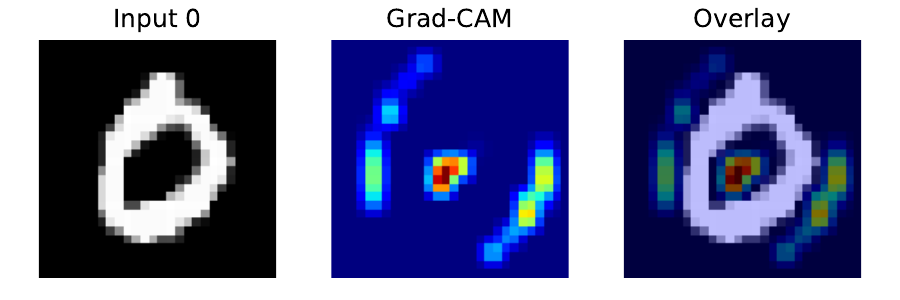}
    \caption{$\ell_\infty$-SAM}
    \label{fig:d}
  \end{subfigure}
    \caption{Grad-CAM comparison between GD and $\ell_\infty$-SAM on MNIST (labels 0–1).}
  \label{fig:MNIST-gradcam-linfty}
\end{figure}

We now quantify the average values of activated pixels (Grad-CAM > 0.5) as a function of the initialization scale $\alpha$ across different dataset subsets.
In this experimental setup (\cref{fig:activated-pixel}), we observe that GD consistently concentrates more on the white digit region, which can be interpreted as the major component in the pixel value manner, unless GD fails to minimize the loss because of too large initialization scale. We denote as purple dots where GD blows up.  
Moreover, we observe three regimes of $\alpha$ of SAM. We denote as green dots where too small initialization scale fails to escape near the origin and so the loss is not changed. Here can be seen as Regime 1. After that, SAM concentrates on the pixels whose average is almost 0, so the background region. This implies SAM concentrating on the minor component of the data more than GD, which can be seen as Regime 2. When GD blows up, SAM also goes out of the trend and almost blows up.

\begin{figure}[H]  
      \centering
      \begin{subfigure}{0.48\linewidth}
    \centering
    \includegraphics[width=\textwidth]{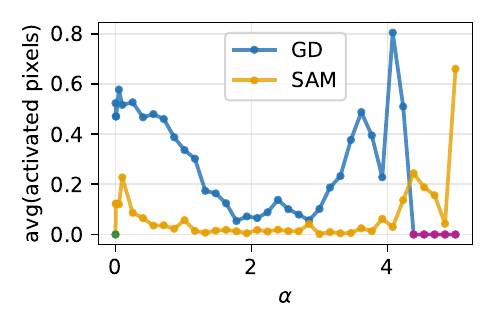}
    \caption{MNIST with labels 0,1,2,3.}
  \end{subfigure}\hfill
  \begin{subfigure}{0.48\linewidth}
    \centering
    \includegraphics[width=\textwidth]{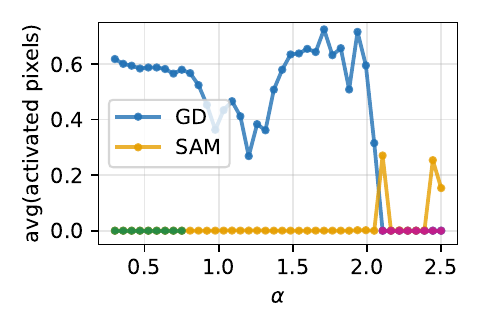}
    \caption{1k MNIST images with labels 0 and 1.}s
  \end{subfigure}
  \medskip
  \medskip
  \begin{subfigure}{0.48\linewidth}
    \centering
    \includegraphics[width=\textwidth]{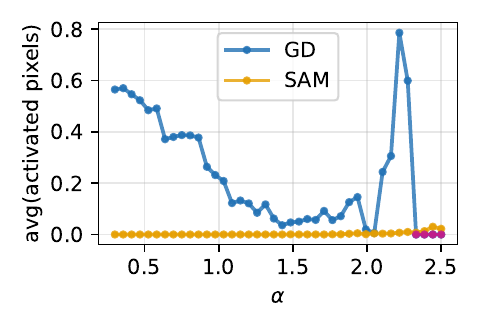}
    \caption{1k MNIST images with labels 0,1,2,3.}
  \end{subfigure}\hfill
  \begin{subfigure}{0.48\linewidth}
    \centering
    \includegraphics[width=\textwidth]{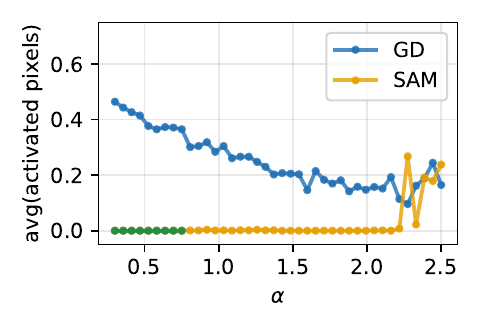}
    \caption{Full MNIST 1k subset.}
  \end{subfigure}
  
      \caption{Average number of pixels with Grad-CAM activation exceeding 0.5 as a function of the initialization scale $\alpha$, comparing GD and $\ell_2$-SAM across different MNIST subsets.}
      \label{fig:activated-pixel}
\end{figure}

$\ell_\infty$-SAM exhibits a similar pattern (\cref{fig:MNIST-infty}). When $\alpha$ is small, the dynamics collapse toward the origin. For intermediate values of $\alpha$, $\ell_\infty$-SAM tends to prioritize minor features more strongly than GD. For sufficiently large $\alpha$, however, the behavior of $\ell_\infty$-SAM deviates from this trend.

\begin{figure}[ht]   
    \centering
    \includegraphics[width=0.44\textwidth]{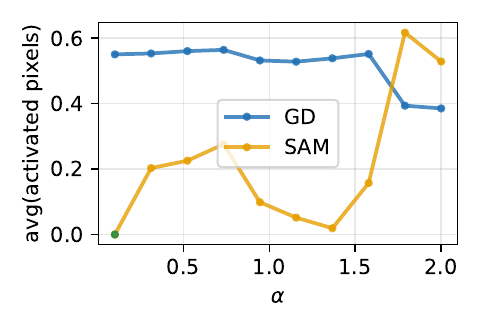}
      \caption{Average number of pixels with Grad-CAM activation exceeding 0.5 as a function of the initialization scale $\alpha$, comparing GD and $\ell_\infty$-SAM on 1k MNIST images with labels 0 and  1.}
      \label{fig:MNIST-infty}
\end{figure}

\clearpage

\subsubsection{SVHN}

We next study this phenomenon on SVHN. SVHN is more complex than MNIST, as it contains both images with dark backgrounds and light digits, as well as images with light backgrounds and dark digits. Nevertheless, we observe that $\ell_2$-SAM consistently emphasizes the darker regions of the image.

We construct a subset of $1{,}000$ images with labels in $\{0,1\}$ and train models using either GD or $\ell_2$-SAM. We use a learning rate of $0.01$, a SAM perturbation radius of $0.05$, and train for $200$ epochs.

The images in \cref{fig:svhn-dark} contain dark digits on light backgrounds. In this case, we observe that SAM concentrates more strongly on the digit regions than the background, as the digits constitute the minor features in these images. By contrast, the images in \cref{fig:svhn-light} contain light digits on dark backgrounds. For these images, SAM concentrates more strongly on the background regions than on the digits, as the background constitutes the minor feature in this setting.

\begin{figure}[H]
  \centering
  
  \begin{subfigure}{0.4\linewidth}
    \centering
    \includegraphics[width=\linewidth]{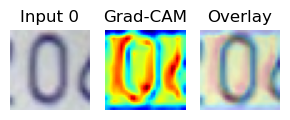}
  \end{subfigure}\hspace{30pt}
  \begin{subfigure}{0.4\linewidth}
    \centering
    \includegraphics[width=\linewidth]{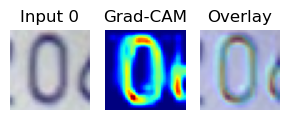}
  \end{subfigure}
  \medskip

    \begin{subfigure}{0.4\linewidth}
    \centering
    \includegraphics[width=\linewidth]{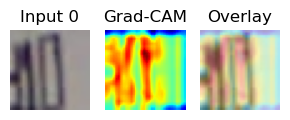}
  \end{subfigure}\hspace{30pt}
  \begin{subfigure}{0.4\linewidth}
    \centering
    \includegraphics[width=\linewidth]{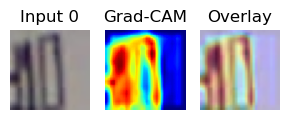}
  \end{subfigure}
  \medskip

  \begin{subfigure}{0.4\linewidth}
    \centering
    \includegraphics[width=\linewidth]{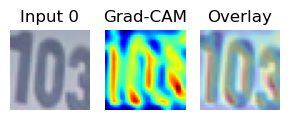}
  \end{subfigure}\hspace{30pt}
  \begin{subfigure}{0.4\linewidth}
    \centering
    \includegraphics[width=\linewidth]{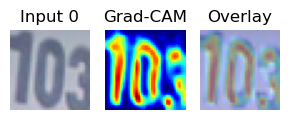}
  \end{subfigure}
  \medskip

  \begin{subfigure}{0.4\linewidth}
    \centering
    \includegraphics[width=\linewidth]{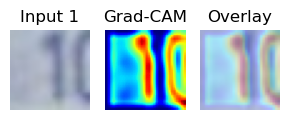}
  \end{subfigure}\hspace{30pt}
  \begin{subfigure}{0.4\linewidth}
    \centering
    \includegraphics[width=\linewidth]{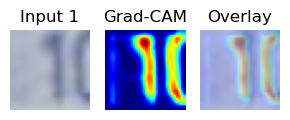}
  \end{subfigure}
  \medskip

    \begin{subfigure}{0.4\linewidth}
    \centering
    \includegraphics[width=\linewidth]{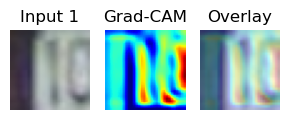}
    \caption{GD}
  \end{subfigure}\hspace{30pt}
  \begin{subfigure}{0.4\linewidth}
    \centering
    \includegraphics[width=\linewidth]{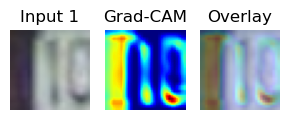}
    \caption{$\ell_2$-SAM}
  \end{subfigure}
  \medskip
  
    \caption{Grad-CAM comparison between GD and $\ell_2$-SAM on SVHN (1k images, labels 0–1) with dark digits and light backgrounds.}
    \label{fig:svhn-dark}
\end{figure}

\begin{figure}[H]
  \centering
  
  \begin{subfigure}{0.4\linewidth}
    \centering
    \includegraphics[width=\linewidth]{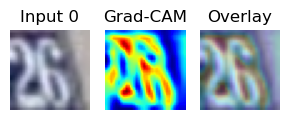}
  \end{subfigure}\hspace{30pt}
  \begin{subfigure}{0.4\linewidth}
    \centering
    \includegraphics[width=\linewidth]{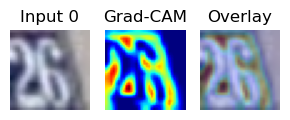}
  \end{subfigure}
  \medskip

    \begin{subfigure}{0.4\linewidth}
    \centering
    \includegraphics[width=\linewidth]{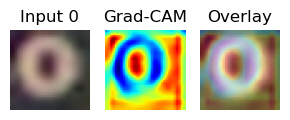}
    \caption{GD}
  \end{subfigure}\hspace{30pt}
  \begin{subfigure}{0.4\linewidth}
    \centering
    \includegraphics[width=\linewidth]{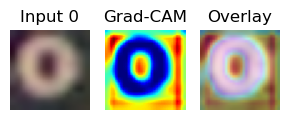}
    \caption{$\ell_2$-SAM}
  \end{subfigure}

    \caption{Grad-CAM comparison between GD and $\ell_2$-SAM on SVHN (1k images, labels 0–1) with light digits and dark backgrounds.}
    \label{fig:svhn-light}
\end{figure}

Across different values of $\alpha$, we observe that small $\alpha$ causes $\ell_2$-SAM to collapse toward the origin, while intermediate $\alpha$ leads $\ell_2$-SAM to emphasize minor features with lower pixel intensities as shown in \cref{fig:svhn}, where pixel intensity is computed as the average over the three color channels.

\begin{figure}[H]   
      \centering
    \includegraphics[width=0.4\textwidth]{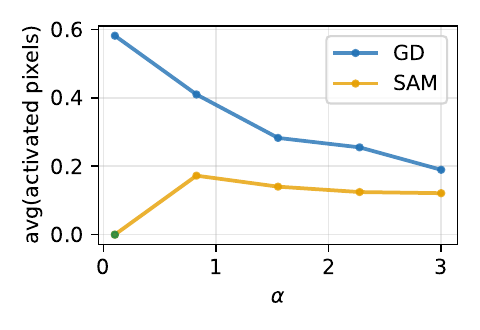}
      \caption{Average number of activated pixels (Grad-CAM > 0.5) as a function of the initialization scale $\alpha$, comparing GD and $\ell_2$-SAM.}
      \label{fig:svhn}
\end{figure}

\subsubsection{CIFAR-10}

We also observe the same phenomenon on the CIFAR-10 dataset. We construct a subset of CIFAR-10 with labels in $\{0,1\}$ and train models using a learning rate of $0.01$, a SAM perturbation radius of $0.05$, for $500$ epochs. As shown in \cref{fig:cifar10}, small values of $\alpha$ lead SAM to emphasize minor features, while larger values of $\alpha$ make the behaviors of GD and SAM increasingly similar.

\begin{figure}[H]   
      \centering
    \includegraphics[width=0.44\textwidth]{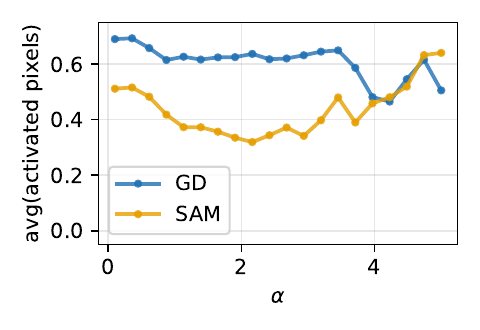}
      \caption{Average number of activated pixels (Grad-CAM > 0.5) as a function of the initialization scale $\alpha$, comparing GD and $\ell_2$-SAM.}
      \label{fig:cifar10}
\end{figure}

\end{document}